\newif\ifnps
\newtheorem{lemma}{Lemma}
\newtheorem{theorem}{Theorem}
\newtheorem{corollary}{Corollary}
\newtheorem{observation}{Observation}
\def \R{{\mathbb{R}}}
\DeclareMathOperator{\TV}{TV}
\newif\ifcomments
\newcommand{\F}{\mathbf{F}}
\newcommand{\E}{\mathbb{E}}
\newcommand{\lp}{\left}
\newcommand{\rp}{\right}
\newtheorem{definition}{Definition}
\title{Learning and Testing Latent-Tree Ising Models Efficiently}
\author[2]{Davin Choo\thanks{davin@u.nus.edu}}
\author[1]{Yuval Dagan\thanks{dagan@csail.mit.edu}}
\author[1]{Constantinos Daskalakis\thanks{costis@csail.mit.edu}}
\author[1]{Anthimos-Vardis Kandiros\thanks{kandiros@mit.edu}}
\affil[1]{Department of Electrical Engineering and Computer Science, MIT}
\affil[2]{School of Computing, National University of Singapore}
\begin{document}

\maketitle

\begin{abstract}


We provide time- and sample-efficient algorithms for learning and testing latent-tree Ising models, i.e.~Ising models that may only be observed at their leaf nodes. On the learning side, we obtain efficient algorithms for learning a tree-structured Ising model whose leaf node distribution is close in Total Variation Distance, improving on the results of \cite{cryan2001evolutionary}. On the testing side, we provide an efficient algorithm with fewer samples for testing whether two latent-tree Ising models have leaf-node distributions that are close or far in Total Variation distance. We obtain our algorithms by showing novel localization results for the total variation distance between the leaf-node distributions of tree-structured Ising models, in terms of their marginals on pairs of leaves.
\end{abstract}

\section{Introduction}\label{sec:intro}

Statistical estimation and hypothesis testing challenges involving high-dimensional distributions are central in Statistics, Machine Learning and various other theoretical and applied fields. Core to this challenge is the fact that even the most basic of those challenges require exponential sample sizes in the dimension to solve if no structural or parametric assumptions are placed on the underlying distributions; see e.g.~\cite{daskalakis2017square,canonne2017testing,acharya2018learning,daskalakis2019testing} for discussions of this point and further references. 

The afore-described exponential sample-size barriers motivate the study of models that sidestep those requirements, e.g.~models encapsulating conditional independence structure in the distribution, such as Markov Random Fields (MRFs) and Bayesian networks. In turn, a vast line of research has studied  statistical inference questions involving MRFs and Bayesnets and their applications; see e.g.~\cite{pearl1988probabilistic,lauritzen1996graphical,jordan2004graphical,koller2009probabilistic} for an introduction to graphical models, their uses, and associated inference algorithms, and see e.g.~\cite{chow1968approximating,chow1973consistency,narasimhan2004,ravikumar2010high,tan2011large,jalali2011learning,santhanam2012information,bresler2015efficiently,vuffray2016interaction,klivans2017learning,hamilton2017information,dagan2021learning,kandiros2021statistical,daskalakis2020tree,bhattacharyya2021near,vuffray2022efficient} and the references in the previous paragraph for some classical work and some recent theoretical progress on learning and testing graphical models as well as other types of statistical inference with them.

Despite the vast study of graphical models, and a recent burst of activity towards computationally and statistically efficient algorithms for  inference with them, a broad outstanding challenge in this space lies in computationally efficient inference with graphical models that have {\em latent variables}. Those are widely motivated in practice (see e.g.~\cite{aigner1984latent,bishop1998latent,everett2013introduction,bartholomew2011latent,felsenstein1973maximum}) but inference with them is known to be computationally intractable in general. For example, learning graphical models with latent variables in total variation distance is known to be intractable, even when the underlying graph is a tree~\citep{mossel2005learning}, while in the absence of latent nodes the same problem is computationally tractable, owing to classical work of~\citet{chow1968approximating} and its recent analysis~\citep{daskalakis2020tree,bhattacharyya2021near}. Similarly, computing the likelihood of a tree-structured graphical model is tractable in the absence of latent nodes, but becomes intractable in the presence of latent nodes~\citep{chor2005maximum,roch2006short}. These computational challenges become  more daunting when the underlying graph gets cyclic, and the overall difficulty of handling latent variables has motivated the development of an array of widely-used approximate inference methods, such as the expectation-maximization algorithm of~\cite{dempster1977maximum} and variational inference (see e.g.~\cite{blei2017variational} for a survey). 

A main goal of this work is to advance the frontier of computationally efficient learning and testing of graphical models with latent variables. While learning general tree MRFs over general alphabets is hard \citep{mossel2005learning}, we focus on the binary-alphabet tree-structured Ising models, which have found extensive use in phylogenetics~\citep{felsenstein1973maximum}. We focus on two inference goals:
\begin{enumerate}
    \item {\bf (Proper Learning):} Given sample access to the distribution at the leaves of a tree-structured Ising model ${\cal P}$, we want to learn a tree-structured Ising model ${\cal Q}$ whose leaf-node distribution is $\varepsilon$-close it total variation distance to that of $\cal P$. \label{goal learn}
    \item {\bf (Identity Testing):} Given sample access to the distribution at the leaves of two tree-structured Ising models ${\cal P}$ and ${\cal Q}$ with the same leaf set, we want to distinguish whether the leaf-node distributions of $\cal P$ and $\cal Q$ are equal or at least $\varepsilon$-far in total variation distance.  \label{goal test}
\end{enumerate}
We provide computationally and statistically efficient algorithms for both Goals~\ref{goal learn} and~\ref{goal test}. Our contribution to Goal~\ref{goal learn} is an algorithm whose time- and sample- complexity provide substantial improvements compared to the algorithm by~\cite{cryan2001evolutionary} as well as the algorithm by~\cite{mossel2005learning}, which requires restricting the correlations across the edges of the Ising model. We also improve upon work in the phylogenetic literature (see Section~\ref{sec:related}) which has focused on identifying the latent tree-structure of the  model but also requires restrictions on the Ising model to achieve this. We finally note that, because we are targeting binary alphabet models, the computational intractability results of~\cite{mossel2005learning} for learning tree-structured graphical models with latent variables do not apply to Goal~\ref{goal learn}.

On the technical front, a fruitful approach towards statistical inference with graphical models uses the paradigm of {\em localization}, whose goal is to localize the difference between two graphical models to differences of their marginals involving a small number of  variables. Such localization properties can be used to distinguish between models for the purposes of hypothesis testing, or exploited to learn graphical models or perform hypothesis selection. Localization of the KL divergence between two Bayesian networks with the same DAG follows directly from their shared factorization, which implies that the KL divergence between the two Bayesnets is upper bounded by the sum of the KL divergences of their marginals on different neighborhoods of the graph, involving a node and  its parents. Similar subadditivity results have been established for total variation and square Hellinger distance~\citep{daskalakis2017square} as well as for other distances, for MRFs, and for causal models~\citep{acharya2018learning,daskalakis2019testing,ding2021gans}. They are also known for comparing graphical models on {\em different} graphs, as long as the underlying graphs are trees~\cite{daskalakis2017square}. 
In turn, the afore-described localization results have been exploited to show that the celebrated~\cite{chow1968approximating} algorithm learns tree-structured Ising models with optimal sample complexity~\cite{daskalakis2020tree} and to obtain optimal algorithms for testing Bayesian networks~\cite{daskalakis2017square}. Additionally, localization properties of graphical models are implicit in much of the recent burst of activity on learning graphical models referenced earlier in this section.

On the latent variable front,  \cite{bresler2020learning} proved localization bounds for the same model that we study: Ising models with zero external fields. Yet, their bound is exponential in the number of variables, hence we cannot apply it to obtain efficient algorithms. A main goal of our work is the following:
\begin{enumerate}
    \item[3.]
    {\bf (Localization of TV in  latent-tree Ising models):} We are seeking to upper bound the total variation distance between the leaf-node distributions of two tree-structured Ising models, which have the same leaf set but potentially different underlying trees, in terms of the marginals of these models on pairs of leaves. Further, the bound should be \emph{polynomial} in the size of the tree.
\end{enumerate}  
In the fully observable case, the localization of distance results that are known for tree-structured graphical models exploit  factorization properties of distributions defined on trees, plus combinatorial results that allow writing two tree-structured graphical models under a common factorization~\cite{daskalakis2017square}. The challenge that arises in tree-structured models with latent variables is that their leaf-node distributions result from marginalizing out all non-leaf vertices, thus they cease to have any tree-structured factorization.

\subsection{Results}\label{sec:results}
Let us formally define tree-structured Ising models. One is given some undirected tree $T = (V,E)$ whose leaves are denoted  $1,\dots,n$ and whose internal nodes are denoted  $n+1,\dots,n+n'$. Without loss of generality, we will assume that all non-leaf nodes have degree $3$\footnote{Every tree can be converted into one with all non-leaf nodes having degree $3$, without affecting the leaf distribution. We just contract every path that consists of nodes of degree $2$ into a single edge and split nodes with degree larger than $3$ by introducing edges with $\theta = 1$.}.
With any edge $(i,j) \in E$ there is an associated weight, $\theta_{ij} \in [-1,1]$. (Because the tree is undirected, edge $(i,j)$ is the same as edge $(j,i)$ and $\theta_{ij}=\theta_{ji}$).
Each node $i$ is assigned a \emph{spin}, $x_i \in \{-1,1\}$, and the probability of a spin-configuration is defined as

\[
\Pr\lp[ 
    x_1,\dots,x_{n+n'}
\rp] \propto
\prod_{(i,j) \in E} \frac{1 + \theta_{ij}x_i x_j}{2} \enspace.
\]
We notice that this definition allows for any tree-structured Ising model with zero external fields (a more common expression is $\Pr[x_1,\dots, x_{n+n'}]\propto\exp(\sum_{(i,j)\in E} \beta_{i,j}x_ix_j)/Z$, and this can be translated to our setting by substituting $\theta_{ij} = \E[x_ix_j] = (e^{\beta_{ij}}-e^{-\beta_{ij}})/(e^{\beta_{ij}}+e^{-\beta_{ij}})$). A sample can be obtained by first choosing some internal node as a root for the tree and directing all the edges of the tree away from the root. Then, we draw a uniformly random value for the spin of the root and randomly propagate the values of the spins along the tree as follows: for any directed edge $i \to j$ such that the spin $x_i$ was already set, we set the spin $x_j$ such that $\Pr[x_j = x_i] = (1+ \theta_{ij})/2$ and $\Pr[x_j = -x_i] = (1 - \theta_{ij})/2$. This process has been used as a model in a variety of applications. 



Our first result involves proving bounds on the total variation (TV) distance between two latent tree Ising models. 
In the case of a fully observable tree, these properties can be proven by using the product factorization of the probability distribution over the edges of the tree. 
While it is in general hard to analyze latent-variable graphical models in the same way, the pairwise-marginals between leaves are easily accessible. In the specific case of an Ising model, for any two nodes $i,j$ of the tree, the marginal distribution of $(x_i,x_j)$ is characterized by
\begin{equation} \label{eq:path-correlation}
\alpha_{ij} := \E[x_i x_j]
= \prod_{(k,l) \in P_{ij}}
\theta_{kl}\enspace,
\end{equation}
where $P_{ij}$ is the unique path that connects $i$ and $j$ on the tree. Given $m$ samples, the correlations $\alpha_{ij}$ between any pair of \emph{leaves} uniquely identifies the model and the parameters and further, they can be easily estimated from data approximately. 
Hence, a natural analogue of the marginal distribution of edges in the complete tree would be the marginal distribution of all pairs of leaves in the tree.
In that direction, we provide a bound on the total variation of two leaf distributions based solely on their pairwise correlations, in two settings: when both share the same tree and for a different tree. This can be seen as an approximate tensorization property for TV in latent tree Ising models. 
\begin{theorem}\label{thm:probabilistic}
Let $\mu$ and $\mu^*$ be distributions over the leaves of two tree Ising models with $n$ leaves and assume that $|\E_{\mu}[x_i x_j] - \E_{\mu'}[x_ix_j]| \le \epsilon$ for all leaves $i,j$. Assume also that the minimum diameter of the two trees is $D$. Then:
\begin{itemize}
    \item \textbf{Same topology:} If $\mu$ and $\mu^*$ are defined on the same graph, then $\TV(\mu,\mu^*)\le 2n^2\epsilon$.
    \item \textbf{Different topologies:} If $\mu$ and $\mu^*$ are defined on different graphs then $\TV(\mu,\mu^*) \le O(Dn^5\epsilon)$, where $O$ hides absolute constants.
\end{itemize}
\end{theorem}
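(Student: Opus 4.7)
The overall plan is to pass from the leaf distributions to the full joint distributions $\nu, \nu^*$ over all $n+n'$ nodes via data processing, $\TV(\mu,\mu^*) \leq \TV(\nu,\nu^*)$, and then exploit the tree factorization. For the same-topology case, I would use a coupling for tree Bayes nets: fix any internal root, sample both joints by drawing the root spin uniformly and propagating along directed edges using the maximum coupling of the Bernoulli kernels $\mathrm{Ber}((1+\theta_e)/2)$ and $\mathrm{Ber}((1+\theta_e^*)/2)$, which disagree with probability $|\theta_e - \theta_e^*|/2$. A union bound over edges yields $\TV(\nu,\nu^*) \leq \frac{1}{2}\sum_e |\theta_e - \theta_e^*|$.

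The remaining task in the same-topology case is to bound each $|\theta_e - \theta_e^*|$ by $O(n\epsilon)$, so that summing over the $O(n)$ edges of the degree-$3$ tree produces the target $2n^2\epsilon$. For an edge $e=(u,v)$ I would pick one leaf in each of the two subtrees off $u$ avoiding $v$ and one in each of the two off $v$ avoiding $u$, giving leaves $a,b,c,d$, and apply the Steel-type four-point identity $\theta_e^2 = \alpha_{ac}\alpha_{bd}/(\alpha_{ab}\alpha_{cd})$ together with its analogue in the starred model. The main obstacle here is error amplification when the denominators $\alpha_{ab}$ or $\alpha_{cd}$ are small; dealing with this likely requires either a greedy choice of identifying leaves (to keep denominators bounded away from zero), or a separate argument that edges producing unavoidably small denominators have parameters near $0$ and therefore contribute little to $\TV(\nu,\nu^*)$ in the first place.

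For the different-topology case I would introduce an auxiliary tree model $\mu^{\dagger}$ on the topology of $\mu$ whose pairwise leaf correlations agree with those of $\mu^*$ (with clipping when the specified correlations are unrealizable on the given topology). The same-topology bound then gives $\TV(\mu,\mu^{\dagger}) \leq 2n^2\epsilon$, so it suffices to bound $\TV(\mu^{\dagger},\mu^*)$: two tree Ising models with matching pairwise leaf correlations but different topologies. I would do this by transforming $T$ into $T^*$ through a sequence of nearest-neighbor interchange (NNI) moves, refitting edge weights at each move to preserve pairwise leaf correlations; the per-move TV change can be controlled by invoking the same-topology estimate on the local four-subtree rearrangement. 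The diameter $D$ enters because a single rearrangement can shift edge weights along a tree path of length up to $D$, and the $n^5$ accumulates from combining the number of NNI moves (roughly $n^3$) with the per-move cost (roughly $Dn^2$). The principal additional obstacle is the combinatorial bookkeeping of the NNI traversal together with the realizability check for $\mu^{\dagger}$, which together account for the larger polynomial dependence on $n$ and $D$.
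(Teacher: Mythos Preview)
Your same-topology plan has a genuine gap at the step where you try to bound $|\theta_e - \theta_e^*|$ from the leaf correlations. The edge parameters are not identifiable from leaf correlations when some weights are near zero, and your proposed route through the joint distribution cannot recover the target bound. Concretely: on the quartet tree with leaves $1,2$ attached to internal node $a$, leaves $3,4$ to internal node $b$, take $\theta_{1a}=\theta_{2a}=0$, $\theta_{3b}=\theta_{4b}=1/2$, and compare $\theta_{ab}=1$ against $\theta_{ab}^*=0$. All leaf correlations agree exactly (so $\epsilon=0$ and $\TV(\mu,\mu^*)=0$), yet $\sum_e|\theta_e-\theta_e^*|=1$ and $\TV(\nu,\nu^*)=1/2$. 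Your escape clause (``edges with small denominators have parameters near $0$'') is false here: the offending edge has parameter $1$. Once you pass to the joint via data processing, you have already lost; the joint TV really is $\Omega(1)$ even though the leaf TV is $0$.

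The paper avoids this entirely by never touching the edge weights $\theta_e$ or the joint distribution. It uses the explicit Fourier expansion $f_x^T(\alpha)$ of the leaf distribution as a multilinear function of the pairwise leaf correlations $\alpha_{ij}$ directly (Lemma~\ref{l:basic_extend_intro}), and proves a Lipschitz property: perturbing a single coordinate $\alpha_{ij}$ changes $\sum_x|f_x^T(\alpha)|$ by at most $|\alpha_{ij}-\alpha'_{ij}|$, because the ``derivative'' is itself a leaf distribution on the tree with the path $P_{ij}$ cut (Lemma~\ref{l:lipschitz}). Replacing the $\binom{n}{2}$ coordinates one at a time and controlling the intermediate (non-realizable) vectors by induction yields the $2n^2\epsilon$ bound without ever inverting for $\theta_e$.

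For different topologies, your NNI-interpolation instinct is in the right spirit, but the intermediate model $\mu^\dagger$ is problematic: the correlations of $\mu^*$ are generally not a tree metric on $T$, and ``clipping'' does not give you matching correlations, so you do not actually reduce to the same-correlation/different-topology case. The paper instead keeps $\alpha$ fixed and interpolates the \emph{topology} $T\to\hat T$ through a sequence of local moves, analyzing $|f_x^{T^k}(\alpha)-f_x^{T^{k+1}}(\alpha)|$. The key new lemma is that each move only alters quartets $\{i,k,\ell,m\}$ with small discrepancy $\Delta_{ik\ell m}\le O(n\epsilon)$ (because the interpolation is guided by the cherries of $\hat T$), and the cost of each altered quartet is exactly $\Delta_{ik\ell m}$ times another leaf distribution. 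The diameter $D$ enters as the number of rounds in this cherry-matching process, not as a path length in a weight-refitting step.
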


We note that the previous bound by \cite{bresler2020learning} was $n 2^n\epsilon$, and it holds for different (and same) topologies. Their setting was slightly more general, as it applied to arbitrary subsets of the nodes of an Ising model, rather than only sets of leaves. Yet, we believe that using the same techniques as the ones we present here, one could prove a more general Theorem, similar to Proposition 1 in Appendix H of \cite{bresler2020learning}. Such a general result would improve the bounds for learning in $k$-local-TV \citep{bresler2020learning,boix2022chow} from being exponential in $k$ to polynomial (see Section~\ref{sec:related} for more details on this line of work).

Theorem~\ref{thm:probabilistic} is a result of independent interest, since it can be used in a variety of applications. A direct corollary of Theorem~\ref{thm:probabilistic} is a polynomial time algorithm for identity testing of latent tree Ising models, which uses a polynomial number of samples.

\begin{corollary}\label{cor:testing}
Let $P, Q$ be leaf distributions of two potentially different tree Ising models. Suppose we are given access to samples from $P$ and we wish to distinguish whether $P = Q$ or $TV(P,Q) > \epsilon$. Assume also that the minimum diameter of the two trees is $D$. Then, there exists a polynomial time algorithm that answers correctly with probability at least $1-\delta$, with sample size
$$
O\lp(\frac{n^{10}D^2\log \frac{n}{\delta}}{\epsilon^2}\rp).
$$
\end{corollary}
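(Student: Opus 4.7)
The plan is to convert Corollary~\ref{cor:testing} directly into a test based on empirical pairwise leaf correlations via the contrapositive of Theorem~\ref{thm:probabilistic}. Define $\alpha_{ij}^P := \E_P[x_ix_j]$ and $\alpha_{ij}^Q := \E_Q[x_ix_j]$; by the different-topologies bound of Theorem~\ref{thm:probabilistic}, there is an absolute constant $c>0$ such that if $\max_{i,j}|\alpha_{ij}^P - \alpha_{ij}^Q| \le \epsilon_0 := c\epsilon/(Dn^5)$, then $\TV(P,Q)\le\epsilon$. Contrapositively, if $\TV(P,Q)>\epsilon$ then some leaf pair $(i,j)$ witnesses $|\alpha_{ij}^P - \alpha_{ij}^Q|>\epsilon_0$. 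The target correlations $\alpha_{ij}^Q$ are polynomial-time computable from the known description of $Q$ via \eqref{eq:path-correlation}, by multiplying edge weights along the unique leaf-to-leaf path in $Q$'s tree.

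From $m$ i.i.d.\ samples $x^{(1)},\ldots,x^{(m)}\sim P$, form the plug-in estimator $\hat{\alpha}_{ij} := \frac{1}{m}\sum_{t=1}^m x_i^{(t)}x_j^{(t)}$. Each summand lies in $\{-1,1\}$, so Hoeffding's inequality gives $\Pr\lp[|\hat{\alpha}_{ij} - \alpha_{ij}^P| > \epsilon_0/4\rp] \le 2\exp(-m\epsilon_0^2/32)$. A union bound over the $\binom{n}{2}$ leaf pairs yields $\max_{i,j}|\hat{\alpha}_{ij}-\alpha_{ij}^P| \le \epsilon_0/4$ with probability at least $1-\delta$ whenever $m = \Omega\lp(\epsilon_0^{-2}\log(n/\delta)\rp) = \Omega\lp(n^{10}D^2\log(n/\delta)/\epsilon^2\rp)$, which matches the claimed sample bound.

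The test outputs ``$P=Q$'' if $\max_{i,j}|\hat{\alpha}_{ij} - \alpha_{ij}^Q| \le \epsilon_0/2$ and ``$\TV(P,Q)>\epsilon$'' otherwise. Conditioning on the high-probability concentration event, completeness follows because $P=Q$ forces $|\hat{\alpha}_{ij}-\alpha_{ij}^Q| = |\hat{\alpha}_{ij}-\alpha_{ij}^P| \le \epsilon_0/4 < \epsilon_0/2$, while soundness follows from the reverse triangle inequality applied to the pair identified by Theorem~\ref{thm:probabilistic}: $|\hat{\alpha}_{ij}-\alpha_{ij}^Q| \ge |\alpha_{ij}^P-\alpha_{ij}^Q| - |\hat{\alpha}_{ij}-\alpha_{ij}^P| > \epsilon_0 - \epsilon_0/4 > \epsilon_0/2$. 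The only substantive work is Theorem~\ref{thm:probabilistic} itself; the remaining ingredients (path-product evaluation of $\alpha_{ij}^Q$, Hoeffding concentration, and a threshold comparison) are standard, so the main thing to be careful about is merely tracking the constant factors inside $\epsilon_0$ so that the final sample complexity comes out as $O(n^{10}D^2\log(n/\delta)/\epsilon^2)$ rather than with an inflated exponent or extra logarithmic factors.
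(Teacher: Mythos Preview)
Your proposal is correct and matches the paper's own approach: the paper simply states that Corollary~\ref{cor:testing} ``can be proven directly by applying Theorem~\ref{thm:probabilistic}'' without giving further details, and your argument---taking the contrapositive of the different-topologies bound, estimating pairwise leaf correlations via Hoeffding plus a union bound, and thresholding against the known $\alpha_{ij}^Q$---is exactly the intended direct application. The only minor remark is that the paper's introduction (Goal~\ref{goal test}) phrases the problem with sample access to \emph{both} $P$ and $Q$; in that variant you would also empirically estimate $\alpha_{ij}^Q$, but this changes nothing beyond constants.
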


Corollary~\ref{cor:testing} can be proven directly by applying
Theorem~\ref{thm:probabilistic}.
It is worth noting that the diameter is typically of the order $O(\log n)$ in many applications of interest. 

%
%
%

To further show the utility of Theorem~\ref{thm:probabilistic}, we provide polynomial-time and polynomial-sample algorithms for learning tree-structured Ising models. We provide two algorithms: one that requires to know the structure of the tree in advance and the second does not require any assumption.

\begin{theorem}\label{thm:alg}
Given $m$ samples from the joint distribution over the leaves of some tree-structured Ising model and given a target error $\epsilon>0$ and confidence $\delta >0$, there exist polynomial-time algorithms for learning a tree-structured Ising model whose marginal over the leaves is $\epsilon$-close to the true marginal in total variation distance, with probability $1-\delta$, with the following sample complexities:
\begin{itemize}
    \item \textbf{Known topology:} If the tree topology $T = (V,E)$ is known and the weights are unknown, the sample complexity is $m = O(n^4 \log (n/\delta)/\epsilon^2)$.
    \item \textbf{Unknown topology:} If the tree and weights are unknown, then $m = O(n^{14}\log (n/\delta)/\epsilon^6)$. 
\end{itemize}
\end{theorem}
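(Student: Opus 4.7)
The plan is to reduce both tasks to estimating pairwise leaf correlations and then invoking Theorem~\ref{thm:probabilistic}. Since $x_i x_j \in \{-1,1\}$, Hoeffding together with a union bound over the $\binom{n}{2}$ leaf pairs gives $\max_{i,j}|\hat\alpha_{ij}-\alpha_{ij}| \le \epsilon_1$ with probability $\ge 1-\delta$ from $m = O(\log(n/\delta)/\epsilon_1^2)$ samples. Both algorithms then output a tree Ising model $\hat Q$ whose pairwise leaf correlations $\tilde\alpha_{ij}$ are within $O(\epsilon_1)$ of the true $\alpha_{ij}$; Theorem~\ref{thm:probabilistic} converts this into a total variation bound, and $\epsilon_1$ is tuned so that this bound is $\epsilon$.

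For the known-topology case, I would fit the weights by inverting the path-product formula \eqref{eq:path-correlation}. Because all internal nodes have degree $3$, for any internal edge $(u,v)$ I can pick leaves $a,b$ in the two subtrees hanging off $u$ (other than the one containing $v$) and leaves $c,d$ in the two subtrees hanging off $v$ (other than the one containing $u$); substituting into \eqref{eq:path-correlation} gives the four-point identity $\theta_{uv}^2 = \alpha_{ac}\alpha_{bd}/(\alpha_{ab}\alpha_{cd})$. An analogous three-point identity $\theta_{\ell v}^2 = \alpha_{\ell a}\alpha_{\ell b}/\alpha_{ab}$ handles leaf edges $(\ell,v)$. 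Substituting $\hat\alpha$ for $\alpha$ gives estimates of the magnitudes $|\hat\theta_{uv}|$; the sign ambiguity is harmless because flipping the spin of any internal node is a gauge symmetry of the leaf distribution, so signs can be fixed consistently from a small collection of path-product signs. A direct perturbation argument then gives $|\tilde\alpha_{ij}-\alpha_{ij}| = O(\epsilon_1)$ uniformly, so Theorem~\ref{thm:probabilistic} (same topology) yields $\TV(\hat Q,P) \le 2n^2\cdot O(\epsilon_1)$, and $\epsilon_1 = \Theta(\epsilon/n^2)$ delivers $m = O(n^4\log(n/\delta)/\epsilon^2)$.

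For the unknown-topology case, I would first reconstruct a topology $\hat T$ from the noisy correlations using a standard phylogenetic routine (for example quartet/four-point reconstruction or neighbor joining), exploiting the fact that $d_{ij} := -\log|\alpha_{ij}|$ is an additive tree metric on the leaves, and then fit weights on $\hat T$ exactly as above. The crucial point is that, because Theorem~\ref{thm:probabilistic} bounds TV even when the two trees differ, I do not need to recover $T$ exactly: it suffices to output any $\hat T$ whose induced leaf correlations are within $O(\epsilon_1)$ of the empirical ones. Plugging the final pairwise error into the different-topology bound $O(Dn^5\,\epsilon_1)$, with $D \le 2n$, and propagating error through reconstruction and weight-fitting, yields the stated $m = O(n^{14}\log(n/\delta)/\epsilon^6)$.

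The main obstacle is the unknown-topology step: edges of very small weight cause $-\log|\hat\alpha|$ to blow up, make several topologies equivalent on the leaf distribution, and break off-the-shelf phylogenetic algorithms. I would handle this by contracting (or otherwise quarantining) any edge whose estimated weight lies below a carefully chosen threshold and showing that this contraction perturbs pairwise leaf correlations by at most $O(\epsilon_1)$; balancing this threshold against the accumulated reconstruction error is the source of the higher polynomial exponents in the unknown-topology bound.
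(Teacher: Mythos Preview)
Your high-level reduction---estimate all pairwise leaf correlations, fit a tree model whose leaf correlations match the estimates, invoke Theorem~\ref{thm:probabilistic}---is exactly the paper's strategy. The differences, and the gaps, are in how you carry out the fitting step.

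For the known-topology case, your four-point inversion $\theta_{uv}^2 = \hat\alpha_{ac}\hat\alpha_{bd}/(\hat\alpha_{ab}\hat\alpha_{cd})$ is not stable: nothing prevents $\alpha_{ab}$ or $\alpha_{cd}$ from being arbitrarily small (take one of the two subtrees hanging off $u$ to be weakly coupled to $u$), in which case the ratio is dominated by noise and $\hat\theta_{uv}$ can be arbitrary. Crucially, this can happen even when some leaf pair $i,j$ whose path traverses $(u,v)$ has large $\alpha_{ij}$, so your claimed ``direct perturbation argument'' giving $|\tilde\alpha_{ij}-\alpha_{ij}|=O(\epsilon_1)$ uniformly does not go through. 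The paper sidesteps this entirely: it writes a linear program in the log-weights $w_{k\ell}=\log|\theta_{k\ell}|$ with constraints $\log(|\hat\alpha_{ij}|-\eta)\le\sum_{(k,\ell)\in P_{ij}}w_{k\ell}\le\log(|\hat\alpha_{ij}|+\eta)$, which is feasible (the true weights witness it) and \emph{by construction} returns a model whose leaf correlations are $O(\eta)$-close to the empirical ones. Signs are then recovered by a linear system over $\mathbb{F}_2$, similar in spirit to your gauge remark.

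For the unknown-topology case, your plan diverges more substantially from the paper and has a real gap. You propose to reconstruct a single tree $\hat T$ via quartets or neighbor-joining and then appeal to the different-topology half of Theorem~\ref{thm:probabilistic}. But that theorem requires the \emph{output model's} leaf correlations to be $O(\epsilon_1)$-close to the true ones, and you do not explain how to achieve this when $\hat T\neq T^*$: the LP (or your four-point formulas) run on a wrong topology need not be feasible, and near-zero edge weights make many topologies indistinguishable. The paper does not use the different-topology bound at all here. Instead it invokes the forest-reconstruction algorithm of \cite{daskalakis2009phylogenies}, which outputs a forest whose components each have (up to contracting near-weight-$1$ edges) the \emph{correct} induced topology, with cross-component correlations provably small. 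The analysis then interpolates: contract shared edges (cost $O(n\xi)$ in TV), disconnect the components (cost $O(n^2\sqrt{\delta})$ via the \emph{same-topology} bound, since this is just zeroing edge weights), and finally run the known-topology LP inside each component. Optimizing the tradeoff $\xi\delta\ge\eta$ is what produces the $n^{14}/\epsilon^6$ scaling; your back-of-the-envelope with $O(Dn^5\epsilon_1)$ and $D\le 2n$ would give $n^{12}/\epsilon^2$, so the stated bound cannot arise from your route without the extra structure the paper imports.
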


The algorithms for both settings consist of two steps: first, they empirically estimate the pairwise correlations between every pair of leaves. Then, they utilize an algorithm that, given these approximate correlations, finds \emph{some} tree Ising model whose pairwise correlations are close to the estimated ones. The guarantees of those algorithms then follow directly from Theorem~\ref{thm:probabilistic}. Notice that for the case of known topology, it is possible to implement this algorithm using a simple linear programming, as outlined in Section~\ref{s:algo}. While for unknown topology, the algorithm is slightly more complicated, and it is outlined in Section~\ref{sec:pr-alg-unknown}.

The only prior work that we are aware of, that provided a polynomial time algorithm for learning in total variation without any restrictions on the weights of the model, is \cite{cryan2001evolutionary}.
They do not explicitly state the sample complexity and it can be inferred from the proof to be at least $n^{89}/\epsilon^{18}$. We notice that their result is slightly more general and holds for general trees with a binary alphabet. We provide a brief technical comparison with their result in Section~\ref{s:algo}. We further note that information theoretically, $\tilde\Theta(n/\epsilon^2)$ samples are sufficient and necessary for learning the joint distribution, in the known and unknown topology settings, and these are also the best known lower bounds for efficient algorithms. This is well known, and can be shown, for example, by modifying the arguments in \citet{devroye2020minimax,brustle2020multi,koehler2020note}. For completeness, we provide a sketch of these arguments in Section~\ref{sec:info-theor}. It remains an open problem whether there is a statistical-computational gap in this setting.

\subsection*{Acknowledgements.}
Constantinos Daskalakis was supported by NSF Awards CCF-1901292, DMS-2022448 and DMS2134108, a Simons Investigator Award, the Simons Collaboration on the Theory of Algorithmic Fairness, a DSTA grant, and the DOE PhILMs project (DE-AC05-76RL01830).
Vardis Kandiros was supported by a Fellowship of the Eric and Wendy Schmidt Center at the Broad Institute of MIT and Harvard and by the Onassis
Foundation-Scholarship ID: F ZP 016-1/2019-2020.
Yuval Dagan gratefully acknowledges the NSF's support of FODSI through grant DMS-2023505.

\section{Technical Contributions -- Proof Sketch}

We describe the main tools for the proofs of Theorem~\ref{thm:probabilistic}.

\subsection{Preliminaries}
For the discussion, fix some tree $T=(V,E)$. For any leaves $i,j$, let $P_{ij}$ denote the path connecting them.
Denote by $\theta$ any vector in $[-1,1]^E$ whose entry $\theta_e$ denotes the correlation across the edge $e\in E$.
When we write $\alpha,\hat{\alpha}$ etc., this corresponds to a vector in $[-1,1]^{n\choose 2}$, whose entries, $\alpha_{ij}$ are indexed by two distinct leaves $i\ne j$. In general $\alpha$ can be an arbitrary vector, yet, we say that $\alpha$ is \emph{induced} by some probability distribution on a tree $T$ if it represents the pairwise correlations of the leaves in some Ising model that is defined over the tree (i.e. if \eqref{eq:path-correlation} holds for some edge-correlations $\{\theta_e\}_{e\in E}$). Given a tree $T$, edge-correlations $\theta$ and $x\in \{-1,1\}^n$,
denote by $\Pr_{T,\theta}[x]$ the probability that the leaves equal $x$ under the Ising model defined by $T$ and $\theta$. We say that $\Pr_{T,\theta}$ is the \emph{leaf distribution} over $\{-1,1\}^n$.

First, we define a pair of leaves $i,j$ to be a \emph{cherry} if they share their common neighbor (recall that a leaf has only one neighbor). In other words, if one directs the edges from some internal node to the leaves, $i$ and $j$ would share their parent. 



\subsection{An expression for the probability distribution on the leaves from \cite{bresler2020learning}} \label{sec:closed-form}

We describe a convenient closed-form expression for the probability distribution over the leaves of the tree.
To describe it, we begin with some definitions. Let $S$ be a subset of the leaves of even cardinality.
Then, there is a \emph{unique}\footnote{The uniqueness holds if each internal node has degree $3$ and this assumption is without loss of generality, as we explain in a footnote in Section~\ref{sec:intro}.} way to partition $S$ into $|S|/2$ pairs $(x_1,y_1), \ldots, (x_{|S|/2}, y_{|S|/2})$ such that the path connecting $x_i$ and $y_i$ is edge-disjoint from the path connecting $x_j$ and $y_j$, for all $i\ne j$. This partitioning can be obtained by matching leaves that are closest to being a cherry (i.e.\ siblings have highest precedence) repeatedly.
For example, if we have $S = \{i,j,k,\ell,m,p\}$ in the tree shown in \ref{fig:path-removal}, then we partition $S$ into $(i,\ell),(j,m)$ and $(k,p)$. 
The leaf distribution can be described as the following multilinear function of $x$, whose coefficients, that are indexed by sets $S$ of even cardinality, rely on the aforementioned partitioning into pairs:
\begin{equation}\label{eq:f-def}
f_x^T(\alpha) 
:= 2^{-n} \cdot \sum_{\stackrel{S \subseteq [n]}{|S| \text{ even}}}\alpha_S^T\prod_{i \in S} x_i~, \quad \text{where }
\alpha_S^T := \prod_{i=1}^{|S|/2} \alpha_{x_i y_i}
\end{equation}
The following lemma, which is a special case of Theorem H.1 in \cite{bresler2020learning}, argues that $f^T_x(\alpha)$ is the leaf distribution, as a function of $x$.

\begin{lemma}[\cite{bresler2020learning}]\label{l:basic_extend_intro}
For any latent tree distribution with tree-topology $T$ whose pairwise correlations over the leaves equal $\alpha = (\alpha_{ij})_{i,j \text{ leaves}}$. Then, the probability of any configuration $x = (x_1,\dots,x_n)\in \{-1,1\}^n$ on the leaves equals $f_x^T(\alpha)$.
\end{lemma}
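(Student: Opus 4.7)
The plan is to compute $\Pr_{T,\theta}[x]$ directly by marginalizing the joint distribution over internal nodes and matching the resulting polynomial with the definition of $f_x^T(\alpha)$ in~\eqref{eq:f-def}. Using the root-and-propagate description of the model, the joint distribution on all $n+n'$ spins is
$$
\Pr[x_1,\dots,x_{n+n'}] \;=\; 2^{-(n+n')} \prod_{(i,j)\in E}\bigl(1+\theta_{ij}x_ix_j\bigr).
$$
I would expand the product as a sum over edge-subsets, $\prod_{(i,j)\in E}(1+\theta_{ij}x_ix_j)=\sum_{F\subseteq E}\prod_{(i,j)\in F}\theta_{ij}x_ix_j$, and then carry out the marginalization $\sum_{x_{n+1},\dots,x_{n+n'}}$ inside the sum. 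For a fixed $F$, the $x_v$-exponent equals the degree $d_F(v)$, so summing an internal $x_v\in\{-1,1\}$ gives $2$ if $d_F(v)$ is even and $0$ otherwise. Thus only edge-subsets $F$ for which every internal vertex has even $F$-degree survive, and each contributes a factor of $2^{n'}$.

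Next, I would index the surviving subsets by the set $S\subseteq[n]$ of leaves with $d_F(v)=1$ (the remaining leaves having $d_F(v)=0$, since leaves have only one incident edge); this set automatically has even cardinality because $\sum_v d_F(v)=2|F|$. The main combinatorial step is to show that for each such $S$ there is a \emph{unique} $F_S\subseteq E$ with the required parities at every vertex, and that $F_S$ decomposes into the edge-disjoint leaf-to-leaf paths of the unique pairing from Section~\ref{sec:closed-form}. Uniqueness is immediate from linear algebra over $\mathbb{F}_2$: the incidence matrix of a tree has trivial cycle space, so the parity constraints admit at most one solution. Existence follows by picking any pairing of $S$ and taking the symmetric difference of the corresponding paths, which has degree $1$ exactly on $S$ and even degree elsewhere. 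Because every internal vertex has tree-degree $3$ and even $F_S$-degree, it has $F_S$-degree $0$ or $2$; hence $F_S$ is a vertex-disjoint at internal nodes union of edge-disjoint paths whose endpoints are precisely $S$, recovering the pairing of Section~\ref{sec:closed-form}.

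Using this, the contribution of $F_S$ to the marginal is $\prod_{(i,j)\in F_S}\theta_{ij}\cdot\prod_{v\in S}x_v$, and grouping the edges along each path of the pairing gives
$$
\prod_{(i,j)\in F_S}\theta_{ij} \;=\; \prod_{k=1}^{|S|/2}\prod_{(u,w)\in P_{x_k y_k}}\theta_{uw} \;=\; \prod_{k=1}^{|S|/2}\alpha_{x_k y_k} \;=\; \alpha_S^T
$$
by~\eqref{eq:path-correlation}. Combining the prefactors $2^{-(n+n')}\cdot 2^{n'}=2^{-n}$ yields
$$
\Pr_{T,\theta}[x] \;=\; 2^{-n}\sum_{\substack{S\subseteq[n]\\|S|\text{ even}}}\alpha_S^T\prod_{i\in S}x_i \;=\; f_x^T(\alpha),
$$
as claimed.

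The step I expect to be the main obstacle is justifying that the unique parity-respecting edge set $F_S$ really coincides with the specific edge-disjoint pairing of $S$ quoted from Section~\ref{sec:closed-form}; once this identification is in hand, the rest is bookkeeping of signs and powers of two. The uniqueness of the pairing itself is footnoted as relying on all internal nodes having degree $3$, and the argument above shows why: only under that degree bound is the surviving edge set forced to be a simple union of paths (as opposed to a more general even subgraph), which is what makes the single monomial $\alpha_S^T$ appear instead of a sum of pairings.
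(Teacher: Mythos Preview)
Your proof is correct and takes a genuinely different route from the paper. The paper (in Lemma~\ref{l:basic_extend}) argues by induction on the number of leaves: it peels off a cherry $\{n-1,n\}$ with parent $p$, conditions on the value $y_p$, applies the induction hypothesis to the tree where $p$ becomes a leaf, and then carefully bijects the even subsets of $[n]$ (split into four classes according to which of $n-1,n$ they contain) with the even subsets of $[n-2]\cup\{p\}$, verifying the coefficients match. Your argument instead expands the full joint distribution over all $n+n'$ spins, marginalizes the internal nodes in one shot via a parity argument, and then identifies the surviving edge-set $F_S$ as the unique $\mathbb{F}_2$-solution to the incidence constraints, exploiting the trivial cycle space of a tree. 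What your approach buys is that both the formula and the uniqueness of the closest-relative matching fall out simultaneously from the same linear-algebraic fact, without a case analysis; the paper's induction, by contrast, has to track four subset families and their coefficients explicitly, but in exchange gives a recursive picture that mirrors how the rest of the paper manipulates the tree (via cherries). Your identification of $F_S$ with a disjoint union of paths is sound: since internal vertices have tree-degree $3$, their $F_S$-degree is forced to be $0$ or $2$, so the connected components of $F_S$ are simple paths with endpoints exactly in $S$, and this is precisely the edge-disjoint pairing whose uniqueness the paper also asserts.
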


For convenience, we give a proof of Lemma~\ref{l:basic_extend_intro} in Section~\ref{s:distribution}.



\subsection{Technical tools for the tensorization of Theorem~\ref{thm:probabilistic} (same topology)}\label{sec:tools-same-top}
We now utilize Lemma~\ref{l:basic_extend_intro} to prove Theorem~\ref{thm:probabilistic} (same topology).
The total variation distance between two distributions on the same topology $T$ with induced weight-vectors $\alpha$ and $\hat{\alpha}$ is $\sum_x |\Pr_{T,\alpha}[x]- \Pr_{T,\hat\alpha}[x]|/2 = \sum_x |f_x(\alpha) - f_x(\hat\alpha)|/2$, where $T$ is omitted from $f_x^T$ for brevity.
We would like to bound the above expression, assuming that $\alpha$ and $\hat{\alpha}$ are close and this corresponds to showing some Lipschitzness properties on $f_x(\alpha)$. We show that such a Lipschitzness property holds in the neighborhood of a probability distribution. Namely, that if $\alpha$ is induced by a distribution and we change one entry of $\alpha$, then $f_x(\alpha)$ does not change much: 


\begin{lemma}[Formal statement in Lemma~\ref{l:difference_fixed}]\label{l:lipschitz}
Suppose $\alpha \in [-1,1]^{n\choose 2}$ is induced by some probability distribution on a tree $T$. Denote by $\alpha^{(ij)} \in [-1,1]^{n\choose 2}$ the vector that agrees with $\alpha$ everywhere, except for pair of leaves $ij$, where $\alpha^{(ij)}_{ij} \ne \alpha_{ij}$.
Then,
$$
\sum_{x\in \{-1,1\}^n}|f_x(\alpha) - f_x(\alpha^{(ij)})| \leq |\alpha_{ij} - \alpha^{(ij)}_{ij}|
$$
\end{lemma}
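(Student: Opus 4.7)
The plan is to expand the difference using \eqref{eq:f-def} and isolate the contribution of the single coordinate that changes. Since only the $ij$-entry differs between $\alpha$ and $\alpha^{(ij)}$, the product $\alpha_S^T$ changes under this swap if and only if the canonical pairing of $S$ contains the pair $(i,j)$. For such an $S$, necessarily $S \supseteq \{i,j\}$, and by uniqueness of the pairing the remaining pairs form the canonical pairing of $S' := S \setminus \{i,j\}$ using paths that are edge-disjoint from $P_{ij}$. Letting $\mathcal{T}$ denote the family of even-cardinality $S' \subseteq [n]\setminus\{i,j\}$ admitting such a pairing, this yields
\begin{equation*}
f_x(\alpha) - f_x(\alpha^{(ij)}) \;=\; \frac{\alpha_{ij} - \alpha^{(ij)}_{ij}}{2^n}\, x_i x_j \, g(x), \qquad g(x) \;:=\; \sum_{S' \in \mathcal{T}} \alpha^T_{S'} \prod_{k \in S'} x_k .
\end{equation*}
Since $|x_i x_j| = 1$, bounding the sum of interest reduces to computing $2^{-n}\sum_x |g(x)|$.

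Next, I would decompose $T$ after deleting the edges of $P_{ij}$: apart from the isolated leaves $\{i\}$ and $\{j\}$, each connected component $T_m$ is the ``hanging subtree'' attached to some internal node $v_m$ of $P_{ij}$; let $L_m$ denote the leaves of $T$ sitting inside $T_m$. Because any $T$-path leaving some $T_m$ must traverse $P_{ij}$, one has $S' \in \mathcal{T}$ iff $S'_m := S' \cap L_m$ has even cardinality for every $m$, and moreover the canonical pairing of $S'$ in $T$ is the disjoint union of the canonical pairings of the $S'_m$ inside the $T_m$. Since for $k,l \in L_m$ the $T$-path and the $T_m$-path coincide, the inherited pair-correlations agree: $\alpha^T_{kl} = \alpha^{T_m}_{kl}$. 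Collecting these observations gives the factorization
\begin{equation*}
g(x) \;=\; \prod_m h_m\!\left(x|_{L_m}\right), \qquad h_m(y) \;:=\; \sum_{\substack{S_m \subseteq L_m \\ |S_m|\text{ even}}} \alpha^{T_m}_{S_m} \prod_{k \in S_m} y_k .
\end{equation*}

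The crux is to show $g \ge 0$ pointwise. Here I would apply Lemma~\ref{l:basic_extend_intro} to the subtree $T_m$ equipped with the restriction of $\theta$ (which is a valid Ising model because $\alpha$ is assumed to be induced by one on $T$). The leaves of $T_m$ are $L_m \cup \{v_m\}$; summing the $T_m$-leaf distribution $f^{T_m}_{(y,y_{v_m})}(\alpha^{T_m})$ over $y_{v_m} \in \{-1,1\}$ annihilates every term containing $v_m$ and leaves exactly $h_m(y)/2^{|L_m|}$. Thus $h_m$ equals $2^{|L_m|}$ times a marginal probability mass, so $h_m \ge 0$, and therefore $g(x) \ge 0$ for every $x \in \{-1,1\}^n$.

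Finally, $g \ge 0$ combined with character orthogonality (namely $\sum_{x \in \{-1,1\}^n} \prod_{k \in S'} x_k$ equals $2^n$ when $S' = \emptyset$ and vanishes otherwise) collapses the $x$-sum to its constant coefficient, giving $\sum_x g(x) = \alpha^T_\emptyset \cdot 2^n = 2^n$, whence
\begin{equation*}
\sum_{x \in \{-1,1\}^n} \lvert f_x(\alpha) - f_x(\alpha^{(ij)}) \rvert \;=\; \frac{|\alpha_{ij} - \alpha^{(ij)}_{ij}|}{2^n}\sum_x g(x) \;=\; |\alpha_{ij} - \alpha^{(ij)}_{ij}| .
\end{equation*}
I expect the main obstacle to lie in the structural content of the second and third paragraphs: the bijection between $\mathcal{T}$ and the even-parity constraints on the hanging subtrees of $T\setminus P_{ij}$, and the reinterpretation of each $h_m$ as a genuine marginal on $T_m$. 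This is exactly where the hypothesis that $\alpha$ is induced by a probability distribution enters the argument; without it, $g$ need not be nonnegative and the absolute-value sum could be much larger than $|\alpha_{ij} - \alpha^{(ij)}_{ij}|$.
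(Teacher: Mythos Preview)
Your proposal is correct and follows essentially the same route as the paper's proof (Lemma~\ref{l:difference_fixed} and the sketch after Lemma~\ref{l:lipschitz}): isolate the sets $S$ in which $(i,j)$ appears as a matched pair, characterize them via the even-intersection condition with the hanging subtrees off $P_{ij}$, factor the residual sum over those subtrees, and recognize the factor as a genuine leaf distribution on $T\setminus P_{ij}$ (hence nonnegative and summing to $1$). The only cosmetic difference is that the paper packages the factor directly as $f_x^{T\setminus\{i,j\}}(\gamma)$ for the distribution obtained by zeroing the weights along $P_{ij}$, whereas you reach nonnegativity by marginalizing each $h_m$ over the attachment vertex $v_m$ and then invoke character orthogonality to collapse the sum; both arrive at equality rather than just the stated inequality.
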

\begin{proof}[Proof sketch]
Let $\theta$ denote the weight vector on the edges of $T$ that induces the correlation-vector $\alpha$ in accordance to \eqref{eq:path-correlation}, and let $\theta'$ be the weight vector that is obtained from $\theta$ by replacing any weight along the path from $i$ to $j$ with $0$. Then, it is straightforward to see that for all $x \in \{-1,1\}^n$,
\begin{equation}\label{eq:replacing-one}
\frac{|f_x(\alpha) - f_x(\alpha^{(ij)})|}{|\alpha_{ij} - \alpha^{(ij)}_{ij}|}
= \Pr_{T,\theta'}[x].
\end{equation}
In other words, the ratio above equals the probability of $x$ in the distribution that is obtained from $\Pr_{T,\theta}$ by removing the path from $i$ to $j$ in $T$, as depicted in Figure~\ref{fig:path-removal} (a). Since the right hand side represents a distribution over $x$, if we sum over $x$ the result equals $1$.
\end{proof}

\begin{figure}[t]
\centering
\resizebox{\linewidth}{!}{%
\begin{tikzpicture}
%
%
\node[draw, circle, minimum size=15pt, inner sep=2pt] at (0,0) (i-left) {};
\node[draw, circle, minimum size=15pt, inner sep=2pt, right=30pt of i-left] (v1-left) {};
\node[draw, circle, minimum size=15pt, inner sep=2pt, right=50pt of v1-left] (v2-left) {};
\node[draw, circle, minimum size=15pt, inner sep=2pt, right=50pt of v2-left] (v3-left) {};
\node[draw, circle, minimum size=15pt, inner sep=2pt, right=30pt of v3-left] (j-left) {};
\node[draw, circle, minimum size=15pt, inner sep=2pt, below=30pt of v1-left] (v4-left) {};
\node[draw, circle, minimum size=15pt, inner sep=2pt, left=30pt of v4-left] (k-left) {};
\node[draw, circle, minimum size=15pt, inner sep=2pt, right=30pt of v4-left] (v5-left) {};
\node[draw, circle, minimum size=15pt, inner sep=2pt, below=30pt of v3-left] (m-left) {};
\node[draw, circle, minimum size=15pt, inner sep=2pt, above=30pt of v2-left] (v6-left) {};
\node[draw, circle, minimum size=15pt, inner sep=2pt, left=30pt of v6-left] (v7-left) {};
\node[draw, circle, minimum size=15pt, inner sep=2pt, left=30pt of v7-left] (l-left) {};
\node[draw, circle, minimum size=15pt, inner sep=2pt, right=30pt of v6-left] (v8-left) {};
\node[draw, circle, minimum size=15pt, inner sep=2pt, below=15pt of v7-left] (v9-left) {};

\draw[thick, dashed] (i-left) -- (v1-left);
\draw[thick, dashed] (v1-left) -- (v2-left);
\draw[thick, dashed] (v2-left) -- (v3-left);
\draw[thick, dashed] (v3-left) -- (j-left);
\draw[thick] (v1-left) -- (v4-left);
\draw[thick] (v4-left) -- (k-left);
\draw[thick] (v4-left) -- (v5-left);
\draw[thick] (v3-left) -- (m-left);
\draw[thick] (v2-left) -- (v6-left);
\draw[thick] (v6-left) -- (v7-left);
\draw[thick] (v7-left) -- (l-left);
\draw[thick] (v6-left) -- (v8-left);
\draw[thick] (v7-left) -- (v9-left);

\node[above=5pt of i-left] {\small $i$};
\node[above=5pt of j-left] {\small $j$};
\node[above=5pt of k-left] {\small $k$};
\node[left=5pt of l-left] {\small $\ell$};
\node[right=5pt of m-left] {\small $m$};
\node[above=5pt of v5-left] {\small $p$};
\node[below=50pt of v2-left] {$(a)$};

%
%
\node[draw, circle, minimum size=15pt, inner sep=2pt] at (10,0) (i-right) {};
\node[draw, circle, minimum size=15pt, inner sep=2pt, right=30pt of i-right] (v1-right) {};
\node[draw, circle, minimum size=15pt, inner sep=2pt, right=50pt of v1-right] (v2-right) {};
\node[draw, circle, minimum size=15pt, inner sep=2pt, right=50pt of v2-right] (v3-right) {};
\node[draw, circle, minimum size=15pt, inner sep=2pt, right=30pt of v3-right] (j-right) {};
\node[draw, circle, minimum size=15pt, inner sep=2pt, below=30pt of v1-right] (v4-right) {};
\node[draw, circle, minimum size=15pt, inner sep=2pt, left=30pt of v4-right] (k-right) {};
\node[draw, circle, minimum size=15pt, inner sep=2pt, right=30pt of v4-right] (v5-right) {};
\node[draw, circle, minimum size=15pt, inner sep=2pt, below=30pt of v3-right] (m-right) {};
\node[draw, circle, minimum size=15pt, inner sep=2pt, above=30pt of v2-right] (v6-right) {};
\node[draw, circle, minimum size=15pt, inner sep=2pt, left=30pt of v6-right] (v7-right) {};
\node[draw, circle, minimum size=15pt, inner sep=2pt, left=30pt of v7-right] (l-right) {};
\node[draw, circle, minimum size=15pt, inner sep=2pt, right=30pt of v6-right] (v8-right) {};
\node[draw, circle, minimum size=15pt, inner sep=2pt, below=15pt of v7-right] (v9-right) {};

\draw[thick, dashed] (i-right) -- (v1-right);
\draw[thick, dashed] (v1-right) -- (v2-right);
\draw[thick, dashed] (v2-right) -- (v3-right);
\draw[thick] (v3-right) -- (j-right);
\draw[thick, dashed] (v1-right) -- (v4-right);
\draw[thick, dashed] (v4-right) -- (k-right);
\draw[thick] (v4-right) -- (v5-right);
\draw[thick, dashed] (v3-right) -- (m-right);
\draw[thick, dashed] (v2-right) -- (v6-right);
\draw[thick, dashed] (v6-right) -- (v7-right);
\draw[thick, dashed] (v7-right) -- (l-right);
\draw[thick] (v6-right) -- (v8-right);
\draw[thick] (v7-right) -- (v9-right);

\node[above=5pt of i-right] {\small $i$};
\node[above=5pt of j-right] {\small $j$};
\node[above=5pt of k-right] {\small $k$};
\node[left=5pt of l-right] {\small $\ell$};
\node[right=5pt of m-right] {\small $m$};
\node[above=5pt of v5-right] {\small $p$};
\node[below=50pt of v2-right] {$(b)$};
\end{tikzpicture}
}
\caption{Path removal: In (a) We depict the graph obtained from $T$ by removing the path from $i$ to $j$ (i.e. the dashed edges are being removed). In (b) we depict a removal of the quartet $\{i,k,\ell,m\}$.}
\label{fig:path-removal}
\end{figure}
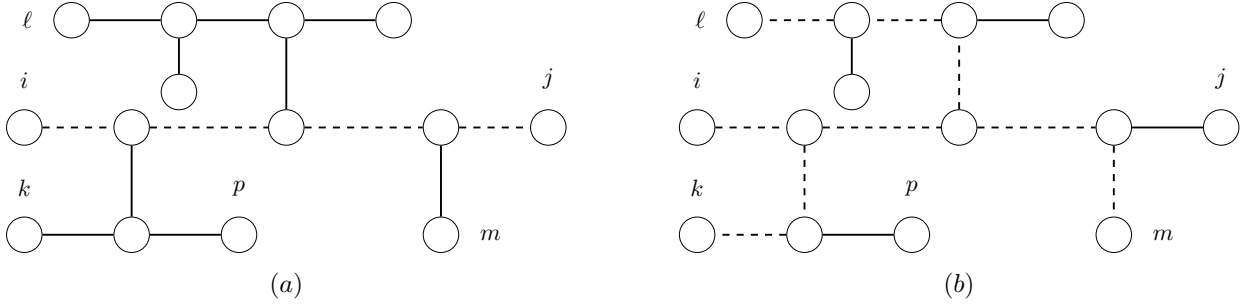
To bound the total variation between two weight vectors $\alpha$ and $\hat{\alpha}$, one would attempt to directly apply Lemma~\ref{l:lipschitz} multiple times, each time substituting one entry of $\alpha$ with its corresponding entry of $\hat{\alpha}$. However, in the process of transforming the vector one coordinate at a time, we may stumble upon an intermediate weight vector $\alpha'$ that is \emph{not} induced by a probability distribution and so Lemma~\ref{l:lipschitz} does not apply. Hence, one has to prove an analogue of Lemma~\ref{l:lipschitz} for the case that $\alpha$ is close to being induced by a distribution.
Interestingly, this can be proved by an inductive application of Lemma~\ref{l:lipschitz}.


\subsection{Technical Tools for the tensorization of Theorem~\ref{thm:probabilistic} (different topologies)}

In this section, we aim to bound the total variation distance between a probability distribution defined on a tree $T$ with weights $\alpha$ and another defined on $\hat{T}$ with weight $\hat{\alpha}$, under the assumptions that the weights are $\epsilon$-close: $|\alpha_{ij}-\hat\alpha_{ij}|\le \epsilon$.
To compare between two different topologies, we will use the known fact (folklore) that the topology on the nodes of a tree is completely determined by the set of all subgraphs that are induced by four leaves (\emph{quartets}).
Hence, in order to analyze the difference between two graphs, we can analyze the difference between these subgraphs.
This is significantly easier to analyze since the subgraphs contain only four nodes each.
For this purpose, we introduce below some useful concenpts, inspired by the phylogenetics literature.

\paragraph{Definitions of a quartet.} 

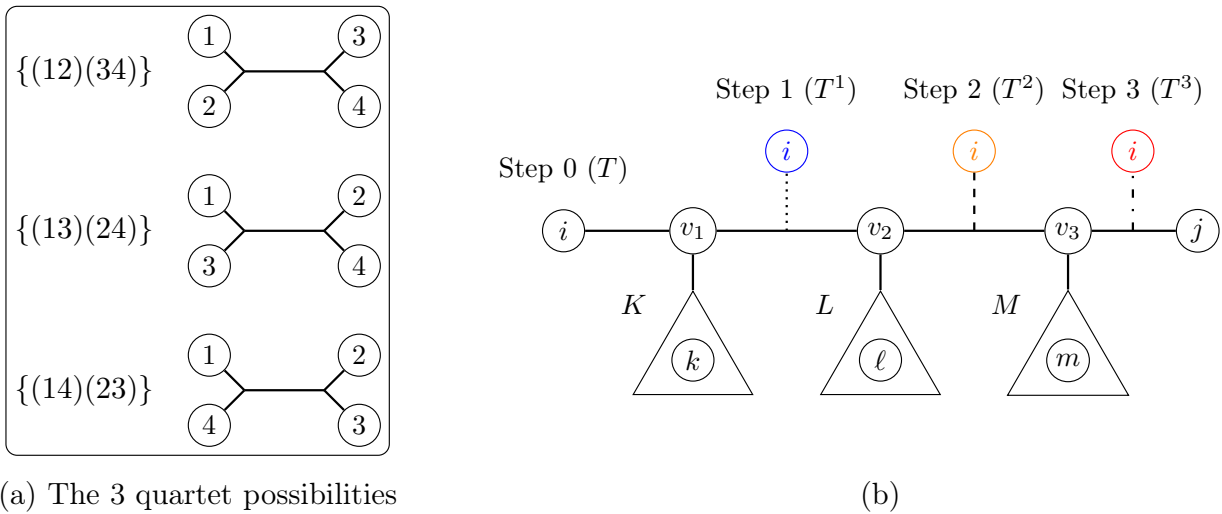
\begin{figure}[htbp]
\centering
\resizebox{\linewidth}{!}{%
\begin{tikzpicture}
%
%
\coordinate (midL1) at (-4,2);
\coordinate (midR1) at (-3,2);
\node[draw, circle, minimum size=15pt, inner sep=2pt, above left=10pt of midL1] (i1) {\small $1$};
\node[draw, circle, minimum size=15pt, inner sep=2pt, below left=10pt of midL1] (j1) {\small $2$};
\node[draw, circle, minimum size=15pt, inner sep=2pt, above right=10pt of midR1] (k1) {\small $3$};
\node[draw, circle, minimum size=15pt, inner sep=2pt, below right=10pt of midR1] (l1) {\small $4$};
\draw[thick] (midL1) -- (midR1);
\draw[thick] (midL1) -- (i1);
\draw[thick] (midL1) -- (j1);
\draw[thick] (midR1) -- (k1);
\draw[thick] (midR1) -- (l1);

%
%
\coordinate (midL2) at (-4,0);
\coordinate (midR2) at (-3,0);
\node[draw, circle, minimum size=15pt, inner sep=2pt, above left=10pt of midL2] (i2) {\small $1$};
\node[draw, circle, minimum size=15pt, inner sep=2pt, below left=10pt of midL2] (k2) {\small $3$};
\node[draw, circle, minimum size=15pt, inner sep=2pt, above right=10pt of midR2] (j2) {\small $2$};
\node[draw, circle, minimum size=15pt, inner sep=2pt, below right=10pt of midR2] (l2) {\small $4$};
\draw[thick] (midL2) -- (midR2);
\draw[thick] (midL2) -- (i2);
\draw[thick] (midL2) -- (k2);
\draw[thick] (midR2) -- (j2);
\draw[thick] (midR2) -- (l2);

%
%
\coordinate (midL3) at (-4,-2);
\coordinate (midR3) at (-3,-2);
\node[draw, circle, minimum size=15pt, inner sep=2pt, above left=10pt of midL3] (i3) {\small $1$};
\node[draw, circle, minimum size=15pt, inner sep=2pt, below left=10pt of midL3] (l3) {\small $4$};
\node[draw, circle, minimum size=15pt, inner sep=2pt, above right=10pt of midR3] (j3) {\small $2$};
\node[draw, circle, minimum size=15pt, inner sep=2pt, below right=10pt of midR3] (k3) {\small $3$};
\draw[thick] (midL3) -- (midR3);
\draw[thick] (midL3) -- (i3);
\draw[thick] (midL3) -- (l3);
\draw[thick] (midR3) -- (j3);
\draw[thick] (midR3) -- (k3);

%
%
\node[inner sep=0pt] at ($(midL1) + (-2,0)$) (1234) {$\{(12)(34)\}$};
\node[inner sep=0pt] at ($(midL2) + (-2,0)$) (1324) {$\{(13)(24)\}$};
\node[inner sep=0pt] at ($(midL3) + (-2,0)$) (1423) {$\{(14)(23)\}$};
\node[fit=(i1)(j1)(k1)(l1)(i2)(j2)(k2)(l2)(i3)(j3)(k3)(l3)(1234)(1324)(1423), draw, inner sep=3pt, rounded corners] (possibilities) {};
\node[below=5pt of possibilities] (a-label) {(a) The 3 quartet possibilities};


%
%
\node[draw, circle, minimum size=15pt, inner sep=2pt] at (0,0) (i) {\small $i$};
\node[draw, circle, minimum size=15pt, inner sep=2pt, right=30pt of i] (v1) {\small $v_1$};
\node[draw, circle, minimum size=15pt, inner sep=2pt, right=50pt of v1] (v2) {\small $v_2$};
\node[draw, circle, minimum size=15pt, inner sep=2pt, right=50pt of v2] (v3) {\small $v_3$};
\node[draw, circle, minimum size=15pt, inner sep=2pt, right=30pt of v3] (j) {\small $j$};

\draw[thick] (i) -- (v1);
\draw[thick] (v1) -- (v2);
\draw[thick] (v2) -- (v3);
\draw[thick] (v3) -- (j);

%
%

\node[draw, circle, minimum size=15pt, inner sep=2pt, below=30pt of v1] (k) {\small $k$};
\node[draw, circle, minimum size=15pt, inner sep=2pt, below=30pt of v2] (l) {\small $\ell$};
\node[draw, circle, minimum size=15pt, inner sep=2pt, below=30pt of v3] (m) {\small $m$};
\node[fit=(k), draw, inner sep=1pt, regular polygon, regular polygon sides=3] (k-tree) {};
\node[fit=(l), draw, inner sep=1pt, regular polygon, regular polygon sides=3] (l-tree) {};
\node[fit=(m), draw, inner sep=1pt, regular polygon, regular polygon sides=3] (m-tree) {};

\draw[thick] (v1) -- (k-tree);
\draw[thick] (v2) -- (l-tree);
\draw[thick] (v3) -- (m-tree);

%
%
\node[draw, blue, circle, minimum size=15pt, inner sep=2pt] at ($(v1)!0.5!(v2) + (0,1)$) (i1) {\small $i$};
\node[draw, orange, circle, minimum size=15pt, inner sep=2pt] at ($(v2)!0.5!(v3) + (0,1)$) (i2) {\small $i$};
\node[draw, red, circle, minimum size=15pt, inner sep=2pt] at ($(v3)!0.5!(j) + (0,1)$) (i3) {\small $i$};

\draw[thick, dotted] ($(v1)!0.5!(v2)$) -- (i1);
\draw[thick, dashed] ($(v2)!0.5!(v3)$) -- (i2);
\draw[thick, loosely dashdotted] ($(v3)!0.5!(j)$) -- (i3);

%
%
\node[above=5pt of i] {\small Step 0 ($T$)};
\node[above=5pt of i1] {\small Step 1 ($T^1$)};
\node[above=5pt of i2] {\small Step 2 ($T^2$)};
\node[above=5pt of i3] {\small Step 3 ($T^3$)};
\node[above left=5pt of k-tree] {\small $K$};
\node[above left=5pt of l-tree] {\small $L$};
\node[above left=5pt of m-tree] {\small $M$};
\node[] at (a-label -| l-tree) {(b)};
\end{tikzpicture}
}
\caption{(a) The three possible topologies for a quartet. 
In the first topology, $\{(12)(34)\}$, the path from $1$ to $2$ does not intersect the path from $3$ to $4$. Further, $\alpha_{12}\alpha_{34} \ge \alpha_{13}\alpha_{24} = \alpha_{14}\alpha_{23}$; (b) The different positions of $i$ in its movement across the tree towards $j$. Each position corresponds to a different tree $T^i$.
}
\label{fig:quartet-KLM}
\end{figure}

In the discussion below, we focus without loss of generality in the case where $\alpha_{ij} \geq 0$ for all $i,j$. Similar claims can be made for arbitrary signs.
We will use the notion of a \emph{quartet} of leaves: this is a collection of four leaves, $\{i,j,k,\ell\}$. It is well known in the phylogenetics literature that for every quartet there are $3$ topologically distinct ways for these $4$ leaves to connect with each other, if we contract all the paths leading to other leaves in the tree.
Depending on which of the $3$ ways we have, we say that the tree \emph{induces} a topology for a specific quartet.
We denote the three induced topologies by $\{(ij)(k\ell)\},\{(ik)(j\ell)\}$ and $\{(i\ell)(jk)\}$, where $\{(ij)(k\ell)\}$ means that the path $P_{ij}$ is edge disjoint from the path $P_{k\ell}$, and similarly for the other topologies,
as depicted in Figure~\ref{fig:quartet-KLM} (a). 
For a fixed quartet $\{i,j,k,l\}$,
there are three quantities that determine the topology, out of the three possibilities, and those are $\alpha_{ij}\alpha_{k\ell}$, $\alpha_{ik}\alpha_{j\ell}$ and $\alpha_{i\ell}\alpha_{jk}$. It is known that two of these quantities are always equal and always smaller than the third, which determines the true topology: if $\alpha_{ij}\alpha_{k\ell}$ is the largest then $\{(ij)(k\ell)\}$ is the topology.
If two trees induce a different topology for a quartet, we say that the trees \emph{disagree} on that quartet, otherwise we say that they \emph{agree} on it. 
It is known (folklore) that if two trees agree on all the quartets, then they should have the same topology.
Lastly, note that in the special case where leaves $\{i,j\}$ form a cherry, the path $P_{ij}$ does not share edges with the path between any two other leaves. This implies that the topology of $\{ijk\ell\}$ would necessarily be $\{(ij),(j\ell)\}$ for any two other leaves $k$ and $\ell$. 

For the analysis, we would like to quantify how sensitive is the topology of a quartet to changing the weights. For any quartet $\{i,j,k,\ell\}$ and weight-vector $\alpha$, define
\begin{align*}
\Delta_{ijk\ell} := \max\{\alpha_{ij}\alpha_{k\ell}, \alpha_{ik}\alpha_{j\ell}, \alpha_{i\ell}\alpha_{jk}\} - \min\{\alpha_{ij}\alpha_{k\ell}, \alpha_{ik}\alpha_{j\ell}, \alpha_{i\ell}\alpha_{jk}\}
\end{align*}
(where the dependence on $\alpha$ is omitted for brevity).
Recall that the topology is determined by the largest of these three products, while the other two smaller products are identical.
Hence, it is easy to see that two trees $T$ and $\hat{T}$, with $\epsilon$-close weights $\alpha$ and $\hat{\alpha}$, disagree only on quartets where $\Delta_{ijk\ell}\le 2\epsilon$.

\paragraph{General approach}
\if 0
\davin{here, the ``extension'' for $f_x^T(\alpha)$ shows up again. We can greatly simplify the writing if we defined $f_x^T(\alpha)$ for arbitrary vectors in the preliminaries.}
\yuval{I think that it makes sense to discuss it like that because logically that's what we do.}
\davin{Do we transform hat to non-hat, or the other way round (non-hat is the ground truth right?)? I can't recall but we need to be consistent throughout the write-up.}
In order to argue formally about the corresponding probability distributions, we will again make use of the expression $f_x^T(\alpha)$ for the probability over $x$ defined by $T$ and $\alpha$ in Lemma~\ref{l:basic_extend_intro}, and again, we extend it to pairs $\alpha,T$ where $\alpha$ is not necessarily induced by a distribution on $T$.
Thus, our task becomes to bound $|f_x^T(\alpha) - f_x^{\hat{T}}(\hat{\alpha})|$. When the topology was fixed, we did so by substituting one by one the coordinates of $\alpha$ with the ones from $\hat{\alpha}$. We will now follow a similar path, but instead of substituting $\alpha$'s, we will substitute the topology, while keeping $\alpha$ fixed.
\fi
Recall from Section~\ref{sec:tools-same-top} that we transformed $\alpha$ into $\hat{\alpha}$ by replacing its values one coordinate at a time for the case where $\hat{T} = T$. Here, we first fix $\alpha$ and transform $T$ into $\hat{T}$.
In other words, we find a sequence of topologies $T^1=T, T^2,\ldots,T^k = \hat{T}$ that interpolates between $T$ and $\hat{T}$ in a way that $T^i,T^{i+1}$ only differ in a small part of the graph. After transforming $T$ into $\hat{T}$, we then transform $\alpha$ into $\hat{\alpha}$.

\paragraph{Transforming one tree into the other.}
We now describe in more detail the sequence of local moves from $T$ to $\hat{T}$.
Intuitively, the quartets are the analog of the pairwise distances in the fixed topology setting, and so we will measure dissimilarity between trees by the number of quartets that they disagree on.
Thus, the goal is to produce a sequence of local topological changes that reduces quartet disagreements between $T$ and $\hat{T}$ while ensuring that each consecutive pair of terms $f^{T^i}_x$ and $f^{T^{i-1}}_x$ is close.

The sequence of moves starts by identifying two leaves $i,j$ that are a cherry in $\hat{T}$ but are not a cherry in $T$ (if one exists). Since $i,j$ are not a cherry in $T$, there is a path connecting them, which involves at least $2$ other nodes, by definition of a cherry. Denote the path by $P_{ij} = v_1 - v_2 - \cdots - v_\ell$ where $v_1 = i, v_l = j$ and $l \geq 4$. 
In the process of transforming $T$ into $\hat{T}$, we select one of the two nodes (according to some criterion), say it is $i$, and move it along the path $P_{ij}$ in $T$ until it becomes a cherry with $j$. The different steps of this movement are shown in Figure~\ref{fig:quartet-KLM}~(b), where we can see the different positions of $i$. 
When we are done moving $i$ towards $j$, we will find a new pair to make a cherry and so on. If $T$ and $\hat{T}$ agree on all cherries, 
we look for disagreements due to parents of cherries, grandparents of cherries, and so on.
When this process ends, $T$ and $\hat{T}$ are guaranteed to be the same.

\paragraph{Analyzing one step.}
Let us focus on one of the steps in the above process. While the first step does not change the topology over the leaves, we will analyze the second step, transforming $T^1$ into $T^2$ as shown in Figure~\ref{fig:quartet-KLM} (b).
This involves cutting $i$ from the middle of edge $(v_1,v_2)$ and pasting it in the middle of edge $(v_2,v_3)$. We will bound $|f_x^{T^1}(\alpha) - f_x^{T^2}(\alpha)|$ 
in terms of the parameters $\Delta_{ijk\ell}$ of the quartets that $T^1$ and $T^2$ disagree on: 
\begin{lemma}[Formal statement in Lemma~\ref{l:lip_expression}]\label{l:lipschitz_quartet}
Let $i,j$ be a cherry in $\hat{T}$ but not in $T$ and let $T^1$ and $T^2$ be the topologies defined by procedure above (depicted in Figure~\ref{fig:quartet-KLM} (b)). Also, denote by $U$ the set of quartets where $T^1$ and $T^2$ disagree on. 
Then,
\begin{equation} \label{eq:one-step}
\sum_{x \in \{-1,1\}^n} |f_x^{T^1}(\alpha) - f_x^{T^2}(\alpha)| \leq \sum_{\{i,k,\ell,m\}\in U} \Delta_{ik\ell m}
\end{equation}
\end{lemma}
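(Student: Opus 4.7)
The plan is to expand the difference using the closed form of Lemma~\ref{l:basic_extend_intro},
\[
f_x^{T^1}(\alpha) - f_x^{T^2}(\alpha) \;=\; 2^{-n}\!\!\sum_{S\subseteq[n],\,|S|\text{ even}}\!\!\bigl(\alpha_S^{T^1}-\alpha_S^{T^2}\bigr)\,\chi_S(x),\qquad \chi_S(x):=\prod_{j\in S}x_j,
\]
and to identify, via a combinatorial analysis of the canonical partition, which coefficients actually survive. Since $T^1$ and $T^2$ coincide outside the attachment of $i$, the canonical partition of any $S\not\ni i$ is identical in both trees and contributes $0$. Let $A$, $B$, $C$ denote the sets of non-$i$ leaves lying in the three subtrees hanging off the internal node $v_2$ that $i$ crosses (the $v_1$-direction, the $L$-direction, and the $v_3$-direction, respectively), so that every non-$i$ leaf sits in exactly one of $A,B,C$, and set $S_A=S\cap A,\ S_B=S\cap B,\ S_C=S\cap C$. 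A short case analysis at the Steiner vertex where $i$ attaches, comparing branch parities in $T^1$ (where $i$ sits on the $A$-side of $v_2$) and in $T^2$ (where $i$ sits on the $C$-side), shows that the canonical partition of $S$ differs between the two trees exactly when $|S_A|,|S_B|,|S_C|$ are all odd.

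For such $S$, I would introduce ``representatives'' $k\in S_A,\ \ell\in S_B,\ m\in S_C$: the leaves of $S_A,S_B,S_C$ paired with an external anchor at $v_1,v_2,v_3$ respectively in the canonical partitions of $S_A\cup\{\mathrm{ext}\},\ S_B\cup\{\mathrm{ext}\},\ S_C\cup\{\mathrm{ext}\}$ inside their subtrees. The quartet $Q=\{i,k,\ell,m\}$ then lies in $U$, and unwinding the canonical partitions shows that the ``$Q$-touching'' pairs are $(i,k),(\ell,m)$ in $T^1$ and $(i,m),(k,\ell)$ in $T^2$, while the remaining pairs --- the canonical partitions of $S_A\setminus\{k\},\ S_B\setminus\{\ell\},\ S_C\setminus\{m\}$ inside their subtrees --- are common to both. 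This gives the clean factorization
\[
\alpha_S^{T^1}-\alpha_S^{T^2} \;=\; \bigl(\alpha_{ik}\alpha_{\ell m}-\alpha_{im}\alpha_{k\ell}\bigr)\,\Phi(S),
\]
with $|\Phi(S)|\le 1$, and with $|\alpha_{ik}\alpha_{\ell m}-\alpha_{im}\alpha_{k\ell}|\le \Delta_{ik\ell m}$ (among the three quartet products defining $\Delta$, the absolute difference of any two is at most $\max-\min$).

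Finally, I would group the $S$-sum by the associated quartet $Q\in U$. For fixed $(k,\ell,m)$, summing $2^{-n}\Phi(S)\chi_{S\setminus Q}(x)$ over admissible $S$ recovers the product-Ising distribution $\Pr_{T^{-Q}}(x)$ on the \emph{quartet-removed tree} $T^{-Q}$ --- the forest obtained from $T^1$ (equivalently, $T^2$) by deleting all edges of the Steiner tree of $Q$, so that $i,k,\ell,m$ become isolated singleton components with uniform marginals and $A,B,C$ split further along the paths to $k,\ell,m$. The key identity is that the canonical partition of $S_A\setminus\{k\}$ inside $A$ pairs leaves only within components of the path-removed $A$ (and similarly for $B, C$), so that Lemma~\ref{l:basic_extend_intro} applied componentwise reproduces exactly the correlations of $\Pr_{T^{-Q}}$. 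Combining everything yields
\[
f_x^{T^1}(\alpha)-f_x^{T^2}(\alpha) \;=\; \sum_{Q=\{i,k,\ell,m\}\in U}\bigl(\alpha_{ik}\alpha_{\ell m}-\alpha_{im}\alpha_{k\ell}\bigr)\,\chi_Q(x)\,\Pr_{T^{-Q}}(x),
\]
and taking $\sum_x|\cdot|$, using $|\chi_Q(x)|=1$ and $\sum_x\Pr_{T^{-Q}}(x)=1$, immediately gives~\eqref{eq:one-step}. The main obstacle is the combinatorial bookkeeping --- both the ``all-three-parities-odd'' characterization of when the partitions disagree and the regrouping into $\Pr_{T^{-Q}}$ --- which both rest on the edge-disjoint uniqueness of the canonical partition and on the fact that admissibility of the representatives is equivalent to the within-subtree canonical partitions splitting across the components of the path-removed subtrees.
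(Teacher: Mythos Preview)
Your proposal is correct and follows essentially the same route as the paper's proof of Lemma~\ref{l:lip_expression}: expand via Lemma~\ref{l:basic_extend_intro}, identify the surviving sets $S$, group them by the unique quartet $Q=\{i,k,\ell,m\}\in U$ they determine, and factor each group as $(\alpha_Q^{T^1}-\alpha_Q^{T^2})\,\chi_Q(x)$ times the leaf distribution on the quartet-removed forest. The only organizational difference is that you first characterize the surviving $S$ by the parity condition ``$|S_A|,|S_B|,|S_C|$ all odd'' and then read off the quartet via the representatives, whereas the paper first enumerates the quartets $U_k$ and then describes, for each $Q$, the family $\mathcal{S}_Q$ of sets that factor through it; these are two equivalent descriptions of the same partition of the contributing $S$'s.
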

Lemma~\ref{l:lipschitz_quartet} can be seen as an analogue of Lemma~\ref{l:lipschitz} but we have quartets instead of pairwise distances.
It shows a type of Lipschitzness of $f_x^{T^1}$ when we change the topology of some of the quartets.
The way to prove it is to notice that the difference $|f_x^{T^1}(\alpha) - f_x^{T^2}(\alpha)|$ can be factored into a sum of terms, each for a quartet in $U$. Each term is the expression of a probability distribution on a topology that results when we remove all paths between leaves in that quartet from the tree, as depicted in Figure~\ref{fig:path-removal}~(b).
Next, we will bound the right hand side of \eqref{eq:one-step}.
\begin{lemma}[Formal statement in Lemma~\ref{l:bad_quartet}]\label{l:bad_quartet_intro}
Each term $\Delta_{ik\ell m}$ in the right hand side of \eqref{eq:one-step} is bounded by $2\epsilon$.
\end{lemma}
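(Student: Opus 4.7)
The plan is to bound $\Delta_{ik\ell m}(\alpha)$ by combining the exact algebraic identities that the topology of $T$ imposes on $\alpha$ with the cherry structure of $\hat{T}$ and the $\epsilon$-closeness of $\alpha$ and $\hat\alpha$. As a first step I will expand the path-product formula \eqref{eq:path-correlation} in $T$ to read off the topologies of the quartets $\{i,k,\ell,m\}$ and $\{j,k,\ell,m\}$, for any leaves $k\in K$, $\ell\in L$, $m\in M$ (the three subtrees hanging at $v_1,v_2,v_3$ in $T$). A short case check gives $\{(ik)(\ell m)\}$ for the first and $\{(jm)(k\ell)\}$ for the second. The first identification tells us that in $T$ the two smaller products for $\{i,k,\ell,m\}$ coincide, so that $\Delta_{ik\ell m}(\alpha)=\alpha_{ik}\alpha_{\ell m}-\alpha_{i\ell}\alpha_{km}$; the second, which is the key one, yields the \emph{exact} identity $\alpha_{jk}\alpha_{\ell m}=\alpha_{j\ell}\alpha_{km}$.

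The second ingredient is the cherry $(i,j)$ in $\hat{T}$. Let $p$ denote their common neighbor in $\hat{T}$ and set $r:=\hat\theta_{ip}/\hat\theta_{jp}$. The path-product formula in $\hat{T}$ then implies $\hat\alpha_{ix}=r\,\hat\alpha_{jx}$ for every leaf $x\neq i,j$, and substituting into the three quartet products of $\{i,k,\ell,m\}$ under $\hat\alpha$ produces the identity
\[
\hat\alpha_{ik}\hat\alpha_{\ell m}-\hat\alpha_{i\ell}\hat\alpha_{km} \;=\; r\,\bigl(\hat\alpha_{jk}\hat\alpha_{\ell m}-\hat\alpha_{j\ell}\hat\alpha_{km}\bigr).
\]
I will pin down the ``some criterion'' hidden in the main-text sketch by prescribing that we always move whichever of $i,j$ has the smaller-magnitude cherry edge in $\hat{T}$, i.e.\ $|\hat\theta_{ip}|\le|\hat\theta_{jp}|$, which guarantees $|r|\le 1$.

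To finish, I will combine the two ingredients via the triangle inequality and the elementary product-shift estimate $|ab-\hat a\hat b|\le\epsilon(|a|+|\hat b|)\le 2\epsilon$, valid because pairwise correlations lie in $[-1,1]$. Applied to the $T$-identity $\alpha_{jk}\alpha_{\ell m}=\alpha_{j\ell}\alpha_{km}$ this yields $|\hat\alpha_{jk}\hat\alpha_{\ell m}-\hat\alpha_{j\ell}\hat\alpha_{km}|\le 4\epsilon$; multiplying by $|r|\le 1$ in the displayed identity gives the same bound on $|\hat\alpha_{ik}\hat\alpha_{\ell m}-\hat\alpha_{i\ell}\hat\alpha_{km}|$, and one more product-shift pass back to $\alpha$ bounds the target $\Delta_{ik\ell m}(\alpha)=\alpha_{ik}\alpha_{\ell m}-\alpha_{i\ell}\alpha_{km}$ by $O(\epsilon)$. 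I expect a more careful accounting of which pair of products is compared at each step (using $|ab-\hat a\hat b|\le\epsilon(|a|+|\hat b|)$ non-uniformly and paying $2\epsilon$ only once rather than twice) to recover the exact constant $2\epsilon$ stated in the lemma. The principal obstacle in the plan is that the cherry-endpoint criterion $|r|\le 1$ must be honored \emph{consistently} across every local topological move in the full transformation from $T$ to $\hat{T}$, not just the single step analyzed here; formalizing this coordination is what the phrase ``according to some criterion'' really hides, and I expect that step to absorb most of the technical effort in the full proof of Lemma~\ref{l:bad_quartet}.
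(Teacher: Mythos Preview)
Your approach is correct and closely parallels the paper's, with one algebraic variation worth noting. Both you and the paper use the cherry structure of $\hat T$ to pass from the quartet $\{i,k,\ell,m\}$ to $\{j,k,\ell,m\}$ via a ratio $|r|\le 1$, and both exploit the topology of $\{j,k,\ell,m\}$ in $T$. You use the \emph{equality} of the two smaller products, $\alpha_{jk}\alpha_{\ell m}=\alpha_{j\ell}\alpha_{km}$, while the paper's formal argument (Lemma~\ref{l:bad_quartet}) uses the \emph{inequality} $|\alpha_{jk}||\alpha_{\ell m}|\le|\alpha_{k\ell}||\alpha_{jm}|$ inside a proof by contradiction. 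Your route is slightly more direct for this single step; the paper's route is chosen because it dovetails with an induction over epochs that is needed for the general formal statement. Your criterion $|\hat\theta_{ip}|\le|\hat\theta_{jp}|$ is exactly the paper's criterion specialized to the first round (where $I_t=\{i\}$, $J_t=\{j\}$); in later rounds the paper replaces the single edge weights by $\max_{f\in I_t}|\hat\alpha_{f,p_t}|$ and $\max_{f\in J_t}|\hat\alpha_{f,p_t}|$, and this is precisely where the inductive structure enters---so your instinct about the ``principal obstacle'' is right.

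Two small corrections. First, you do not cover the case $m=j$ (your auxiliary quartet $\{j,k,\ell,m\}$ degenerates); the paper handles this case separately and directly, since then $\{i,k,\ell,j\}$ has different topologies in $T$ and $\hat T$. Second, do not expect to recover the exact constant $2\epsilon$: your $8\epsilon$ is already the right order, and the formal lemma only claims $20n\epsilon$. The informal ``$2\epsilon$'' is the bound that holds cleanly in the $m=j$ case and is used loosely in the sketch.
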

\begin{proof}[Proof sketch]
Using a simple case analysis, it can be shown that the only quartets that can change topology are those that contain $i$, some $k \in K$, some $\ell \in L$ and some $m \in M \cup \{j\}$ (see Figure~\ref{fig:quartet-KLM}~(b)). The quartet topology is $\{(ik)(\ell m)\}$ in $T^1$ and $\{(im)(k \ell)\}$ in $T^2$ and we would like to argue that $\Delta_{ik\ell m} \le 2\epsilon$. First, let us assume that $m=j$. Then, the topology of $\{i,k,l,m\}$ in $T^1$ equals that of $T$, since $T$ and $T^1$ share the same leaf topology, while the quartet topology of $T^2$ equals that of $\hat{T}$: indeed, the topology of $\hat{T}$ is $\{(ij)(k\ell)\}$ since $(i,j)$ is a cherry in $\hat{T}$, as assumed in the algorithm and this is also the topology in $T^2$. In particular, since the topology in $T^1$ is different than that in $T^2$, then the topology in $T$ is different than that in $\hat{T}$. As explained after the definition of $\Delta_{ik\ell m}$ above, this implies that $\Delta_{ik\ell m} \le 2\epsilon$. In particular, this provides a bound on $\Delta_{ik\ell m}$ as required.
The case that $j \ne m$ is more complicated and it relies on the fact that $i$ can be selected such that $\Delta_{ik\ell m} \le \Delta_{jk\ell m}$ and cosidering the quartet $(j,k,\ell, m)$.
\end{proof}

\paragraph{Completing the proof}


After ensuring that the move described in Lemma~\ref{l:lipschitz_quartet} incurs a small loss, the natural next step is to repeatedly apply a variant of Lemma~\ref{l:lipschitz_quartet} for each other step of the sequence and obtain a bound for $|f_x^T(\alpha) - f_x^{\hat{T}}(\alpha)|$, using the triangle inequality.
We would like to analyze the total loss incurred in all the steps.
We divide these steps into rounds, where in the first round we move leaves, in the second round we move parents of leaves etc. The number of rounds is bounded by the diameter $D$ of $\hat{T}$. 
We can show that in each round, every quartet changes topology at most $4$ times. Furthermore, in the general variant of Lemma~\ref{l:bad_quartet_intro} that corresponds to a movement of a subtree (rather than a leaf), $\epsilon$ is replaced with $n\epsilon$.
Since there are at most $\binom{n}{4}$ quartets, the total loss incurred in TV during all these steps is $O(Dn^5\epsilon)$.

\section{Related Work} \label{sec:related}
A popular method for latent-tree estimation are \emph{tree-metric} approaches, which rely on estimating the pairwise correlations between any two leaves. For these algorithms, there is a vast theoretical analysis which is largely focused on estimating the structure of the tree, namely, finding the set of edges $E$
\cite{felsenstein1973maximum,chang1996full,erdHos1999few,huson1999disk,csuros2002fast,felsenstein2004inferring,king2003complexity,daskalakis2006optimal,roch2006short,mossel2007distorted,gronau2008fast,roch2010toward,roch2017phase}. The results typically require some upper and lower bounds on the edge-weights $a_{ij}$. Such bounds guarantee that the structure of the tree can be completely identified from polynomially many samples.
In contrast, \cite{daskalakis2009phylogenies} design an algorithm that reconstructs as much of the true topology as possible, without assuming bounds on the edge-weights. 
However, they do not provide any guarantee on the closeness of the learned distribution to the true distribution.
Another popular family of algorithms are \emph{likelihood-based} methods \cite{felsenstein1981evolutionary,yang1997paml,stamatakis2006raxml,lee2006comet,wang2006severity,truell2021maximum}, but their convergence guarantees are barely understood \citep{zwiernik2017maximum,daskalakis2018bootstrapping,daskalakis2022EM}.

Beyond trees, the general problem of latent graphical model estimation has received some attention~\citep{bresler2019learning,bresler2020learning,moitra2021learning,goel2019learning,goel2020boltzmann}. However, all these algorithms have time- and sample- complexity that is exponential in the maximum degree of the graph. 
Also, \cite{acharya2018learning} study testing of Bayesnets with latent variables, but under the assumption that the c-components have constant size.

Another related line of work is that of estimating a tree from fully observable data, while guaranteeing that the error is bounded in $k$-\emph{local-TV}: this means that the output model is $\epsilon$-close in total variation to the true model in any marginal of $k$ nodes (where $k$ is considered small). While the complexity of learning the full tree to $\epsilon$ total variation distance is $\Theta(n \log n/\epsilon^2)$ \citep{daskalakis2020tree,koehler2020note}, the algorithm of \cite{boix2022chow} has a sample complexity of $O(\log n \cdot k^2 2^{2k}/\epsilon^2)$ for learning in $k$-local TV. The preceding paper of \cite{bresler2020learning} obtained the same guarantee, however, they assume some upper bounds on the edge correlations $\theta_{ij}$.
\section{Algorithm}\label{s:algo}
\if 0
\begin{algorithm} \label{alg:average-samples}
\caption{Estimate the pairwise correlations given samples}
\KwInp{$m$ samples of the leaves: $(x_1^1,\dots,x_n^1),\dots,(x_1^m,\dots,x_n^m)$}
\For{\rm $i, j \in \{1,\dots, n\}$ such that $i \ne j$}{$\hat{\alpha}_{ij} \gets \frac{1}{n}\sum_{\ell=1}^m x_i^\ell x_j^\ell$}
\Return{$(\hat{\alpha}_{ij})_{i \ne j}$}
\end{algorithm}
\fi

\begin{algorithm}[ht]
\caption{Learn a tree-structured distribution with a \emph{known} topology}
\KwInp{An unweighted tree $T = (V,E)$ whose leaves are labeled $1,\dots,n$}
\KwInp{Estimates on the correlations between the leaves: $(\hat{\alpha}_{ij})_{i,j \in \{1,\dots,n\},\ i \ne j}$}
\KwInp{A parameter $\eta>0$ that bounds the errors of $\hat{\alpha}_{ij}$ in absolute value.}
$(w_{k\ell})_{k\ell \in E} \gets$ any solution for the following linear program:
\begin{align*}
    &\text{Find a feasible solution for: } (w_{k\ell})_{k\ell \in E} \\
    &\text{subject to}\\
    &\quad \forall \text{ leaves } i \ne j \colon \log(|\hat{\alpha}_{ij}| - \eta)
    \le \textstyle\sum_{(k,l) \in \mathrm{path}(i,j)} w_{kl}
    \le \log(|\hat{\alpha}_{ij}| + \eta)  \\
    &\qquad\text{(here $\log x$ is interpreted as $-\infty$ for $x\le 0$)}\\
    &\quad \forall \text{ edge } (k,\ell) \colon w_{k\ell} \le 0
\end{align*}
$U \gets \{(i,j): |\hat{\alpha}_{ij}| > \eta \}$\\
$(s_{k\ell})_{k\ell \in E} \gets$ any solution for the following linear system in $\F_2$:
\begin{align*}
\sum_{(k,\ell) \in P_{ij}} s_{k\ell} = sgn(\hat{\alpha}_{ij}) \text{ for all $(i,j) \in U$}
\end{align*}
\Return{\rm the weights $\theta_{k\ell} \gets (-1)^{s_{kl}}e^{w_{k\ell}}$ for all edges $(k,\ell)\in E$}
\label{alg:fixed}
\end{algorithm}

We describe the algorithms of Theorem~\ref{thm:alg} both for the case that the tree topology is known and when it is unknown, given $m$ samples $(x_1^1,\dots x_n^1), \dots, (x_1^m,\dots, x_n^m) \in \{-1,1\}^n$.
Both algorithms first estimate the covariance between any two leaves from samples, setting $\hat{\alpha}_{ij} = \frac{1}{m} \sum_{\ell=1}^m x_i^\ell x_j^\ell$. We note that by Chernoff-Hoeffding and a union bound, with high probability all the estimated correlations are close to their true values $\alpha^*_{ij}$:
\begin{equation}\label{eq:estimated-corr}
\text{With probability } 1-\delta, \quad
\forall i \ne j \colon 
|\hat{\alpha}_{ij} - \alpha^*_{ij}|
\le \eta :=
\sqrt{2 \log(n^2/\delta)/m}.
\end{equation}
Given such estimates on the covariance, our algorithms will find some weighted tree whose correlations $\alpha_{ij}$ between the leaves are close to the estimated correlations $\hat{\alpha}_{ij}$. By the triangle inequality, the correlations of the estimated tree are close to the true correlations and the result will follow by applying Theorem~\ref{thm:probabilistic}.


\paragraph{Known tree topology.}

We describe an algorithm that learns the weights of a fixed tree, given the estimated correlations $\hat{\alpha}_{ij}$. From \eqref{eq:estimated-corr}, we can assume that all these estimations are accurate up to an additive error of $\eta=\sqrt{2 \log(n^2/\delta)/m}$.

We start by assuming that the edge weights $\theta^*_{kl}$ are non-negative, which also implies that the pairwise correlations between the leaves $\alpha^*_{ij}$ are non-negative by \eqref{eq:path-correlation}. Such Ising models are called \emph{ferromagnetic}. Later, we will show how to use this algorithm to solve the unrestricted problem. 

Starting with a ferromagnetic (non-negative weight) model, we will show how to write a linear program that returns weights on the edges of this tree, such that the pairwise correlations $\alpha_{ij}$ over the leaves are $\eta$-close to $\hat{\alpha}_{ij}$, if such weights exist. Yet, a solution is guaranteed to exist since the true edge-weights satisfy these constraints.
This gives us a model whose pairwise correlations are $2\eta$ close to those of the true model:
\[
|\alpha_{ij} - \alpha^*_{ij}|
\le |\alpha_{ij} - \hat{\alpha}_{ij}| +
|\hat{\alpha}_{ij} - \alpha^*_{ij}|
\le \eta + \eta = 2\eta.
\]
First, we discuss the linear program that finds weights $\theta_{k\ell}\ge 0$ on the edges $(k,\ell)\in E$.
The variables of the linear program are $(w_{k\ell})_{(k,\ell)\in E}$ and they signify $w_{k\ell} = \log \theta_{k\ell}$. We would like our output to satisfy the following constraints: (1) $\theta_{k\ell} \in [0,1]$, which can be rewritten as $w_{k\ell} \le 0$; and (2) For any leaves $i,j$, $|\hat{\alpha}_{ij}| - \eta \le |\alpha_{ij}| \le |\hat{\alpha}_{ij}| + \eta$. If we take log and substitute $|\alpha_{ij}| = \prod_{(k,l) \in \mathrm{path}(i,j)}\theta_{k\ell}$ according to \eqref{eq:path-correlation}, we get the following linear constraints on the variables $w_{k\ell}$: 
\[
\text{For any leaves $i\ne j$}\colon \quad
\log(|\hat{\alpha}_{ij}| - \eta)
    \le \sum_{(k,l) \in \mathrm{path}(i,j)} w_{kl}
    \le \log(|\hat{\alpha}_{ij}| + \eta)
\]
(while using the convention $\log x = -\infty$ for $x\le 0$.)
This yields a linear program for finding the logarithms of the weights of the tree, and we can obtain weights for the tree by exponentiation of these log-values.

It now remains to handle the case where the edge weights $\theta^*_{kl}$ can be negative. 
We will first consider the tree whose edge weights are $|\theta^*_{kl}|$, which we call the \emph{ferromagnetic variant} of the original tree. Notice that by \eqref{eq:path-correlation}, the pairwise correlations in this tree equal $|\alpha^*_{ij}|$. In the first part of our algorithm, we will find a tree whose pairwise correlations are $2\eta$ close to those of the ferromagnetic variant of our tree. This can be done by feeding the linear program with the absolute values of the estimated correlations, $|\hat{\alpha}_{ij}|$. Notice that these are $\eta$-close to the true correlations in the ferromagnetic variant. Indeed, by the triangle inequality,
\[
||\hat{\alpha}_{ij}| - |\alpha^*_{ij}||
\le |\hat{\alpha}_{ij} - \alpha^*_{ij}|
\le \eta.
\]
Since the linear program is fed with the estimates  $|\hat{\alpha}_{ij}|$ that are $\eta$-close to the correlations of the ferromagnetic variant, by the guarantees of the linear program, it outputs a tree whose pairwise correlations are $2\eta$ close to those of the ferromagnetic variant.

It now remains to find a sign for each edge. Let $s_{k\ell}\in \{0,1\}$ be a variable for each edge $(k,\ell)$ so that the sign of edge $(k,\ell)$ is $(-1)^{s_{k\ell}}$. 
By the approximation guarantee $|\hat{\alpha}_{ij} - \alpha^*_{ij}|\le \eta$, we can accurately estimate the sign of $\alpha^*_{ij}$ whenever $|\hat{\alpha}_{ij}| > \eta$. 
Let $i,j$ be such a pair of leaves and denote the sign as $sgn(\hat{\alpha}_{ij}) \in \{-1,1\}$. By \eqref{eq:path-correlation},
$$
\prod_{(k,\ell) \in P_{ij}} (-1)^{s_{k\ell}} = sgn(\hat{\alpha}_{ij}) = sgn(\alpha^*_{ij}).
$$
This gives us a linear equation in $\F_2$ with variables $s_{k\ell}$ for all such pairs (i.e. there is a linear equation for each pair $i,j$ such that $\hat{\alpha}_{ij} > \eta$). We subsequently find a solution to this system to obtain signs for all edges. We note that there exists at least one solution since the true signs of our model is a solution, yet, the algorithm can output any solution. As a final output, the edge-weights $\theta_{kl}$ are determined such that their absolute values equal the output of the linear program over $\mathbb{R}$, while the signs of the edges are taken according to the system of equations over $\F_2$, as summarized in Algorithm~\ref{alg:fixed}.

We argue that the pairwise correlations in the output model, which we denote by $\alpha_{ij}$, are $4\eta$ close to the true correlations. Indeed, divide into cases:
\begin{itemize}
\item If $|\hat{\alpha}_{ij}| > \eta$, then, by the guarantee of the system of equations over $\F_2$, the sign of $\alpha_{ij}$ equals that of $\alpha^*_{ij}$. Their absolute values are $2\eta$-close by the guarantee of the linear program over $\mathbb{R}$. Hence, their values are $2\eta$-close.
\item If $|\hat{\alpha}_{ij}| \le \eta$, then, by the approximation assumption, we have that
\[
|\alpha^*_{ij}| \le |\hat{\alpha}_{ij}| + \eta \le 2\eta.
\]
Further, recall that by the guarnatee of the linear program over $\mathbb{R}$, we have that
\[
|\alpha_{ij}| \le |\hat{\alpha}_{ij}| + \eta \le 2\eta.
\]
We derive that
\[
|\alpha_{ij} - \alpha^*_{ij}|
\le |\alpha_{ij}| + |\alpha^*_{ij}|
\le 2\eta+2\eta=4\eta.
\]
\end{itemize}
According to Theorem~\ref{thm:probabilistic}, the output model is $O(n^2\eta)$-close in total variation distance. By the assumption on $\eta$ in \eqref{eq:estimated-corr}, the proof follows. See Section~\ref{sec:known-pr} for a more detailed proof.

\paragraph{Unknown tree topology.}

\begin{algorithm}[t]
\caption{Learn a tree-structured distribution with an \emph{unknown} topology}
\KwInp{Estimates on the correlations between the leaves: $(\hat{\alpha}_{ij})_{i,j \in \{1,\dots,n\},\ i \ne j}$}
\KwInp{Parameters $\eta', \xi, \delta > 0$.}
$F \gets $ the forest output by the algorithm of \cite{daskalakis2011evolutionary} given weights $(\hat{\alpha}_{ij})_{i\ne j}$ and parameters $\xi,\delta>0$ \\
\For{\rm Tree $T= (V,E) \in F$}{
    $(\theta_{k\ell})_{(k,\ell) \in E} \gets$ the
    output of Algorithm~\ref{alg:fixed} given $T$ and the correlations $\hat{\alpha}_{i,j}$ between any two leaves of $T$ (while substituting its parameter $\eta$ with $\eta'$)
}
\Return{\rm The forest $F$, where each $T=(V,E)$ is weighted according to $(\theta_{k\ell})_{k\ell \in E}$}
\label{alg:unkonwn-top}
\end{algorithm}

If we do not know the tree structure, we use the algorithm of \cite{daskalakis2011evolutionary} that, given approximations $\hat{\alpha}_{ij}$ of the correlations between the leaves, finds a forest that shares multiple properties with the original tree. Then, for any tree in this forest, we compute weights on the edges, using Algorithm~\ref{alg:fixed} and return the weighted forest, as summarized in Algorithm~\ref{alg:unkonwn-top}. 

To analyze this algorithm, we perform a series of careful contractions and deletions of edges, that transform this forest into one where each subtree has exactly the same topology as the one induced by the true tree on that particular subset of leaves. Then, crucially, we use the analysis for the known topology setting, to bound the difference in total variation between learned marginal distribution over the leaves of each subtree and the true marginal distribution.


To compare with \cite{cryan2001evolutionary}, they employ a similar process of splitting the tree into subtrees. However, they then rely on learning the weights within in each subtree accurately. This yields a bound in total variation between the learned and true distribution within each tree. The difference between these approaches is that we learn in total variation only the \emph{leaf distribution} within each subtree, while their analysis relies on learning the distribution on both the leaves and the \emph{internal nodes} of each subtree. This requires them to cut the original tree into significantly smaller subtrees, which harms the complexity.


\section{A Formula for the leaf distribution from \citep{bresler2020learning}}\label{s:distribution}
We first introduce some convenient notation. Let $S$ be a subset of the leaves of even cardinality. Then, there is a natural way to partition the leaves in $S$ in $|S|/2$ pairs using the following criterion: each leaf $i$ in $S$ is matched with its closest relative in $S$ in the tree. Yet, to be more exact, we say that a matching of $S$ is a \emph{closest relative matching} if for any two distinct pairs $(i,j)$ and $(k,\ell)$ in the matching, the path from $i$ to $j$ does not intersect the path from $k$ to $\ell$.
An example of such a matching is given in Figure~\ref{fig:matching}. 
In the following proposition, we prove that there is a unique such matching, and use this matching to find an expression to the leaf distribution of an Ising model:

\begin{lemma}\label{l:basic_extend}
Let $x_1,\dots,x_n$ denote the values over the $n$ leaves of a tree $T$ with pairwise correlations $\alpha \in [-1,1]^{n \choose 2}$. Then, the following holds:
\begin{itemize}
    \item Any subset $S \subseteq [n]$ of even cardinality has a unique closest relative matching.
    \item Define for any subset $S\subseteq [n]$ of even cardinality 
    $$
    \alpha_S := \prod_{k=1}^{|S|/2} \alpha_{i_k j_k}
    $$
    where $(i_1,j_1),\dots,(i_{|S|/2-1}, j_{|S|/2-1})$ are the pairs in the closest relative matching.
    Then, we have
    \begin{equation}\label{eq:dist-formula}
    \Pr[x_1,\ldots,x_n] = \frac{\sum_{\text{even subsets $S\subseteq[n]$}}\alpha_S \prod_{i \in S}x_i}{2^n}
    \end{equation}
\end{itemize} 
\end{lemma}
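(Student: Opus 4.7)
The plan is to expand $\Pr[x_1,\ldots,x_{n+n'}]$ via the Ising edge product, marginalize out the internal spins, and identify the surviving Fourier coefficients with the products $\alpha_S$. This will yield \eqref{eq:dist-formula} and, as a byproduct, the uniqueness of the closest relative matching.

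First I will expand
\[
\prod_{(u,v)\in E}\frac{1+\theta_{uv}x_ux_v}{2}=\frac{1}{2^{|E|}}\sum_{A\subseteq E}\prod_{e\in A}\theta_e\prod_{v\in V}x_v^{d_A(v)},
\]
where $d_A(v)$ is the degree of $v$ in the subgraph $(V,A)$. Summing $x_{n+1},\ldots,x_{n+n'}\in\{-1,1\}$ annihilates every $A$ for which some internal vertex has odd degree; a short computation (using that the partition function of a zero-field tree Ising model equals $2$) gives
\[
\Pr[x_1,\ldots,x_n] = \frac{1}{2^n}\sum_{S\subseteq[n]}\Bigl(\sum_{A\in\mathcal{A}_S}\prod_{e\in A}\theta_e\Bigr)\prod_{i\in S}x_i,
\]
where $\mathcal{A}_S$ collects the edge subsets whose set of odd-degree vertices is exactly $S$. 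A parity count forces $\mathcal{A}_S=\emptyset$ whenever $|S|$ is odd, matching the right-hand side of \eqref{eq:dist-formula}.

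Next comes the main structural step: for every even $S\subseteq[n]$, $\mathcal{A}_S$ contains exactly one element, and it splits canonically into $|S|/2$ edge-disjoint paths whose endpoints partition $S$. Uniqueness follows from a standard parity observation on trees: for each edge $e$, removing $e$ splits $T$ into two components $C_e,C_e'$, and $e\in A$ is forced iff $|S\cap C_e|$ is odd; this prescribes $A$ coordinate by coordinate. The path decomposition then relies on the paper's convention that every internal node has degree $3$, so an internal node appears with degree $0$ or $2$ in $A$ while every leaf $i\in S$ appears with degree exactly $1$; such a subgraph is automatically a disjoint union of paths with endpoint set $S$. Closest relative matchings are in bijection with $\mathcal{A}_S$ via union of paths, so this simultaneously delivers the first bullet.

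Finally, edge-disjointness lets the coefficient factor across the matched pairs: $\prod_{e\in A}\theta_e=\prod_{k=1}^{|S|/2}\prod_{e\in P_{i_kj_k}}\theta_e=\prod_{k=1}^{|S|/2}\alpha_{i_kj_k}=\alpha_S$ by \eqref{eq:path-correlation}, which is precisely the coefficient on the right of \eqref{eq:dist-formula}. The main obstacle I foresee is the bijection/uniqueness step in the third paragraph; the rest is routine bookkeeping on the Ising edge-expansion, and the degree-$3$ convention on internal nodes is exactly what makes that structural step clean.
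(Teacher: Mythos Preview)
Your proposal is correct and takes a genuinely different route from the paper. The paper proves the lemma by induction on the number of leaves: it locates a cherry $\{n-1,n\}$ with common parent $p$, conditions on $y_p$, applies the inductive hypothesis to the tree with $n-1,n$ removed and $p$ promoted to a leaf, and then carefully tracks how the four classes of even subsets $\mathcal{S}_{--},\mathcal{S}_{+-},\mathcal{S}_{-+},\mathcal{S}_{++}$ of $[n]$ biject with the two classes $\mathcal{R}_-,\mathcal{R}_+$ of even subsets of $[n-2]\cup\{p\}$. Uniqueness of the closest relative matching falls out of this same induction.

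Your argument instead expands the edge product directly, marginalizes the hidden spins, and reduces the whole statement to the combinatorial claim that for each even $S$ there is a unique edge set $A\subseteq E$ whose odd-degree vertices are exactly $S$. The parity argument you sketch (each edge is forced by the parity of $|S\cap C_e|$) gives uniqueness immediately, and the degree-$3$ convention turns $A$ into a vertex-disjoint union of paths, which is precisely a closest relative matching. This is cleaner and avoids the inductive bookkeeping entirely; the only thing you should make explicit is the existence half (that the parity-prescribed $A$ really has even internal degree and odd degree exactly on $S$), which follows from the same observation that the three components at an internal node partition $S$ into pieces whose sizes sum to $|S|$. The paper's inductive approach, by contrast, mirrors the classical cherry-picking reductions used elsewhere in the paper (e.g.\ in the interpolation algorithm) and so ties in more naturally with the surrounding arguments, but your direct approach is a perfectly valid and arguably more transparent proof of this particular lemma.
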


\begin{proof}
Recall that we can assume that there are no nodes of degree $2$ (otherwise, we can contract maximal paths of degree-$2$ nodes and replace them by a single edge whose weight is the product of all edges in the path. This does not change the leaf distribution.)
We will prove both statements together by induction. We will focus on the proof for the probability expression, and the uniqueness of the matching will come as a by-product. The base case considers either $0$, $1$ or $2$ leaves, and follows trivially. For the induction step, suppose the claim is true for all trees having at most $n-1$ leaves. Let $T$ be a tree with $n$ leaves. We are interested in the probability $\Pr[x_1,\ldots, x_n]$ of the leaves taking some specific values.
First of all, since $T$ is assumed to contain no nodes of degree $2$, we know that there exists at least one cherry, i.e. a pair of leaves that share their parent. Also, without loss of generality, suppose that leaves $n-1$ and $n$ form a cherry and denote by $p$ their common parent in the tree.
Lastly, denote by $\theta_{(p,n-1)}$ and $\theta_{(p,n)}$ the weights of the edges $(p,n-1)$ and $(p,n)$ respectively.
Then, we know that $x_n, x_{n-1}$ are conditionally independent from the rest of the tree conditioned on $y_p$, where $y_p$ denotes the value of node $p$. 
Thus, we can write
\begin{align*}
\Pr[x_1,\ldots, x_n] &= \Pr[x_1,\ldots, x_{n-2},y_p = 1] \Pr[x_{n-1},x_n\mid y_p = 1] \\
&+\Pr[x_1,\ldots, x_{n-2},y_p = -1] \Pr[x_{n-1},x_n\mid y_p = -1]
\end{align*}
Now, notice that we can view the nodes $1,\dots,n-2,p$ as the leaves of a tree $T'$ which is simply $T$ after deleting leaves $n-1$ and $n$ and the edges $(n-1,p)$ and $(n,p)$. Hence, we can apply the induction hypothesis on the distribution of $x_1,\ldots,x_{n-2},y_p$.
Recall that the expression for the probability distribution is a function of the expressions $\alpha_S$ of all the even subsets $S$ of the leaves. Hence, we would like to compare the coefficients $\alpha_S$ between the distribution over $x_1,\dots,x_n$ and the distribution over $x_1,\dots,x_{n-2},y_p$ that is used for the induction hypothesis. In order for such a comparison to be possible, we divide the collections of even subsets of the leaves of $T$ and $T'$ into categories. We start with the leaves of tree $T$.
Denote by $\mathcal{S}$ the set of all even subsets of the set of leaves $\{1,\dots,n\}$. Clearly, we can partition $\mathcal{S}$ into $4$ disjoint subsets:
\begin{align*}
&\mathcal{S}_{--} := \{\text{even subsets of $[n]$ not containing neither $n-1$ nor $n$}\}\\
&\mathcal{S}_{+-} := \{\text{even subsets of $[n]$  containing $n-1$ but not containing  $n$}\}\\
&\mathcal{S}_{-+} := \{\text{even subsets of $[n]$ not containing $n-1$ but containing $n$}\}\\
&\mathcal{S}_{++} := \{\text{even subsets of $[n]$  containing both $n-1$ and $n$}\}
\end{align*}
Similarly, we define analogues to be applied on the distribution that is used in the induction hypothesis. In particular, define by $\mathcal{R}$ the collection of all even subsets of $[n-2]\cup \{p\}$. 
We can also partition $\mathcal{R}$ into the following subsets:
\begin{align*}
&\mathcal{R}_{-} := \{\text{even subsets of $[n-2]\cup\{p\}$ not containing $p$}\}\\
&\mathcal{R}_{+} := \{\text{even subsets of $[n-2]\cup\{p\}$ containing $p$}\}
\end{align*}
While applying the induction hypothesis, once computing $\Pr[x_1,\dots,x_n]$ we will split the sum in \eqref{eq:dist-formula} into four sums over the different subsets of $\mathcal{S}$ that were defined above. Similarly, while computing $\Pr[x_1,\dots,x_{n-2},y_p]$ we will split the sum into terms corresponding to $\mathcal{R}_-$ and $\mathcal{R}_+$. In order to be able to compare between these two sums, we will map each of the four subsets of $\mathcal{S}$ to a subset of $\mathcal{R}$.

First, note that $\mathcal{S}_{--} = \mathcal{R}_{-}$ since both equal the collection of even subsets of $[n-2]$. Hence, it follows by induction hypothesis that the sets $S \in \mathcal{S}_{--}$ have a unique closest relative matching. Further,
\[
\sum_{S \in \mathcal{S}_{--}} \prod_{i\in S} x_i \alpha_S
= \sum_{S \in \mathcal{R}_{-}} \prod_{i\in S} x_i \alpha_S \enspace.
\]

Next, notice that there is a bijection between $\mathcal{S}_{++}$ and $\mathcal{R}_-$, which takes any $S \in \mathcal{S}_{++}$ to $S\setminus \{n-1,n\}$. We use this to prove that there exists a closest-relative matching for any $S \in \mathcal{S}_{++}$. Indeed, we can match $n-1$ with $n$ and then match $S \setminus \{n-1,n\}$. This is possible by induction hypothesis. Further, this matching is unique. This is true because any such matching must match $n-1$ with $n$, and the matching in $S \setminus \{n-1,n\}$ is unique by induction hypothesis. Using this bijection between the matchings of $\mathcal{S}_{++}$ and $\mathcal{R}_-$, we derive that
\[
\sum_{S \in \mathcal{S}_{++}} \prod_{i\in S} x_i \alpha_S
= x_{n-1}x_n \alpha_{n-1,n} \sum_{S \in \mathcal{R}_{-}} \prod_{i\in S} x_i \alpha_S \enspace.
\]

Further, notice that there is a bijection between $\mathcal{S}_{+-}$ which takes $S \in \mathcal{S}_{+-}$ to $S \setminus \{n-1\}\cup \{p\}$. Further, to argue that each $S \in \mathcal{S}_{+-}$ contains a unique closest-relative matching, it is easy to see that there is a one-to-one correspondence between the closest-relative matchings of $S\setminus\{n-1\}\cup \{p\}$ and that of $S$. Indeed, for any closest-relative matching of $S\setminus\{n-1\}\cup \{p\}$, we can obtain a closest relative matching of $S$ by replacing $p$ with $n-1$, and vice versa. Notice that the path from $n-1$ to any other vertex can be obtained from the path from $p$ to that vertex by adding the edge $(n-1,p)$ in the beginning. Hence, for any leaf $i$ we have $\alpha_{n-1,i} = \theta_{n-1,p}\alpha_{p,i}$. In particular, this implies that $\alpha_{S} = \theta_{n-1,p}\alpha_{S\setminus\{n-1\}\cup \{p\}}$. Summing over $S \in \mathcal{S}_{+-}$, we get
\[
\sum_{S \in \mathcal{S}_{+-}} \prod_{i\in S} x_i \alpha_S
= x_{n-1} \theta_{n-1,p} \sum_{S \in \mathcal{R}_{+}} \prod_{i\in S\setminus\{p\}} x_i \alpha_S \enspace.
\]
Similarly, the sets $S \in \mathcal{S}_{-+}$ also have a unique closest-relative matching, and 
\[
\sum_{S \in \mathcal{S}_{-+}} \prod_{i\in S} x_i \alpha_S
= x_{n} \theta_{n,p} \sum_{S \in \mathcal{R}_{+}} \prod_{i\in S\setminus\{p\}} x_i \alpha_S \enspace.
\]
Using the above expressions, we can complete the proof, using the induction hypothesis:
\begin{align*}
&\Pr[x_1,\ldots, x_n] = \Pr[x_1,\ldots, x_{n-2},y_p = 1] \Pr[x_{n-1},x_n|y_p = 1]\\ 
&\qquad \qquad+\Pr[x_1,\ldots, x_{n-2},y_p = -1] \Pr[x_{n-1},x_n|y_p = -1]\\
&= \frac{\sum_{S \in \mathcal{R}_{-}} \prod_{i\in S}x_i\alpha_S + \sum_{S \in \mathcal{R}_+} \prod_{i\in S\setminus \{p\}}x_i \alpha_S}{2^{n-1}} \frac{1 + x_n \theta_{n,p}}{2}\frac{1 + x_{n-1} \theta_{n-1,p}}{2}   \\
&\quad\quad + 
 \frac{\sum_{S \in \mathcal{R}_-} \prod_{i\in S}x_i\alpha_S - \sum_{S \in \mathcal{R}_+} \prod_{i\in S\setminus \{p\}}x_i\alpha_S}{2^{n-1}} \frac{1 - x_n \theta_{n,p}}{2}\frac{1 - x_{n-1} \theta_{n-1,p}}{2} \\
 &= \frac{\sum_{S \in \mathcal{R}_-} \prod_{i\in S}x_i\alpha_S + x_nx_{n-1} \theta_{n,p} \theta_{n-1,p} \sum_{S \in \mathcal{R}_-} \prod_{i\in S}x_i\alpha_S}{2^{n}} \\
 &\quad\quad + 
 \frac{ x_n \theta_{n,p} \sum_{S \in \mathcal{R}_+} \prod_{i\in S\setminus \{p\}}x_i\alpha_S + x_{n-1} \theta_{n-1,p} \sum_{S \in \mathcal{R}_+} \prod_{i\in S\setminus \{p\}}x_i\alpha_S}{2^{n}}\\
 &= \frac{\sum_{S \in \mathcal{S}_{--}}\prod_{i \in S}x_i\alpha_S + \sum_{S \in \mathcal{S}_{++}}\prod_{i \in S}x_i\alpha_S +
 \sum_{S \in \mathcal{S}_{-+}}\prod_{i \in S}x_i\alpha_S +
 \sum_{S \in \mathcal{S}_{+-}}\prod_{i \in S}x_i\alpha_S}{2^n}\\
 &= \frac{\sum_{S \in \mathcal{S}}\prod_{i \in S}x_i\alpha_S}{2^n}
\end{align*}

\end{proof}
\section{Proof of Theorem~\ref{thm:probabilistic} (Same topology) and Theorem~\ref{thm:alg} (Known topology)} \label{sec:known-pr}

In this Section, we provide the proof for bounding the TV distance between two models with the same topology (Theorem~\ref{thm:probabilistic}). This argument immediately implies an algorithm for TV-learning using $O(n^4/\epsilon^2)$ samples from the leaves.

We first restate Theorem~\ref{thm:probabilistic} for the same topology with a bit more detail. 

\begin{theorem}\label{thm:probabilistic_fixed}
Let $T$ be a tree and $\alpha,\hat{\alpha} \in [-1,1]^{n \choose 2}$ be two tree metrics on $T$.
Suppose $\|\alpha - \hat{\alpha}\|_\infty \leq \epsilon$, for some $\epsilon > 0$. 
Let $\mu,\hat{\mu}$ be the corresponding distribution on the leaves of $T$ with metric $\alpha,\hat{\alpha}$ respectively. 
Then,
$$
TV(\mu,\hat{\mu}) \leq 2n^2 \epsilon
$$
\end{theorem}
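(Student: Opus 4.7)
By Lemma~\ref{l:basic_extend_intro}, the leaf distributions are $\mu(x)=f_x^T(\alpha)$ and $\hat{\mu}(x)=f_x^T(\hat{\alpha})$, so
$$2\,\TV(\mu,\hat{\mu}) \;=\; \sum_x \bigl|f_x^T(\alpha) - f_x^T(\hat{\alpha})\bigr|.$$
My plan is a one-coordinate-at-a-time interpolation. Enumerate the leaf pairs as $(i_1,j_1),\dots,(i_N,j_N)$ with $N=\binom{n}{2}$, and for $0\le t\le N$ let $\alpha^{(t)}$ agree with $\hat{\alpha}$ on the first $t$ pairs and with $\alpha$ on the remaining $N-t$ pairs, so $\alpha^{(0)}=\alpha$ and $\alpha^{(N)}=\hat{\alpha}$. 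By the triangle inequality,
$$\sum_x \bigl|f_x^T(\alpha)-f_x^T(\hat{\alpha})\bigr| \;\le\; \sum_{t=1}^N \sum_x \bigl|f_x^T(\alpha^{(t-1)}) - f_x^T(\alpha^{(t)})\bigr|.$$

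Since $\alpha^{(t-1)}$ and $\alpha^{(t)}$ differ in only the coordinate $(i_t,j_t)$, and $f_x^T$ is multilinear in the correlation vector by \eqref{eq:f-def}, each summand factors as $|\alpha_{i_tj_t}-\hat{\alpha}_{i_tj_t}|\cdot\bigl|\partial_{i_tj_t} f_x^T(\alpha^{(t-1)}_{-i_tj_t})\bigr|$, with the partial derivative depending only on the coordinates other than $(i_t,j_t)$. If $\alpha^{(t-1)}$ were induced by an Ising model on $T$, Lemma~\ref{l:lipschitz} (in the form of the identity $\partial_{ij}f_x^T = \Pr_{T,\theta'}[x]$ used in its proof sketch) would directly give $\sum_x|\partial_{i_tj_t} f_x^T(\alpha^{(t-1)}_{-i_tj_t})|=1$, and summing the $N$ contributions would yield $\sum_x|f_x^T(\alpha)-f_x^T(\hat{\alpha})|\le N\epsilon$. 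The obstruction is that $\alpha^{(t-1)}$ is a hybrid of entries of $\alpha$ and $\hat{\alpha}$ for $t\ge 2$ and need not be induced by any single weight assignment on $T$.

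Therefore the technical crux, as foreshadowed after Lemma~\ref{l:lipschitz}, is to extend Lemma~\ref{l:lipschitz} to base vectors that are merely \emph{close} to being induced. Concretely, I would prove by induction on the number $k$ of altered coordinates that, for any vector $\beta$ obtained from an induced vector by modifying $k$ entries,
$$\sum_x \bigl|\partial_{ij} f_x^T(\beta_{-ij})\bigr| \;\le\; C$$
for a constant $C$. The base case $k=0$ is exactly the identity established in the proof of Lemma~\ref{l:lipschitz}. For the inductive step, I would revert one of the altered entries of $\beta$ back to its induced value, and bound the change in the $L^1$ norm of $\partial_{ij}f_x^T$ by again invoking Lemma~\ref{l:lipschitz}, but applied to the \emph{path-removed} Ising model associated with cutting $P_{ij}$ in $T$ (the same construction that appears in the proof sketch of Lemma~\ref{l:lipschitz}). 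In that cut model, $i$ and $j$ become independent uniform spins and the remaining subtrees factor independently, so the effect of reverting a single entry of $\beta$ can itself be controlled by a Lipschitz argument on a simpler (strictly smaller) tree structure, enabling the induction to close.

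The main obstacle will be ensuring that the inductive extension does not blow up the constant $C$: a naive recursion could multiply by a factor per reverted coordinate, giving something exponential in $N$. The key will be to exploit the product-over-components structure of the path-removed distribution to ensure that each reversion costs only an additive $O(\epsilon)$ in the relevant $L^1$ norm, keeping $C$ an absolute constant. Once the extension is in place, summing the $N=\binom{n}{2}$ telescoping terms gives $\sum_x|f_x^T(\alpha)-f_x^T(\hat{\alpha})|\le CN\epsilon$, and dividing by $2$ and bounding $C\binom{n}{2}/2\le 2n^2$ for any reasonable $C$ yields $\TV(\mu,\hat{\mu})\le 2n^2\epsilon$.
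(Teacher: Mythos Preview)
Your proposal is correct and follows essentially the same route as the paper: express the leaf distribution via $f_x^T$, interpolate one coordinate at a time, use Lemma~\ref{l:difference_fixed} to factor each one-step difference as $|\alpha_{i_tj_t}-\hat\alpha_{i_tj_t}|\cdot\sum_x|f_x^{T\setminus\{i_t,j_t\}}(\gamma)|$, and control the latter by an induction on the number of coordinates already replaced. The paper's Lemma~\ref{l:fixed_induction} is precisely the ``extension of Lemma~\ref{l:lipschitz} to near-induced vectors'' that you describe; the only detail you leave implicit is the mechanism that keeps the constant from blowing up: the paper first rescales to assume $\|\alpha-\hat\alpha\|_\infty\le\epsilon'/(2n^2)$ with $\epsilon'<1$, so that the accumulated reversion cost $\sum_{s<t}(\text{per-step bound})\le (t-1)\cdot\epsilon'/n^2\le\epsilon'<1$, yielding $\sum_x|f_x^{T\setminus\{i_t,j_t\}}(\gamma^{t,t-1})|\le 1+\epsilon'<2$ and hence $C=2$.
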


To start, let $T$ be a tree with $n$ leaves. We will refer to the leaf set as $[n]$, so each number corresponds to one leaf. We define for each $x \in \{-1,1\}^n$ and for each tree topology $T$ a function $f_x^T:[-1,1]^{n\choose 2} \mapsto \R$ as
\begin{equation} \label{def:f}
f_x^T(\alpha) := 
 \frac{\sum_{\text{even subsets $S\subseteq[n]$}}\alpha^T_S\prod_{i \in S}x_i}{2^n}
\end{equation}
Notice the similarity of this expression with the probability distribution of the leaves. However, this is a multilinear function that is defined for any vector $\alpha \in [-1,1]^{n\choose 2}$, which might not necessarily arise from a tree metric on the leaves.
This motivates the following definition. 
For a vector $\alpha \in [-1,1]^{n \choose 2}$, we say that $\alpha$ is \emph{induced} by a metric in $T$ if there exists an assignment $\theta_e$ of weights for each edge $e$ of $T$, such that
for all leaves $i,j$
$$
\alpha_{ij} = \prod_{e \in P_{ij}} \theta_e
$$
In that case, we will refer to $\alpha$ as a \emph{tree metric} on $T$. 
Now, bounding $TV(\mu,\hat{\mu})$ essentially amounts to bounding 
$$
\sum_{x \in \{-1,1\}^n} |\mu(x) - \hat{\mu}(x)| = 
\sum_{x \in \{-1,1\}^n} |f_x(\alpha) - f_x(\hat{\alpha})|
$$
Thus, the problem amounts to bounding the Lipschitzness of $f_x$. 
We will bound this quantity by substituting one by one the coordinates of $\alpha$ with $\hat{\alpha}$. 
We first introduce some relevant definitions. 
We will need a total ordering of the pairs $(i,j)$ of leaves. The precise ordering doesn't matter, but for simplicity let's say we pick the lexicographic order. This means that $(i,j) < (k,l)$ if and only if $i< k$ or $i = k $ and $j < l$. We use the notation $(i,j) \leq (k,l)$ as a substitute for $(i,j) < (k,l)$ or $(i,j) = (k,l)$. 
Suppose we order all pairs of leaves in lexicographic order. Then, we denote the $t$-th pair in this order as $(i_t,j_t)$.
For each $0 \leq t \leq {n \choose 2}$, we define the vector $\alpha^{t} \in [-1,1]^{n \choose 2}$
as 
$$
\alpha^{t}_{kl} = \left\{
\begin{array}{ll}
      \hat \alpha_{kl} \text{ , if $(k,l) \leq (i_t,j_t)$}\\
      \alpha_{kl} \text{ , otherwise}
\end{array} 
\right.
$$
For $t = 0$, the convention is that $\alpha^t = \alpha$. Notice that $\alpha^{n \choose 2} = \hat{\alpha}$.
Also, denote $T\setminus \{i,j\}$ the topology that is obtained from $T$ if we remove all edges on the path $P_{ij}$ from $T$.

We first prove a Lemma about what happens to the expression of $f_x^T(\alpha)$ if we change exactly one coordinate of $\alpha$. 
This is a purely combinatorial statement that relies on the structure of the coefficients $\alpha_S$. 

\begin{lemma}\label{l:difference_fixed}
Let $T$ be a tree and $i,j$ two leaves of $T$. Also, let $\alpha,\beta \in [-1,1]^{n\choose 2}$ such that $\alpha_{kl} = \beta_{kl}$ if $(k,l) \neq (i,j)$. Let $\gamma \in [-1,1]^{n \choose 2}$ be defined as follows
$$
\gamma_{kl} = \left\{
\begin{array}{ll}
       0 \text{ , if $P_{ij}$ and $P_{kl}$ have common edges }\\
      \alpha_{kl} \text{ , otherwise}
\end{array} 
\right.
$$
Then, 
\begin{equation}\label{eq:diff_fixed}
f_x^T(\alpha) - f_x^T(\beta) = x_ix_j (\alpha_{ij} - \beta_{ij}) f_x^{T \setminus\{i,j\}}(\gamma)
\end{equation}

\end{lemma}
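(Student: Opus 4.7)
The plan is to expand the left-hand side directly using the closed form from Lemma~\ref{l:basic_extend}:
$$f_x^T(\alpha) - f_x^T(\beta) = 2^{-n} \sum_{\substack{S \subseteq [n]\\ |S| \text{ even}}} (\alpha_S^T - \beta_S^T) \prod_{k \in S} x_k.$$
Because $\alpha$ and $\beta$ agree on every coordinate except $(i,j)$, the difference $\alpha_S^T - \beta_S^T$ vanishes unless the unique closest-relative matching of $S$ in $T$ actually contains the pair $(i,j)$. Whenever it does, writing that matching as $\{(i,j)\} \cup M'$ gives $\alpha_S^T - \beta_S^T = (\alpha_{ij}-\beta_{ij}) \prod_{(k,\ell)\in M'} \alpha_{k\ell}$, which isolates the factor $x_i x_j (\alpha_{ij}-\beta_{ij})$ after pulling $x_i x_j$ out of $\prod_{k\in S} x_k$.

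Next, I would characterize precisely when $(i,j)$ lies in the matching of $S = S' \cup \{i,j\}$ for an even $S' \subseteq [n]\setminus\{i,j\}$. Since the closest-relative matching requires edge-disjoint paths in $T$, the pair $(i,j)$ lies in it exactly when $S'$ admits a matching by pairs whose paths avoid $P_{ij}$, i.e.\ a matching inside the forest $T\setminus\{i,j\}$ with each pair confined to a single component. Applying the uniqueness in Lemma~\ref{l:basic_extend} component by component, that remaining matching on $S'$ equals the closest-relative matching $M_{T\setminus\{i,j\}}(S')$; and for each pair $(k,\ell)$ in it, $P_{k\ell}$ lies in a single component and so $\alpha_{k\ell} = \gamma_{k\ell}$ by definition of $\gamma$.

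Substituting $S = S' \cup \{i,j\}$ in the earlier sum then yields
$$f_x^T(\alpha) - f_x^T(\beta) = x_i x_j (\alpha_{ij}-\beta_{ij}) \cdot 2^{-n}\!\sum_{\substack{S' \subseteq [n]\setminus\{i,j\} \\ |S'| \text{ even, matchable}}}\! \gamma_{S'}^{T\setminus\{i,j\}} \prod_{k\in S'} x_k,$$
and I would identify the inner object as $f_x^{T\setminus\{i,j\}}(\gamma)$ under the natural extension of the closed form to the forest $T\setminus\{i,j\}$: any excluded $S'$ either has an odd intersection with some component (so contributes $0$ by convention), or else any attempted matching of it must contain a pair $(k,\ell)$ crossing $P_{ij}$, and then $\gamma_{k\ell}=0$ kills the term. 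Either way, such excluded $S'$ contribute nothing, completing the desired identity.

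The main obstacle I foresee is the combinatorial bijection $M_T(S'\cup\{i,j\})\setminus\{(i,j)\} = M_{T\setminus\{i,j\}}(S')$ paired with the equivalence ``$(i,j)\in M_T(S'\cup\{i,j\})$ iff $S'$ is forest-matchable in $T\setminus\{i,j\}$''. Both directions rely on the uniqueness of the closest-relative matching (Lemma~\ref{l:basic_extend}) together with the structural fact that interior nodes of $P_{ij}$ have degree exactly $3$, so the components of $T\setminus\{i,j\}$ are unambiguous and any forest matching extends cleanly to a tree matching by appending $(i,j)$. Once this combinatorial lemma is pinned down, the algebraic identity in \eqref{eq:diff_fixed} follows by matching coefficients term by term.
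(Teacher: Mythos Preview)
Your proposal is correct and follows essentially the same approach as the paper. The paper makes the combinatorial bijection you flag as the ``main obstacle'' concrete by parametrizing the components of $T\setminus\{i,j\}$ as the subtrees $A_1,\dots,A_{k-1}$ hanging off the internal nodes of $P_{ij}$, and then observing that $(i,j)$ lies in the closest-relative matching of $S$ iff $|S\cap A_r|$ is even for every $r$; this is exactly your ``forest-matchable'' condition, just written out explicitly.
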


\begin{proof}
We have
$$
f_x^T(\alpha) - f_x^T(\beta) =  \frac{\sum_{\text{even subsets $S\subseteq[n]$}}\lp(\alpha^T_S - \beta^T_S\rp)\prod_{i \in S}x_i}{2^n}
$$
We first notice that if $\{i,j\}$ is not a subset of $S$, then $\alpha_{ij},\beta_{ij}$ will not appear in $\alpha_S^T,\beta_S^T$ respectively. This means that $\alpha_S^T = \beta_S^T$, since $\alpha,\beta$ agree on the rest
of the coordinates.
Hence, we focus on the collection of subsets
$$
\mathcal{S}_1 := \{S \subseteq [n]: \{i,j\} \subseteq S \text{ and $|S|$ even}\} 
$$
Suppose $v_0 = i , v_1,\ldots, v_k = j$ is the path connecting $i,j$ in $T$. 
Since each non-leaf node has degree $3$, each one of the nodes $v_1,\ldots, v_{k-1}$ has exactly one other neighbor outside of the path. We can view this as each $v_i$ being the root of some subtree $T_i$ that starts from the neighbor of $v_i$ that is outside of the path. 
Let $A_i$ be the set of leaves on subtree $T_i$. 
Notice that the sets $A_i$  partition $[n]\setminus \{i,j\}$. Figure~\ref{fig:path}  depicts these sets of leaves along the path.
\begin{figure}[htbp]
\centering
\scalebox{0.7}{\resizebox{\linewidth}{!}{%
\begin{tikzpicture}
%
%
\node[draw, circle, minimum size=25pt, inner sep=2pt] at (0,0) (v0) {\small $v_0$};
\node[above=5pt of v0] {\small $v_0 = i$};
\node[draw, circle, minimum size=25pt, inner sep=2pt, right=50pt of v0] (v1) {\small $v_1$};
\node[draw, circle, minimum size=25pt, inner sep=2pt, right=50pt of v1] (v2) {\small $v_2$};
\node[draw, circle, minimum size=25pt, inner sep=2pt, right=100pt of v2] (v3) {\small $v_{k-1}$};
\node[draw, circle, minimum size=25pt, inner sep=2pt, right=50pt of v3] (v4) {\small $v_k$};
\node[above=5pt of v4] {\small $v_k = j$};

\draw[thick] (v0) -- (v1);
\draw[thick] (v1) -- (v2);
\draw[thick] (v2) -- node[midway, fill=white] {$\ldots$} (v3);
\draw[thick] (v3) -- (v4);

%
%

\node[draw, white, circle, minimum size=15pt, inner sep=2pt, below=50pt of v1] (S1) {};
\node[draw, white, circle, minimum size=15pt, inner sep=2pt, below=50pt of v2] (S2) {};
\node[draw, white, circle, minimum size=15pt, inner sep=2pt, below=50pt of v3] (S3) {};
\node[fit=(S1), draw, inner sep=5pt, isosceles triangle, rotate=90, anchor=center] (S1-tree) {};
\node[fit=(S2), draw, inner sep=5pt, isosceles triangle, rotate=90, anchor=center] (S2-tree) {};
\node[fit=(S3), draw, inner sep=5pt, isosceles triangle, rotate=90, anchor=center] (S3-tree) {};

\draw[thick] (v1) -- (S1-tree);
\draw[thick] (v2) -- (S2-tree);
\draw[thick] (v3) -- (S3-tree);

%
%
\node[below=5pt of S1] {\small $A_1$};
\node[below=5pt of S2] {\small $A_2$};
\node[below=5pt of S3] {\small $A_{k-1}$};
\node[above left=5pt of S1, yshift=10pt] {\small $T_1$};
\node[above left=5pt of S2, yshift=10pt] {\small $T_2$};
\node[above left=5pt of S3, yshift=10pt] {\small $T_{k-1}$};
\node[] at ($(v2)!0.5!(v3) + (0,-2)$) {$\ldots$};

\end{tikzpicture}
}}
\caption{The figure shows the path connecting $i$ to $j$ and the subtrees that will become connected components if we remove this path from the graph.}
\label{fig:path}
\end{figure}
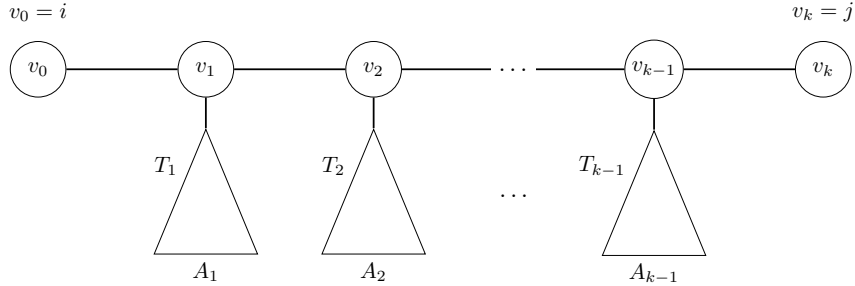

We now want to determine which elements of the family $\mathcal{S}_1$ of subsets gives different coefficients for $\alpha,\beta$. 
We first show that if for some $S \in \mathcal{S}_1$ we have $|S\cap A_r|$ being odd for some $0 <r< k$, then $\alpha_S^T = \beta_S^T$. 
The reason is the following: suppose there exists $r_0$ with $|S\cap A_{r_0}|$ being odd. 
Suppose also without loss of generality that this is the smallest $r$ for which this property holds. This means that for $r < r_0$ we have $|S \cap A_{r}|$ is even. Thus, by the matching process described in Section~\ref{s:distribution}, it is clear that for each $r<r_0$, the leaves in $S\cap A_r$ will be matched in pairs inside the tree $T_r$ and not with some leaves outside of the tree. Also, since $|S\cap A_{r_0}|$ is odd, the leaves in $S\cap A_{r_0}$ will be matched with 
each other, except one leaf, call it $w$, which will be left unmatched. Then, the matching process dictates that $w$ should be matched with $i$. Hence, $i$ will not be matched with $j$ for this subset $S$. An example of this situation can be seen in Figure~\ref{fig:matching}. 

\begin{figure}[htbp]
\centering
\scalebox{0.5}{\resizebox{\linewidth}{!}{%
\begin{tikzpicture}
\node[draw, circle, minimum size=15pt, inner sep=2pt] at (0,0) (v2) {\small $v_2$};
\node[draw, circle, minimum size=15pt, inner sep=2pt] at ($(v2) + (2,0)$) (j) {\small $j$};
\node[draw, circle, minimum size=15pt, inner sep=2pt] at ($(v2) + (-3,0)$) (v1) {\small $v_1$};
\node[draw, circle, minimum size=15pt, inner sep=2pt] at ($(v1) + (0,-1)$) (u) {};
\node[draw, circle, minimum size=15pt, inner sep=2pt] at ($(v1) + (-1,0)$) (i) {$i$};
\node[draw, circle, minimum size=15pt, inner sep=2pt] at ($(u) + (-1,-1)$) (l1) {\small $1$};
\node[draw, circle, minimum size=15pt, inner sep=2pt] at ($(u) + (1,-1)$) (l2) {\small $2$};
\node[draw, circle, minimum size=15pt, inner sep=2pt] at ($(v2) + (0,-1)$) (v4) {};
\node[draw, circle, minimum size=15pt, inner sep=2pt] at ($(v4) + (-1,-1)$) (l3) {\small $3$};
\node[draw, circle, minimum size=15pt, inner sep=2pt] at ($(v4) + (1,-1)$) (v5) {};
\node[draw, circle, minimum size=15pt, inner sep=2pt] at ($(v5) + (-1,-1)$) (l4) {\small $4$};
\node[draw, circle, minimum size=15pt, inner sep=2pt] at ($(v5) + (1,-1)$) (l5) {\small $5$};

\draw[thick] (v1) -- (u);
\draw[thick, blue] (v1) -- (v2);
\draw[thick, blue] (v2) -- (j);
\draw[thick, blue] (v1) -- (i);
\draw[thick, red] (u) -- (l1);
\draw[thick, red] (u) -- (l2);
\draw[thick] (v2) -- (v4);
\draw[thick, orange] (v4) -- (v5);
\draw[thick, orange] (v4) -- (l3);
\draw[thick, orange] (v5) -- (l4);
\draw[thick] (v5) -- (l5);

\draw[decorate, decoration={mirror,brace,raise=5pt,amplitude=5pt}] ($(l1 |- l5) + (-0.25,-0.5)$) -- ($(l2 |- l5) + (0.25,-0.5)$) node[midway,yshift=-20pt] {$A_1$};
\draw[decorate, decoration={mirror,brace,raise=5pt,amplitude=5pt}] ($(l3 |- l5) + (-0.25,-0.5)$) -- ($(l5) + (0.25,-0.5)$) node[midway,yshift=-20pt] {$A_2$};
\end{tikzpicture}
}}
\caption{In this example, we have the subset $S = \{i,j,1,2,3,4\}$. Notice that $|S\cap A_1|, |S\cap A_2|$ are even. Clearly, the closest relative matching is $(i,j),(1,2),(3,4)$. If we had the set $S' = \{i,1,3,j\}$, then the matching would be $(i,1),(3,j)$.}
\label{fig:matching}
\end{figure}
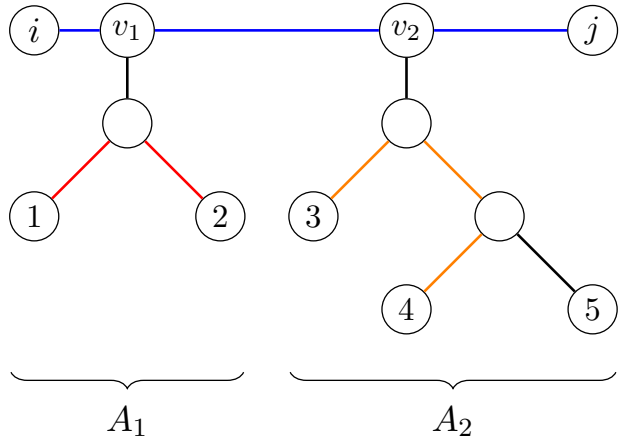

Hence, in $\alpha_S^T$ we will have the factor $\alpha_{iw}$ instead of $\alpha_{ij}$. This means that $\alpha_{ij}$ does not appear in $\alpha_S^T$ and similarly $\beta_{ij}$ does not appear in $\beta_S^T$. But this implies that $\alpha_S^T = \beta_S^T$, since $\alpha,\beta$ agree on all the other coordinates.
This proves our claim.

Hence, if we define the set
$$
\mathcal{S}_2 := \{\{i,j\} \cup\lp(\cup_{r=1}^{k-1} S_r\rp) : S_r \subseteq A_r , |S_r| \text{ even, for all $r$}\}
$$
then
$$
f_x^T(\alpha) - f_x^T(\beta) =  \frac{1}{2^n}\sum_{S \in \mathcal{S}_2}\lp(\alpha^T_S - \beta^T_S\rp)\prod_{i \in S}x_i
$$
Since $S_r$ has an even number of leaves, they will be matched inside tree $T_r$, regardless of the topology of the rest of the tree. This leaves $i,j$, which will be mathced together.
This enables us to write 
$$
\alpha_S = \alpha_{ij} \prod_{r=1}^{k-1} \alpha_{S_r}^{T_r} \quad , \quad \beta_S = \beta_{ij} \prod_{r=1}^{k-1} \alpha_{S_r}^{T_r}
$$
The reason we wrote $\alpha_{S_r}$ in the expression of $\beta_S$ is that $\alpha,\beta$ agree on all coordinates other than $ij$. 
Hence,
\begin{align*}
f_x^T(\alpha) - f_x^T(\beta) &=  (\alpha_{ij} - \beta_{ij}) x_ix_j
\lp(\frac{1}{4}\prod_{r=1}^{k-1} \underbrace{\frac{1}{2^{|A_r|}}\sum_{S_r \subseteq A_r, |S_r| \text{even}}\alpha_{S_r}^{T_r} \prod_{u \in S_r} x_u}_{f_{x_{A_r}}^{T_r}(\alpha)}\rp)\\
&= (\alpha_{ij}- \beta_{ij})x_ix_j f_x^{T\setminus\{i,j\}}(\gamma)
\end{align*}
Let us explain the last equality.
The $r$-th element in this product corresponds to the expression of the probability distribution on the leaves $A_r$ of the subtree $T_r$, with pairwise correlations that are given by $\alpha$ (we are slightly abusing notation when we pass $\alpha$ as an argument in $f_x^{T_r}$, since it is the vector of pairwise correlations for the whole tree). 
Hence, the product of these terms is the expression of a product probability distribution over the subsets $A_r$ and $i,j$ being independent from everyone else (this is the $1/4$ term). 
This is exactly the expression of the distribution on $T\setminus\{i,j\}$ with correlations given by $\alpha$, except for pairs of leaves that belong to different subtrees, which have correlation $0$.
This is exactly how we defined $\gamma$, hence the result follows.

\end{proof}

Lemma~\ref{l:difference_fixed} tells us how much $f_x$ changes when we change one coordinate, corresponding to some pair $(i,j)$. Hence, our efforts will now be focused on bounding the expression on the RHS of \eqref{eq:diff_fixed}. If this expression corresponds to a distribution on $T\setminus\{i,j\}$, then this term can easily be bounded. However, as we will see, that will not always be the case and we need to be more careful. The following Lemma contains the Lipschitzness property that we would like to prove. 

\begin{lemma}\label{l:fixed_induction}
Let $T$ be a tree and $\alpha,\hat{\alpha} \in [-1,1]^{n\choose 2}$, where $\alpha$ is a metric on $T$. Suppose that $\|\alpha - \hat{\alpha}\|_\infty\leq \epsilon/(2n^2)$, where $\epsilon \in (0,1)$.  
Then, for any $t \geq 1$
\begin{equation}\label{eq:lip_fixed}
\sum_{x \in \{-1,1\}^n} \lp|f_x^T(\alpha^{t}) - f_x^T(\alpha^{t-1}) \rp| \leq \frac{\epsilon}{n^2}
\end{equation}
\end{lemma}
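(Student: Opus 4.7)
The plan is to prove Lemma~\ref{l:fixed_induction} by strong induction on the number of leaves $n$, using Lemma~\ref{l:difference_fixed} as the central identity and the induction hypothesis on the subtrees obtained by removing the path between the two updated leaves. Fix a step $t \ge 1$ and write $i = i_t$, $j = j_t$. Since $\alpha^{t-1}$ and $\alpha^t$ agree on every coordinate except $(i,j)$, applying Lemma~\ref{l:difference_fixed} with $\alpha := \alpha^{t-1}$ and $\beta := \alpha^t$ gives
\begin{equation*}
f_x^T(\alpha^{t-1}) - f_x^T(\alpha^t) = x_i x_j\, (\alpha^{t-1}_{ij} - \alpha^t_{ij})\, f_x^{T \setminus \{i,j\}}(\gamma),
\end{equation*}
where $\gamma$ vanishes on pairs whose tree-path crosses $P_{ij}$ and equals $\alpha^{t-1}_{k\ell}$ otherwise. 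Since $|\alpha^{t-1}_{ij} - \alpha^t_{ij}| \le \epsilon/(2n^2)$, the lemma reduces to showing $\sum_x |f_x^{T\setminus\{i,j\}}(\gamma)| \le 2$.

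To bound this sum I exploit the product factorization already derived inside the proof of Lemma~\ref{l:difference_fixed}: the forest $T \setminus \{i,j\}$ consists of isolated leaves $\{i\},\{j\}$ together with components $T_1, \dots, T_{k-1}$ whose leaf sets $A_1,\dots,A_{k-1}$ partition $[n]\setminus\{i,j\}$, and because $\gamma$ kills all cross-component correlations,
\begin{equation*}
\sum_x |f_x^{T\setminus\{i,j\}}(\gamma)| = \prod_{r=1}^{k-1} \sum_{x_{A_r}} |f_{x_{A_r}}^{T_r}(\gamma|_{A_r})|.
\end{equation*}
The critical observation is that inside each component, $\alpha|_{A_r}$ remains a tree metric on $T_r$ (inherited from the true metric $\alpha$ on $T$), and $\gamma|_{A_r} = \alpha^{t-1}|_{A_r}$, because for $k,\ell \in A_r$ the tree-path $P_{k\ell}$ in $T$ lies entirely inside $T_r$ and therefore avoids $P_{ij}$. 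Consequently $\|\gamma|_{A_r} - \alpha|_{A_r}\|_\infty \le \epsilon/(2n^2)$.

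Since each $T_r$ has $|A_r| < n$ leaves, I invoke the induction hypothesis on $T_r$ with parameter $\epsilon' := \epsilon |A_r|^2/n^2 \in (0,1)$: the inequality $\|\alpha|_{A_r} - \gamma|_{A_r}\|_\infty \le \epsilon/(2n^2) = \epsilon'/(2|A_r|^2)$ is precisely the hypothesis of Lemma~\ref{l:fixed_induction} for $T_r$, and summing its per-step bound $\epsilon'/|A_r|^2$ over the $\binom{|A_r|}{2}$ telescoping steps on $T_r$ yields
\begin{equation*}
\sum_{x_{A_r}} \bigl| f_{x_{A_r}}^{T_r}(\gamma|_{A_r}) - f_{x_{A_r}}^{T_r}(\alpha|_{A_r}) \bigr| \le \frac{\epsilon'}{2} = \frac{\epsilon |A_r|^2}{2n^2}.
\end{equation*}
Because $f_{x_{A_r}}^{T_r}(\alpha|_{A_r})$ is a genuine probability distribution, the triangle inequality gives $\sum_{x_{A_r}} |f_{x_{A_r}}^{T_r}(\gamma|_{A_r})| \le 1 + \epsilon |A_r|^2/(2n^2)$. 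Multiplying across $r$ and using $1+x \le e^x$ together with $\sum_r |A_r|^2 \le \bigl(\sum_r |A_r|\bigr)^2 \le n^2$ bounds the product by $\exp(\epsilon/2) \le \sqrt{e} < 2$ for $\epsilon \in (0,1)$, completing the inductive step.

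The main obstacle is that the intermediate vectors $\alpha^{t-1}$ are generally not induced by any tree metric, so $f_x^{T\setminus\{i,j\}}(\gamma)$ is not a probability distribution and the clean version of Lemma~\ref{l:lipschitz} does not by itself yield a unit bound. The induction bypasses this by anchoring $\gamma|_{A_r}$ against the nearby genuine tree metric $\alpha|_{A_r}$ on each strictly smaller subtree and recursing, after which the super-multiplicative growth of the product is tamed by the square-sum inequality on the $|A_r|$'s. The base case $n \le 2$ is immediate from the explicit two-term formula $f_x^T(\alpha) = (1 + \alpha_{12} x_1 x_2)/4$.
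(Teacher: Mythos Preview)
Your proof is correct and takes a genuinely different route from the paper's. Both arguments start with the same identity from Lemma~\ref{l:difference_fixed}, reducing the step-$t$ bound to controlling $\sum_x |f_x^{T\setminus\{i,j\}}(\gamma)|$. From there the paper inducts on the step index $t$: it telescopes a second time, writing $\gamma^{t,t-1}$ as a sequence of one-coordinate changes starting from the genuine tree metric $\gamma^{t,0}$ (obtained from $\alpha$ by zeroing the edges on $P_{i_tj_t}$), and invokes the inductive hypothesis at each of the $t-1$ earlier steps to obtain $\sum_x|f_x^{T\setminus\{i_t,j_t\}}(\gamma^{t,t-1})|\le 1+\epsilon$. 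You instead induct on the number of leaves $n$: you use the product factorization of $f_x^{T\setminus\{i,j\}}(\gamma)$ over the subtrees $T_r$, observe that on each factor $\gamma|_{A_r}=\alpha^{t-1}|_{A_r}$ lies within $\epsilon/(2n^2)$ of the bona fide tree metric $\alpha|_{A_r}$, and apply the lemma on the strictly smaller tree $T_r$ with rescaled parameter $\epsilon'=\epsilon|A_r|^2/n^2$. The product of the resulting factors $1+\epsilon|A_r|^2/(2n^2)$ is then controlled by $\sum_r |A_r|^2\le n^2$ and $e^{\epsilon/2}<2$.

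Your structural induction is arguably cleaner: it avoids the second telescoping layer and the need to re-instantiate the hypothesis with a different starting metric on the same tree (a point the paper's write-up leaves somewhat implicit). The paper's approach yields the slightly sharper intermediate bound $1+\epsilon$ versus your $e^{\epsilon/2}$, but this makes no difference for the conclusion. One minor point worth stating explicitly in your write-up is that each $T_r$, once degree-$2$ nodes are contracted, is again a tree whose internal nodes have degree $3$, so the lemma applies verbatim; you use this tacitly.
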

\begin{proof}
We will prove \eqref{eq:lip_fixed} via induction on $t$. 
First, we define for all $s,t \leq {n \choose 2}$ the vector
$$
\gamma^{t,s}_{kl} = \left\{
\begin{array}{ll}
       0 \text{ , if $P_{i_tj_t}$ and $P_{kl}$ have common edges }\\
      \alpha^{s}_{kl} \text{ , otherwise}
\end{array} 
\right.
$$
The base case $t=1$ corresponds to the pair of leaves $(1,2)$. Since $\alpha^{1}$ and $\alpha^0 = \alpha$ differ only in the pair $(1,2)$, by Lemma~\ref{l:difference_fixed} we have

\begin{align*}
\sum_{x \in \{-1,1\}^n} \lp|f_x^T(\alpha^{1}) - f_x^T(\alpha) \rp| \leq |\alpha_{12} - \hat{\alpha}_{12}| \sum_{x \in \{-1,1\}^n} \lp| f_x^{T\setminus\{1,2\}}(\gamma^{1,0})\rp|
\end{align*}
Now, we notice that $f_x^{T\setminus\{1,2\}}(\gamma^{1,0})$ is actually the probability distribution on $T$ that results from $\alpha$ if we set $\theta_e = 0$ for all $e \in P_{12}$. Hence, we can remove the absolute value and get

\begin{align*}
\sum_{x \in \{-1,1\}^n} \lp|f_x^T(\alpha^{1}) - f_x^T(\alpha) \rp| \leq |\alpha_{12} - \hat{\alpha}_{12}| \sum_{x \in \{-1,1\}^n}  f_x^{T\setminus\{1,2\}}(\gamma^{1,0}) = |\alpha_{12} - \hat{\alpha}_{12}| \leq \frac{\epsilon}{2n^2}
\end{align*}
Hence, the base case of the induction is proven.

Now, suppose we have proved the claim for all $t' < t$. 

By applying Lemma~\ref{l:difference_fixed}, we again obtain
\begin{align}\label{eq:t-step-diff}
\sum_{x \in \{-1,1\}^n} \lp|f_x^T(\alpha^{t}) - f_x^T(\alpha^{t-1}) \rp| \leq |\alpha_{i_tj_t} - \hat{\alpha}_{i_tj_t}| \sum_{x \in \{-1,1\}^n}  \lp|f_x^{T\setminus\{i_t,j_t\}}(\gamma^{t,t-1})\rp| 
\end{align}
Now, the problem is that  $\gamma^{t,s}$ contains some coordinates of $\alpha$ and some coordinates of $\hat{\alpha}$. As a result, the expression $f_x^T(\gamma^{t})$ is not necessarily a probability distribution anymore.
We will try to relate this quantity to a true distribution. 

Essentially, $\gamma^{t,s}$ is the same as $\alpha^s$, except for pairs of leaves that belong to different components of $T \setminus\{i_t,j_t\}$. 
Now, we can write

\begin{align*}
&\sum_{x \in \{-1,1\}^n}  \lp|f_x^{T\setminus\{i_t,j_t\}}(\gamma^{t,s})\rp|\\ &\leq
\sum_{x \in \{-1,1\}^n} \lp|f_x^{T\setminus\{i_t,j_t\}}(\gamma^{t,0})\rp| + \sum_{s = 1}^{t-1} \sum_{x \in \{-1,1\}^n} \lp|f_x^{T\setminus\{i_t,j_t\}}(\gamma^{t,s}) - f_x^{T\setminus\{i_t,j_t\}}(\gamma^{t,s-1}) \rp| 
\end{align*}
First of all, we notice that the expression $f_x^{T\setminus\{i_t,j_t\}}(\gamma^{t,0})$ is the expression of a probability distribution on $T$, which is obtained from $\alpha$ by setting $\theta_e = 0$ for all edges $e \in P_{i_tj_t}$. Hence, the first term of the RHS sums up to $1$.
As for the second sum, each term of the outer sum has exactly the form of \eqref{eq:lip_fixed}, where the starting $\alpha$ is $\gamma^{t,0}$ and we have substituted at most $t-1$ with $\hat{\alpha}$, since $s \leq t-1$. Hence, we can apply the inductive assumption to get
$$
\sum_{s = 1}^{t-1} \sum_{x \in \{-1,1\}^n} \lp|f_x^{T\setminus\{i_t,j_t\}}(\gamma^{t,s}) - f_x^{T\setminus\{i_t,j_t\}}(\gamma^{t,s-1}) \rp| \leq (t-1) \frac{\epsilon}{n^2} \leq \epsilon
$$
since $t \leq {n \choose 2} \leq n^2$. 
Overall, this gives
$$
\sum_{x \in \{-1,1\}^n}  \lp|f_x^{T\setminus\{i_t,j_t\}}(\gamma^{t,s})\rp| \leq 1 + \epsilon
$$
Now, plugging this in \eqref{eq:t-step-diff} gives us
$$
\sum_{x \in \{-1,1\}^n} \lp|f_x^T(\alpha^{t}) - f_x^T(\alpha^{t-1}) \rp|  \leq |\alpha_{i_tj_t} - \hat{\alpha}_{i_tj_t}|(1 + \epsilon) \leq \frac{\epsilon}{n^2} 
$$
since $\epsilon < 1$. Thus, the inductive step is complete and the claim is proved.

\end{proof}

We are now ready to prove Theorem~\ref{thm:probabilistic_fixed}.

\begin{proof}[Proof of Theorem~\ref{thm:probabilistic_fixed}]
Let $\epsilon' = 2n^2\epsilon$. We divide into cases.

\emph{Case 1:} Suppose $\epsilon' < 1$. Then, Lemma~\ref{l:fixed_induction} applies and we get
\begin{align*}
TV(\mu,\hat{\mu}) &= \sum_{x \in \{-1,1\}^n}
\lp|f_x^T(\alpha) - f_x^T(\hat{\alpha})\rp| \leq \sum_{t = 1}^{n \choose 2} 
\sum_{x \in \{-1,1\}^n} \lp|f_x^T(\alpha^t) - f_x^T(\alpha^{t-1})\rp|\\
&\leq \sum_{t = 1}^{n \choose 2} \frac{\epsilon'}{n^2} \leq \epsilon' = 2n^2\epsilon
\end{align*}

\emph{Case 2:} Suppose $\epsilon' \geq 1$. This means that 
$$
TV(\mu,\hat{\mu}) \leq 1 \leq 2n^2\epsilon
$$
so the claim is trivial in that case. 

\end{proof}

We can now also conclude the proof of Theorem~\ref{thm:alg} for known topology, which we sketched in earlier Sections.

\begin{proof}[Proof of Theorem~\ref{thm:alg} (Known topology)]
Let $\alpha \in [-1,1]^n$ denote the vector of correlations of leaves in the model we are trying to learn and $\mu$ denote the distribution on the leaves for this model.
Let us consider the sample mean obtained from $m$ independent samples $x^{(1)},\ldots,x^{(m)}$.
$$
\hat{\alpha}_{ij} = \sum_{k=1}^m \frac{x^{(k)}_i x^{(k)}_j}{m}
$$

By standard Chernoff bounds, we know that with probability at least $1-\delta$, for all leaves $i,j$
\begin{equation}\label{eq:chernoff}
|\alpha_{ij} - \hat{\alpha}_{ij}| \leq \sqrt{\frac{2\log(n^2/\delta)}{m}} := \eta
\end{equation}

We run Algorithm~\ref{alg:fixed} with this $\eta$ parameter.

First, we claim that the LP that Algorithm~\ref{alg:fixed} solves has a feasible solution for this choice of $\eta$. 
To show that, we will construct a feasible solution of the program. If $\theta \in [-1,1]^{|E|}$ is the vector of the edge weights of the model we are trying to learn,
then for all leaves $i,j$ 
$$
\alpha_{ij} = \prod_{(k,l) \in P_{ij}} \theta_{kl}
$$
This implies that
$$
|\alpha_{ij}| = \prod_{(k,l) \in P_{ij}} |\theta_{kl}|
$$

Thus, if we set $w_{kl} = \ln |\theta_{kl}|$ for all edges $(k,l)$, we have that
$$
\sum_{(k,l) \in P_{i,j}} w_{kl} =  \ln \lp(\prod_{(k,l) \in P_{ij}} |\theta_{kl}|\rp) = \log |\alpha_{ij}|
$$
We know that with probability at least $1-\delta$
$$
||\alpha_{ij}| - |\hat{\alpha}_{ij}|| \leq |\alpha_{ij} - \hat{\alpha}_{ij}| \leq \eta
$$
which implies by the previous observations that
$$
\log (|\hat{\alpha}_{ij}| - \eta ) \leq \sum_{(k,l) \in P_{i,j}} w_{kl} \leq \log (|\hat{\alpha}_{ij}| + \eta )
$$

Hence, the inequality constraint for feasibility is satisfied.
Hence, this is a feasible solution for the program.

Let $\tilde{\theta}_{kl}$ be the edge weights that are returned by the LP and $\tilde{\alpha} \in [0,1]^{n \choose 2}$ be the pairwise correlations that are induced by these weights. We need to figure out the correct signs for each $\theta_{kl}$. Let $s_{kl} \in \{-1,1\}$ be a sign variable for each edge $(k,l)$. 
Also, let $s(\alpha)_{ij} \in \{-1,1\}$ be the sign of $\alpha_{ij}$ and likewise define $s(\alpha_{ij})$. 
If we find an assignment of the $s_{kl}$ variables such that
$$
\prod_{(k,l) \in P_{ij}} s_{kl} = s(\alpha_{ij})
$$
for all pairs $i,j$,
then it follows that
\begin{equation}\label{eq:correct_sign}
\lp|\prod_{(k,l)\in P_{ij}} s_{kl}\tilde{\theta_{kl}} - \alpha_{ij}\rp| = 
\lp|\prod_{(k,l)\in P_{ij}} s_{kl} \prod_{(k,l)\in P_{ij}} \tilde{\theta_{kl}} - s(\alpha_{ij})|\alpha_{ij}|\rp|
= \lp|\prod_{(k,l)\in P_{ij}} \tilde{\theta_{kl}} - |\alpha_{ij}|\rp| \leq \eta 
\end{equation}
Thus, we will now focus on finding such $s_{kl}$ and the output of the algorithm will be $\overline{\theta_{kl}} = s_{kl}\tilde{\theta_{kl}}$.
First of all, we need a way to figure out $s(\alpha_{ij})$ for all $i,j$. Let $U= \{(i,j) : |\hat{\alpha}_{ij}| > \eta\}$. By the approximation guarantee $|\alpha_{ij} - \hat{\alpha_{ij}}|< \eta$, we conclude that for all $(i,j) \in U$, $s(\alpha_{ij}) = s(\hat{\alpha_{ij}})$. Hence, we build up the system of equations
\begin{equation}\label{eq:linear_sys}
\prod_{(k,l) \in P_{ij}} s_{kl} = s(\hat{\alpha_{ij}})  \text{ for all } (i,j) \in U
\end{equation}
This can be viewed as a system of linear equations in $\F_2$, which is the field with $2$ elements.
Hence, we can use the standard Gaussian elimination algorithm to solve it. Since $s(\hat{\alpha_{ij}}) = s(\alpha_{ij})$ for all $(i,j) \in U$, we know that this system 
has at least one solution, namely setting $s_{kl}$ to be the sign of $\theta_{kl}$ in the true model. Let $\tilde{s}$ be the solution that is returned by the Gaussian elimination algorithm. There might be many solutions, since it's possible that the system is underdetermined, which could be cause by the absence of some equations for $(i,j)\notin U$. 
But in any case, we know that $\tilde{s}$ satisfies \eqref{eq:linear_sys}. 

Now, we set $\overline{\theta_{kl}}= \tilde{s_{kl}} \tilde{\theta_{kl}}$ and let $\overline{\alpha} \in [-1,1]^{n\choose 2}$ be the correlations that are induced by $\overline{\theta}$, namely $\overline{\alpha_{ij}} = \tilde{\alpha_{ij}}\prod_{(k,l)\in P_{ij}} s_{kl}$.
If $(i,j) \in U$, then \eqref{eq:linear_sys} holds, which means that by \eqref{eq:correct_sign} we have 
$|\overline{\alpha_{ij}} - \alpha_{ij}| \leq \eta$. 
If $(i,j) \notin U$, then 
$|\hat{\alpha}_{ij}|\leq \eta$ implies $|\alpha_{ij}|\leq 2\eta$ and $|\tilde{\alpha_{ij}}|= |\overline{\alpha_{ij}}| \leq 2\eta$. Thus,
$|\overline{\alpha_{ij}}-\alpha_{ij}| \leq 4\eta$.

Next, we argue about the TV distance between the distribution $\overline{\mu}$ that is induced on the leaves by the output $\overline{\alpha}$ of the algorithm and $\mu$. We just proved that for all $i,j$
$$
|\alpha_{ij} - \overline{\alpha_{ij}}| \leq 4\eta
$$
We can then apply Theorem~\ref{thm:probabilistic_fixed}, which gives
$$
TV(\mu,\tilde{\mu}) \leq 8n^2 \eta = 8n^2  \sqrt{\frac{2\log(n^2/\delta)}{m}}
$$
To make this quantity smaller than $\epsilon$, we need 
$$
m = \Theta\lp(\frac{n^4\log(n/\delta)}{\epsilon^2}\rp)
$$
samples.

\end{proof}

\section{Proof of Theorem~\ref{thm:probabilistic}(Different topologies)}

Let us start with some definitions. Let $T$ be a tree with $n$ leaves and where all non-leaf nodes have degree $3$. 
For a vector $\alpha \in [-1,1]^{n \choose 2}$, we say that $\alpha$ is \emph{induced} by a metric in $T$ if there exists an assignment $\theta_e \in [-1,1]$ of weights for each edge $e$ of $T$, such that
for all leaves $i,j$
$$
\alpha_{ij} = \prod_{e \in P_{ij}} \theta_e
$$
If $\mu$ is the distribution of the leaves of $T$ when the vector of pairwise distances between leaves is $\alpha$, we say that $\mu$ is specified by the pair $(T,\alpha)$.
For an even subset $S \subseteq [n]$, we denote $\alpha_S^T$ the coefficient of $\prod_{i\in S} x_i$ in the Fourier expansion of the probability distribution on a tree $T$. 
In this Section, we prove the following Theorem, which is a restatement of Theorem~\ref{thm:probabilistic} (different topologies).

\begin{theorem}\label{thm:probabilistic_unknown}
Let $T,\hat{T}$ denote the topologies of two trees with $n$ leaves, where each non-leaf node has degree $3$. Let $\alpha,\hat{\alpha} \in [-1,1]^{n \choose 2}$ denote vectors of pairwise distances that are induced by some tree metric on $T$ and $\hat{T}$ , respectively. 
Let also $\hat{\mu}$ be the
distribution that is induced on the leaves by $(\hat{T},\hat{\alpha})$ and 
$\mu$ the one induced by $(T,\alpha)$. Finally, let $D$ be the minimum diameter of $T,\hat{T}$.
Suppose that for all leaves $i \neq j$ we have that $|\alpha_{ij} - \hat{\alpha}_{ij}| \leq \epsilon$. 
Then, 
\begin{equation}\label{eq:tensorization}
TV(\mu,\hat{\mu}) \leq C n^5D \epsilon
\end{equation}
where $C$ is an absolute constant. 
\end{theorem}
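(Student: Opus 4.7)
The plan is to bound the total variation $\mathrm{TV}(\mu,\hat\mu) = \frac{1}{2}\sum_x |f_x^T(\alpha) - f_x^{\hat T}(\hat\alpha)|$ by inserting an intermediate term through the triangle inequality:
\[
|f_x^T(\alpha) - f_x^{\hat T}(\hat\alpha)| \;\le\; |f_x^T(\alpha) - f_x^{\hat T}(\alpha)| \;+\; |f_x^{\hat T}(\alpha) - f_x^{\hat T}(\hat\alpha)|.
\]
Summed over $x$, the second term is precisely the same-topology quantity controlled by Theorem~\ref{thm:probabilistic_fixed} applied on $\hat T$ with weight vectors $\alpha$ and $\hat\alpha$ that are $\epsilon$-close in $\ell_\infty$, contributing at most $2n^2\epsilon$. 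So the entire effort reduces to bounding $\sum_x |f_x^T(\alpha) - f_x^{\hat T}(\alpha)|$, which measures only the effect of changing the topology while holding the correlation vector fixed.

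To that end, I would construct a sequence of topologies $T = T^0, T^1,\ldots, T^K = \hat T$ that interpolates between $T$ and $\hat T$ by local moves, as in Section~\ref{sec:tools-same-top}. Concretely, at each step I select a pair $\{i,j\}$ that is a cherry in $\hat T$ but not yet a cherry in the current tree, and ``slide'' $i$ by one step along the path $P_{ij}$ toward $j$ (choosing which endpoint to move according to the criterion used in Lemma~\ref{l:bad_quartet_intro}). Once all cherries of $\hat T$ are installed, the procedure is repeated one level up the tree, contracting matched cherries and treating their parents as new leaves; this continues until $\hat T$ is reached. The number of levels is bounded by the diameter $D$ of $\hat T$, and within each level every quartet $\{i,k,\ell,m\}$ can switch induced topology at most four times as the traversal proceeds (since there are only three possible quartet topologies and the movement along a single path is monotone).

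For each single step, I would apply Lemma~\ref{l:lipschitz_quartet} (formally Lemma~\ref{l:lip_expression}) to get
\[
\sum_{x}|f_x^{T^{r}}(\alpha)-f_x^{T^{r+1}}(\alpha)| \;\le\; \sum_{\{i,k,\ell,m\}\in U_r}\Delta_{ik\ell m},
\]
where $U_r$ is the set of quartets whose induced topology changes between $T^r$ and $T^{r+1}$. Then, by (the general form of) Lemma~\ref{l:bad_quartet_intro}, each $\Delta_{ik\ell m}$ is $O(\epsilon)$ when the move is a leaf slide, and $O(n\epsilon)$ when it is a subtree slide at higher levels; the argument is the case-analysis sketched in Section~\ref{sec:tools-same-top}, comparing the quartet topologies in $T^r,T^{r+1}$ against those in $T,\hat T$ and using $\|\alpha-\hat\alpha\|_\infty\le\epsilon$. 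Combining all ingredients: at most $D$ levels, at most $\binom{n}{4}=O(n^4)$ quartets, at most $4$ flips per quartet per level, and cost $O(n\epsilon)$ per flip yields $O(Dn^5\epsilon)$ for $\sum_x|f_x^T(\alpha)-f_x^{\hat T}(\alpha)|$. Adding in the $O(n^2\epsilon)$ contribution from Theorem~\ref{thm:probabilistic_fixed} gives the claimed $O(Dn^5\epsilon)$ bound.

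The main obstacles I anticipate are twofold. First, I will have to formalize the sliding procedure, especially at higher levels where whole subtrees (not just leaves) are being moved, and verify a monotone progress measure that bounds both the number of moves per level and the number of quartet flips per quartet per level by a small constant. Second, the proof of Lemma~\ref{l:lipschitz_quartet} is the technical heart: following the ``path removal'' idea from Lemma~\ref{l:lipschitz}, I would write $f_x^{T^r}(\alpha)-f_x^{T^{r+1}}(\alpha)$ as a sum over changed quartets in $U_r$, where each summand, after removing the paths connecting the four quartet leaves (as in Figure~\ref{fig:path-removal}(b)), factors into $\Delta_{ik\ell m}$ times a bona fide leaf distribution of a sub-model; summing absolute values over $x\in\{-1,1\}^n$ then extracts exactly $\Delta_{ik\ell m}$ per quartet, which is precisely \eqref{eq:one-step}. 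Once Lemma~\ref{l:lipschitz_quartet} and the termination/counting bounds are in hand, the final estimate follows by a routine telescoping and triangle inequality.
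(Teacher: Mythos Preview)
Your overall architecture matches the paper's: same triangle-inequality split, same interpolation of topologies by sliding cherries (and later whole subtrees) toward their positions in $\hat T$, same one-step estimate via Lemma~\ref{l:lip_expression}, same bound $\Delta_{wzyu}=O(n\epsilon)$ on the quartets that flip (Lemma~\ref{l:bad_quartet}), and the same final count of $O(Dn^4)$ total quartet flips.

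There is, however, a genuine gap in your treatment of the one-step estimate. You write that each summand ``factors into $\Delta_{ik\ell m}$ times a \emph{bona fide leaf distribution} of a sub-model'', so that summing over $x$ extracts exactly $\Delta_{ik\ell m}$. This is correct only at the very first move. For a later move $m$, Lemma~\ref{l:lip_expression} gives
\[
f_x^{T^{m}}(\alpha)-f_x^{T^{m+1}}(\alpha)=\sum_{\{w,z,y,u\}\in U_m}\bigl(\alpha_{\{w,z,y,u\}}^{T^m}-\alpha_{\{w,z,y,u\}}^{T^{m+1}}\bigr)\,x_wx_zx_yx_u\,f_x^{T^{m}\setminus\{w,z,y,u\}}(\alpha^{wzyu}),
\]
but the factor $f_x^{T^{m}\setminus\{w,z,y,u\}}(\alpha^{wzyu})$ is \emph{not} a probability distribution: $\alpha$ is a tree metric for the original $T$, not for the already-modified topology $T^m$, so after removing the quartet paths the coefficients $\alpha^{wzyu}$ need not be consistent with the remaining forest. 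Hence $\sum_x|f_x^{T^m\setminus\{w,z,y,u\}}(\alpha^{wzyu})|$ is not equal to $1$ in general, and your telescoping bound does not go through as stated. This is exactly the same obstruction that forced the inductive Lemma~\ref{l:fixed_induction} in the same-topology case, and it is harder here because the topology itself is drifting.

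The paper closes this gap with an induction on the number of moves (Lemma~\ref{l:interpolate_bound}). The key extra ingredient is Lemma~\ref{l:epoch_transform}: the forest $T^{m}\setminus\{w,z,y,u\}$ can itself be reached from the ``true'' forest $G_m$ (the one for which $\alpha^{wzyu}$ \emph{is} a tree metric) by a sequence of moves that is \emph{strictly shorter} than the sequence leading from $T$ to $T^{m+1}$. This allows one to invoke the inductive hypothesis on the sub-problem, obtaining $\sum_x|f_x^{T^m\setminus\{w,z,y,u\}}(\alpha^{wzyu})|\le 1+\epsilon'$ rather than $1$, and the recursion closes with the stated polynomial dependence. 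Without this nested induction (or an equivalent device), the simple telescoping you propose does not control the intermediate steps.
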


For four leaves $i,j,k,l$, we will denote a quartet of leaves by $\{i,j,k,l\}$ when we do not wish to specify their relative placement. The following fact is folklore: if we 
contract all edges that do not belong to some path between two leaves in the quartet, then we might end up with one of three possible topologies. We call this the topology of the quartet $\{i,j,k,l\}$. The $3$ topologies are shown in Figure~\ref{fig:quartet-possibilities}. For example, if we are in the first topology, we write $\{(12)(34)\}$ to denote that fact. We might refer to the quartet as either $\{1,2,3,4\}$ of $\{(12)(34)\}$, depending on whether we want to highlight the topology of the quartet or not. 

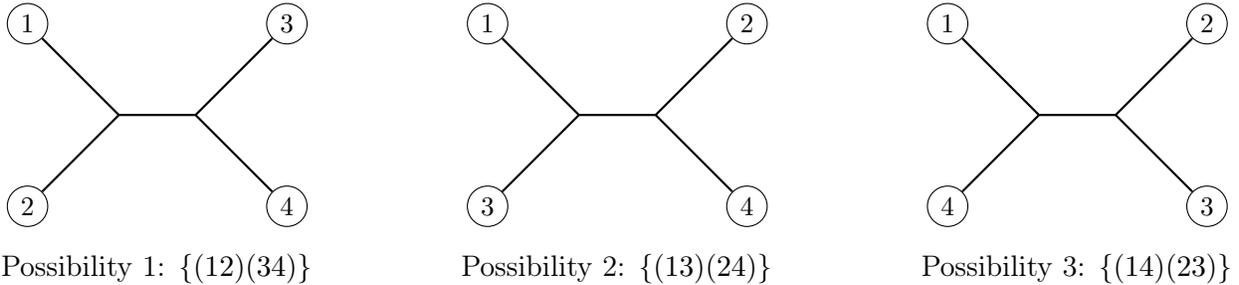
\begin{figure}[htbp]
\centering
\resizebox{\linewidth}{!}{%
\begin{tikzpicture}
%
%
\coordinate (midL1) at (0,0);
\coordinate (midR1) at (1,0);
\node[draw, circle, minimum size=15pt, inner sep=2pt, above left=of midL1] (i1) {\small $1$};
\node[draw, circle, minimum size=15pt, inner sep=2pt, below left=of midL1] (j1) {\small $2$};
\node[draw, circle, minimum size=15pt, inner sep=2pt, above right=of midR1] (k1) {\small $3$};
\node[draw, circle, minimum size=15pt, inner sep=2pt, below right=of midR1] (l1) {\small $4$};
\draw[thick] (midL1) -- (midR1);
\draw[thick] (midL1) -- (i1);
\draw[thick] (midL1) -- (j1);
\draw[thick] (midR1) -- (k1);
\draw[thick] (midR1) -- (l1);

%
%
\coordinate (midL2) at (6,0);
\coordinate (midR2) at (7,0);
\node[draw, circle, minimum size=15pt, inner sep=2pt, above left=of midL2] (i2) {\small $1$};
\node[draw, circle, minimum size=15pt, inner sep=2pt, below left=of midL2] (k2) {\small $3$};
\node[draw, circle, minimum size=15pt, inner sep=2pt, above right=of midR2] (j2) {\small $2$};
\node[draw, circle, minimum size=15pt, inner sep=2pt, below right=of midR2] (l2) {\small $4$};
\draw[thick] (midL2) -- (midR2);
\draw[thick] (midL2) -- (i2);
\draw[thick] (midL2) -- (k2);
\draw[thick] (midR2) -- (j2);
\draw[thick] (midR2) -- (l2);

%
%
\coordinate (midL3) at (12,0);
\coordinate (midR3) at (13,0);
\node[draw, circle, minimum size=15pt, inner sep=2pt, above left=of midL3] (i3) {\small $1$};
\node[draw, circle, minimum size=15pt, inner sep=2pt, below left=of midL3] (l3) {\small $4$};
\node[draw, circle, minimum size=15pt, inner sep=2pt, above right=of midR3] (j3) {\small $2$};
\node[draw, circle, minimum size=15pt, inner sep=2pt, below right=of midR3] (k3) {\small $3$};
\draw[thick] (midL3) -- (midR3);
\draw[thick] (midL3) -- (i3);
\draw[thick] (midL3) -- (l3);
\draw[thick] (midR3) -- (j3);
\draw[thick] (midR3) -- (k3);

%
%
\node[inner sep=0pt] at ($(midL1)!0.5!(midR1) + (0,-2)$) {Possibility 1: $\{(12)(34)\}$};
\node[inner sep=0pt] at ($(midL2)!0.5!(midR2) + (0,-2)$) {Possibility 2: $\{(13)(24)\}$};
\node[inner sep=0pt] at ($(midL3)!0.5!(midR3) + (0,-2)$) {Possibility 3: $\{(14)(23)\}$};
\end{tikzpicture}
}
\caption{The three possible topologies for a quartet. 
In Possibility 1, the path from $1$ to $2$ does not intersect the path from $3$ to $4$. Further, $\alpha_{12}\alpha_{34} \ge \alpha_{13}\alpha_{24} = \alpha_{14}\alpha_{23}$ (and similarly for Possibilities 2 and 3).}
\label{fig:quartet-possibilities}
\end{figure}

As explained in Figure~\ref{fig:quartet-possibilities}, what distinguishes the topology is the relative order between the products $\alpha_{12}\alpha_{34}, \alpha_{13}\alpha_{24}, \alpha_{14}\alpha_{23}$. When $\alpha$ is induced by some tree metric, then two of these products will always be equal and the third will be larger or equal. Depending on which of the products is larger, we get one of the tree possible topologies (if all products are equal then we will choose the topology arbitrarily). Hence, if for some reason we cannot distinguish which of the three products is larger, then we intuitively expect that it will be hard to find the correct topology for a quartet. We give
the relevant definitions below. 

\begin{definition}
Let $i,j,k,l$ be a quartet of four leaves of $T$. We define 
$$
\Delta_{ijkl}(\alpha) := \max(\alpha_{ij}\alpha_{kl}, \alpha_{ik}\alpha_{jl}, \alpha_{il}\alpha_{jk}) - \min(\alpha_{ij}\alpha_{kl}, \alpha_{ik}\alpha_{jl}, \alpha_{il}\alpha_{jk})
$$
\end{definition}
\begin{definition}
Let $i,j,k,l$ be a quartet of four leaves of $T$. We say that this is an \emph{$\epsilon$-good quartet} w.r.t. some vector $\alpha$ if 
$$
\Delta_{ijkl}(\alpha) > \epsilon
$$
\end{definition}

A quartet of leaves that is not an $\epsilon$-good quartet is called an \emph{$\epsilon$-bad quartet}. 
Intuitively, if a quartet is good, then it is easy for an algorithm to distinguish which is the correct topology of these four leaves out of the three possibilities. If it is bad, then the topology is very close to being a star and so all three possibilities are roughly equivalent.
One thing to note is that if $|\hat{\alpha} - \alpha^*| \leq \epsilon$, then
$$
|\Delta_{ijkl}(\hat{\alpha}) - \Delta_{ijkl}(\alpha^*)| \leq 2\epsilon
$$
for all $i,j,k,l$. Hence, it does not really make a difference whether a quartet is good or bad with respect to $\hat{\alpha}$ or $\alpha^*$, since there is only an $O(\epsilon)$-additive error. 

We now proceed with the proof of Theorem~\ref{thm:probabilistic_unknown}.
First, we need to formally define some notions of cutting and pasting nodes in different parts of the tree. This will prove useful in having a unified vocabulary when describing the process of interpolating between two trees.

\begin{definition}
Let $T = G(V,E)$ denote the topology of a tree where every node has degree at most $3$. We define $\textsc{Binary}(T)$ to be the tree that is obtained from $T$ by contracting all maximal paths of nodes of degree $2$ into a single edge. In other words, it is obtained if we succesively find a degree $2$ node $u$ with edges $(u,v), (u,w)$ and replace it with edge $(v,w)$, until we cannot find such a node. 
\end{definition}

Notice that the output of \textsc{Binary} is also described in Definition~\ref{def:contraction}. 
For an example of how \textsc{Binary} works, see Figure~\ref{fig:binary}. Clearly, \textsc{Binary}(T) satisfies the property that every non-leaf node has degree $3$. 

\begin{figure}[htbp]
\centering
\scalebox{0.6}{\resizebox{\linewidth}{!}{%
\begin{tikzpicture}
%
%
\node[draw, circle, minimum size=15pt, inner sep=2pt] at (0,0) (v1-T) {};
\node[draw, circle, minimum size=15pt, inner sep=2pt] at ($(v1-T) + (-1.75,-2)$) (v2-T) {};
\node[draw, circle, minimum size=15pt, inner sep=2pt] at ($(v1-T) + (1.75,-2)$) (v3-T) {};
\node[draw, circle, minimum size=15pt, inner sep=2pt] at ($(v2-T) + (-1,-2)$) (v4-T) {};
\node[draw, circle, minimum size=15pt, inner sep=2pt] at ($(v2-T) + (1,-2)$) (v5-T) {};
\node[draw, circle, minimum size=15pt, inner sep=2pt] at ($(v1-T) + (1.75,1)$) (v6-T) {};

\node[draw, circle, minimum size=15pt, inner sep=2pt] at ($(v1-T)!0.5!(v2-T)$) (v12-T) {};
\node[draw, circle, minimum size=15pt, inner sep=2pt] at ($(v1-T)!0.5!(v3-T)$) (v13-T) {};

\draw[thick, blue] (v1-T) -- (v12-T) -- (v2-T);
\draw[thick, red] (v1-T) -- (v13-T) -- (v3-T);
\draw[thick] (v1-T) -- (v6-T);
\draw[thick] (v2-T) -- (v4-T);
\draw[thick] (v2-T) -- (v5-T);

%
%
\node[draw, circle, minimum size=15pt, inner sep=2pt] at (7,0) (v1-BT) {};
\node[draw, circle, minimum size=15pt, inner sep=2pt] at ($(v1-BT) + (-1.75,-2)$) (v2-BT) {};
\node[draw, circle, minimum size=15pt, inner sep=2pt] at ($(v1-BT) + (1.75,-2)$) (v3-BT) {};
\node[draw, circle, minimum size=15pt, inner sep=2pt] at ($(v2-BT) + (-1,-2)$) (v4-BT) {};
\node[draw, circle, minimum size=15pt, inner sep=2pt] at ($(v2-BT) + (1,-2)$) (v5-BT) {};
\node[draw, circle, minimum size=15pt, inner sep=2pt] at ($(v1-BT) + (1.75,1)$) (v6-BT) {};

\draw[thick, blue] (v1-BT) -- (v2-BT);
\draw[thick, red] (v1-BT) -- (v3-BT);
\draw[thick] (v1-BT) -- (v6-BT);
\draw[thick] (v2-BT) -- (v4-BT);
\draw[thick] (v2-BT) -- (v5-BT);

%
\node[] at (-2,0.5) {\small $T$};
\node[] at (5,0.5) {\small $\textsc{BINARY}(T)$};
\end{tikzpicture}
}}
\caption{On the left we have the tree before the contraction. On the right, we have the tree after applying \textsc{Binary}. We have highlighted with similar colors the path that is contracted on the left and the final edge on the right.
}
\label{fig:binary}
\end{figure}
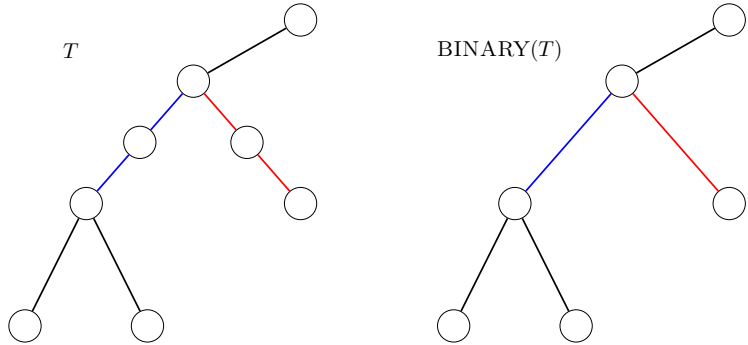

\begin{definition}
Let $T = G(V,E)$ denote the topology of a tree and suppose $(u,v) \in E$. Let $(r,s)\in E$ be some other edge of $T$, where we might have $\{r,s\}\cap \{u,v\}\neq \emptyset$. The only requirement is that $r,s$ belong to a different component than $u$ when we remove edge $(u,v)$ from $T$. We define \textsc{CutPaste}$(T,u,v,(r,s))$ to be the tree that is obtained from $T$ as follows: we delete edges $(u,v)$ and $(r,s)$, we add a node $t$ and we add the edges $(t,u),(t,r),(t,s)$. This produces a tree $T'$. We set \textsc{CutPaste}$(T,u,v,(r,s))$ = \textsc{Binary}$(T')$.
\end{definition}

An example of applying \textsc{CutPaste} can be seen in Figure~\ref{fig:cutpaste}.

\begin{figure}[htbp]
\centering
\resizebox{\linewidth}{!}{%
\begin{tikzpicture}
%
%
\node[draw, circle, minimum size=15pt, inner sep=2pt] at (0,0) (r-T) {\small $r$};
\node[draw, circle, minimum size=15pt, inner sep=2pt] at ($(r-T) + (2,0)$) (s-T) {\small $s$};
\node[draw, circle, minimum size=15pt, inner sep=2pt] at ($(r-T) + (-1,0)$) (v-T) {\small $v$};
\node[draw, circle, minimum size=15pt, inner sep=2pt] at ($(v-T) + (0,-1)$) (u-T) {\small $u$};
\node[draw, circle, minimum size=15pt, inner sep=2pt] at ($(v-T) + (-1,0)$) (v1-T) {};
\node[draw, circle, minimum size=15pt, inner sep=2pt] at ($(u-T) + (-1,-1)$) (v2-T) {};
\node[draw, circle, minimum size=15pt, inner sep=2pt] at ($(u-T) + (1,-1)$) (v3-T) {};
\node[draw, circle, minimum size=15pt, inner sep=2pt] at ($(r-T) + (0,-1)$) (v4-T) {};
\node[draw, circle, minimum size=15pt, inner sep=2pt] at ($(s-T) + (0,-1)$) (v5-T) {};
\node[draw, circle, minimum size=15pt, inner sep=2pt] at ($(v5-T) + (-1,-1)$) (v6-T) {};
\node[draw, circle, minimum size=15pt, inner sep=2pt] at ($(v5-T) + (1,-1)$) (v7-T) {};

\draw[thick, dashed] (v-T) -- (u-T);
\draw[thick] (v-T) -- (r-T);
\draw[thick, dashed] (r-T) -- (s-T);
\draw[thick] (v-T) -- (v1-T);
\draw[thick] (u-T) -- (v2-T);
\draw[thick] (u-T) -- (v3-T);
\draw[thick] (r-T) -- (v4-T);
\draw[thick] (s-T) -- (v5-T);
\draw[thick] (v5-T) -- (v6-T);
\draw[thick] (v5-T) -- (v7-T);

%
%
\node[draw, circle, minimum size=15pt, inner sep=2pt] at (7,0) (r-Tprime) {\small $r$};
\node[draw, red, circle, minimum size=15pt, inner sep=2pt] at ($(r-Tprime) + (1,0)$) (t-Tprime) {\small $t$};
\node[draw, circle, minimum size=15pt, inner sep=2pt] at ($(r-Tprime) + (2,0)$) (s-Tprime) {\small $s$};
\node[draw, circle, minimum size=15pt, inner sep=2pt] at ($(r-Tprime) + (-1,0)$) (v-Tprime) {\small $v$};
\node[draw, circle, minimum size=15pt, inner sep=2pt] at ($(t-Tprime) + (0,1)$) (u-Tprime) {\small $u$};
\node[draw, circle, minimum size=15pt, inner sep=2pt] at ($(v-Tprime) + (-1,0)$) (v1-Tprime) {};
\node[draw, circle, minimum size=15pt, inner sep=2pt] at ($(u-Tprime) + (-1,1)$) (v2-Tprime) {};
\node[draw, circle, minimum size=15pt, inner sep=2pt] at ($(u-Tprime) + (1,1)$) (v3-Tprime) {};
\node[draw, circle, minimum size=15pt, inner sep=2pt] at ($(r-Tprime) + (0,-1)$) (v4-Tprime) {};
\node[draw, circle, minimum size=15pt, inner sep=2pt] at ($(s-Tprime) + (0,-1)$) (v5-Tprime) {};
\node[draw, circle, minimum size=15pt, inner sep=2pt] at ($(v5-Tprime) + (-1,-1)$) (v6-Tprime) {};
\node[draw, circle, minimum size=15pt, inner sep=2pt] at ($(v5-Tprime) + (1,-1)$) (v7-Tprime) {};

\draw[thick] (v-Tprime) -- (r-Tprime);
\draw[thick, dashed] (r-Tprime) -- (t-Tprime);
\draw[thick, dashed] (t-Tprime) -- (u-Tprime);
\draw[thick, dashed] (t-Tprime) -- (s-Tprime);
\draw[thick] (v-Tprime) -- (v1-Tprime);
\draw[thick] (u-Tprime) -- (v2-Tprime);
\draw[thick] (u-Tprime) -- (v3-Tprime);
\draw[thick] (r-Tprime) -- (v4-Tprime);
\draw[thick] (s-Tprime) -- (v5-Tprime);
\draw[thick] (v5-Tprime) -- (v6-Tprime);
\draw[thick] (v5-Tprime) -- (v7-Tprime);

%
%
\node[draw, circle, minimum size=15pt, inner sep=2pt] at (14,0) (r-Tfinale) {\small $r$};
\node[draw, circle, minimum size=15pt, inner sep=2pt] at ($(r-Tfinale) + (1,0)$) (t-Tfinale) {\small $t$};
\node[draw, circle, minimum size=15pt, inner sep=2pt] at ($(r-Tfinale) + (2,0)$) (s-Tfinale) {\small $s$};
\node[draw, circle, minimum size=15pt, inner sep=2pt] at ($(r-Tfinale) + (-1,0)$) (v-Tfinale) {};
\node[draw, circle, minimum size=15pt, inner sep=2pt] at ($(t-Tfinale) + (0,1)$) (u-Tfinale) {\small $u$};
\node[draw, circle, minimum size=15pt, inner sep=2pt] at ($(u-Tfinale) + (-1,1)$) (v2-Tfinale) {};
\node[draw, circle, minimum size=15pt, inner sep=2pt] at ($(u-Tfinale) + (1,1)$) (v3-Tfinale) {};
\node[draw, circle, minimum size=15pt, inner sep=2pt] at ($(r-Tfinale) + (0,-1)$) (v4-Tfinale) {};
\node[draw, circle, minimum size=15pt, inner sep=2pt] at ($(s-Tfinale) + (0,-1)$) (v5-Tfinale) {};
\node[draw, circle, minimum size=15pt, inner sep=2pt] at ($(v5-Tfinale) + (-1,-1)$) (v6-Tfinale) {};
\node[draw, circle, minimum size=15pt, inner sep=2pt] at ($(v5-Tfinale) + (1,-1)$) (v7-Tfinale) {};

\draw[thick] (v-Tfinale) -- (r-Tfinale);
\draw[thick] (r-Tfinale) -- (t-Tfinale);
\draw[thick] (t-Tfinale) -- (u-Tfinale);
\draw[thick] (t-Tfinale) -- (s-Tfinale);
\draw[thick] (u-Tfinale) -- (v2-Tfinale);
\draw[thick] (u-Tfinale) -- (v3-Tfinale);
\draw[thick] (r-Tfinale) -- (v4-Tfinale);
\draw[thick] (s-Tfinale) -- (v5-Tfinale);
\draw[thick] (v5-Tfinale) -- (v6-Tfinale);
\draw[thick] (v5-Tfinale) -- (v7-Tfinale);

%
\node[] at (-2,1) {\small $T$};
\node[] at (5,1) {\small $T'$};
\node[] at (12,1) {\begin{tabular}{r}\small$\textsc{CutPaste}(T)$\\$= \textsc{Binary}(T')$\end{tabular}};
\end{tikzpicture}
}
\caption{The Figure shows the output of $\textsc{CutPaste}(T,u,v,(r,s))$. In the first step we cut $u$ from it's place and paste it in the middle of edge $(r,s)$. In the second step we contract paths of degree $2$ nodes.}
\label{fig:cutpaste}
\end{figure}
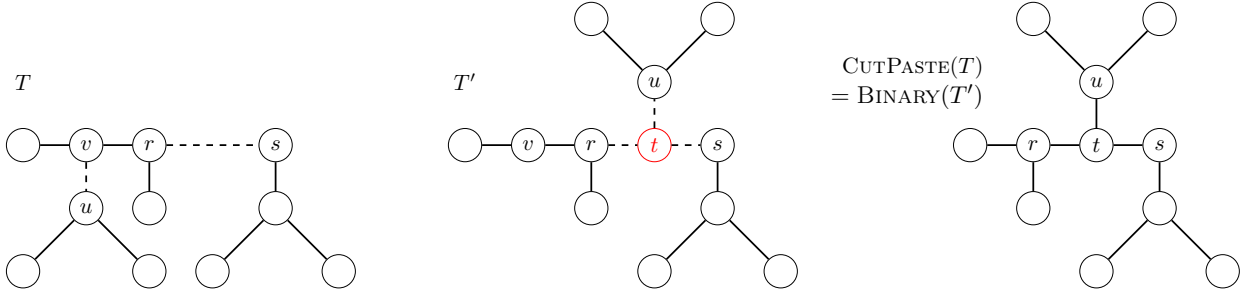

Intuitively, \textsc{CutPaste} encodes the following process: we delete edge $(u,v)$, we add a node $t$ in the middle of edge $(r,s)$ and we attach $u$ together with it's connected component to $t$. This is why the order of $u,v$ as arguments of \textsc{CutPaste} is important, while the order of $r,s$ is not. 

Before describing the interpolation process between the two trees, we prove a lemma about \textsc{CutPaste}. It shows what changes in the distribution if we change the tree according to \textsc{CutPaste}. For a tree $T$ and a quartet $\{w,z,y,u\}$, let's denote by  $T\setminus\{w,z,y,u\}$ the graph that we get if we remove all paths of the quartet $\{w,z,y,u\}$ from $T$. An example is given in Figure~\ref{fig:path-removal}(b). 

\begin{lemma}\label{l:lip_expression}
Let $T = (V,E)$ be a tree and $\alpha \in [-1,1]^{n \choose 2}$. Let $i, j$ be two nodes in $V$(leaf or non-leaf) and denote $i = v_0, v_1,\ldots,v_m = j $ to be the nodes in the path that connects them in the tree, with $m \geq 3$. 
Define $T^k = \textsc{CutPaste}(T, i,v_1,(v_k,v_{k+1}))$ for all $0< k < m$. Denote by $U_k$ the set of quartets where $T^k$ and $T^{k+1}$ differ. Lastly, let's define the vector $\alpha^{wzyu}\in [-1,1]^{n \choose 2}$ as
$$
\alpha^{wzyu}_{kl} = 
\left\{
\begin{array}{ll}
      \alpha_{kl} \text{, if the path $P_{kl}$ does not have common edges with any of the paths in the quartet $\{w,z,y,u\}$}\\
      0 \text{, otherwise}
\end{array} 
\right. 
$$

Then, 
$$
f_x^{T^k}(\alpha) - f_x^{T^{k+1}}(\alpha) =  \sum_{(w,z,y,u) \in U_k}  \lp(\alpha_{\{w,z,y,u\}}^{T^k} - \alpha_{\{w,z,y,u\}}^{T^{k+1}}\rp) x_wx_zx_yx_u \cdot f_x^{T^k\setminus\{w,z,y,u\}}(\alpha^{wzyu})
$$
where $f^T_x(\alpha)$ was defined in \eqref{eq:f-def}.

\end{lemma}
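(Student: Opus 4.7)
The plan is to expand both sides of the claimed identity as polynomials in $x \in \{\pm 1\}^n$ using Lemma~\ref{l:basic_extend} and compare coefficients of each monomial $\prod_{j \in S} x_j$. On the LHS the coefficient is $2^{-n}(\alpha_S^{T^k} - \alpha_S^{T^{k+1}})$, while on the RHS, after using $x_j^2 = 1$ to combine $\prod_{j \in Q} x_j \cdot \prod_{j \in S'} x_j = \prod_{j \in Q \triangle S'} x_j$ and reindexing $S' = Q \triangle S$, the coefficient becomes $2^{-n} \sum_{Q \in U_k}(\alpha_Q^{T^k} - \alpha_Q^{T^{k+1}})(\alpha^Q)^{T^k \setminus Q}_{S \triangle Q}$. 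Thus the lemma reduces to proving, for every even $S \subseteq [n]$, the combinatorial identity
\[
\alpha_S^{T^k} - \alpha_S^{T^{k+1}} = \sum_{Q \in U_k} (\alpha_Q^{T^k} - \alpha_Q^{T^{k+1}}) \cdot (\alpha^Q)^{T^k \setminus Q}_{S \triangle Q}.
\]
A first simplification restricts the sum to $Q \subseteq S$: if $q \in Q \setminus S$, then the unique edge incident to $q$ lies inside the minimal subtree spanning $Q$, so $q$ is isolated in the forest $T^k \setminus Q$ and any matching involving it has coefficient zero; the surviving terms have $S \triangle Q = S \setminus Q$.

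Next I would exploit the fact that $T^k$ and $T^{k+1}$ differ only in the attachment point of the subtree containing $i$. Partition the leaves into four regions: $\mathcal{I}$ (the leaves of the $i$-side subtree after cutting edge $(i,v_1)$), $K_k := \bigcup_{r \le k} L_r$, $L_{k+1}$, and $M_k := \bigcup_{r \ge k+2} L_r \cup \mathcal{J}$, where $L_r$ hangs off $v_r$ and $\mathcal{J}$ is the $j$-containing subtree. A direct case check confirms that $U_k$ is precisely the set of quartets with one leaf in each region, and each such $\{w,z,y,u\}$ (with $w \in \mathcal{I}$, $z \in K_k$, $y \in L_{k+1}$, $u \in M_k$) flips topology from $\{(wz)(yu)\}$ in $T^k$ to $\{(wu)(zy)\}$ in $T^{k+1}$. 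Since the internal topology of each region is preserved, the closest-relative matching of $S \cap R$ inside $R$ (treating $R$'s attachment point as a boundary) is identical in the two trees; when $|S \cap R|$ is odd, denote by $r^*(R) \in S \cap R$ the unique ``odd one out'' that must match across regions.

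A parity count on the number of odd regions (which can only be $0$, $2$, or $4$) shows that $\alpha_S^{T^k}$ and $\alpha_S^{T^{k+1}}$ disagree precisely when all four regions have odd count in $S$: in the $0$ or $2$ cases, the cross-region pairings are forced by region choice alone and do not depend on $\mathcal{I}$'s position. In the all-four-odd case, the canonical quartet $Q^*(S) := \{r^*(\mathcal{I}), r^*(K_k), r^*(L_{k+1}), r^*(M_k)\} \in U_k$ captures the discrepancy: the matching of $S$ in $T^k[S]$ splits as the within-region matchings of $(S \cap R) \setminus \{r^*(R)\}$ (identical in both trees) together with the quartet matching $(w^*, z^*)(y^*, u^*)$ in $T^k$, which flips to $(w^*, u^*)(z^*, y^*)$ in $T^{k+1}$. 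Subtracting yields
\[
\alpha_S^{T^k} - \alpha_S^{T^{k+1}} = \left(\alpha_{Q^*}^{T^k} - \alpha_{Q^*}^{T^{k+1}}\right) \cdot \prod_R \alpha^R_{(S \cap R) \setminus \{r^*(R)\}},
\]
where $\alpha^R_{S'}$ denotes the closest-relative matching product of $S'$ inside region $R$. On the RHS, removing the minimal subtree spanning $Q^*$ cleanly separates the four regions into distinct connected components of the forest, so the $Q = Q^*$ term reproduces the factored LHS exactly.

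The remaining step -- and the main technical obstacle -- is to show all other RHS contributions vanish. When some region has even count (so LHS $= 0$), I would argue that for every $Q \in U_k \cap \binom{S}{4}$, the set $S \setminus Q$ has odd count in at least one region, and the required cross-region match would traverse the removed $Q$-subtree; so the matching coefficient is zero. When all four regions are odd but $Q \neq Q^*$, the quartet contains a non-canonical representative $r \neq r^*(R)$ from some region $R$, and the plan is to show that removing the $Q$-subtree strands $r^*(R)$ in a connected component with odd $S$-parity. The key is a parity-cascade characterization of $r^*(R)$: it is the unique leaf whose root-to-leaf path in $R$ passes through a prescribed pattern of odd-parity subtrees, and replacing it with any other leaf propagates a parity mismatch down $R$'s tree, forcing some post-removal component to carry odd $S$-count and killing the matching coefficient. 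I plan to formalize this by induction on the depth of $R$'s internal tree structure. Once this vanishing is established, the RHS sum collapses to the single canonical term $Q = Q^*$, matching the factored LHS and completing the proof.
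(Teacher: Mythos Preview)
Your approach is correct and, at its core, is the same argument as the paper's, presented in dual form. The paper groups the LHS by quartet: for each $Q \in U_k$ it defines $\mathcal{S}_Q$ as the collection of even $S \supseteq Q$ for which $\alpha_S^{T^k}$ factors as $\alpha_Q^{T^k} \cdot \alpha_{S\setminus Q}^{\text{forest}}$, characterizes $\mathcal{S}_Q$ as exactly those $S$ with $|S \cap A_r|$ even for every component $A_r$ of $T^k \setminus Q$, notes the $\mathcal{S}_Q$ are disjoint, and observes that these are precisely the $S$ with $\alpha_S^{T^k} \neq \alpha_S^{T^{k+1}}$. You instead fix $S$ and identify the unique surviving $Q$ on the RHS. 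These are the same combinatorics read in opposite directions: your canonical quartet $Q^*(S)$ is exactly the unique $Q$ with $S \in \mathcal{S}_Q$, and your ``odd-one-out'' $r^*(R)$ is the $Q$-representative that the paper's $|S \cap A_r|$-even condition singles out.

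One comment on the step you flag as the main obstacle: you are making the vanishing argument harder than it needs to be. No induction on depth is required. The point is simply that $r^*(R)$ is, by definition, the leaf reached from $R$'s attachment point by always descending into the child subtree with odd $S$-count. If you instead remove the path to some $r \neq r^*(R)$, then at the first node where the two paths diverge you went toward an even-count subtree, leaving the odd-count sibling subtree as a component of $T^k \setminus Q$ with odd $(S \setminus Q)$-parity --- immediately killing the forest coefficient. The paper's version of this is the one-line remark that the $\mathcal{S}_Q$ are disjoint (since the matching of $S$ determines $Q$), together with the observation that position $(5)$ in its Figure depicting the quartet is impossible, so the unmatched leaf $b$ in an odd $A_r$ is forced to pair with one of $w,z,y,u$, breaking the factorization. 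Both arguments are short; your ``parity-cascade'' language is the right intuition but can be stated in one paragraph rather than formalized inductively.
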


\begin{proof}
Let $I_i$ denote the subset of leaves that lie on the subtree where $i$ belongs to if we delete edge $(i,v_1)$ from the tree, and analogously we define $I_j$ as the set of leaves of the subtree where $j$ belongs to after this removal. 
By definition, we have 
\begin{align*}
f_x^{T_k}(\alpha) - f_x^{T^{k+1}}(\alpha) &= 
\frac{1}{2^n} \sum_{S \subseteq [n], |S| even} (\alpha_S^{T^k} - \alpha_S^{T^{k+1}})\prod_{u \in S} x_u\\
&= \frac{1}{2^n} \sum_{S \subseteq [n], |S| even, S \cap I_i \neq \emptyset } (\alpha_S^{T^k} - \alpha_S^{T^{k+1}})\prod_{u \in S} x_u
\end{align*}
The last equality follows because the relative topology of the leaves $[n]\setminus I_i$ does not change, which means the coefficients $\alpha_S$ for $S \subseteq [n]\setminus I_i$ also do not change. Now, for $0< k < m$ let us define $S_k$ to be the subset of leaves on the connected component that $v_k$ belongs to, if we remove all edges of the path $P_{ij}$ from the graph. 
Let's also define $L_k = \cup_{q=1}^{k}S_q, R_k = \cup_{q = k}^{m-1}S_q \cup I_j$ . We will characterize the set of quartets $U_k$ that change from $T_k$ to $T_{k+1}$. 
An illustration of all these concepts we just defined is given in Figure~\ref{fig:movement}.

\begin{figure}[htbp]
\centering
\resizebox{\linewidth}{!}{%
\begin{tikzpicture}
%
%
\node[draw, circle, minimum size=15pt, inner sep=2pt] at (0,0) (v0) {\small $v_0$};
\node[above=5pt of v0] {\small $v_0 = i$};
\node[draw, circle, minimum size=15pt, inner sep=2pt, right=50pt of v0] (v1) {\small $v_1$};
\node[draw, circle, minimum size=15pt, inner sep=2pt, right=50pt of v1] (v2) {\small $v_2$};
\node[above=5pt of v2] {\small $v_2 = v_k$};
\node[draw, circle, minimum size=15pt, inner sep=2pt, right=50pt of v2] (v3) {\small $v_3$};
\node[draw, circle, minimum size=15pt, inner sep=2pt, right=50pt of v3] (v4) {\small $v_4$};
\node[draw, circle, minimum size=15pt, inner sep=2pt, right=50pt of v4] (v5) {\small $v_5$};
\node[above=5pt of v5] {\small $v_5 = j$};

\draw[thick] (v0) -- (v1);
\draw[thick] (v1) -- (v2);
\draw[thick] (v2) -- (v3);
\draw[thick] (v3) -- (v4);
\draw[thick] (v4) -- (v5);

%
%

\node[draw, circle, minimum size=15pt, inner sep=2pt, below=50pt of v0] (S0) {\small $w$};
\node[draw, white, circle, minimum size=15pt, inner sep=2pt, below=50pt of v1] (S1) {};
\node[draw, circle, minimum size=15pt, inner sep=2pt, below=50pt of v2] (S2) {\small $z$};
\node[draw, circle, minimum size=15pt, inner sep=2pt, below=50pt of v3] (S3) {\small $y$};
\node[draw, circle, minimum size=15pt, inner sep=2pt, below=50pt of v4] (S4) {\small $u$};
\node[draw, white, circle, minimum size=15pt, inner sep=2pt, below=50pt of v5] (S5) {};
\node[fit=(S0), draw, inner sep=5pt, isosceles triangle, rotate=90, anchor=center] (S0-tree) {};
\node[fit=(S1), draw, inner sep=5pt, isosceles triangle, rotate=90, anchor=center] (S1-tree) {};
\node[fit=(S2), draw, inner sep=5pt, isosceles triangle, rotate=90, anchor=center] (S2-tree) {};
\node[fit=(S3), draw, inner sep=5pt, isosceles triangle, rotate=90, anchor=center] (S3-tree) {};
\node[fit=(S4), draw, inner sep=5pt, isosceles triangle, rotate=90, anchor=center] (S4-tree) {};
\node[fit=(S5), draw, inner sep=5pt, isosceles triangle, rotate=90, anchor=center] (S5-tree) {};

\draw[thick] (v0) -- (S0-tree);
\draw[thick] (v1) -- (S1-tree);
\draw[thick] (v2) -- (S2-tree);
\draw[thick] (v3) -- (S3-tree);
\draw[thick] (v4) -- (S4-tree);
\draw[thick] (v5) -- (S5-tree);

%
%
\node[draw, blue, circle, minimum size=15pt, inner sep=2pt] at ($(v1)!0.5!(v2) + (0,1)$) (i1) {\small $v_0$};
\node[draw, orange, circle, minimum size=15pt, inner sep=2pt] at ($(v2)!0.5!(v3) + (0,1)$) (i2) {\small $v_0$};
\node[draw, red, circle, minimum size=15pt, inner sep=2pt] at ($(v3)!0.5!(v4) + (0,1)$) (i3) {\small $v_0$};

\node[draw, circle, minimum size=15pt, inner sep=2pt, above=50pt of i1] (S0-1) {\small $w$};
\node[draw, circle, minimum size=15pt, inner sep=2pt, above=50pt of i2] (S0-2) {\small $w$};
\node[draw, circle, minimum size=15pt, inner sep=2pt, above=50pt of i3] (S0-3) {\small $w$};
\node[fit=(S0-1), draw, blue, inner sep=5pt, isosceles triangle, rotate=270, anchor=center] (S0-1-tree) {};
\node[fit=(S0-2), draw, orange, inner sep=5pt, isosceles triangle, rotate=270, anchor=center] (S0-2-tree) {};
\node[fit=(S0-3), draw, red, inner sep=5pt, isosceles triangle, rotate=270, anchor=center] (S0-3-tree) {};

\draw[thick, blue] ($(v1)!0.5!(v2)$) -- (i1) -- (S0-1-tree);
\draw[thick, orange] ($(v2)!0.5!(v3)$) -- (i2) -- (S0-2-tree);
\draw[thick, red] ($(v3)!0.5!(v4)$) -- (i3) -- (S0-3-tree);

%
%
\node[below=5pt of S0] {\small $S_0=I_i$};
\node[below=5pt of S1] (S1-label) {\small $S_1$};
\node[below=5pt of S2] (S2-label) {\small $S_2$};
\node[below=5pt of S3] {\small $S_3=S_{k+1}$};
\node[below=5pt of S4] (S4-label) {\small $S_4$};
\node[below=5pt of S5] (S5-label) {\small $S_5=I_j$};
\node[above=5pt of S0-1] {\small $S_0$};
\node[above=5pt of S0-2] {\small $S_0$ (in $T^k$)};
\node[above=5pt of S0-3] {\small $S_0$ (in $T^{k+1}$)};

\draw[decorate, decoration={mirror,brace,raise=5pt,amplitude=5pt}] ($(S1-label) + (-1,0)$) -- ($(S2-label) + (1,0)$) node[midway,yshift=-20pt]{$L_2 = L_k$};
\draw[decorate, decoration={mirror,brace,raise=5pt,amplitude=5pt}] ($(S4-label) + (-1,0)$) -- ($(S5-label) + (1,0)$) node[midway,yshift=-20pt]{$R_4 = R_{k+2}$};
\end{tikzpicture}
}
\caption{This is an illustration of how \textsc{CutPaste} cuts $i$ from a place and moves it along the path to $v_5 = j$, one step at a time. This is also exactly the same movement that is done by Algorithm~\ref{alg:interpolation}, where one move corresponds to moving $v_0$ one step to the right. Note that quartet $\{w,z,y,u\}$ is changed when we move $v_0$ from the left of $v_3$ to the right of $v_3$. For illustration we denote $k=2$ and we depict the movement from tree $T^k$ to $T^{k+1}$.}
\label{fig:movement}
\end{figure}
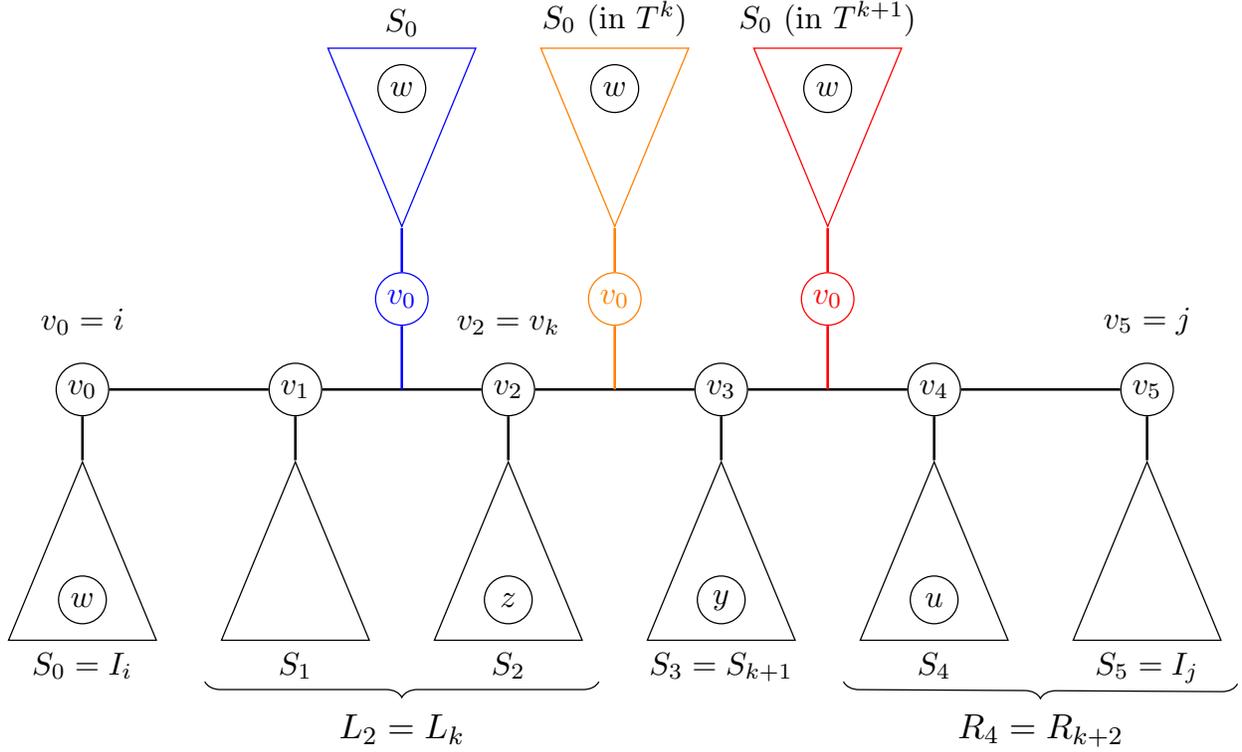

First of all, notice that $I_i,I_j, \{S_q\}_{q=1}^{m-1}$ partitions the set of leaves. It is straightforward to see that
$$
U_k = \{\{w,z,y,u\}: w \in I_i, z \in L_k, y \in S_{k+1}, u \in R_{k+2}\}
$$
Now let's fix a quartet $\{w,z,y,u\} \in U_k$.
We would like to characterize the even subsets $S \supset \{w,z,y,u\}$ such that
$$
\alpha_S^{T^k} = \alpha_{\{w,z,y,u\}}^{T^k} \alpha^{T^k}_{S \setminus \{w,z,y,u\}}
$$
Denote by $\mathcal{S}_{w,z,y,u}$ this collection of subsets. 
Essentially, these are the subsets were the matchings happen so that $w,z$ and $y,u$ are matched together.
The reason we are interested in these subsets is that these are exactly the subsets where $\alpha_S^{T^k}$ and $\alpha_S^{T^{k+1}}$ will be different (once we enumerate over all quartets $\{w,z,y,u\}$ in $U_k$) 
Our strategy to understand how these sets look like will be similar to the one employed in the proof of Lemma~\ref{l:difference_fixed}.
In particular, 
let us consider removing the paths of the quartet $\{w,z,y,u\}$ from $T^k$. This leaves us with a collection of connected subtrees, each with a leaf set $A_r$. Here, $r$ ranges from $1$ to $l$ where $l$ is the number of these components. The set of leaves can be partitioned as
$$
[n] = \{w,z,y,u\}\cup \lp(\cup_{r \leq l} A_r\rp)
$$

It should then be clear from the figure that $S \in \mathcal{S}_{w,z,y,u}$ if and only if $|S\cap A_i|$ is even, for all $i$. To justify that, let's see what happens if for some $r$ $|S\cap A_r|$ was odd. Then, there would be a leaf $b \in A_r$ that would be left unmatched in $A_r$. As we can see from Figure~\ref{fig:quartet_relative}, there are $5$ different possible positions that $b$ can lie in the relative topology of the quartet $\{w,z,y,u\}$. However, from these, only $4$ are possible, since $b$ cannot lie in the middle of the quartet. The reason is that by definition of $\{w,z,y,u\}$ there is no node in the middle edge of that quartet, so there is no subtree that is hanging from there. 

\begin{figure}[htbp]
\centering
\scalebox{0.5}{\resizebox{\linewidth}{!}{%
\begin{tikzpicture}
\coordinate (midL) at (0,0);
\coordinate (midR) at (1,0);
\node[draw, circle, minimum size=15pt, inner sep=2pt] at ($(midL) + (-2,1)$) (w) {\small $w$};
\node[draw, circle, minimum size=15pt, inner sep=2pt] at ($(midL) + (-2,-1)$) (z) {\small $z$};
\node[draw, circle, minimum size=15pt, inner sep=2pt] at ($(midR) + (2,1)$) (y) {\small $y$};
\node[draw, circle, minimum size=15pt, inner sep=2pt] at ($(midR) + (2,-1)$) (u) {\small $u$};

\node[draw, circle, minimum size=15pt, inner sep=2pt] at ($(w)!0.5!(midL) + (0.25,0.75)$) (b1) {\small $b$};
\node[draw, circle, minimum size=15pt, inner sep=2pt] at ($(z)!0.5!(midL) + (0.25,-0.75)$) (b2) {\small $b$};
\node[draw, circle, minimum size=15pt, inner sep=2pt] at ($(u)!0.5!(midR) + (-0.25,-0.75)$) (b3) {\small $b$};
\node[draw, circle, minimum size=15pt, inner sep=2pt] at ($(y)!0.5!(midR) + (-0.25,0.75)$) (b4) {\small $b$};
\node[draw, circle, minimum size=15pt, inner sep=2pt] at ($(midL)!0.5!(midR) + (0,0.75)$) (b5) {\small $b$};
\node[] at ($(b1)+(0.5,0)$) {\scriptsize $(1)$};
\node[] at ($(b2)+(0.5,0)$) {\scriptsize $(2)$};
\node[] at ($(b3)+(0.5,0)$) {\scriptsize $(3)$};
\node[] at ($(b4)+(0.5,0)$) {\scriptsize $(4)$};
\node[] at ($(b5)+(0.5,0)$) {\scriptsize $(5)$};

\draw[thick, decorate, decoration={snake, amplitude=0.35mm}] (midL1) -- (midR1);
\draw[thick, decorate, decoration={snake, amplitude=0.35mm}] (midL1) -- (w);
\draw[thick, decorate, decoration={snake, amplitude=0.35mm}] (midL1) -- (z);
\draw[thick, decorate, decoration={snake, amplitude=0.35mm}] (midR1) -- (y);
\draw[thick, decorate, decoration={snake, amplitude=0.35mm}] (midR1) -- (u);

\draw[thick, dashed] ($(w)!0.5!(midL)$) -- (b1);
\draw[thick, dashed] ($(z)!0.5!(midL)$) -- (b2);
\draw[thick, dashed] ($(u)!0.5!(midR)$) -- (b3);
\draw[thick, dashed] ($(y)!0.5!(midR)$) -- (b4);
\draw[thick, dashed] ($(midL)!0.5!(midR)$) -- (b5);
\end{tikzpicture}
}}
\caption{The $5$ different placings of $b$ relative to the quartet $\{w,z,y,u\}$. Notice that position (5) is actually not possible, as there is no node in the middle of the quartet (see Figure~\ref{fig:movement})}
\label{fig:quartet_relative}
\end{figure}
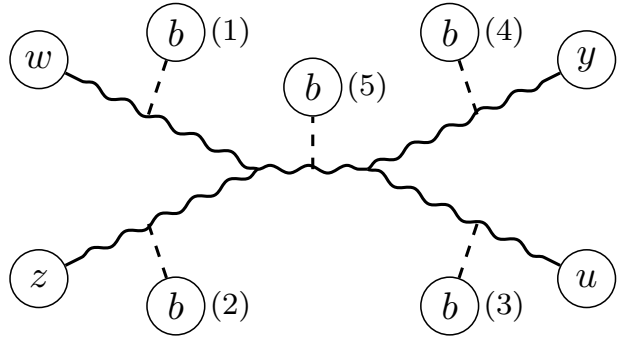

Hence,
$b$ should lie closer to one of the $4$ leaves. Let's assume w.l.o.g. that it lies closer to $w$. Then, a similar argument as in Lemma~\ref{l:difference_fixed} applies. In particular, we can also assume w.l.o.g. that $b$ is the closest leaf in $w$ that is left unmatched by it's subtree $A_r$ (otherwise we consider the closest one instead of $b$). 
Then, $b$ has to be matched with $w$ in $\alpha_S^{T^k}$, which means that the term $\alpha_{wz}$ will not appear in that expression.
This means that $S \notin \mathcal{S}_{w,z,y,u}$, a contradiction. Thus, we established that 
$$
\mathcal{S}_{w,z,y,u} =  \{\{w,z,y,u\}\cup \lp(\cup_{r \leq l} S_r\rp) : S_r \subseteq A_r , |S_r| \text{ even }\} 
$$
Note that the sets $\mathcal{S}_{w,z,y,u}$ are disjoint for different quartets $\{w,z,y,u\}$. Also, it is easy to see that $S \in \mathcal{S}_{w,z,y,u}$ if and only if 
$$
\alpha_S^{T^{k+1}} = \alpha_{\{w,z,y,u\}}^{T^{k+1}} \alpha^{T^k}_{S \setminus \{w,z,y,u\}}
$$
Hence, the sets $S$ such that $\alpha_S^{T^k} \neq \alpha_S^{T^{k+1}}$ are precisely the union $\cup_{\{w,z,y,u\} \in U_k} \mathcal{S}_{w,z,y,u}$.
Now, notice that 
\begin{multline*}
\frac{1}{2^n}\sum_{S \in \mathcal{S}_{w,z,y,u}} (\alpha_S^{T^{k+1}} - \alpha_S^{T^{k}})\prod_{c \in S}x_u \\
= 
\lp(\alpha_{\{w,z,y,u\}}^{T^k} - \alpha_{\{w,z,y,u\}}^{T^{k+1}}\rp) x_wx_zx_yx_u \underbrace{\frac{1}{2^n}\prod_{r}  \lp(\sum_{S \subseteq A_r, |S| even}\alpha_{S } \prod_{c \in S \cap A_r} x_c\rp)}_{f_x^{T\setminus\{w,z,y,u\}}(\alpha^{wzyu})}
\end{multline*}
The last equality is true, since it has the form of a product distribution over the subsets $A_i$, which is exactly the distribution of the topology $T\setminus\{w,z,y,u\}$. 
The weights in each subtree remain the same, but across subtrees the correlations are $0$, which is why the argument is $\alpha^{wzyu}$ now.
Summing over all $\{w,z,y,u\} \in U_k$ gives us the desired claim.

\end{proof}

We now describe the process of interpolating between $T$ and $\hat{T}$. 
We first give the pseudocode, which is Algorithm~\ref{alg:interpolation}. We note that even though we call this process an algorithm, it will only be used as part of the Analysis of the TV distance between two trees. Hence, we are not concerned with its computational complexity.

\begin{algorithm}
\SetAlgoLined\DontPrintSemicolon
\caption{Interpolation between two tree topologies}
\KwInp{Unweighted trees $T = (V,E), \hat{T} = (\hat{V}, \hat{E})$ whose leaves are labeled $1,\dots,n$}
\KwInp{Correlations $(\alpha_{ij})_{i,j \in \{1,\dots,n\},\ i \ne j}$ between leaves of $T$ and $(\hat{\alpha}_{ij})_{i,j \in \{1,\dots,n\},\ i \ne j}$ between leaves of $\hat{T}$}
$T_1  \gets T$\\
$S \gets \{T_1\}$\\
$L \gets \{1,\ldots,n\}$\\
\While(\tcp*[h]{a new round starts}){$|L| \geq 4$ }{ 
\For{$i,j \in L$}{
\If(\tcp*[h]{new epoch}){$\textsc{Cherry}(i,j,T_1) == \textsc{False}$ \textbf{and} $\textsc{Cherry}(i,j,\hat{T}) == \textsc{True}$}{
$p \gets$ common neighbor of $i,j$ in $\hat{T}$\\
$I \gets $ set of leaves in the same component at $i$, if we remove $(i,p)$ from $\hat{T}$\\
$J \gets $ set of leaves in the same component at $j$, if we remove $(j,p)$ from $\hat{T}$\\
$z \gets \arg\max_{u \in I}\hat{\alpha}_{iu}$\\
$w \gets \arg\max_{u \in J}\hat{\alpha}_{ju}$\\
\uIf{$\hat{\alpha}_{zp} > \hat{\alpha}_{wp}$}{Switch $i,j$}
$k \gets $neighbor of $i$ on the path $P_{ij}$\\
$l \gets$ neighbor of $j$ on the path $P_{ij}$\\
$S_2 \gets \textsc{Sequence}(T_1,i,j)$\tcp*{sequence of moves}
$T_1 \gets$ \textsc{CutPaste}$(T_1, i,k,(j,l))$ \tcp*{make $i,j$ a cherry}
$S \gets S \cup S_2$
}
\If(\tcp*[h]{remove cherries where $T_1,\hat{T}$ agree}){ $\textsc{Cherry}(i,j,\hat{T}) == \textsc{True}$}{
$p \gets$ common neighbor of $i,j$ in $\hat{T}$\\
$L \gets L\setminus \{i,j\}$\\
$L \gets L \cup \{p\}$\\
}
}
}
\Return{\rm The list $L$ of topologies that were generated during interpolation}
\label{alg:interpolation}
\end{algorithm}

The interpolation will be carried away in \emph{rounds}. Each round corresponds to a run of the outer \texttt{While} loop. 
In the first round ($q=1)$, we make sure that any two leaves that form a cherry in $\hat{T}$ will also form a cherry in $T$. At the end of the first round, we update the set of leaves by removing leaves that are cherries and adding their parents.
Hence, in the second round, we make sure that parents of leaves that are cherries in $\hat{T}$ become also cherries in $T$ and so on. 

Let us now describe in a bit more detail what happens in each round. 
First of all, notice that the $L$ in the for loop condition is evaluated at the start of the loop. This means that if we change it during the run of the loop, the number of iterations will not be affected.
In the first round, this set $L$ corresponds to the leaf set $[n]$. 
We proceed to search for a pair $i,j$ that is a cherry in $\hat{T}$ but not in $T$. 
If such a pair $i,j$ is found, the we have to move one of them towards the other to make them a cherry. This sequence of moves is called an \emph{epoch} and corresponds to a run of the first \texttt{If} statement inside the \texttt{For}. 
We include an extra \texttt{If} statement since we want to choose the \emph{weakest} of $i,j$ to move (we will see why this is important later). 
To move $i$ towards $j$, we use the fuction \textsc{Sequence}. This gives us all the intermediate topologies that are needed to move $i$ to $j$. 
Each of these topologies corresponds to a \emph{move}. Hence, an epoch consists of moves.
The movement is by cutting $i$ from it's current placement and pasting it in all the edges of the path to $j$ consecutively, similarly to what is shown in Figure~\ref{fig:movement}. 
After this movement is made, $T_3$ is updated to store the new topology.

We now explain the significance of the second \texttt{If} statement. 
If $i,j$ was not a cherry in $T$ but was in $\hat{T}$, then the previous \texttt{If} fixed that. Now, the second \texttt{If} locates all these cherries that are common in $T_3$ and $\hat{T}$ and removes them from the leaf set $L$, while adding their parent. 
This means that the subtree rooted in the parent will not be changed after that point, since it has the same topology in $T_3,\hat{T}$ and instead will be moved around with it's parent in subsequent steps. Hence, in the second round, $L$ will contain some parents of leaves and possibly some leaves that were not matched into cherries in the first round. We give an example run of Algorithm~\ref{alg:interpolation} in Figure~\ref{fig:algorun}.

\begin{figure}[htbp]
\centering
\input{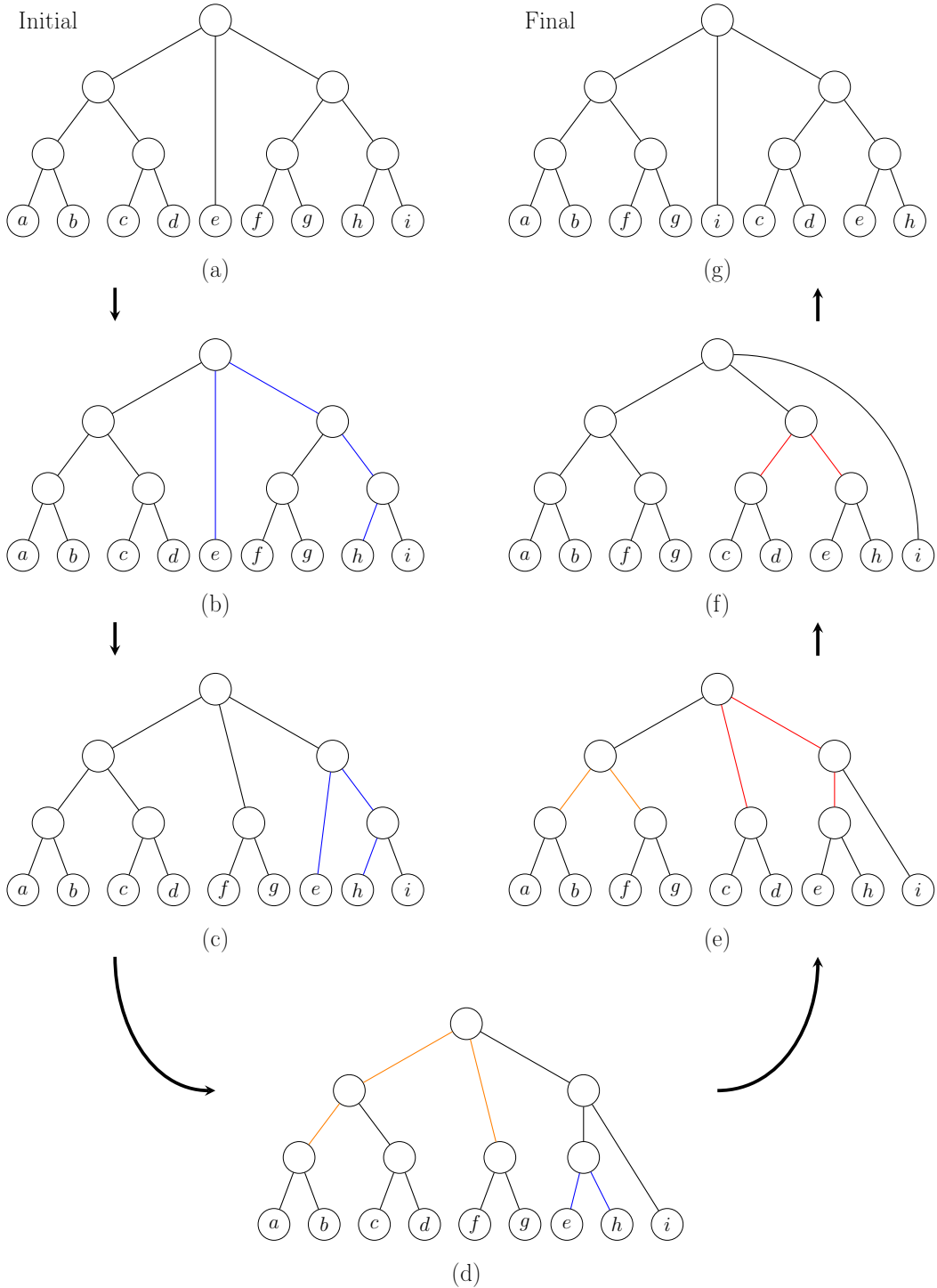}
\caption{
Example run of Algorithm~\ref{alg:interpolation}. In the first round, we have epoch $1$. In epoch $1$, $e$ becomes a cherry with $h$. In the second round, we have epochs number $2$ and $3$. In epoch $2$, the parent of $f,g$ becomes a cherry with the parent of $a,b$. In epoch $3$, the parent of $c,d$ becomes a cherry with the parent of $e,h$. 
After that, we have reached the final topology.}
\label{fig:algorun}
\end{figure}

\begin{algorithm}[htp]
  \SetAlgoLined\DontPrintSemicolon
  \SetKwFunction{algo}{algo}\SetKwFunction{proc}{Sequence}
  \SetKwFunction{fcherry}{Cherry}
  \SetKwProg{myproc}{Function}{}{}
  \myproc{\proc{$T,i,j$}}{
  $S \gets \{T\}$\\
  Let $\{v_0,\ldots, v_m\}$ be path from $i$ to $j$\tcp*{$v_0 = i, v_m = j$}
  \For{$r\gets1$ \KwTo $m-1$}{
  $T_2 \gets \textsc{CutPaste}(T,i,v_1,(v_r,v_{r+1})$ \tcp*{a move happens here}
  $S \gets S \cup \{T_2\}$
  }
  \Return{$S$}
  }
  
  \myproc{\fcherry{i,j,T}}{
  \uIf{$i,j$ have a common neighbor in $T$}{
    \Return \textsc{True}
  }
  \Else{
  \Return \textsc{False}}
  }
  \caption{Functions \textsc{Sequence} and \textsc{Cherry}}
\end{algorithm}


Let's introduce a bit of notation about this process. Suppose $q$ is a round, $t$ is some epoch of this round, and $m$ is some move in epoch $t$.  We denote $(i_t,j_t)$ the pair of leaves from $L$ that is selected during epoch $t$ of the algorithm. Suppose the length of the path $P_{i_tj_t}$ is $l_t$. Then, we denote by $v_0^{t} = i_{t} , v_1^{t},\ldots, v_{l_t}^{t} = j_{t}$ be the nodes in the path from $i_{t}$ to $j_{t}$, which has length $l_t$. 
We denote by $T^{m}$ the topology that we get before move $m$ and $T^{m+1}$ the one we get after the move. We also define $T^0 = T$.
Formally, if $m'$ is the first move of epoch $t$, we have, 
$$
T^{m+1} = \textsc{CutPaste}(T^{m'}, i_{t},v_1^{t}, (v_m^{t},v_{m+1}^{t}))
$$
 It is implied that in the definition of $T^{qrs}$ we do not delete the leaves that have already been fixed into cherries. Note that $i_{t},j_{t}$ might correspond to some internal nodes. Let $I_t, J_t$ be the set of leaves in the same component as $i_t, j_t$ respectively, if we remove all the edges in the path from $i_t$ to $j_t$.

We collect here some observations about Algorithm~\ref{alg:interpolation} that will prove useful in the sequel.

\begin{observation}\label{ob:epoch}
The total number of epochs for a single run of Algorithm~\ref{alg:interpolation} is at most $n$.
\end{observation}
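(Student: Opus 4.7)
The plan is to construct an injective map from epochs to strict decrements of $|L|$ and then to invoke the trivial bound that $|L|$ starts at $n$ and can only decrease throughout the run.

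The key observation I would use is that the guard of the first inner \texttt{If} (which is exactly what triggers an epoch) is strictly stronger than the guard of the second inner \texttt{If}: the former requires $(i,j)$ to be a cherry in $\hat{T}$ \emph{and} not a cherry in $T_1$, whereas the latter only requires $(i,j)$ to be a cherry in $\hat{T}$. Consequently, in every iteration of the inner \texttt{For} loop in which an epoch fires, the second \texttt{If} also fires on the same pair $(i,j)$. That block updates $L$ to $L\setminus\{i,j\}\cup\{p\}$, where $p$ is the common neighbor of $i$ and $j$ in $\hat{T}$. Since $\{i,j\}\subseteq L$ at the start of the iteration and $|\{i,j\}|=2>|\{p\}|$, this update strictly decreases $|L|$.

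Since $L$ is modified only inside the second \texttt{If} block, and each such modification strictly decreases $|L|$, the sequence $|L|$ is monotonically decreasing across the entire execution of Algorithm~\ref{alg:interpolation}. Moreover, distinct epochs necessarily occur in distinct iterations of the inner \texttt{For} loop (different iterations process different pairs, and once $(i,j)$ has been removed from $L$ it is never re-introduced), so the map sending each epoch to the removal event of the second \texttt{If} that immediately follows it within the same iteration is injective. The number of epochs is therefore bounded above by the total decrement of $|L|$ across the run, which is at most $|L|_{\mathrm{initial}}-|L|_{\mathrm{final}}\le n$. I expect no substantive obstacle in this argument; the only point that requires care is verifying the implication ``first \texttt{If} fires $\Rightarrow$ second \texttt{If} fires on the same pair,'' which is immediate from the sequential semantics of the pseudocode and the comparison of the two guards.
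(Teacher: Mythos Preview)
Your argument is correct and is essentially the dual of the paper's proof: the paper tracks an increasing quantity (the size of the subtree on which $T_1$ and $\hat{T}$ agree, which grows strictly with every epoch), while you track the complementary decreasing quantity $|L|$. Both are simple monotone-counter arguments bounded by $n$, so the approaches are the same in spirit; your version has the minor advantage of being tied directly to an explicit variable of the pseudocode rather than to a semantic invariant.
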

\begin{proof}
Each time an epoch is complete, 
we build a subtree of strictly larger size than before (size stands for number of leaves here) which agrees with $\hat{T}$. Since there are at most $n$ leaves, we need at most $n$ steps until we reach $\hat{T}$. Hence, there are at most $n$ epochs in the whole process.
\end{proof}
\begin{observation}\label{ob:quartet-epoch}
Any quartet $\{i,j,k,l\}$ changes topology at most once per epoch. 
\end{observation}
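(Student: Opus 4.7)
The plan is to apply Lemma~\ref{l:lip_expression}, which already pins down the set of quartets whose topology changes at each single move of an epoch, and then observe that any fixed quartet can satisfy the shape requirement of that set for at most one value of the step index.

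First I would set up the notation of the epoch: during the epoch the moving entity $i_t$ travels along the path $v_0, v_1, \ldots, v_{l_t}$ with $v_0 = i_t$ and $v_{l_t} = j_t$, there are $l_t-1$ cut-paste moves, and after removing the path edges the leaves partition as $I_{i_t} \cup S_1 \cup \cdots \cup S_{l_t-1} \cup I_{j_t}$ according to which hanging subtree they sit in. Lemma~\ref{l:lip_expression} then describes the set $U_k$ of quartets on which $T^k$ and $T^{k+1}$ differ as
\[
U_k = \{\, \{w,z,y,u\} : w \in I_{i_t},\ z \in L_k,\ y \in S_{k+1},\ u \in R_{k+2} \,\},
\]
where $L_k = \bigcup_{r=1}^k S_r$ and $R_{k+2} = I_{j_t} \cup \bigcup_{r=k+2}^{l_t-1} S_r$, as in that lemma.

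Next I would fix an arbitrary quartet $Q$ of leaves and split on $c := |Q \cap I_{i_t}|$. If $c \neq 1$, then $Q$ cannot take the form of an element of $U_k$ for any step $k$, so its topology is preserved throughout the epoch. If $c = 1$, let $w$ denote the unique leaf of $Q$ lying in $I_{i_t}$, and let the three other leaves lie in the parts indexed by $r_1 \leq r_2 \leq r_3$ (with the convention $r = l_t$ meaning $I_{j_t}$). Membership $Q \in U_k$ forces one of the three non-$w$ leaves to lie in $L_k$, one in $S_{k+1}$, and one in $R_{k+2}$, i.e.\ $r_1 \leq k$, $r_2 = k+1$, $r_3 \geq k+2$. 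This system has no solution unless $r_1 < r_2 < r_3$ strictly, in which case it determines $k = r_2 - 1$ uniquely. Hence $Q$ lies in at most one $U_k$ within the epoch, and so changes topology at most once.

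The hard part of the observation is really the content of Lemma~\ref{l:lip_expression}; once its combinatorial description of $U_k$ is in hand, the present statement reduces to the short pigeonhole argument above, and I do not expect any further obstacle.
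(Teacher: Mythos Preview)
Your proposal is correct and takes essentially the same approach as the paper: both invoke the description of $U_k$ from Lemma~\ref{l:lip_expression} and conclude that the sets $U_k$ are pairwise disjoint across the moves of a single epoch. The paper simply asserts this disjointness as ``trivial to see,'' whereas you spell out the pigeonhole argument (the index $k$ is pinned down by which block $S_{k+1}$ the middle leaf lands in), so your write-up is just a more explicit version of the same idea.
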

\begin{proof}
Follows by inspecting the set of quartets that change during a move, which was described in Lemma~\ref{l:lip_expression}. It is trivial to see that these sets are disjoint for different moves in a single epoch. 
\end{proof}
\begin{observation}\label{ob:subtree}
For any epoch $t$ and any move $m$ in $t$, the subtree induced by the leaves in $I_t$ and $J_t$ is identical in $T^m$ and in $\hat{T}$.
\end{observation}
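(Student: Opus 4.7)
The plan is to prove Observation~\ref{ob:subtree} by induction on the round number, after strengthening it into an invariant that Algorithm~\ref{alg:interpolation} maintains throughout its execution. The invariant I would state is: at every point during the algorithm, for each $u\in L$, letting $\Lambda(u)\subseteq[n]$ denote the set of original leaves that have been absorbed under $u$ (so $\Lambda(u)=\{u\}$ when $u$ is an original leaf, and $\Lambda(p)=\Lambda(i_t)\cup\Lambda(j_t)$ when $p$ replaces $i_t,j_t$ at the end of an epoch), the subtree of the current topology induced by $\Lambda(u)$ equals the subtree of $\hat T$ induced by $\Lambda(u)$. Specializing this invariant to $u=i_t$ and $u=j_t$ at any move $m$ inside epoch $t$ immediately yields the desired statement, since $I_t=\Lambda(i_t)$ and $J_t=\Lambda(j_t)$.

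The structural fact driving the invariant is that every operation $\textsc{CutPaste}(T^m,i_t,v_1,(v_r,v_{r+1}))$ executed inside epoch $t$ touches only edges on or incident to the path $P_{i_t j_t}$: it deletes the edge $(i_t,v_1)$, subdivides $(v_r,v_{r+1})$, and then contracts any degree-$2$ nodes that appear along the path. In particular, no edge strictly interior to the subtree hanging from $i_t$ after we cut $(i_t,v_1)$ is ever modified, and the same holds for the subtree hanging from $j_t$, and, more generally, for the subtree hanging from any node $u\in L$. Hence the induced subtrees on $\Lambda(u)$ are invariant under every move performed in epoch $t$, for all $u\in L$ at the start of $t$.

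The induction then runs as follows. In round $1$ the set $L=[n]$ consists of original leaves only, so the invariant is trivial. For the inductive step, suppose the invariant holds at the start of epoch $t$. By the structural fact above, the invariant is preserved through every move of $t$, which already yields the observation. At the end of $t$, if the algorithm's second \texttt{If} fires and replaces $i_t,j_t$ in $L$ by their newly-formed common neighbor $p$, then by the inductive hypothesis the subtrees on $\Lambda(i_t)$ and $\Lambda(j_t)$ already match $\hat T$, and $p$ joins them in precisely the same cherry configuration as in $\hat T$ (this is exactly the guard of the second \texttt{If}), so the extended subtree on $\Lambda(p)=\Lambda(i_t)\cup\Lambda(j_t)$ also matches $\hat T$, carrying the invariant into the next epoch.

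The main obstacle I anticipate is carefully verifying that the textual definition of $I_t,J_t$, as the sets of leaves lying in the components of $T^m$ after removing the path $P_{i_t j_t}$ that contain $i_t$ and $j_t$, agrees with $\Lambda(i_t)$ and $\Lambda(j_t)$ at every intermediate move $m$ of epoch $t$. This requires checking that no leaf in $\Lambda(i_t)$ ever migrates to the $j_t$-side during a move, which is a direct consequence of the observation that CutPaste only slides the entire block rooted at $i_t$ as a unit along the path, never rearranging leaves across the two sides; once this bookkeeping correspondence is pinned down, the observation falls out of the invariant.
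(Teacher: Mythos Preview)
Your proposal is correct and follows essentially the same inductive approach as the paper: both argue that each node $i_t$ (resp.\ $j_t$) in $L$ is either a leaf or the parent of two previously processed nodes whose subtrees already match $\hat{T}$, so the subtree rooted at $i_t$ matches $\hat{T}$ as well. Your version is more explicit---you formalize the invariant via $\Lambda(u)$, spell out why moves within an epoch leave these subtrees untouched, and flag the identification $I_t=\Lambda(i_t)$---whereas the paper's proof is a terse sketch that leaves these points implicit, but the underlying argument is the same.
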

\begin{proof}
This follows inductively by the construction of the Algorithm. When a node $i_t$ is selected, it is either a leaf or some node that was added in $L$ after it's two children $i_{t'},j_{t'}$ became cherries in $T_1$ during a previous epoch $t' < t$. Inductively, the subtrees rooted in $i_{t'},j_{t'}$ have the same topology in $T_1$ and $\hat{T}$. Since $i_{t'},j_{t'}$ are siblings in $T_1$ and in $T'$ in epoch $t$, we conclude that the subtrees rooted at $i_t$ are also identical in $T_1$ and $\hat{T}$ for epoch $t$. Same reasoning applies for $J_t$. 
\end{proof}

\begin{observation}\label{ob:final}
At the end of the last $M$ of Algorithm~\ref{alg:interpolation} we have $T^{M+1} = \hat{T}$.
\end{observation}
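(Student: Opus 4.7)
The plan is to combine Observations~\ref{ob:epoch} and~\ref{ob:subtree} with a topology-counting argument that handles the ``top'' of the tree once $|L|$ drops below four. By Observation~\ref{ob:epoch} the algorithm performs at most $n$ epochs, so the outer \texttt{While} loop eventually terminates; each iteration of the inner \texttt{For} on a pair $(i,j)\in L$ that is a cherry in $\hat T$ first makes $(i,j)$ a cherry in $T_1$ via \textsc{Sequence}/\textsc{CutPaste}, and then swaps $\{i,j\}$ out of $L$ in favor of their common parent $p$.

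I would maintain the invariant that, at every point in the algorithm and for every $p \in L$, the ``finalized subtree'' $S_p$ of $T_1$ attached at $p$ agrees with its counterpart $S_p^{\hat T}$ as a labeled tree. Here $S_p:=\{p\}$ if $p\in[n]$, and otherwise $S_p$ is defined recursively as $S_i\cup S_j\cup\{p,(i,p),(p,j)\}$ at the moment the cherry $(i,j)$ is collapsed into its parent $p$. The base case is immediate, and the inductive step reduces to Observation~\ref{ob:subtree}: every \textsc{CutPaste} call in an epoch acts only on the path $P_{ij}$ and its immediate endpoints, and since every interior node of any previously finalized $S_q$ has degree three in $T_1$, the \textsc{Binary} contractions hidden inside \textsc{CutPaste} cannot reach into any $S_q$. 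Hence all such $S_q$'s are untouched across the epoch, and the new $S_p$ built when $(i,j)$ become a cherry matches $S_p^{\hat T}$.

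Termination gives $|L|\le 3$. The part of $T_1$ (respectively $\hat T$) that does not lie in any finalized $S_p$ is a binary tree whose ``leaves'' are exactly the vertices of $L$ and whose interior nodes have degree three; when $|L|\le 3$ this skeleton has a unique topology, namely a star with a single interior node if $|L|=3$, or a single edge if $|L|=2$. So the skeleta of $T^{M+1}$ and of $\hat T$ coincide, and combining this with the invariant $S_p=S_p^{\hat T}$ for every $p\in L$ yields $T^{M+1}=\hat T$. The main obstacle is the bookkeeping verification that no \textsc{Binary} contraction inside a \textsc{CutPaste} ever merges a node interior to a previously finalized subtree, which follows from the degree-three invariant but requires care since the path $P_{ij}$ can pass arbitrarily close to such subtrees.
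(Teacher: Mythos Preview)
Your argument is correct and follows essentially the same route as the paper: maintain the invariant (via Observation~\ref{ob:subtree}) that every finalized subtree $S_p$ attached at a node $p\in L$ matches its counterpart in $\hat T$, then observe that once $|L|\le 3$ the remaining skeleton has a unique degree-three topology, so gluing the matching subtrees back yields $T^{M+1}=\hat T$. Your version is in fact a bit more careful than the paper's about the potential interaction between \textsc{Binary} contractions and previously finalized subtrees, but this extra care only reinforces the same invariant rather than changing the strategy.
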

\begin{proof}
Let us define the graph $H_t$ as follows: it is obtained by running Algorithm~\ref{alg:interpolation} until epoch $t$ and each time $t' \leq t$ we make a cherry with $i_{t'},j_{t'}$, we remove the subtrees $I_{t'},J_{t'}$, so that $i_{t'},j_{t'}$ become leaves. 
By Observation~\ref{ob:subtree} we know that the subtrees we remove in each epoch have the same topology as in $\hat{T}$. 
Obviously, the number of leaves in $H_t$ shrink with each epoch, until we have $3$ leaves $u,v,w$, for which there is only one possible topology. 
At that point, topology $T^{M+1}$ is obtained by placing the subtrees for $u,v,w$ back. By Observation~\ref{ob:subtree}, we know that these subtrees have the topology of $\hat{T}$, hence $T^{M+1}$ should also have the topology of $\hat{T}$.
\end{proof}

\begin{observation}\label{ob:diam}
If $D$ is the diameter of $\hat{T}$, there are at most $\lceil D/2\rceil$ rounds when we run Algorithm~\ref{alg:interpolation}. 
Furthermore, each leaf is moved in at most one epoch per round.
\end{observation}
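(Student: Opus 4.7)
The plan for the second claim (each leaf moved at most once per round) is as follows. Within a single round (one iteration of the outer while loop), the for loop iterates over pairs $(i,j)$ taken from the value of $L$ at the start of the round, and a movement is triggered only by the first If, namely when $(i,j)$ is a cherry in $\hat{T}$ but not in $T_1$. Since in $\hat{T}$ each node has a unique parent, each element of $L$ is in at most one $\hat{T}$-cherry, so it can participate as either $i_t$ or $j_t$ in at most one epoch of the round. Moreover, by Observation~\ref{ob:subtree} the subtree rooted at $i_t$ in the current $T_1$ coincides with the $\hat{T}$-subtree rooted at $i_t$, and the different cherries handled in one round have distinct $\hat{T}$-parents, so the subtrees being moved in different epochs of the same round are pairwise disjoint. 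Hence any given leaf sits in at most one of these subtrees and is moved in at most one epoch per round.

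For the bound on the number of rounds, I would root $\hat{T}$ at a center so that every node of $\hat{T}$ lies at depth at most $\lceil D/2 \rceil$. The key invariant that I will maintain throughout the execution is that $L$ is an antichain of nodes of $\hat{T}$ whose subtrees partition the leaves of $\hat{T}$: this holds at initialization ($L$ is the full leaf set) and is preserved each time a cherry $(i,j)$ with common $\hat{T}$-parent $p$ is removed from $L$ and replaced by $p$ in the second If. I would then prove by induction on the round number $k$ that after $k$ complete rounds the maximum $\hat{T}$-depth of elements of $L$ is at most $\lceil D/2 \rceil - k$.

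For the inductive step, let $v \in L$ achieve the current maximum depth $d$, and let $u$ be the sibling of $v$ in $\hat{T}$, also at depth $d$. By the antichain-plus-partition invariant, some element of $L$ lies in the closed $\hat{T}$-subtree rooted at $u$; since every proper descendant of $u$ has depth strictly greater than $d$, this element must be $u$ itself. Thus $(u,v)$ is a cherry in $\hat{T}$ with both endpoints in $L$, so the first If (if needed) converts it into a cherry in $T_1$ and the second If then removes $u,v$ from $L$ and inserts their parent, which lies at depth $d-1$. Applying this to every max-depth element shows that the maximum depth drops by at least one per round. After $\lceil D/2 \rceil$ rounds the max depth is $0$, forcing $|L|=1<4$ so the while loop halts, which yields the desired bound of at most $\lceil D/2 \rceil$ rounds.

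The main obstacle is carefully verifying the antichain-plus-partition invariant and, in particular, the claim that the sibling $u$ of a deepest element $v$ must itself lie in $L$; once these are in hand, the depth-decrement argument for the first statement and the subtree-disjointness argument for the second statement both go through cleanly.
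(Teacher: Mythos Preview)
Your approach is essentially the same idea as the paper's—both argue that a ``size'' measure drops by one unit per round—just phrased via a rooted tree and max depth rather than the unrooted diameter of the pruned graph $H_t$. For the second claim your argument is correct: each element of $L$ has a unique $\hat{T}$-parent, so it participates in at most one cherry per round, and by Observation~\ref{ob:subtree} the moved subtrees in a round are pairwise disjoint.

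For the first claim, however, your antichain-plus-partition invariant is not always maintained, and this is precisely the obstacle you flag. The problem is that the center, used as root, has three children (every internal node of $\hat T$ has degree $3$). If two of its three subtrees finish strictly earlier than the third, those two children get cherried together and the root is added to $L$ while descendants of the third child remain in $L$, destroying the antichain and the partition. Concretely, if $\hat T$ has two adjacent centers $c_1,c_2$ and you root at $c_1$, then the $c_2$-side subtree has height $(D-1)/2$ while the other two subtrees have height at most $(D-3)/2$; hence $c_1$ enters $L$ with nodes under $c_2$ still present, and your argument that the sibling of a max-depth element must lie in $L$ (which relied on the invariant) no longer goes through. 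In the one-center case this does not happen, because the center property forces at least two of the three subtree heights to equal the maximum, so the root can only enter $L$ once $|L|\le 3$; but your sketch does not establish this, and it fails outright for two centers.

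The paper sidesteps the rooting artifact by tracking the diameter of the unrooted graph $H_t$ obtained from $\hat T$ by pruning completed subtrees, and showing it drops by at least $2$ per round via exactly your ``the sibling of a diameter endpoint must also be a leaf'' reasoning. A clean fix to your approach is to root at the central \emph{edge} when there are two centers (so each side is a binary rooted tree with a two-child root), and at the unique center otherwise; then the three-child issue disappears and your invariant genuinely holds until termination.
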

\begin{proof}
Consider the graph $H_t$ that was defined in the proof of Observation~\ref{ob:final}. 
We will show that the largest path in $H_t$ shrinks by at least $2$ edges in each round. 
It then follows that there will be at most $\lceil D/2\rceil$ rounds in total.

Let $u,v$ be two leaves of $H_t$ such that $P_{uv}$ in $H_t$ has length equal to the diameter $D$ of $H_t$.
Let $p$ be the only neighbor of $u$ in $H_t$.
Clearly, $u$ is also part of the path $P_{uv}$. Let $w$ be the neighbor of $p$ that does not lie on the path $P_{uv}$ (since $p$ has degree $3$, such a neighbor should exist). We claim that $w$ should be a leaf, otherwise we could extend the path $P_{up}$ into one with larger length than $P_{uv}$. 
Thus, $u,w$ should be siblings, which means they will be selected in the current round to be paired into a cherry, which will remove them from the graph and will leave $p$ as a leaf. Thus, $P_{uv}$ will shring by one edge on the side of $u$ and for the same reason will also shring by one edge on the side of $v$. This proves our claim. 

For the second claim, any leaf $u$ is moved only when some subtree with root $i_t$ is moved and $u$ belongs in this subtree. 
Suppose that this happens during a round, resulting in $i_t,j_t$ becoming a cherry. 
Then, we can see that Algorithm~\ref{alg:interpolation} then removes $i_t,j_t$ from the list of leaves $L$, which means that $i_t$ will not move again for the remainder of that round (it's parent $p_t$ is not considered in the \texttt{For} loop of the current round). 
 Hence, 
$u$ remains fixed for the remaining of that round. 

\end{proof}

We first argue that during this interpolation process, only bad quartets change topology. This is crucial, since good quartets should be maintained if we wish to lose only a little in TV. 

\begin{lemma}\label{l:bad_quartet}
Let $T = G(V,E)$ and $\hat{T} = G(\hat{V},\hat{E})$ be two trees with tree metrics $\alpha,\hat{\alpha}$ respectively. We assume that $\|\alpha-\hat{\alpha}\|_\infty \leq \epsilon$.
Suppose we run the procedure~\ref{alg:interpolation} with input $T,\hat{T},\alpha,\hat{\alpha}$.
Let $T^{m},T^{m+1}$ be two arbitrary consecutive steps in this process.
Let $U_{m}$ be the set of quartets where $T^{m},T^{m+1}$ disagree. Then, for all $(w,z,y,u) \in U_{m}$, we have that 
$$
\Delta_{w,z,y,u}(\alpha) \leq 20 n \epsilon
$$
\end{lemma}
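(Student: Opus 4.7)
The plan is to show that whenever a quartet $(w,z,y,u)\in U_m$ flips between $T^m$ and $T^{m+1}$, its topology in $T$ must already differ from its topology in $\hat T$, and then to convert that topological disagreement into an $O(n\epsilon)$ bound on $\Delta_{wzyu}(\alpha)$ using the $\epsilon$-closeness of $\alpha$ and $\hat\alpha$. The proof will proceed in four steps.

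\textbf{Step 1: identify the quartet.} By the characterization of $U_m$ extracted in the proof of Lemma~\ref{l:lip_expression}, every $\{w,z,y,u\}\in U_m$ satisfies $w\in I_t$, $z\in L_k$, $y\in S_{k+1}$, $u\in R_{k+2}$ for the appropriate epoch $t$ and move index $k$. Reading off the leaf-to-leaf paths from Figure~\ref{fig:movement}, the quartet topology flips from $\{(wz)(yu)\}$ in $T^m$ to $\{(wu)(zy)\}$ in $T^{m+1}$.

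\textbf{Step 2: reduce to an anchor quartet at the subtree boundary.} Extend the correlation vector to internal nodes by $\alpha_{uv}:=\mathbb{E}[x_ux_v]$. Because $w\in I_t$, every path from $w$ to a leaf in $\{z,y,u\}$ passes through $i_t$, so $\alpha_{wx}=\alpha_{wi_t}\,\alpha_{i_tx}$ for $x\in\{z,y,u\}$; the common factor $\alpha_{wi_t}$ appears in all three products defining $\Delta_{wzyu}$, giving
\[
\Delta_{wzyu}(\alpha)=|\alpha_{wi_t}|\cdot \Delta_{i_t zyu}(\alpha)\le \Delta_{i_t zyu}(\alpha).
\]
If $u\in J_t$, an analogous factoring through $j_t$ turns the quartet into $(i_t,z,y,j_t)$; if instead $u\in R_{k+2}\setminus J_t$, I will use the weakest-side rule of Algorithm~\ref{alg:interpolation} to swap $u$ for a witness leaf in $J_t$. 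By Observation~\ref{ob:subtree}, the subtrees at $i_t$ and $j_t$ in $T^m$ already match $\hat T$, so these factorizations make sense consistently in both $T$ and $\hat T$.

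\textbf{Step 3: disagreement in $T$ vs.\ $\hat T$ gives $O(\epsilon)$.} Once we have the anchor quartet $(i_t,z,y,j_t)$, note that $(i_t,j_t)$ is a cherry in $\hat T$, hence in $\hat T$ the quartet topology is $\{(i_tj_t)(zy)\}$. On the other hand, in $T$ the subtrees $L_k$ and $S_{k+1}$ hang off the $i_t$--$j_t$ path on the $i_t$-side of the branch at $v_{k+1}$, so the $T$-topology groups $i_t$ with $z$ (both on the same side of $v_{k+1}$) rather than with $j_t$. The two topologies therefore disagree. For any leaf-quartet $(a,b,c,d)$ whose topology disagrees between $T$ and $\hat T$, the identity $|\alpha_{ab}\alpha_{cd}-\hat\alpha_{ab}\hat\alpha_{cd}|\le 2\epsilon$ combined with the fact that two of the three products $\hat\alpha_{ab}\hat\alpha_{cd},\hat\alpha_{ac}\hat\alpha_{bd},\hat\alpha_{ad}\hat\alpha_{bc}$ are equal (any genuine tree metric on $\hat T$) forces $\Delta_{abcd}(\alpha)\le 4\epsilon$.

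\textbf{Step 4: handle $u\in R_{k+2}\setminus J_t$ and pay the $n$.} The remaining case is that $u$ sits in some intermediate subtree $S_r$ rather than in $J_t$. Here I invoke the algorithm's ``swap if $\hat\alpha_{zp}>\hat\alpha_{wp}$'' rule, which ensures that the strongest correlation from $j_t$'s side dominates that from $i_t$'s side; this lets me replace $u$ by a $J_t$ witness at the cost of a ratio of $\alpha$-values bounded by $O(n)$ (the only way the ratio can inflate is through a product of at most $n$ edge weights, each at most $1$ in absolute value but controlled below only up to an additive $\epsilon$ shift). Combined with the $4\epsilon$ bound from Step 3 this yields $\Delta_{wzyu}(\alpha)\le 20n\epsilon$, with the explicit constant coming from carrying the two factorizations (through $i_t$ and through $j_t$) simultaneously.

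\textbf{Main obstacle.} Step 4 is where I expect the real work: justifying the ``at most $O(n)$ ratio'' replacement of $u$ by a $J_t$-leaf requires a careful induction on the structure of $R_{k+2}\setminus J_t$ and a precise use of the swap rule, and the bookkeeping across the three products that define $\Delta_{wzyu}$ is delicate because the factorizations through $i_t$ and $j_t$ interact nontrivially when both endpoints are internal (i.e.\ in later rounds of the algorithm, where Observation~\ref{ob:diam} still applies but all nodes in $L$ are internal). Getting the constant $20$ and the right power of $n$ requires bounding this interaction tightly rather than via a black-box triangle inequality.
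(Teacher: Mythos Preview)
Your plan has a structural gap that surfaces after the first round. In Step~2 you factor $\alpha_{wx}=\alpha_{wi_t}\alpha_{i_tx}$ through the node $i_t$, appealing to the fact that every $w$--$x$ path passes through $i_t$. That path statement is true in $T^m$ (and in $\hat T$, by Observation~\ref{ob:subtree}), but $\alpha$ is the tree metric on the \emph{original} tree $T$, and after the first round $i_t$ is an internal node of $\hat T$ that need not exist in $T$ at all. So the identity $\alpha_{wx}=\alpha_{wi_t}\alpha_{i_tx}$ has no meaning for $\alpha$, and your anchor quartet $(i_t,z,y,j_t)$ cannot be compared ``in $T$ versus $\hat T$'' as Step~3 requires. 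Observation~\ref{ob:subtree} only tells you that $T^m$ and $\hat T$ agree on the $I_t,J_t$ subtrees, not that $T$ does. The paper sidesteps this by never factoring $\alpha$ through an internal node; instead it transfers to $\hat\alpha$ (paying $O(\epsilon)$), factors $\hat\alpha$ through $p_t$, which \emph{is} a genuine node of $\hat T$, and only then applies the swap rule.

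Your Step~4 also misreads what the swap rule delivers and where the factor $n$ comes from. The rule guarantees $\max_{f\in J_t}|\hat\alpha_{f,p_t}|\ge \max_{f\in I_t}|\hat\alpha_{f,p_t}|$, so it lets you replace $w\in I_t$ by some $h\in J_t$, \emph{not} replace $u$ by a $J_t$-leaf; your proposed swap goes in the wrong direction. And the linear $n$ does not arise from a ratio of at most $n$ edge weights. The paper's argument is an induction on the epoch index $t$: one proves $\Delta_{wzyu}(\alpha)\le 20t\epsilon$ for every quartet flipped at epoch $t$. After the $w\mapsto h$ replacement (which costs only additive $O(\epsilon)$, not a multiplicative ratio), the new quartet $\{h,z,y,u\}$ contains no leaf of $I_t$, hence does not flip in epoch $t$; either it already flipped at some earlier epoch $t'<t$ (invoke the inductive bound $20(t-1)\epsilon$) or it still has its $T$-topology in $T^m$, which forces the relevant difference to be nonpositive. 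Since there are at most $n$ epochs (Observation~\ref{ob:epoch}), the bound tops out at $20n\epsilon$. This inductive mechanism is the missing idea.
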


\begin{proof}
We will denote by $t$ the epoch where move $m$ belongs to. 
We will prove the claim inductively over $t$.
We will prove that if a quartet $\{w,z,y,u\}$ is changed during the $t$-th epoch, then 
$$
\Delta_{w,z,y,u}(\alpha) \leq 20 t \epsilon
$$
This obviously implies the final claim since $t \leq n$ by Observation~\ref{ob:epoch}. 
Since the base case is the same as the inductive step, we give the inductive step proof only. 

Suppose we are at epoch $t$. 
First of all, we know that node $i_{t}$ is selected to be moved towards $j_{t}$, where the intermediate nodes are $v_0^{t} = i_{t} , v_1^{t},\ldots, v_{m_{t}}^{t} = j_{t}$. Similarly to the proof of Lemma~\ref{l:lip_expression}, let $I_t$ be the set of leaves on the same component with $i_{t}$ if we remove edge $(v_0^{t},v_1^{t})$, $S_i^{t}$ the set of leaves on the same component as $v_i^{t}$, if we remove edges $(v_{i-1}^{t},v_i^{t}),(v_{i}^{t},v_{i+1}^{t})$ from the tree, and $J_t$ be the set of leaves on the same component with $j_{t}$ if we remove edge $(v_{m_{t}-1}^{t},v_{m_{t}}^{t})$. 
Also, let us define $L_{ts} = \cup_{k \leq s} S_k^{t}, R_{ts} = \cup_{k \geq s} S_k^{t} \cup J_t$ for all $s \leq m_t$. 
The situation is similar to the one presented in Figure~\ref{fig:movement}. 

Suppose $T^m$ corresponds to node $i_t$ being pasted in the middle of edge $(v_s^t,v_{s+1}^t)$ for some fixed $s$. 
As we saw in the proof of Lemma~\ref{l:lip_expression}, the set $U_{m}$ of quartets that differ in $T^m,T^{m+1}$ can be written as
$$
U_{m} = \{\{w,z,y,u\}: w \in I_t, z \in L_s^{t}, y \in S_{s+1}^{t}, u \in R_{s+2}^{t}\}
$$
Note that all the quartets in $U_m$ are considered to change at epoch $t$. 
Suppose there exists a quartet $\{w,z,y,u\} \in U_{m}$ such that 
$$
\Delta_{w,z,y,u}(\alpha) > 20 t \epsilon
$$
First, we will assume that $u \in \cup_{k \geq s} S_k^{t}$. Afterwards, we will deal with the case $u \in J_t$, which will actually prove to be easier. 
The first thing we observe is that we can assume without loss of generality that the topology of $\{w,z,y,u\}$ has not been altered in 
any previous epoch. 
The reason is that if it was altered at some epoch $t' < t$, then by the inductive assumption, we already have
$$
\Delta_{w,z,y,u}(\alpha) \leq 20 t' \epsilon < 20 t \epsilon
$$
and we have nothing to prove. 
Hence, we can assume w.l.o.g. that it is the first time that it is changing topology. Note also that by Observation~\ref{ob:quartet-epoch} a quartet changes topology at most once per epoch.
Since it has not changed topology before, it follows that it's topology in $T$ is $\{(wz)(yu)\}$.
It is straightforward to notice that
$$
\Delta_{w,z,y,u}(\alpha) = \Delta_{w,z,y,u}(|\alpha|)
$$
where $|\alpha|$ is the vector of absolute values of $\alpha$. 
This implies that
$$
\Delta_{w,z,y,u}(\alpha)  = |\alpha_{wz}||\alpha_{yu}| - |\alpha_{zy}||\alpha_{wu}| > 20 t \epsilon
$$
Our assumption about $\alpha,\hat{\alpha}$ implies that
$$
|\alpha_{wz}||\alpha_{yu}| - |\hat\alpha_{wz}||\hat\alpha_{yu}| \leq 2\epsilon \quad,\quad |\alpha_{zy}||\alpha_{wu}| - |\hat\alpha_{zy}||\hat\alpha_{wu}| \leq 2\epsilon
$$
Hence, we have
$$
|\hat\alpha_{wz}||\hat\alpha_{yu}| - 
|\hat\alpha_{zy}||\hat\alpha_{wu}| > 20 t \epsilon - 4\epsilon
$$
Let $p_{t}$ be the common parent of $i_{t},j_{t}$ in the tree $\hat{T}$. 
Now, by the construction of procedure~\ref{alg:interpolation}(first \texttt{If} statement), we know that
\begin{equation}\label{eq:comparison}
\max_{f \in J_t} |\hat{\alpha}_{f,p_{t}}| \geq \max_{f \in I_t} |\hat{\alpha}_{f,p_{t}}|
\end{equation}
Now, we know by Observation~\ref{ob:subtree}  that the subtrees rooted at $i_{t}$ and $j_{t}$ with leaf sets $I_t$ and $J_t$ respectively have the same topology in $T^{m}$ and $\hat{T}$. Since $z,u \notin I_t$, we can write (see Figure~\ref{fig:siblings}).
$$
|\hat\alpha_{wz}||\hat\alpha_{yu}| - 
|\hat\alpha_{zy}||\hat\alpha_{wu}| = 
|\hat{\alpha}_{w,p_{t}}|(|\hat{\alpha}_{z,p_{t}}| |\hat\alpha_{yu}| - |\hat\alpha_{zy}||\hat{\alpha}_{u,p_{t}}|) 
$$

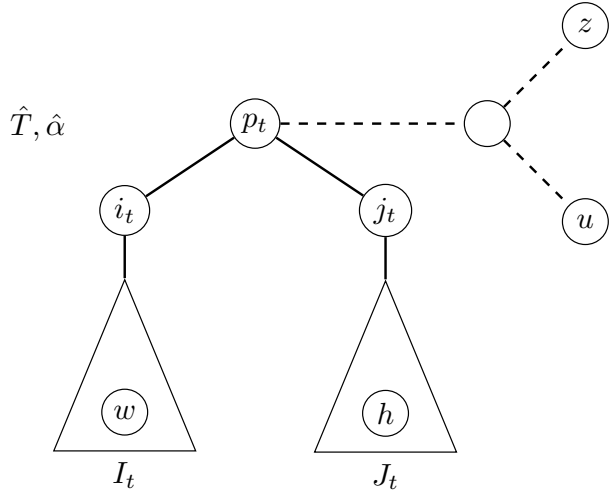
\begin{figure}[htbp]
\centering
\scalebox{0.5}{\resizebox{\linewidth}{!}{%
\begin{tikzpicture}
%
%
\node[draw, circle, minimum size=15pt, inner sep=2pt] at (0,0) (v0) {\small $p_t$};
\node[draw, circle, minimum size=15pt, inner sep=2pt] at ($(v0) + (-1.5,-1)$) (v1) {\small $i_t$};
\node[draw, circle, minimum size=15pt, inner sep=2pt] at ($(v0) + (1.5,-1)$) (v2) {\small $j_t$};

\node[draw, circle, minimum size=15pt, inner sep=2pt, right=60pt of v0] (v3) {};
\node[draw, circle, minimum size=15pt, inner sep=2pt, above right=30pt of v3] (v4) {\small $z$};
\node[draw, circle, minimum size=15pt, inner sep=2pt, below right=30pt of v3] (v5) {\small $u$};

\draw[thick] (v0) -- (v1);
\draw[thick] (v0) -- (v2);
\draw[thick, dashed] (v0) -- (v3);
\draw[thick, dashed] (v3) -- (v4);
\draw[thick, dashed] (v3) -- (v5);

%
%

\node[draw, circle, minimum size=15pt, inner sep=2pt, below=50pt of v1] (iqr) {\small $w$};
\node[draw, circle, minimum size=15pt, inner sep=2pt, below=50pt of v2] (jqr) {\small $h$};
\node[fit=(iqr), draw, inner sep=5pt, isosceles triangle, rotate=90, anchor=center] (iqr-tree) {};
\node[fit=(jqr), draw, inner sep=5pt, isosceles triangle, rotate=90, anchor=center] (jqr-tree) {};

\draw[thick] (v1) -- (iqr-tree);
\draw[thick] (v2) -- (jqr-tree);

%
%
\node[below=5pt of iqr] {\small $I_t$};
\node[below=5pt of jqr] {\small $J_t$};
\node[left=50pt of v0] {\small $\hat{T}, \hat{\alpha}$};
\end{tikzpicture}
}}
\caption{From the picture, it is clear that $\hat{\alpha}_{wz} = \hat{\alpha}_{wp_t}\hat{\alpha}_{p_tz}$}
\label{fig:siblings}
\end{figure}

Let $h = \arg\max_{f \in J_t} |\hat{\alpha}_{f,p_{t}}|$. Then, by\eqref{eq:comparison} we have

$$
|\hat{\alpha}_{h,p_{t}}|(|\hat{\alpha}_{z,p_{t}}| |\hat\alpha_{yu}| - |\hat\alpha_{zy}||\hat{\alpha}_{u,p_{t}}|)  \geq 
|\hat{\alpha}_{w,p_{t}}|(|\hat{\alpha}_{z,p_{t}}| |\hat\alpha_{yu}| - |\hat\alpha_{zy}||\hat{\alpha}_{u,p_{t}}|)
$$
Since we have assumed that $z,u \notin I_t$, figure~\ref{fig:siblings} implies that
$$
|\hat\alpha_{hz}||\hat\alpha_{yu}| -|\hat\alpha_{zy}||\hat\alpha_{hu}| = 
|\hat{\alpha}_{h,p_{t}}|(|\hat{\alpha}_{z,p_{t}}| |\hat\alpha_{yu}| - |\hat\alpha_{zy}||\hat{\alpha}_{u,p_{t}}|)  \geq 
|\hat{\alpha}_{w,p_{t}}|(|\hat{\alpha}_{z,p_{t}}| |\hat\alpha_{yu}| - |\hat\alpha_{zy}||\hat{\alpha}_{u,p_{t}}|) > 20 t\epsilon - 6 \epsilon   
$$
By the closeness of $\alpha,\hat\alpha$, this in turn implies that
\begin{equation}\label{eq:contradiction}
|\alpha_{hz}||\alpha_{yu}| -|\alpha_{zy}||\alpha_{hu}| > 20 t \epsilon - 10 \epsilon > 20 (t-1)\epsilon
\end{equation}
Clearly, leaves $h,z,y,u$ do not change position during epoch $t$, hence the quartet $\{h,z,y,u\}$ does not change topology during epoch $t$. 
Now, there are two possibilities:

\emph{Case 1:} Suppose $\{h,z,y,u\}$ has changed topology at least once in some previous epoch $t' < t$. 
Then, by the inductive hypothesis, we should have 
$$
\Delta_{h,z,y,u}(\alpha) \leq 20 t'\epsilon
$$
Since $t' \leq t-1$, this contradicts \eqref{eq:contradiction}.

\emph{Case 2:} Suppose $\{h,z,y,u\}$ has not changed topology until epoch $t$. This means that the topology of $\{h,z,y,u\}$ in $T$ is $\{(zy)(hu)\}$, since that is the topology in $T^{qrs}$. This implies that
$$
|\alpha_{hz}||\alpha_{yu}| -|\alpha_{zy}||\alpha_{hu}| < 0
$$
which again contradicts \eqref{eq:contradiction}. 

Hence, in all cases we obtain a contradiction and the inductive step is proved. 
Now, let's consider the case $u \in I_t$. 
Then, we can assume w.l.o.g. that this is the first epoch where $\{w,z,y,u\}$ changes topology, otherwise the inductive step applies.
This means that the topology of this quartet in $T$ is $\{(wz)(yu)\}$. 
Hence,
$$
\Delta_{w,z,y,u}(\alpha)= |\alpha_{wz}||\alpha_{yu}| - |\alpha_{zy}||\alpha_{wu}| 
$$
Clearly, the topology of this quartet is $\{(zy)(wu)\}$ in $\hat{T}$. This implies that
$$
|\hat\alpha_{wz}||\hat\alpha_{yu}| - 
|\hat\alpha_{zy}||\hat\alpha_{wu}| < 0
$$
In turn, this means
$$
|\alpha_{wz}||\alpha_{yu}| - |\alpha_{zy}||\alpha_{wu}| < 4\epsilon < 20 t \epsilon
$$
and this concludes the claim in that case as well.

\end{proof}

We now formulate our main result, which bounds the Lipschitzness of the function $f_x^T$ in terms of local changes in the topology of $T$. We will use it to relate the changes in TV to the changes of quartet topologies along this interpolation process.

\begin{lemma}\label{l:interpolate_bound}
Let $T = G(V,E)$ and $\hat{T} = G(\hat{V},\hat{E})$ be two trees and suppose that the diameter of $\hat{T}$ is $D$. Let $\alpha$ be some tree metric induced by $T$. 
We assume that $\|\alpha-\hat{\alpha}\|_\infty \leq \epsilon/(40Dn^5)$ for some $\epsilon < 1$.
Suppose we run the procedure~\ref{alg:interpolation} with input $T,\hat{T},\alpha,\hat{\alpha}$.
Let $T^{m},T^{m+1}$ be two arbitrary consecutive topologies in this process, corresponding to move $m$.
Let $U_{m}$ be the set of quartets where $T^{m},T^{m+1}$ disagree.
Then,

\begin{equation}\label{eq:graph_difference}
\sum_{x \in \{-1,1\}^n} \lp|f_x^{T^{m+1}}(\alpha) - f_x^{T^{m}}(\alpha)\rp| \leq \frac{|U_{m}|\epsilon}{Dn^4}
\end{equation}

\end{lemma}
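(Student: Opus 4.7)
The plan is to combine Lemma~\ref{l:lip_expression} and Lemma~\ref{l:bad_quartet} directly. By Lemma~\ref{l:lip_expression},
\begin{equation*}
f_x^{T^{m+1}}(\alpha) - f_x^{T^m}(\alpha) = \sum_{q=\{w,z,y,u\} \in U_m} \bigl(\alpha_q^{T^{m+1}} - \alpha_q^{T^m}\bigr)\, x_w x_z x_y x_u \cdot f_x^{T^m \setminus q}(\alpha^{wzyu}),
\end{equation*}
so the triangle inequality gives
\begin{equation*}
\sum_{x \in \{-1,1\}^n} |f_x^{T^{m+1}}(\alpha) - f_x^{T^m}(\alpha)| \leq \sum_{q \in U_m} |\alpha_q^{T^{m+1}} - \alpha_q^{T^m}| \cdot \sum_{x} |f_x^{T^m \setminus q}(\alpha^{wzyu})|.
\end{equation*}
It therefore suffices to bound the two factors separately for each $q \in U_m$.

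For the first factor, observe that $\alpha_q^{T^m}$ and $\alpha_q^{T^{m+1}}$ each equal one of the three products $\alpha_{wz}\alpha_{yu},\ \alpha_{wy}\alpha_{zu},\ \alpha_{wu}\alpha_{zy}$, according to the quartet's topology in the corresponding tree. Since $\alpha$ is a tree metric on $T$, two of these three products are equal and the third is the largest; in particular, the difference between any two of them is at most $\Delta_{wzyu}(\alpha)$. Applying Lemma~\ref{l:bad_quartet} with closeness parameter $\|\alpha - \hat\alpha\|_\infty \le \epsilon/(40Dn^5)$ then yields
\begin{equation*}
|\alpha_q^{T^{m+1}} - \alpha_q^{T^m}| \le \Delta_{wzyu}(\alpha) \le 20 n \cdot \frac{\epsilon}{40Dn^5} = \frac{\epsilon}{2Dn^4}.
\end{equation*}

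For the second factor, the key point is that $f_x^{T^m \setminus q}(\alpha^{wzyu})$ is a bona fide probability distribution on the leaves and hence sums (in absolute value) to exactly $1$. This follows from the factorization established inside the proof of Lemma~\ref{l:lip_expression}: each connected component $A_r$ of $T^m \setminus q$ is a subtree $T_r$ on which the pairwise correlations $\alpha|_{A_r}$ remain an induced tree metric (the cut--paste move only reshuffles the relative placement of the movable subtree across the path, so the internal topology of every component of $T^m \setminus q$ coincides with the induced topology of $T$ on the same leaf set), and consequently
\begin{equation*}
f_x^{T^m \setminus q}(\alpha^{wzyu}) = \prod_{r} f_{x_{A_r}}^{T_r}(\alpha) \ge 0,\qquad \sum_x f_x^{T^m \setminus q}(\alpha^{wzyu}) = 1.
\end{equation*}
Combining the two bounds gives
\begin{equation*}
\sum_x |f_x^{T^{m+1}}(\alpha) - f_x^{T^m}(\alpha)| \leq |U_m| \cdot \frac{\epsilon}{2Dn^4} \cdot 1 \leq \frac{|U_m|\epsilon}{Dn^4},
\end{equation*}
which is the desired inequality.

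The main obstacle in making this argument rigorous is the second factor: one must verify that in the forest $T^m \setminus q$, each connected component really is a subtree on which $\alpha$ restricts to a valid tree metric, so that the factorization into a product of probability distributions goes through. This is precisely the combinatorial observation already embedded in Lemma~\ref{l:lip_expression}'s proof, which characterizes exactly which even subsets $S$ contribute to $\alpha_S^{T^m} - \alpha_S^{T^{m+1}}$ and shows that each contribution factors through the subtree matchings; the rest of the proof above is then a direct bookkeeping combination with Lemma~\ref{l:bad_quartet}.
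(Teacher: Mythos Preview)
Your argument has a genuine gap at the second factor. The claim that $f_x^{T^m\setminus q}(\alpha^{wzyu})$ is a bona fide probability distribution is correct only for $m=1$; for $m>1$ it is generally false, and this is precisely why the paper's proof proceeds by induction on $m$ rather than by a direct application of Lemma~\ref{l:lip_expression}.

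The issue is that $T^m$ is the result of \emph{all} the previous $m$ moves applied to $T$, not a single move. Earlier epochs may have rearranged leaves that now sit inside a single component of $T^m\setminus q$. For example, if in epoch~1 the algorithm moved leaf $a$ to become a cherry with $b$, and you are now analyzing a move in epoch~2, then any component of $T^m\setminus q$ containing both $a$ and $b$ has the epoch~1 topology (with $a,b$ siblings), whereas $\alpha$ is the metric of the \emph{original} tree $T$ in which $a,b$ may not have been siblings. Thus $\alpha$ restricted to that component is not a tree metric on the component's current topology, and $f_{x_{A_r}}^{T_r}(\alpha)$ need not be nonnegative or sum to~$1$. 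The paper states this explicitly: ``other changes have happened in the topology of $T$ before it reaches the current state $T^m$. Therefore, the quantity $f_x^{T^m\setminus\{w,z,y,u\}}(\alpha^{wzyu})$ is no longer a distribution over a tree.''

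What the paper does instead is bound $\sum_x|f_x^{T^m\setminus q}(\alpha^{wzyu})|$ by $1+\epsilon$ (not $1$). It introduces an auxiliary forest $G_m$ whose components are the subtrees of the \emph{original} $T$ induced by the leaf sets of the components of $T^m\setminus q$; on $G_m$ the expression \emph{is} a probability distribution. It then interpolates from $G_m$ to $T^m\setminus q$ via Algorithm~\ref{alg:interpolation}, and the crucial Lemma~\ref{l:epoch_transform} shows that strictly fewer moves are needed for this auxiliary interpolation than the $m$ moves already taken. This permits the inductive hypothesis to control the auxiliary interpolation, giving $\sum_x|f_x^{T^m\setminus q}(\alpha^{wzyu})|\le 1+\epsilon$, and then the combination with the $\epsilon/(2Dn^4)$ bound on $\Delta_{wzyu}(\alpha)$ yields the claimed inequality. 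Your proof is missing this entire inductive mechanism.
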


\begin{proof}
We are going to prove this inductively on the total number of moves $m$.
Suppose we are in the first move, $m = 1$ of round $q = 1$ and epoch $t = 1$. By applying
Lemma~\ref{l:lip_expression},
we have that 
\begin{align*}
\sum_{x \in \{-1,1\}^n} \lp|f_x^{T^{1}}(\alpha) - f_x^{T^{0}}(\alpha)\rp| &= 
\sum_{x \in \{-1,1\}^n} \lp|\sum_{\{w,z,y,u\} \in U_{1}}x_wx_zx_yx_u f_x^{T^{1}\setminus\{w,z,y,u\}}(\alpha)\rp|\\
&\leq \sum_{\{w,z,y,u\} \in U_{111}} \Delta_{w,z,y,u}(\alpha)
 \sum_{x \in \{-1,1\}^n}
 \lp|f_x^{T^{1}\setminus\{w,z,y,u\}}(\alpha)\rp|
\end{align*}
Now, notice that since there have not been any other changes to the topology of $T$ except for the first move, $T^{0} = T$ and furthermore, the expression $f_x^{T^{1}\setminus\{w,z,y,u\}}(\alpha)$
is actually the probability distribution on the leaves of a tree that has edge weights that agree with $\alpha$, except for edges that belong to some path of the quartet $\{w,z,y,u\}$, which have weight $0$. 
Hence, we can remove the absolute value and this gives us
$$
\sum_{x \in \{-1,1\}^n} \lp|f_x^{T^{1}}(\alpha) - f_x^{T^{0}}(\alpha)\rp| \leq \sum_{\{w,z,y,u\} \in U_{1}} \Delta_{w,z,y,u}(\alpha)
$$
Finally, by applying Lemma~\ref{l:bad_quartet} we get
$$
\sum_{x \in \{-1,1\}^n} \lp|f_x^{T^{1}}(\alpha) - f_x^{T^{0}}(\alpha)\rp| \leq |U_{1}|
20 n \cdot \frac{\epsilon}{40 Dn^5} = \frac{\epsilon |U_{1}|}{2Dn^4}
$$
hence the base case is true. 

Now suppose the claim holds for all moves $m' < m$.  
First, we define the vector $\alpha^{wzyu}\in [0,1]^{n \choose 2}$ as
$$
\alpha^{wzyu}_{kl} = 
\left\{
\begin{array}{ll}
      \alpha_{kl} \text{, if the path $P_{kl}$ does not have common edges with any of the paths in the quartet $\{w,z,y,u\}$}\\
      0 \text{, otherwise}
\end{array} 
\right. 
$$
Clearly, $\alpha^{wzyu}$ is also a metric on $T$, which is induced by the same weights as $\alpha$, except that all edges on paths of the quartet $\{w,z,y,u\}$ have weight $0$. 
By again applying Lemma~\ref{l:lip_expression} and Lemma~\ref{l:bad_quartet}, we can get 
\begin{align}\label{eq:difference}
\sum_{x \in \{-1,1\}^n} \lp|f_x^{T^{m+1}}(\alpha) - f_x^{T^{m}}(\alpha)\rp| &\leq \sum_{\{w,z,y,u\} \in U_{m}} \Delta_{w,z,y,u}(\alpha) \sum_{x \in \{-1,1\}^n} \lp|f_x^{T^{m}\setminus\{w,z,y,u\}}(\alpha^{wzyu})\rp| \nonumber \\
&\leq \frac{\epsilon}{2Dn^4} \sum_{\{w,z,y,u\} \in U_{m}} \lp|f_x^{T^{m}\setminus\{w,z,y,u\}}(\alpha^{wzyu})\rp| 
\end{align}
Now, let's fix a quartet $\{w,z,y,u\} \in U_{qrs}$. The graph $T^{m}\setminus\{w,z,y,u\}$ is a tree where all edges that belong to some path of the quartet $\{w,z,y,u\}$ have been removed. 
The problem is that other changes have happened in the topology of $T$ before it reaches the current state $T^{m}$. Therefore, the quantity $f_x^{T^{m}\setminus\{w,z,y,u\}}(\alpha^{wzyu})$ is no longer a distribution over a tree, since $\alpha^{wzyu}$ corresponds to the initial tree metric on $T$, with some edges set to $0$. 
Hence, we cannot get rid of the absolute value and claim that this quantity sums up to $1$. 
Instead,
our strategy will be to relate this quantity to some other quantity that is a probability distribution. 
To describe this probability distribution, consider the collection of subtrees that are obtained from $T^{m}$ by removing all paths of the quartet $\{w,z,y,u\}$. 
These partition the set of leaves into subsets $S_i$, one for each subtree. 
Let $G_{m}$ be the forest that is obtained by taking for each subset on leaves $S_i$ the subtree induced by $T$(when we say induced, it is implicit that the function \textsc{Binary} is applied to make the subtree have all non-leaves with degree $3$). We will show how to relate $f_x^{T^{m}\setminus\{w,z,y,u\}}(\alpha^{wzyu})$ with
$f_x^{G_{m}}(\alpha^{wzyu})$. 
Notice that by definition, $\alpha^{wzyu}$ is clearly a metric induced from $G_{m}$ and so the latter quantity is a probability distribution.

Our strategy for relating these two quantities will be to interpolate between $T^{m}\setminus\{w,z,y,u\}$ to $G_{m}$. 
The way to do this interpolation is using Algorithm~\ref{alg:interpolation}.
In particular, the following Lemma shows that in order for Algorithm~\ref{alg:interpolation} to transform $G_{m}$ to $T^{m}\setminus\{w,z,y,u\}$, it will need strictly less moves than the ones needed to transform $T$ to $T^{m+1}$. 

\begin{lemma}\label{l:epoch_transform}
Let $M$ be the number of moves needed for Algorithm~\ref{alg:interpolation} to transform $T$ to $T^{m+1}$. Then, it is possible to transform $G_{m}$ to $T^{m}\setminus\{w,z,y,u\}$ using a number of moves that is strictly smaller than $M$. 
\end{lemma}
\begin{proof}
The idea of the proof is very simple and relies on the fact that we can simply "copy" the moves made from $T$ to $T^{m+1}$, except when these moves aim at making a cherry with two leaves that belong to different subtrees of $G_m$, in which case no move is necessary. 
To be more formal, let $R$ be the number of epochs that Algorithm~\ref{alg:interpolation} needs to reach $T^{m+1}$ starting from $T$. Then, we will show that we can reach $T^m \setminus\{w,z,y,u\}$ using a number of epochs $R'$ such that $R' < R$. Furthermore, we will argue that each of the $R'$ epochs has at most the same number of moves as the corresponding one starting from $R$. 
We do this by examining one by one the $R$ epochs from $T$ to $T^{m+1}$ and deciding how to potentially change it.
First of all, let's remember that at the start of each epoch $t$, Algorithm~\ref{alg:interpolation} chooses two nodes $i_{t},j_{t}$ and makes them siblings with parent $p_{t}$, thus making a larger subtree that agrees with $\hat{T}$. The relative topology inside this subtree will never be altered by the algorithm again. We call this process \emph{fixing} the subtree with root $p_{t}$. 
When we refer to the subtree of $i_t$ and $j_t$ we mean the connected component that results when we remove path $P_{i_tj_t}$ from the graph at epoch $t$. 
We denote $\{i_t,j_t\}_t$ the sequence of epochs produced from $T$ to $T^{qrs}$ and $\{i_t',j_t'\}_t$ the sequence of epochs that transforms $G_{m}$ to $T^{m}\setminus\{w,z,y,u\}$. 

We will inductively prove that at any epoch  in the sequence $\{i_t,j_t\}$, if a subtree with root $p_{t}$ has been fixed after that epoch and if this subtree is contained in some component of $G_m$, then
this subtree will also be fixed under sequence $\{i_t',j_t'\}$. 
In proving the inductive step, we will also describe how to define the sequence of epochs $\{i_t',j_t'\}$. 
After proving this claim, we will explain why it implies the statement of the Lemma. 

Suppose the claim holds for all epochs prior to $t$(for $t=1$ the claim is trivial). 
Suppose then that at epoch $t$ $i_t$ and $j_t$ become cherries with parent $p_t$. Let $T^t$ be the topology at the start of the epoch and let $G_{m}^t$ be the corresponding topology at the start of epoch $t$ under the sequence $\{i_t',j_t'\}$. Suppose first that the subtrees of $i_t,j_t$ belong to the same component of $T^{qrs}$. Then, inductively, we know that $i_t,j_t$ exist also in $G_{m}^t$ and their subtrees have already been fixed by the sequence $\{i'_t,j'_t\}$.  In that case, we set $i_t' = i_t,j_t' = j_t$ and set the movement of $i_t'$ to $j_t'$ to be the same as the one from $i_t$ to $j_t$, but on the induced component of $G_m^t$ that $i_t,j_t$ belong to. We call this a \emph{true} epoch. Since the paths in an induced subtree can only stay the same or become smaller than the ones in the original tree (after applying operation \textsc{Binary}), the number of moves required to move $i_t'$ to $j_t'$ is at most the number of moves required to move $i_t$ to $j_t$. Once we move $i_t'$ to become sibling with $j_t'$, the new subtree with root $p_t'$ has also been fixed for the sequence $\{i_t',j_t'\}$, proving the inductive hypothersis in that case. Now, suppose that $i_t,j_t$ belong in different subtrees of $G_m$. There are two cases: either both $i_t,j_t$ exists as nodes in $G_m^t$, or at least one of them does not exist. If they both exist, then again by the inductive hypothesis, it follows that the subtrees  $i_t,j_t$ have also been fixed in $G_m^t$. In that case, it must be the case that the entire component of $i_t$ in $G_m^t$ is equal to that subtree (otherwise we would be able to connect $i_t$ to some other sibling and enlarge it). Hence, in that case no movement takes place and we trivially set $i_t' = j_t' = i_t$ to denote that this is not a true epoch. 
The point is that there is no need to move them again until we reach $T^m\setminus\{w,z,y,u\}$, so our choice not to move them is correct.
Now, let's examine the case that either $i_t$ or $j_t$ does not exist in $G_m^t$. Suppose $i_t$ does not exist w.l.o.g. Then, 
this means that it is a parent of two subtrees that do not belong to the same component of $G_m^t$. Thus,
this means that there is no reason to connect these subtrees, hence we also trivially set $i_t' = j_t' = i_t$. We call this epoch \emph{fake}.
The inductive step is now complete.

The induction we just proved shows that the sequence $\{i_t',j_t'\}$ leads to $T^{m}\setminus\{w,z,y,u\}$ when started from $G_{m}$. 
Also, it is clear that the true number of epochs in $\{i_t',j_t'\}$ at any given time is at most the ones in $\{i_t,j_t\}$, since some epochs might be fake. In fact, if $\{i_t,j_t\}$ reaches $T^{m+1}$ at epoch $R$, the number of true epochs $R'$ in $\{i_t',j_t'\}$ should be strictly smaller than $R$. The reason is that at epoch $R$, $i_R$ and $j_R$ belong to different components of $T^{qrs}\setminus\{w,z,y,u\}$ by definition (since we remove the path from $w$ to $u$). 
Hence, the last epoch $R$ will not be a true epoch for $\{i_t',j_t'\}$. 
Since we have also argued that the number of moves in epochs of $\{i_t',j_t'\}$ is at most
the corresponding number for epochs in $\{i_t,j_t\}$,this concludes the proof of the Lemma.

\end{proof}

Let $M$ be the total number of moves required by Algorithm~\ref{alg:interpolation} to transform $T$ to $T^{m+1}$ and $M'$ the moves to transform $G_{m}$ to $T^{m}\setminus \{w,z,y,u\}$. 
The point of Lemma~\ref{l:epoch_transform} is that $M' < M$. 
Let $G_{m} = G_{m}^0, G_{m}^1,\ldots, G_{m}^{M'} = T^{m}\setminus \{w,z,y,u\}$ be the sequence of graphs in the interpolation. 
By triangle inequality, we have
\begin{align*}
\lp|f_x^{T^{m}\setminus\{w,z,y,u\}}(\alpha^{wzyu})\rp| &\leq \lp|f_x^{G_{m}^0}(\alpha^{wzyu})\rp| + \sum_{s= 1}^{M'}  \lp|f_x^{G_{m}^{s}}(\alpha) - f_x^{G_{m}^{s-1}}(\alpha)\rp|
\end{align*}
As we have already explained, the first term on the right hand side corresponds to a distribution, hence we can remove the absolute values. The remaining terms have the form of 
the left hand side of \eqref{eq:graph_difference}, which is what we want to bound in general. However, these differences are applied to graphs that are obtained after at most $M'$ moves of Algorithm~\ref{alg:interpolation}. Hence, we can apply the inductive hypothesis \eqref{eq:graph_difference}. If $U_{s}'$ is the set of quartets that change from $G^{s-1}_{m}$ to $G^{s}_{m}$, then, 
$$
\sum_{x \in \{-1,1\}^n} \sum_{s= 1}^{M'}  \lp|f_x^{G_{m}^{s}}(\alpha) - f_x^{G_{m}^{s-1}}(\alpha)\rp| \leq
\frac{\epsilon}{Dn^4}\sum_{s= 1}^{M'}  |U_{s}'|
$$
It remains to bound the sum $\sum_{s= 1}^{M'}  |U_{s}'|$. This is equal to the total number of quartets that have changed topology until move $M'$, starting from $G_{m}$ (if a quartet has changed multiple times, we count the number of times it has changed in this sum). 
We argue that 
$$
\sum_{s= 1}^{M'}  |U_{s}'| \leq Dn^4
$$
The reason is the following: there is a total of $n \choose 4$ quartets, so it suffices to bound the number of times that any specific quartet $\{w,z,y,u\}$ changes topology. 
First of all, we have already argued that a quartet changes topology at most once every epoch.
In order for a quartet to change topology during some epoch, at least one of it's leaves should be moved to some different position.
By Observation~\ref{ob:diam} we know that a leaf is moved at most $\lceil D/2\rceil$ times in total. Hence, a quartet changes topology at most $4\lceil D/2\rceil$ times in total. 
Hence,
$$
\sum_{s= 1}^{M'}  |U_{s}|  \leq {n \choose 4} 2D \leq Dn^4
$$
Combining everything together, we get
$$
\sum_{x \in \{-1,1\}^n} \lp|f_x^{T^{m}\setminus\{w,z,y,u\}}(\alpha^{wzyu})\rp| \leq 
\sum_{x \in \{-1,1\}^n} f_x^{G_{m}^0}(\alpha^{wzyu}) + \frac{\epsilon}{Dn^4} Dn^4 = 1 + \epsilon
$$
This holds for all $\{w,z,y,u\} \in U_{m}$. Hence, by using \eqref{eq:difference} we get
\begin{align*}
\sum_{x \in \{-1,1\}^n} \lp|f_x^{T^{m+1}}(\alpha) - f_x^{T^{m}}(\alpha)\rp| 
&\leq  \frac{\epsilon}{2Dn^4} \sum_{\{w,z,y,u\} \in U_{m}} \lp|f_x^{T^{m}\setminus\{w,z,y,u\}}(\alpha^{wzyu})\rp| \\ 
&\leq \frac{\epsilon}{2Dn^4} |U_{m}|(1 + \epsilon) \leq \frac{|U_{m}|\epsilon}{Dn^4}
\end{align*}
since $\epsilon \leq 1$. 
This is the inductive claim that we wanted to prove.

\end{proof}

We are now ready to conclude the proof of Theorem~\ref{thm:probabilistic_unknown}. To do it, we simply use Lemma~\ref{l:interpolate_bound} to transition from $T$ to $\hat{T}$. Then, we use the bound for the fixed topology to change $\alpha$ to $\hat{\alpha}$. 


\begin{proof}[Proof of Theorem~\ref{thm:probabilistic_unknown}]
We can assume without loss of generality that
$\hat{T}$ has a smaller diameter than $T$, otherwise we just reverse the roles of $T,\hat{T}$. We run Algorithm~\ref{alg:interpolation} with input $T,\hat{T},\alpha$, which produces a sequence $T = T^0 , T^2, \ldots, T^M = \hat{T}$, where each element of the sequence corresponds to some move. 
We have that 
\begin{align}\label{eq:tv_bound}
TV(\mu,\hat{\mu}) &= \sum_{x \in \{-1,1\}^n} \lp|f_x^T(\alpha) - f_x^{\hat{T}}(\hat{\alpha})\rp|\nonumber\\
&\leq \sum_{x \in \{-1,1\}^n} \lp|f_x^T(\alpha) - f_x^{\hat{T}}(\alpha)\rp| + 
\sum_{x \in \{-1,1\}^n} \lp|f_x^{\hat{T}}(\alpha) - f_x^{\hat{T}}(\hat{\alpha})\rp| 
\end{align}
Define $\epsilon' = 40 Dn^5 \epsilon$. 
Let us divide into cases.

\emph{Case 1:} Suppose $\epsilon' < 1$. 
Then, the first term of the RHS of \eqref{eq:tv_bound} can be bounded using the successive steps of the interpolation process. In particular, since $\epsilon = \epsilon'/(40Dn^5)$, we can apply Lemma~\ref{l:interpolate_bound} to get
\begin{align*}
\sum_{x \in \{-1,1\}^n} \lp|f_x^T(\alpha) - f_x^{\hat{T}}(\alpha)\rp| \leq 
\sum_{m = 1}^M \sum_{x \in \{-1,1\}^n}  \lp|f_x^{T^m}(\alpha) - f_x^{T^{m-1}}(\alpha)\rp|
\leq \sum_{m = 1}^M \frac{|U_m|\epsilon'}{Dn^4}
\end{align*}

By the proof of Lemma~\ref{l:interpolate_bound}, this implies that
$$
\sum_{x \in \{-1,1\}^n} \lp|f_x^T(\alpha) - f_x^{\hat{T}}(\alpha)\rp| \leq \frac{\epsilon'}{Dn^4} Dn^4 = \epsilon' = 40Dn^5\epsilon
$$
As for the second term of the RHS of \eqref{eq:tv_bound}, it is essentially the difference when we substitute $\hat{\alpha}$ with $\alpha$ in the fixed topology $\hat{T}$. Hence, we can directly apply Theorem~\ref{thm:probabilistic_fixed} to get 
$$
\sum_{x \in \{-1,1\}^n} \lp|f_x^{\hat{T}}(\alpha) - f_x^{\hat{T}}(\hat{\alpha})\rp|  \leq 2n^2\epsilon
$$
Overall, this gives us 
$$
TV(\mu,\hat{\mu}) \leq 42Dn^5 \epsilon
$$
which proves inequality \eqref{eq:tensorization} in that case.

\emph{Case 2:} Assume $\epsilon' \geq 1$. Then, 
$$
TV(\mu,\hat{\mu}) \leq 1 \leq 40Dn^5 \epsilon
$$
which means that \eqref{eq:tensorization} trivially holds in that case too.
The proof is now complete.

\end{proof}

\section{Proof of Theorem~\ref{thm:alg} (unknown topology)} \label{sec:pr-alg-unknown}

\subsection{Outline}
In this section, we will present an algorithm that takes samples from the leaves of some tree Ising model with tree $T^*$ and weight $\theta^*$ and estimates a topology $\hat T$ together with weights on the edges, so that the distributions on the leaves of $T$ and $\hat T$ are $\epsilon$-close in TV distance.
The number of samples will be polynomial in $n$ and $1/\epsilon$.

We will use the results in \cite{daskalakis2009phylogenies} about learning phylogenetic trees without assuming any upper/lower bounds on the edge weights.
We will show how we can use the guarantees of this prior work to obtain an algorithm for finding a good enough topology. Before explaining the result formally, we first want to give an intuition. We discuss how the output might be different from the original tree.
Firstly, without sufficient samples, it is impossible to determine the existence of edges that are far away from leaves. In such cases, the algorithm of \cite{daskalakis2009phylogenies} will omit those edges and output a forest, as shown in Figure~\ref{fig:appendix1}.

\begin{figure}[H]
\centering
\resizebox{\linewidth}{!}{%
\begin{tikzpicture}
\node[draw, circle, minimum size=15pt, inner sep=2pt] at (0,0) (v1) {};
\node[draw, circle, minimum size=15pt, inner sep=2pt] at ($(v1) + (-1.75,-2)$) (v2) {};
\node[draw, circle, minimum size=15pt, inner sep=2pt] at ($(v1) + (1.75,-2)$) (v3) {};
\node[draw, circle, minimum size=15pt, inner sep=2pt] at ($(v2) + (-1,-2)$) (v4) {\small $1$};
\node[draw, circle, minimum size=15pt, inner sep=2pt] at ($(v2) + (1,-2)$) (v5) {\small $2$};
\node[draw, circle, minimum size=15pt, inner sep=2pt] at ($(v3) + (-1,-2)$) (v6) {\small $3$};
\node[draw, circle, minimum size=15pt, inner sep=2pt] at ($(v3) + (1,-2)$) (v7) {\small $4$};
\node[draw, circle, minimum size=15pt, inner sep=2pt] at (8,0) (v8) {};
\node[draw, circle, minimum size=15pt, inner sep=2pt] at ($(v8) + (-1.75,-2)$) (v9) {};
\node[draw, circle, minimum size=15pt, inner sep=2pt] at ($(v8) + (1.75,-2)$) (v10) {};
\node[draw, circle, minimum size=15pt, inner sep=2pt] at ($(v9) + (-1,-2)$) (v11) {\small $5$};
\node[draw, circle, minimum size=15pt, inner sep=2pt] at ($(v9) + (1,-2)$) (v12) {\small $6$};
\node[draw, circle, minimum size=15pt, inner sep=2pt] at ($(v10) + (-1,-2)$) (v13) {\small $7$};
\node[draw, circle, minimum size=15pt, inner sep=2pt] at ($(v10) + (1,-2)$) (v14) {\small $8$};

\draw[thick, red] (v1) -- (v2);
\draw[thick, red] (v1) -- (v3);
\draw[thick, red] (v2) -- (v4);
\draw[thick, red] (v2) -- (v5);
\draw[thick, red] (v3) -- (v6);
\draw[thick, red] (v3) -- (v7);
\draw[thick, dashed] (v1) -- (v8);
\draw[thick, dashed] (v8) -- (v9);
\draw[thick, dashed] (v8) -- (v10);
\draw[thick, blue] (v9) -- (v11);
\draw[thick, blue] (v9) -- (v12);
\draw[thick, black!20!green] (v10) -- (v13);
\draw[thick, black!20!green] (v10) -- (v14);
\end{tikzpicture}
}
\caption{A forest that is created by deleting edges: the true tree contains all the solid and dashed edges. Yet, the algorithm of \cite{daskalakis2009phylogenies} might not be able to identify some of the edges because they do not sufficiently correlate with the leaves, and so it will return a forest. In the example here, the forest is obtained from the original tree by removing the dashed edges. The three connected components of the output forest are colored red, blue and green.}
\label{fig:appendix1}
\end{figure}
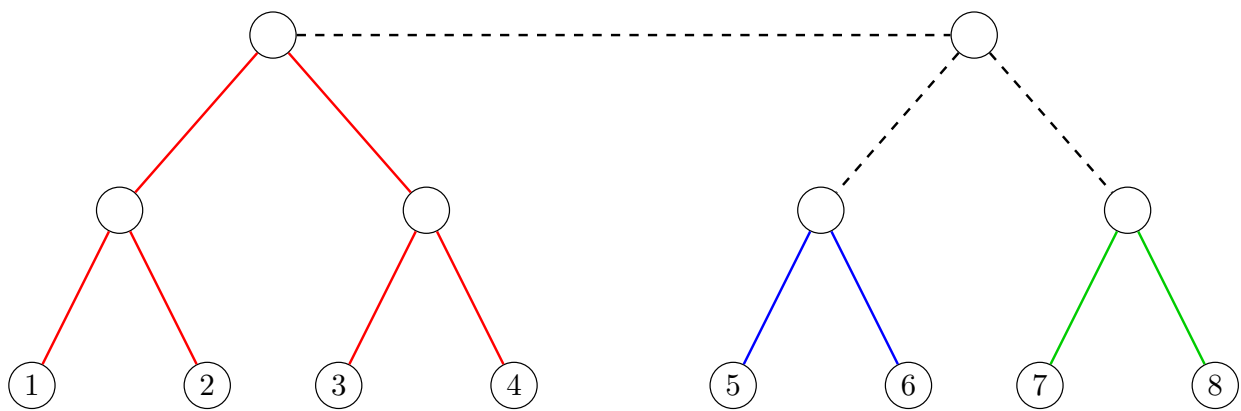

Secondly, the algorithm may fail to split the tree in a topologically sensible way. This means that it will return a forest, yet, in contrast with the example in Figure~\ref{fig:appendix1}, it is \emph{impossible} to obtain this forest by cutting some edges in the ground truth tree.
Still, the topology \emph{within each connected component} in this forest is preserved,
as illustrated in Figure~\ref{fig:appendix2} (a)-(b).
After splitting the tree into two subtrees, those subtrees might contain some internal nodes of degree $2$. Such nodes cannot be identified from the leaves, and they will be contracted, as shown in Figure~\ref{fig:appendix2} (c).

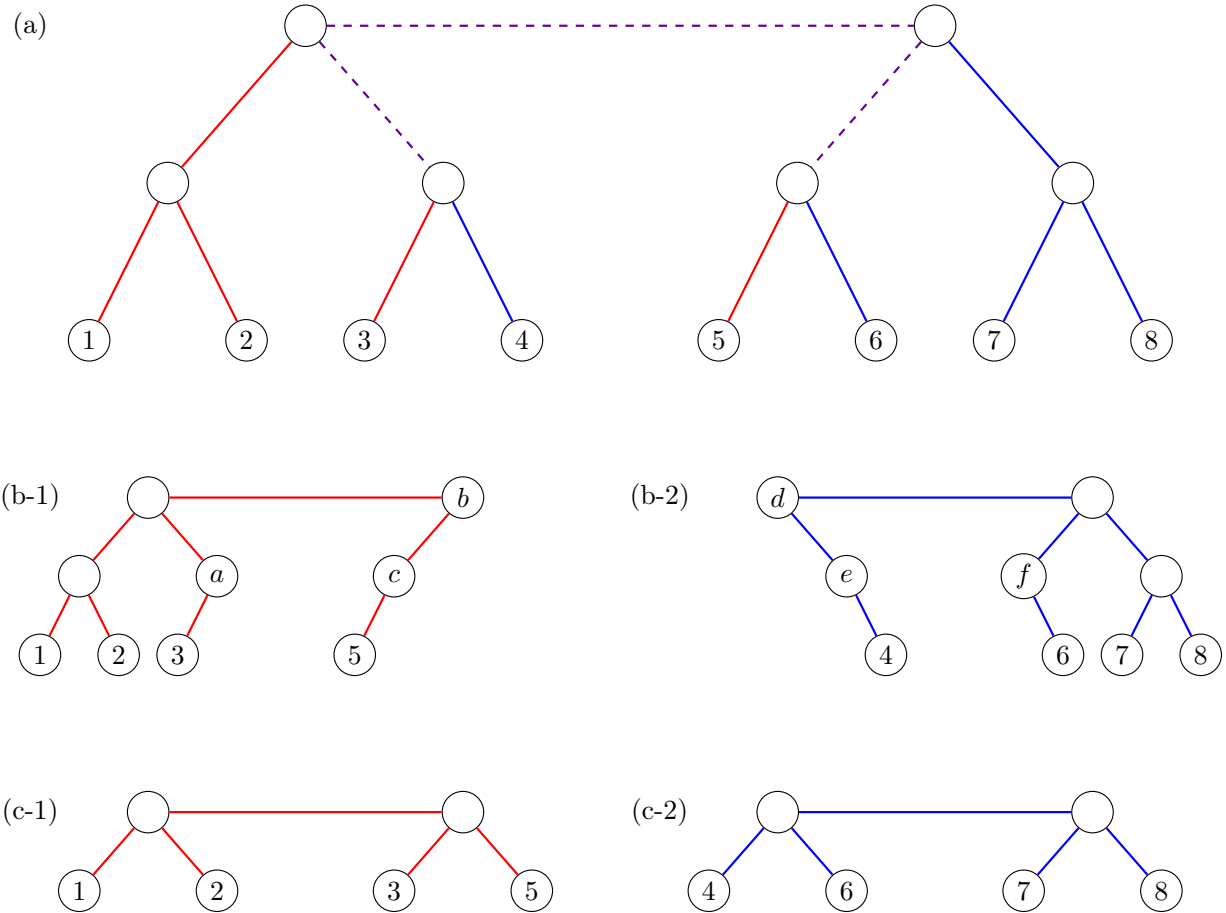
\begin{figure}[H]
\centering
\resizebox{\linewidth}{!}{%
\begin{tikzpicture}
%
%
\node[draw, circle, minimum size=15pt, inner sep=2pt] at (0,0) (v1-a) {};
\node[draw, circle, minimum size=15pt, inner sep=2pt] at ($(v1-a) + (-1.75,-2)$) (v2-a) {};
\node[draw, circle, minimum size=15pt, inner sep=2pt] at ($(v1-a) + (1.75,-2)$) (v3-a) {};
\node[draw, circle, minimum size=15pt, inner sep=2pt] at ($(v2-a) + (-1,-2)$) (v4-a) {\small $1$};
\node[draw, circle, minimum size=15pt, inner sep=2pt] at ($(v2-a) + (1,-2)$) (v5-a) {\small $2$};
\node[draw, circle, minimum size=15pt, inner sep=2pt] at ($(v3-a) + (-1,-2)$) (v6-a) {\small $3$};
\node[draw, circle, minimum size=15pt, inner sep=2pt] at ($(v3-a) + (1,-2)$) (v7-a) {\small $4$};
\node[draw, circle, minimum size=15pt, inner sep=2pt] at (8,0) (v8-a) {};
\node[draw, circle, minimum size=15pt, inner sep=2pt] at ($(v8-a) + (-1.75,-2)$) (v9-a) {};
\node[draw, circle, minimum size=15pt, inner sep=2pt] at ($(v8-a) + (1.75,-2)$) (v10-a) {};
\node[draw, circle, minimum size=15pt, inner sep=2pt] at ($(v9-a) + (-1,-2)$) (v11-a) {\small $5$};
\node[draw, circle, minimum size=15pt, inner sep=2pt] at ($(v9-a) + (1,-2)$) (v12-a) {\small $6$};
\node[draw, circle, minimum size=15pt, inner sep=2pt] at ($(v10-a) + (-1,-2)$) (v13-a) {\small $7$};
\node[draw, circle, minimum size=15pt, inner sep=2pt] at ($(v10-a) + (1,-2)$) (v14-a) {\small $8$};

\draw[thick, red] (v1-a) -- (v2-a);
\draw[thick, blue!60!red, dashed] (v1-a) -- (v3-a);
\draw[thick, red] (v2-a) -- (v4-a);
\draw[thick, red] (v2-a) -- (v5-a);
\draw[thick, red] (v3-a) -- (v6-a);
\draw[thick, blue] (v3-a) -- (v7-a);
\draw[thick, blue!60!red, dashed] (v1-a) -- (v8-a);
\draw[thick, blue!60!red, dashed] (v8-a) -- (v9-a);
\draw[thick, blue] (v8-a) -- (v10-a);
\draw[thick, red] (v9-a) -- (v11-a);
\draw[thick, blue] (v9-a) -- (v12-a);
\draw[thick, blue] (v10-a) -- (v13-a);
\draw[thick, blue] (v10-a) -- (v14-a);

%
%
\node[draw, circle, minimum size=15pt, inner sep=2pt] at (-2,-6) (v1-b1) {};
\node[draw, circle, minimum size=15pt, inner sep=2pt] at ($(v1-b1) + 0.5*(-1.75,-2)$) (v2-b1) {};
\node[draw, circle, minimum size=15pt, inner sep=2pt] at ($(v1-b1) + 0.5*(1.75,-2)$) (v3-b1) {\small $a$};
\node[draw, circle, minimum size=15pt, inner sep=2pt] at ($(v2-b1) + 0.5*(-1,-2)$) (v4-b1) {\small $1$};
\node[draw, circle, minimum size=15pt, inner sep=2pt] at ($(v2-b1) + 0.5*(1,-2)$) (v5-b1) {\small $2$};
\node[draw, circle, minimum size=15pt, inner sep=2pt] at ($(v3-b1) + 0.5*(-1,-2)$) (v6-b1) {\small $3$};
\node[draw, circle, minimum size=15pt, inner sep=2pt] at (2,-6) (v8-b1) {\small $b$};
\node[draw, circle, minimum size=15pt, inner sep=2pt] at ($(v8-b1) + 0.5*(-1.75,-2)$) (v9-b1) {\small $c$};
\node[draw, circle, minimum size=15pt, inner sep=2pt] at ($(v9-b1) + 0.5*(-1,-2)$) (v11-b1) {\small $5$};

\draw[thick, red] (v1-b1) -- (v2-b1);
\draw[thick, red] (v1-b1) -- (v3-b1);
\draw[thick, red] (v2-b1) -- (v4-b1);
\draw[thick, red] (v2-b1) -- (v5-b1);
\draw[thick, red] (v3-b1) -- (v6-b1);
\draw[thick, red] (v1-b1) -- (v8-b1);
\draw[thick, red] (v8-b1) -- (v9-b1);
\draw[thick, red] (v9-b1) -- (v11-b1);

\node[draw, circle, minimum size=15pt, inner sep=2pt] at (6,-6) (v1-b2) {\small $d$};
\node[draw, circle, minimum size=15pt, inner sep=2pt] at ($(v1-b2) + 0.5*(1.75,-2)$) (v3-b2) {\small $e$};
\node[draw, circle, minimum size=15pt, inner sep=2pt] at ($(v3-b2) + 0.5*(1,-2)$) (v7-b2) {\small $4$};
\node[draw, circle, minimum size=15pt, inner sep=2pt] at (10,-6) (v8-b2) {};
\node[draw, circle, minimum size=15pt, inner sep=2pt] at ($(v8-b2) + 0.5*(-1.75,-2)$) (v9-b2) {\small $f$};
\node[draw, circle, minimum size=15pt, inner sep=2pt] at ($(v8-b2) + 0.5*(1.75,-2)$) (v10-b2) {};
\node[draw, circle, minimum size=15pt, inner sep=2pt] at ($(v9-b2) + 0.5*(1,-2)$) (v12-b2) {\small $6$};
\node[draw, circle, minimum size=15pt, inner sep=2pt] at ($(v10-b2) + 0.5*(-1,-2)$) (v13-b2) {\small $7$};
\node[draw, circle, minimum size=15pt, inner sep=2pt] at ($(v10-b2) + 0.5*(1,-2)$) (v14-b2) {\small $8$};

\draw[thick, blue] (v1-b2) -- (v3-b2);
\draw[thick, blue] (v3-b2) -- (v7-b2);
\draw[thick, blue] (v1-b2) -- (v8-b2);
\draw[thick, blue] (v8-b2) -- (v9-b2);
\draw[thick, blue] (v8-b2) -- (v10-b2);
\draw[thick, blue] (v9-b2) -- (v12-b2);
\draw[thick, blue] (v10-b2) -- (v13-b2);
\draw[thick, blue] (v10-b2) -- (v14-b2);

%
%
\node[draw, circle, minimum size=15pt, inner sep=2pt] at (-2,-10) (v1-c1) {};
\node[draw, circle, minimum size=15pt, inner sep=2pt] at ($(v1-c1) + 0.5*(-1.75,-2)$) (v2-c1) {\small $1$};
\node[draw, circle, minimum size=15pt, inner sep=2pt] at ($(v1-c1) + 0.5*(1.75,-2)$) (v3-c1) {\small $2$};
\node[draw, circle, minimum size=15pt, inner sep=2pt] at (2,-10) (v4-c1) {};
\node[draw, circle, minimum size=15pt, inner sep=2pt] at ($(v4-c1) + 0.5*(-1.75,-2)$) (v5-c1) {\small $3$};
\node[draw, circle, minimum size=15pt, inner sep=2pt] at ($(v4-c1) + 0.5*(1.75,-2)$) (v6-c1) {\small $5$};

\draw[thick, red] (v1-c1) -- (v2-c1);
\draw[thick, red] (v1-c1) -- (v3-c1);
\draw[thick, red] (v1-c1) -- (v4-c1);
\draw[thick, red] (v4-c1) -- (v5-c1);
\draw[thick, red] (v4-c1) -- (v6-c1);

\node[draw, circle, minimum size=15pt, inner sep=2pt] at (6,-10) (v1-c2) {};
\node[draw, circle, minimum size=15pt, inner sep=2pt] at ($(v1-c2) + 0.5*(-1.75,-2)$) (v2-c2) {\small $4$};
\node[draw, circle, minimum size=15pt, inner sep=2pt] at ($(v1-c2) + 0.5*(1.75,-2)$) (v3-c2) {\small $6$};
\node[draw, circle, minimum size=15pt, inner sep=2pt] at (10,-10) (v4-c2) {};
\node[draw, circle, minimum size=15pt, inner sep=2pt] at ($(v4-c2) + 0.5*(-1.75,-2)$) (v5-c2) {\small $7$};
\node[draw, circle, minimum size=15pt, inner sep=2pt] at ($(v4-c2) + 0.5*(1.75,-2)$) (v6-c2) {\small $8$};

\draw[thick, blue] (v1-c2) -- (v2-c2);
\draw[thick, blue] (v1-c2) -- (v3-c2);
\draw[thick, blue] (v1-c2) -- (v4-c2);
\draw[thick, blue] (v4-c2) -- (v5-c2);
\draw[thick, blue] (v4-c2) -- (v6-c2);

%
%
\node[] at (-3.5,0) {\small (a)};
\node[] at (-3.5,-6) {\small (b-1)};
\node[] at (4.5,-6) {\small (b-2)};
\node[] at (-3.5,-10) {\small (c-1)};
\node[] at (4.5,-10) {\small (c-2)};
\end{tikzpicture}
}
\caption{Splitting a tree into subtrees in a more complicated  fashion: In (a), the true tree contains both the solid and dashed edges. Yet, the algorithm might not identify completely the topology. Instead it will return two subtrees. Yet, the subtrees are not topologically sensible: one tree will contain the leaves $\{1,2,3,5\}$ and the other will contain $\{4,6,7,8\}$. The red edges correspond to the first tree, the blue edges to the second tree, while the dashed edges are shared by both trees. In figure (b), we split the original tree into the two subtrees, one containing the red and dashed edges and the other the blue and dashed edges. Notice that in Figure (b), the nodes labeled $a$-$f$ have degree $2$. Information theoretically, it is impossible to identify hidden nodes of degree $2$. Indeed, the same leaf distribution is obtained by removing each of these nodes, connecting its neighbors, and adjusting the weight of the new edges. Hence, the output of the algorithm will not have degree-$2$ nodes: instead, the transformation described above, that removes degree-2 nodes, will be performed. In (c) we demonstrate the result of removing such degree-$2$ nodes.}
\label{fig:appendix2}
\end{figure}

The two transformations that are applied to a tree in Figure~\ref{fig:appendix1} and Figure~\ref{fig:appendix2} can be viewed as a single transformation: separating the tree into subtrees, while preserving the topology within each subtree. This can be seen in Figure~\ref{fig:appendix3}.

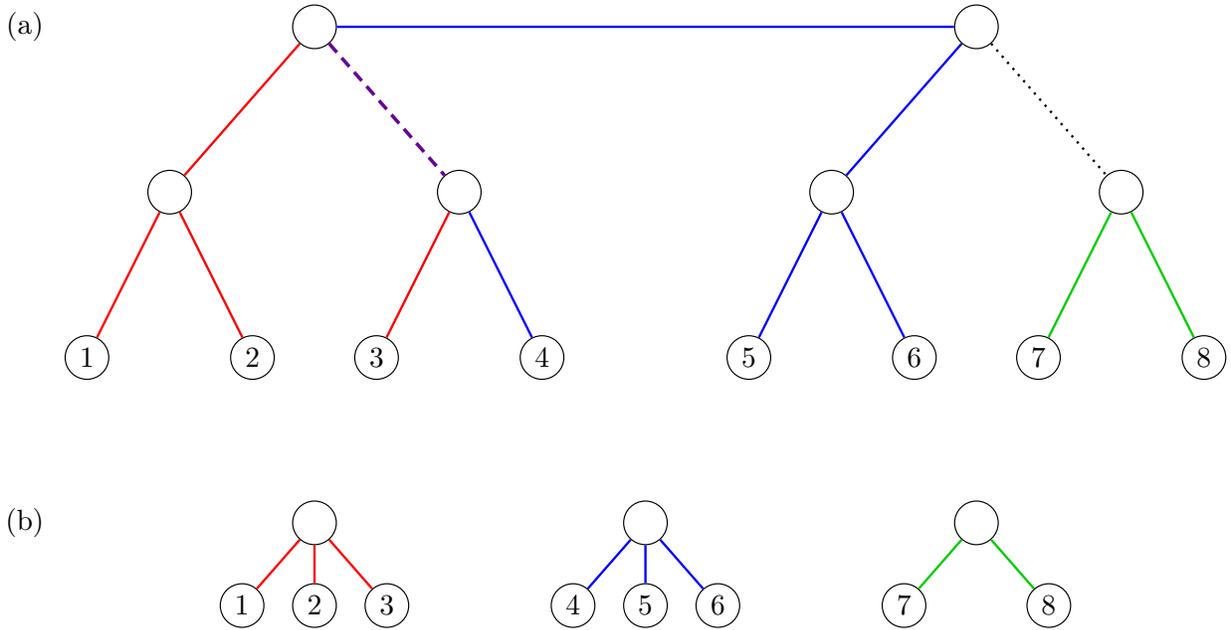
\begin{figure}[H]
\centering
\resizebox{\linewidth}{!}{%
\begin{tikzpicture}
%
%
\node[draw, circle, minimum size=15pt, inner sep=2pt] at (0,0) (v1-a) {};
\node[draw, circle, minimum size=15pt, inner sep=2pt] at ($(v1-a) + (-1.75,-2)$) (v2-a) {};
\node[draw, circle, minimum size=15pt, inner sep=2pt] at ($(v1-a) + (1.75,-2)$) (v3-a) {};
\node[draw, circle, minimum size=15pt, inner sep=2pt] at ($(v2-a) + (-1,-2)$) (v4-a) {\small $1$};
\node[draw, circle, minimum size=15pt, inner sep=2pt] at ($(v2-a) + (1,-2)$) (v5-a) {\small $2$};
\node[draw, circle, minimum size=15pt, inner sep=2pt] at ($(v3-a) + (-1,-2)$) (v6-a) {\small $3$};
\node[draw, circle, minimum size=15pt, inner sep=2pt] at ($(v3-a) + (1,-2)$) (v7-a) {\small $4$};
\node[draw, circle, minimum size=15pt, inner sep=2pt] at (8,0) (v8-a) {};
\node[draw, circle, minimum size=15pt, inner sep=2pt] at ($(v8-a) + (-1.75,-2)$) (v9-a) {};
\node[draw, circle, minimum size=15pt, inner sep=2pt] at ($(v8-a) + (1.75,-2)$) (v10-a) {};
\node[draw, circle, minimum size=15pt, inner sep=2pt] at ($(v9-a) + (-1,-2)$) (v11-a) {\small $5$};
\node[draw, circle, minimum size=15pt, inner sep=2pt] at ($(v9-a) + (1,-2)$) (v12-a) {\small $6$};
\node[draw, circle, minimum size=15pt, inner sep=2pt] at ($(v10-a) + (-1,-2)$) (v13-a) {\small $7$};
\node[draw, circle, minimum size=15pt, inner sep=2pt] at ($(v10-a) + (1,-2)$) (v14-a) {\small $8$};

\draw[thick, red] (v1-a) -- (v2-a);
\draw[very thick, blue!60!red, dash pattern=on 4pt off 2.5pt] (v1-a) -- (v3-a);
\draw[thick, red] (v2-a) -- (v4-a);
\draw[thick, red] (v2-a) -- (v5-a);
\draw[thick, red] (v3-a) -- (v6-a);
\draw[thick, blue] (v3-a) -- (v7-a);
\draw[thick, blue] (v1-a) -- (v8-a);
\draw[thick, blue] (v8-a) -- (v9-a);
\draw[thick, dotted] (v8-a) -- (v10-a);
\draw[thick, blue] (v9-a) -- (v11-a);
\draw[thick, blue] (v9-a) -- (v12-a);
\draw[thick, black!20!green] (v10-a) -- (v13-a);
\draw[thick, black!20!green] (v10-a) -- (v14-a);

%
%
\node[draw, circle, minimum size=15pt, inner sep=2pt] at (0,-6) (v1-b) {};
\node[draw, circle, minimum size=15pt, inner sep=2pt] at ($(v1-b) + 0.5*(-1.75,-2)$) (v2-b) {\small $1$};
\node[draw, circle, minimum size=15pt, inner sep=2pt] at ($(v1-b) + 0.5*(0,-2)$) (v3-b) {\small $2$};
\node[draw, circle, minimum size=15pt, inner sep=2pt] at ($(v1-b) + 0.5*(1.75,-2)$) (v4-b) {\small $3$};
\node[draw, circle, minimum size=15pt, inner sep=2pt] at (4,-6) (v5-b) {};
\node[draw, circle, minimum size=15pt, inner sep=2pt] at ($(v5-b) + 0.5*(-1.75,-2)$) (v6-b) {\small $4$};
\node[draw, circle, minimum size=15pt, inner sep=2pt] at ($(v5-b) + 0.5*(0,-2)$) (v7-b) {\small $5$};
\node[draw, circle, minimum size=15pt, inner sep=2pt] at ($(v5-b) + 0.5*(1.75,-2)$) (v8-b) {\small $6$};
\node[draw, circle, minimum size=15pt, inner sep=2pt] at (8,-6) (v9-b) {};
\node[draw, circle, minimum size=15pt, inner sep=2pt] at ($(v9-b) + 0.5*(-1.75,-2)$) (v10-b) {\small $7$};
\node[draw, circle, minimum size=15pt, inner sep=2pt] at ($(v9-b) + 0.5*(1.75,-2)$) (v11-b) {\small $8$};

\draw[thick, red] (v1-b) -- (v2-b);
\draw[thick, red] (v1-b) -- (v3-b);
\draw[thick, red] (v1-b) -- (v4-b);
\draw[thick, blue] (v5-b) -- (v6-b);
\draw[thick, blue] (v5-b) -- (v7-b);
\draw[thick, blue] (v5-b) -- (v8-b);
\draw[thick, black!20!green] (v9-b) -- (v10-b);
\draw[thick, black!20!green] (v9-b) -- (v11-b);

%
\node[] at (-3.5,0) {\small (a)};
\node[] at (-3.5,-6) {\small (b)};
\end{tikzpicture}
}
\caption{Separating a tree into multiple subtrees: In (a), the original tree contains all solid, dashed and dotted edges. It is split by the algorithm to three subtrees. The first subtree contains leaves $\{1,2,3\}$ and the red and dashed edges. The second subtree contains $\{4,5,6\}$ and the blue and dashed edges. The third contains $\{7,8\}$ and the green edges. Notice that the dashed purple edge is contained in two trees, while the dotted black edge is contained in no tree. Notice that the first and second subtrees intersect while the third subtree is disjoint from the other subtrees. In (b) we see the output of the algorithm.}
\label{fig:appendix3}
\end{figure}

Yet, the algorithm of \cite{daskalakis2009phylogenies} might not be able to tell the exact topology within each subtree. In this case, the output will just contract some of the internal edges of this subtree. See Figure~\ref{fig:appendix4}~(a)-(c) for an example of a contracted subtree. In Figure~\ref{fig:appendix4}~(d) we depict some of the possible topologies that the algorithm could have confused between, which led to contracting a specific edge.

\begin{figure}[H]
\centering
\resizebox{\linewidth}{!}{%
\begin{tikzpicture}
%
%

\node[draw, circle, minimum size=15pt, inner sep=2pt] at (0,0) (v2-a) {};
\node[draw, circle, minimum size=15pt, inner sep=2pt] at (3,0) (v3-a) {\small $a$};
\node[draw, circle, minimum size=15pt, inner sep=2pt] at ($(v2-a) + (-1,-2)$) (v4-a) {\small $1$};
\node[draw, circle, minimum size=15pt, inner sep=2pt] at ($(v2-a) + (1,-2)$) (v5-a) {\small $2$};
\node[draw, circle, minimum size=15pt, inner sep=2pt] at ($(v3-a) + (-1,-2)$) (v6-a) {\small $3$};
\node[draw, circle, minimum size=15pt, inner sep=2pt] at ($(v3-a) + (1,-2)$) (v7-a) {\small $4$};
\node[draw, circle, minimum size=15pt, inner sep=2pt] at (6,0) (v8-a) {\small $b$};
\node[draw, circle, minimum size=15pt, inner sep=2pt] at ($(v8-a) + (-1,-2)$) (v9-a) {\small $5$};
\node[draw, circle, minimum size=15pt, inner sep=2pt] at ($(v8-a) + (1,-2)$) (v10-a) {\small $6$};
\node[draw, circle, minimum size=15pt, inner sep=2pt] at (9,0) (v11-a) {};
\node[draw, circle, minimum size=15pt, inner sep=2pt] at ($(v11-a) + (-1,-2)$) (v12-a) {\small $7$};
\node[draw, circle, minimum size=15pt, inner sep=2pt] at ($(v11-a) + (1,-2)$) (v13-a) {\small $8$};

\draw[thick] (v2-a) -- (v4-a);
\draw[thick] (v2-a) -- (v5-a);
\draw[thick] (v2-a) -- (v3-a);
\draw[thick] (v3-a) -- (v6-a);
\draw[thick] (v3-a) -- (v7-a);
\draw[thick] (v3-a) -- (v8-a);
\draw[thick] (v8-a) -- (v9-a);
\draw[thick] (v8-a) -- (v10-a);
\draw[thick] (v11-a) -- (v12-a);
\draw[thick] (v11-a) -- (v13-a);
\draw[very thick, dotted] (v8-a) -- (v11-a);

%
%
\node[draw, circle, minimum size=15pt, inner sep=2pt] at (0,-4) (v2-b) {};
\node[draw, circle, minimum size=15pt, inner sep=2pt] at ($(v2-b) + (3,0)$) (v3-b) {\small $a$};
\node[draw, circle, minimum size=15pt, inner sep=2pt] at ($(v2-b) + (-1,-2)$) (v4-b) {\small $1$};
\node[draw, circle, minimum size=15pt, inner sep=2pt] at ($(v2-b) + (1,-2)$) (v5-b) {\small $2$};
\node[draw, circle, minimum size=15pt, inner sep=2pt] at ($(v3-b) + (-1,-2)$) (v6-b) {\small $3$};
\node[draw, circle, minimum size=15pt, inner sep=2pt] at ($(v3-b) + (1,-2)$) (v7-b) {\small $4$};
\node[draw, circle, minimum size=15pt, inner sep=2pt] at ($(v3-b) + (3,0)$) (v8-b) {\small $b$};
\node[draw, circle, minimum size=15pt, inner sep=2pt] at ($(v8-b) + (-1,-2)$) (v9-b) {\small $5$};
\node[draw, circle, minimum size=15pt, inner sep=2pt] at ($(v8-b) + (1,-2)$) (v10-b) {\small $6$};
\node[draw, circle, minimum size=15pt, inner sep=2pt] at ($(v8-b) + (3,0)$) (v11-b) {};
\node[draw, circle, minimum size=15pt, inner sep=2pt] at ($(v11-b) + (-1,-2)$) (v12-b) {\small $7$};
\node[draw, circle, minimum size=15pt, inner sep=2pt] at ($(v11-b) + (1,-2)$) (v13-b) {\small $8$};

\draw[thick] (v2-b) -- (v4-b);
\draw[thick] (v2-b) -- (v5-b);
\draw[thick] (v2-b) -- (v3-b);
\draw[thick] (v3-b) -- (v6-b);
\draw[thick] (v3-b) -- (v7-b);
\draw[very thick, dashed] (v3-b) -- (v8-b);
\draw[thick] (v8-b) -- (v9-b);
\draw[thick] (v8-b) -- (v10-b);
\draw[thick] (v11-b) -- (v12-b);
\draw[thick] (v11-b) -- (v13-b);

%
%
\node[draw, circle, minimum size=15pt, inner sep=2pt] at (0,-8) (v2-c) {};
\node[draw, circle, minimum size=15pt, inner sep=2pt] at ($(v2-c) + (-1,-2)$) (v4-c) {\small $1$};
\node[draw, circle, minimum size=15pt, inner sep=2pt] at ($(v2-c) + (1,-2)$) (v5-c) {\small $2$};
\node[draw, circle, minimum size=15pt, inner sep=2pt] at ($(v2-c)+(6,0)$) (v8-c) {};
\node[draw, circle, minimum size=15pt, inner sep=2pt] at ($(v8-c) + (-1.5,-2)$) (v6-c) {\small $3$};
\node[draw, circle, minimum size=15pt, inner sep=2pt] at ($(v8-c) + (-0.5,-2)$) (v7-c) {\small $4$};
\node[draw, circle, minimum size=15pt, inner sep=2pt] at ($(v8-c) + (0.5,-2)$) (v9-c) {\small $5$};
\node[draw, circle, minimum size=15pt, inner sep=2pt] at ($(v8-c) + (1.5,-2)$) (v10-c) {\small $6$};
\node[draw, circle, minimum size=15pt, inner sep=2pt] at ($(v10-c) + (1,0)$) (v11-c) {\small $7$};
\node[draw, circle, minimum size=15pt, inner sep=2pt] at ($(v11-c) + (2,0)$) (v12-c) {\small $8$};
\node[draw, circle, minimum size=15pt, inner sep=2pt] at ($(v12-c) + (-1,2)$) (v13-c) {};

\draw[thick] (v2-c) -- (v4-c);
\draw[thick] (v2-c) -- (v5-c);
\draw[thick] (v2-c) -- (v8-c);
\draw[thick] (v8-c) -- (v6-c);
\draw[thick] (v8-c) -- (v7-c);
\draw[thick] (v8-c) -- (v9-c);
\draw[thick] (v8-c) -- (v10-c);
\draw[thick] (v11-c) -- (v13-c);
\draw[thick] (v12-c) -- (v13-c);
%
%
\node[draw, circle, minimum size=15pt, inner sep=2pt] at (0,-12) (v2-d) {};
\node[draw, circle, minimum size=15pt, inner sep=2pt] at ($(v2-d)+(3,0)$) (v3-d) {};
\node[draw, circle, minimum size=15pt, inner sep=2pt] at ($(v2-d) + (-1,-2)$) (v4-d) {\small $1$};
\node[draw, circle, minimum size=15pt, inner sep=2pt] at ($(v2-d) + (1,-2)$) (v5-d) {\small $2$};
\node[draw, circle, minimum size=15pt, inner sep=2pt] at ($(v3-d) + (-1,-2)$) (v6-d) {};
\node[draw, circle, minimum size=15pt, inner sep=2pt] at ($(v3-d) + (1,-2)$) (v7-d) {};
\node[draw, circle, minimum size=15pt, inner sep=2pt] at ($(v2-d)+(6,0)$) (v8-d) {};
\node[draw, circle, minimum size=15pt, inner sep=2pt] at ($(v8-d) + (-1,-2)$) (v9-d) {};
\node[draw, circle, minimum size=15pt, inner sep=2pt] at ($(v8-d) + (1,-2)$) (v10-d) {};

\draw[thick] (v2-d) -- (v4-d);
\draw[thick] (v2-d) -- (v5-d);
\draw[thick] (v2-d) -- (v3-d);
\draw[thick] (v3-d) -- (v6-d);
\draw[thick] (v3-d) -- (v7-d);
\draw[thick] (v3-d) -- (v8-d);
\draw[thick] (v8-d) -- (v9-d);
\draw[thick] (v8-d) -- (v10-d);

%
%
\node[] at (-3.5,0) {\small (a)};
\node[] at ($(v2-b) + (-3.5,0)$) {\small (b)};
\node[] at ($(v2-c) + (-3.5,0)$) {\small (c)};
\node[] at ($(v2-d) + (-3.5,0)$) {\small (d)};

\node[] at ($(v2-d)+(-3,-2.5)$) {Option 1};
\node[] at ($(v2-d)+(2,-2.5)$) {\small $3$};
\node[] at ($(v2-d)+(4,-2.5)$) {\small $4$};
\node[] at ($(v2-d)+(5,-2.5)$) {\small $5$};
\node[] at ($(v2-d)+(7,-2.5)$) {\small $6$};

\node[] at ($(v2-d)+(-3,-3)$) {Option 2};
\node[] at ($(v2-d)+(2,-3)$) {\small $3$};
\node[] at ($(v2-d)+(4,-3)$) {\small $6$};
\node[] at ($(v2-d)+(5,-3)$) {\small $4$};
\node[] at ($(v2-d)+(7,-3)$) {\small $5$};

\node[] at ($(v2-d)+(-3,-3.5)$) {Option 3};
\node[] at ($(v2-d)+(2,-3.5)$) {\small $5$};
\node[] at ($(v2-d)+(4,-3.5)$) {\small $6$};
\node[] at ($(v2-d)+(5,-3.5)$) {\small $3$};
\node[] at ($(v2-d)+(7,-3.5)$) {\small $4$};

\end{tikzpicture}
}
\caption{An example of edge contraction. In (a), we see the original tree. In (b), the algorithm splits the tree into two subtrees. Yet, the algorithm could not figure exactly the topology of the left subtree. Instead, it contracts the dashed edge, and the result is shown in (c). The reason for the contraction is: the algorithm could not tell the true topology. In (d), we show multiple topologies that the algorithm might confuse between, leading it to contract the edge. These are labeled Option~1-3.}
\label{fig:appendix4}
\end{figure}
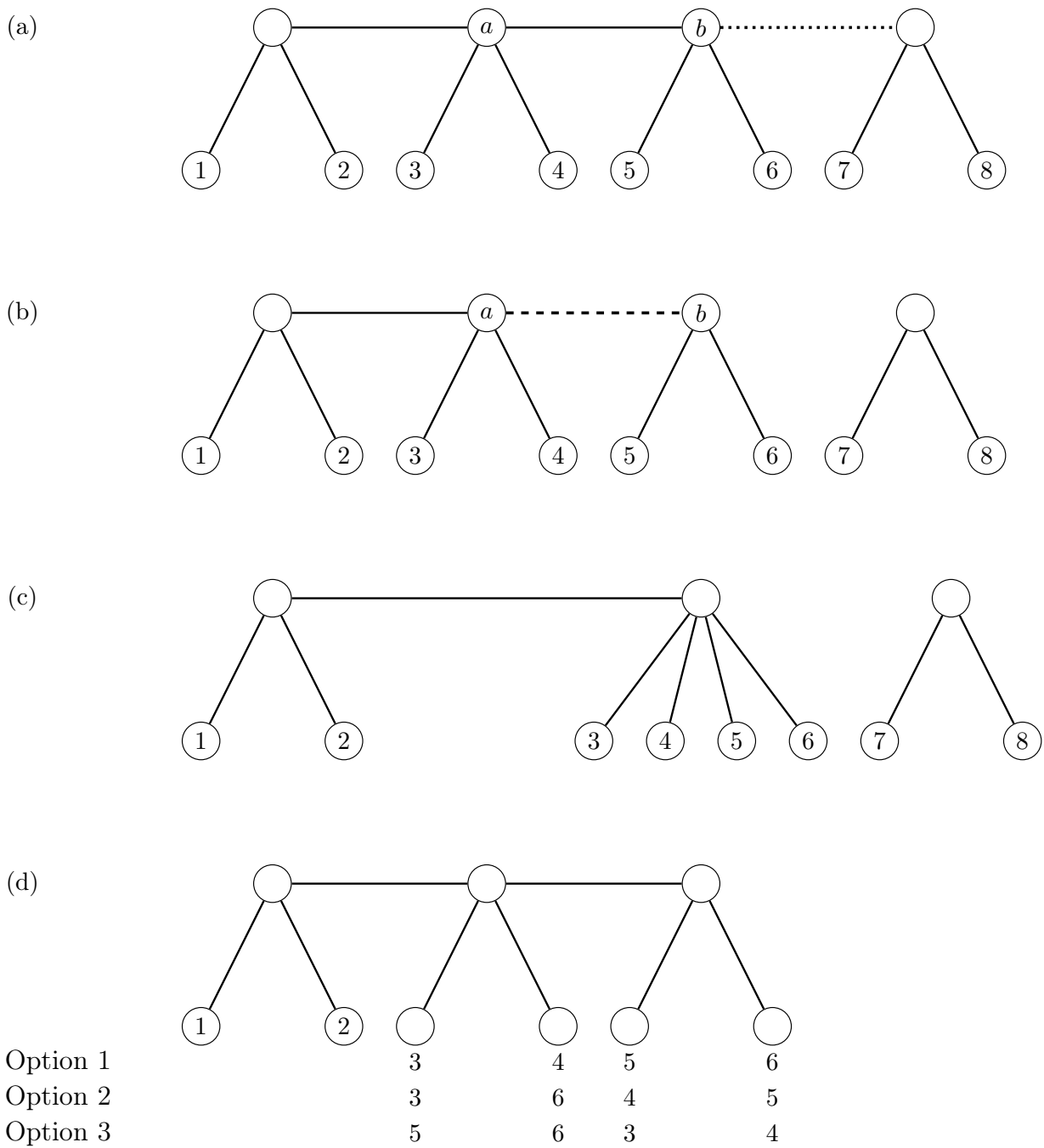

Before we give the formal proof, let us give an imprecise intuition of the approach.
In particular, we show how to compare between the output of the algorithm and the true topology. For the simple case where the output is obtained from the true tree by only deleting edges, these edges are ``distant'' from the leaves, and so they cannot influence the leaf distribution significantly. Thus, it suffices to study the more complicated case where the subtrees are \emph{not} obtained by simply cutting edges from the true tree.
Here, we first compare the true tree to a tree where all edges that appear in two different subtrees have been contracted, as shown in Figure~\ref{fig:appendix5}~(b). In the next step, each edge corresponds only to one subtree, and we can detach the different subtrees, as shown in Figure~\ref{fig:appendix5}~(c). Lastly, we can reconstruct the edges that were previously contracted, as shown in Figure~\ref{fig:appendix5}~(d).

\begin{figure}[H]
\centering
\resizebox{\linewidth}{!}{%
\begin{tikzpicture}
%
%
\node[draw, circle, minimum size=15pt, inner sep=2pt] at (0,0) (v1-a) {};
\node[draw, circle, minimum size=15pt, inner sep=2pt] at (6,0) (v2-a) {};
\node[draw, circle, minimum size=15pt, inner sep=2pt] at ($(v1-a) + (-1.5,-2)$) (v3-a) {\small $1$};
\node[draw, circle, minimum size=15pt, inner sep=2pt] at ($(v1-a) + (-0.5,-2)$) (v4-a) {\small $2$};
\node[draw, circle, minimum size=15pt, inner sep=2pt] at ($(v1-a) + (0.5,-2)$) (v5-a) {\small $3$};
\node[draw, circle, minimum size=15pt, inner sep=2pt] at ($(v1-a) + (1.5,-2)$) (v6-a) {\small $4$};
\node[draw, circle, minimum size=15pt, inner sep=2pt] at ($(v2-a) + (-1.5,-2)$) (v7-a) {\small $5$};
\node[draw, circle, minimum size=15pt, inner sep=2pt] at ($(v2-a) + (-0.5,-2)$) (v8-a) {\small $6$};
\node[draw, circle, minimum size=15pt, inner sep=2pt] at ($(v2-a) + (0.5,-2)$) (v9-a) {\small $7$};
\node[draw, circle, minimum size=15pt, inner sep=2pt] at ($(v2-a) + (1.5,-2)$) (v10-a) {\small $8$};

\draw[thick, red] (v1-a) -- (v3-a);
\draw[thick, red] (v1-a) -- (v4-a);
\draw[thick, blue] (v1-a) -- (v5-a);
\draw[thick, blue] (v1-a) -- (v6-a);
\draw[very thick, dashed, blue!50!red] (v1-a) -- (v2-a);
\draw[thick, red] (v2-a) -- (v7-a);
\draw[thick, red] (v2-a) -- (v8-a);
\draw[thick, blue] (v2-a) -- (v9-a);
\draw[thick, blue] (v2-a) -- (v10-a);

%
%
\node[draw, circle, minimum size=15pt, inner sep=2pt] at (3,-4) (v1-b) {};
\node[draw, circle, minimum size=15pt, inner sep=2pt] at ($(v1-b) + (-3.5,-2)$) (v2-b) {\small $1$};
\node[draw, circle, minimum size=15pt, inner sep=2pt] at ($(v1-b) + (-2.5,-2)$) (v3-b) {\small $2$};
\node[draw, circle, minimum size=15pt, inner sep=2pt] at ($(v1-b) + (-1.5,-2)$) (v4-b) {\small $3$};
\node[draw, circle, minimum size=15pt, inner sep=2pt] at ($(v1-b) + (-0.5,-2)$) (v5-b) {\small $4$};
\node[draw, circle, minimum size=15pt, inner sep=2pt] at ($(v1-b) + (0.5,-2)$) (v6-b) {\small $5$};
\node[draw, circle, minimum size=15pt, inner sep=2pt] at ($(v1-b) + (1.5,-2)$) (v7-b) {\small $6$};
\node[draw, circle, minimum size=15pt, inner sep=2pt] at ($(v1-b) + (2.5,-2)$) (v8-b) {\small $7$};
\node[draw, circle, minimum size=15pt, inner sep=2pt] at ($(v1-b) + (3.5,-2)$) (v9-b) {\small $8$};

\draw[thick, red] (v1-b) -- (v2-b);
\draw[thick, red] (v1-b) -- (v3-b);
\draw[thick, blue] (v1-b) -- (v4-b);
\draw[thick, blue] (v1-b) -- (v5-b);
\draw[thick, red] (v1-b) -- (v6-b);
\draw[thick, red] (v1-b) -- (v7-b);
\draw[thick, blue] (v1-b) -- (v8-b);
\draw[thick, blue] (v1-b) -- (v9-b);

%
%
\node[draw, circle, minimum size=15pt, inner sep=2pt] at (0,-8) (v1-c) {};
\node[draw, circle, minimum size=15pt, inner sep=2pt] at ($(v1-c) + (-1.5,-2)$) (v2-c) {\small $1$};
\node[draw, circle, minimum size=15pt, inner sep=2pt] at ($(v1-c) + (-0.5,-2)$) (v3-c) {\small $2$};
\node[draw, circle, minimum size=15pt, inner sep=2pt] at ($(v1-c) + (0.5,-2)$) (v4-c) {\small $5$};
\node[draw, circle, minimum size=15pt, inner sep=2pt] at ($(v1-c) + (1.5,-2)$) (v5-c) {\small $6$};
\node[draw, circle, minimum size=15pt, inner sep=2pt] at (6,-8) (v6-c) {};
\node[draw, circle, minimum size=15pt, inner sep=2pt] at ($(v6-c) + (-1.5,-2)$) (v7-c) {\small $3$};
\node[draw, circle, minimum size=15pt, inner sep=2pt] at ($(v6-c) + (-0.5,-2)$) (v8-c) {\small $4$};
\node[draw, circle, minimum size=15pt, inner sep=2pt] at ($(v6-c) + (0.5,-2)$) (v9-c) {\small $7$};
\node[draw, circle, minimum size=15pt, inner sep=2pt] at ($(v6-c) + (1.5,-2)$) (v10-c) {\small $8$};

\draw[thick, red] (v1-c) -- (v2-c);
\draw[thick, red] (v1-c) -- (v3-c);
\draw[thick, red] (v1-c) -- (v4-c);
\draw[thick, red] (v1-c) -- (v5-c);
\draw[thick, blue] (v6-c) -- (v7-c);
\draw[thick, blue] (v6-c) -- (v8-c);
\draw[thick, blue] (v6-c) -- (v9-c);
\draw[thick, blue] (v6-c) -- (v10-c);

%
%
\node[draw, circle, minimum size=15pt, inner sep=2pt] at (-1.5,-12) (v2-d) {};
\node[draw, circle, minimum size=15pt, inner sep=2pt] at (1.5,-12) (v3-d) {};
\node[draw, circle, minimum size=15pt, inner sep=2pt] at ($(v2-d) + (-1,-2)$) (v4-d) {\small $1$};
\node[draw, circle, minimum size=15pt, inner sep=2pt] at ($(v2-d) + (1,-2)$) (v5-d) {\small $2$};
\node[draw, circle, minimum size=15pt, inner sep=2pt] at ($(v3-d) + (-1,-2)$) (v6-d) {\small $5$};
\node[draw, circle, minimum size=15pt, inner sep=2pt] at ($(v3-d) + (1,-2)$) (v7-d) {\small $6$};
\node[draw, circle, minimum size=15pt, inner sep=2pt] at (4.5,-12) (v8-d) {};
\node[draw, circle, minimum size=15pt, inner sep=2pt] at ($(v8-d) + (-1,-2)$) (v9-d) {\small $3$};
\node[draw, circle, minimum size=15pt, inner sep=2pt] at ($(v8-d) + (1,-2)$) (v10-d) {\small $4$};
\node[draw, circle, minimum size=15pt, inner sep=2pt] at (7.5,-12) (v11-d) {};
\node[draw, circle, minimum size=15pt, inner sep=2pt] at ($(v11-d) + (-1,-2)$) (v12-d) {\small $7$};
\node[draw, circle, minimum size=15pt, inner sep=2pt] at ($(v11-d) + (1,-2)$) (v13-d) {\small $8$};

\draw[thick, red] (v2-d) -- (v4-d);
\draw[thick, red] (v2-d) -- (v5-d);
\draw[thick, red] (v2-d) -- (v3-d);
\draw[thick, red] (v3-d) -- (v6-d);
\draw[thick, red] (v3-d) -- (v7-d);
\draw[thick, blue] (v8-d) -- (v9-d);
\draw[thick, blue] (v8-d) -- (v10-d);
\draw[thick, blue] (v8-d) -- (v11-d);
\draw[thick, blue] (v11-d) -- (v12-d);
\draw[thick, blue] (v11-d) -- (v13-d);

%
%
\node[] at (-3.5,0) {\small (a)};
\node[] at (-3.5,-4) {\small (b)};
\node[] at (-3.5,-8) {\small (c)};
\node[] at (-3.5,-12) {\small (d)};
\end{tikzpicture}
}
\caption{Analyzing the difference between the true topology and the output of the algorithm. In (a), the true tree contains the union of solid and dashed lines. Yet, the algorithm splits the tree into two subtrees, with leaves $\{1,2,5,6\}$ and $\{3,4,7,8\}$. In (d), we can see the output of the algorithm. In order to compare between the true and output topologies, we construct two auxiliary forests: In (b), we contract all the edges that are shared between the two subtree. In this example, this amounts to contracting the dashed edge. Then, in (c), we split the different subtrees. In (d), we reconstruct the edge that was previously contracted. Yet, we reconstruct this edge separately for each subtree.}
\label{fig:appendix5}
\end{figure}
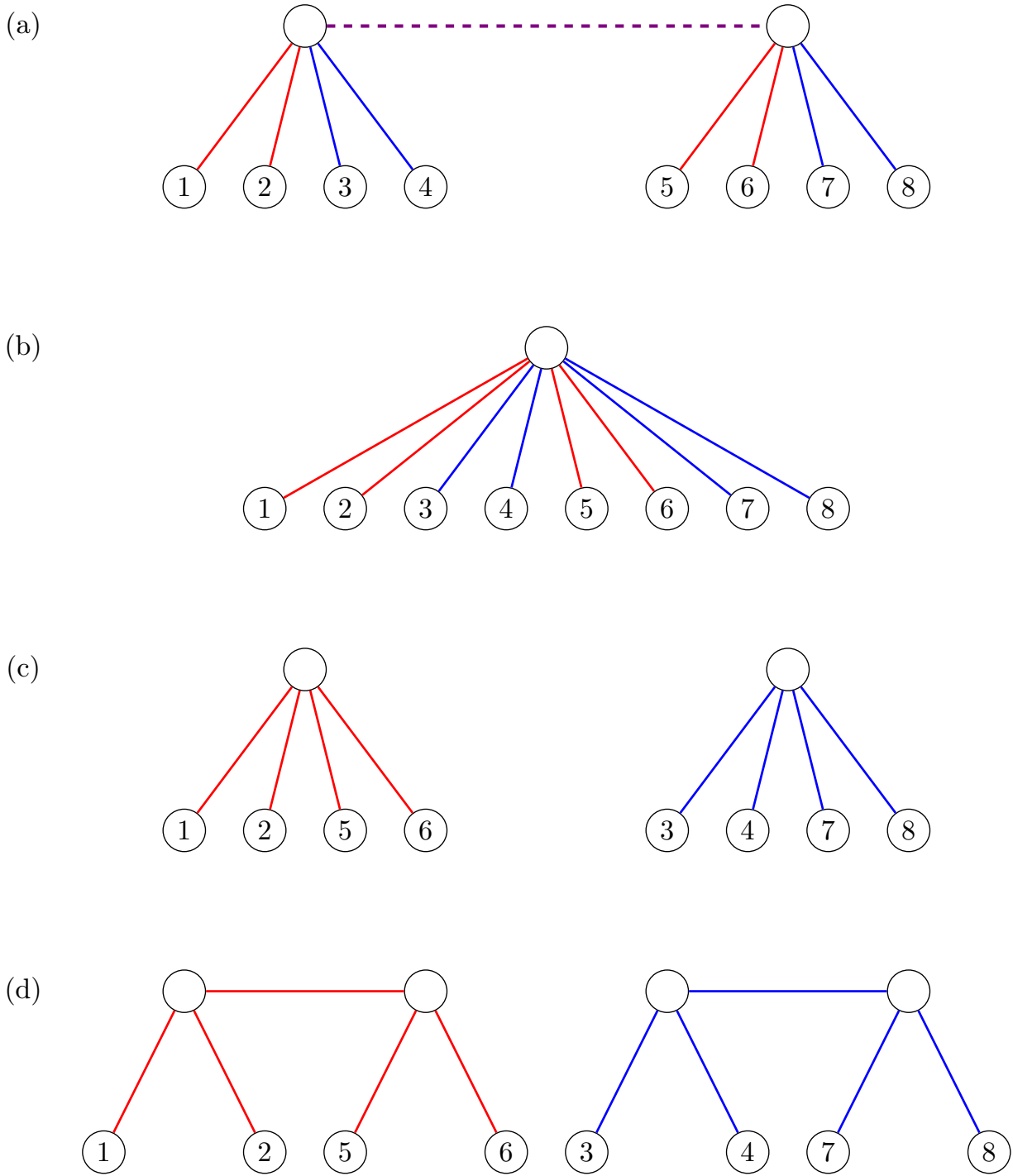

We can bound the total variation distance for each of these steps using the following guarantees:
(1)  edges that appear in multiple subtrees have weight close to $1$. Hence, contracting them, which is equivalent to changing their weight to $1$, does not influence significantly the total variation distance. (2) Leaves of different trees have a small correlation. Hence, detaching the different subtrees does not incur a high cost in total variation. 

\subsection{Definitions and probabilistic lemmas}

We begin with some notations that are specific for this section: given a tree $T=(V,E)$ with weights $\theta_e$ on the edges $e\in E$, denote the resulting distribution over the values $x=(x_1,\dots,x_n)$ on the leaves by $\Pr_{T,\theta}[x]$. The total variation distance between two leaf-distributions of two models, $(T,\theta)$ and $(T',\theta')$ is denoted by $\TV(\Pr_{T,\theta}[x],\Pr_{T',\theta'}[x])$. The values on the internal nodes are denoted by $y_v$ for each internal node $v \in V$, and $\Pr_{T,\theta}[x,y]$ denotes the joint distribution over the leaves and internal nodes. We continue with some definitions:

\begin{definition}[Edge contraction]
Given a graph $G = (V,E)$ and an edge $e = (u,v)$ in the graph, we say that a graph $G'=(V',E')$ is obtained from $G$ by contracting $e$ if $G'$ is the result of removing $e$ from the graph and identifying $u$ and $v$ as a single vertex. Namely, if the new edge is denoted $z$, then 
\[
V' = V \setminus \{u,v\}\cup \{z\}
\]
and
\[
E' = E \setminus \{(u,w) \colon w \in V\}
\setminus \{(v,w) \colon w \in V\}
\cup \{(z,w) \colon (u,w) \in V\}\enspace.
\]
\end{definition}
For an example of an edge-contraction, see Figure~\ref{fig:appendix4}~(b)-(c): The dashed edge in (b) is contracted, resulting in the graph shown in (c).

We now define the contraction of all degree-2 nodes:
\begin{definition}\label{def:contraction}
Given a graph $G=(V,E)$, we say that $G'=(V',E')$ is obtained from $G$ by contracting all the degree-2 nodes, if $G'$ is obtained from $G$ using the following process:
\begin{itemize}
    \item $G'\gets G$.
    \item While $G'$ contains a node $w$ of degree $2$:
    \begin{itemize}
        \item Contract one of the edges incident with $w$.
    \end{itemize}
\end{itemize}
\end{definition}
For an example of a contraction of degree-2 nodes, see Figure~\ref{fig:appendix2}, (b)-(c): In (b), there are some nodes of degree $2$, labeled $a$-$f$. In (c), we can see the result of contracting these nodes.

\begin{definition}[Subtrees induced by a set of leaves] \label{def:induced-subtree}
Given a tree $T$ and a subset $S$ of the leaves of $T$, the \emph{subtree of $T$ induced by $S$} is the tree that is obtained from $T$ by removing all the edges and all the nodes that are not in any path between two leaves $i,j \in S$.
\end{definition}
In other words, the subtree of $T$ induced by $S$ is the minimal subtree of $T$ that contains $S$. For an example, in Figure~\ref{fig:appendix2}~(a), a tree is depicted, and its subtrees induced by $\{1,2,3,5\}$ and $\{4,6,7,8\}$ are depicted in Figure~\ref{fig:appendix2}~(b-1) and (b-2), respectively.

Throughout the proof, we will modify graphs by contracting edges. Whenever we contract an edge, we identify its two endpoints as a single vertex. If we contract multiple edges, the resulting graph may identify even more than two edges of the original as one edge. If a graph $T'$ is the result of multiple edge-contractions applied on a graph $T$, then, for each vertex $v'$ of $T'$, \emph{the set of preimages of $v'$ under the transformation from $T$ to $T'$} is defined as the set of all vertices of $T$ that were identified into $v'$. To be more formal, we provide the following definition:
\begin{definition}\label{def:preimages}
Let $T'=(V',E')$ be obtained from $T=(V,E)$ via a sequence of edge contractions. For any vertex $v' \in T'$, \emph{the set of preimages of $v$ under the transformation from $T$ to $T'$} is defined as the following set, which we denote here by $A_{v'}$:
\begin{itemize}
    \item Start with the tree $T$, and define $A_v = \{v\}$ for each $v \in V$.
    \item For any contraction of an edge $(u,v)$ into a single vertex $w$:
    \begin{itemize}
        \item Define $A_w = A_u \cup A_v$.
    \end{itemize}
    \item Return $A_{v'}$ for each vertex $v' \in V'$.
\end{itemize}
\end{definition}

Lastly, notice that if we contract edges then some of the nodes might change names. Hence, an edge that was connecting between two nodes $(u,v)$ in the original tree, might connect two other nodes in the contracted tree. Yet, the edge's function remain the same. Hence, we define \emph{the analogue of an edge $e$} in the contracted graph:
\begin{definition}\label{def:edge-analogue}
Let $T'$ be a tree that results from another tree $T$ via a sequence of edge contractions. Let $(u,v)$ be an edge in $T$ that is not contracted. Then, \emph{the analogue of $(u,v)$ in $T'$} is the obtained from $(u,v)$ in the following fashion:
\begin{itemize}
    \item Set $e' \gets (u,v)$.
    \item For any contraction of edge $(z,w)$ into a node $q$ that $T$ undergoes:
    \begin{itemize}
        \item If $u \in \{z,w\}$ then $e' \gets (q,v)$.
        \item Otherwise, if $v \in \{z,w\}$ then $e'\gets (u,q)$,
    \end{itemize}
    \item Return $e'$.
\end{itemize}
\end{definition}

We continue with presenting some auxiliary lemmas that will be used for the proof.

\begin{lemma}\label{lem:change-one-edge}
Let $T=(V,E)$ and let $(\theta_e)_{e\in E}$ denote some weight-vector on the edges. Let $\theta'$ denote a weight vector that differs only on one edge $e'$. Then, $\TV(\Pr_{T,\theta}[x],\Pr_{T,\theta'}[x]) \le |\theta'_{e'}-\theta_{e'}|/2$.
\end{lemma}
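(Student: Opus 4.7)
The plan is to reduce the claim to a direct computation on the \emph{joint} distribution over all vertices (leaves plus internal nodes), and then invoke the data-processing inequality for total variation. By the sampling description of the model given in the paper (pick a root uniformly, then propagate along directed edges), the joint distribution on $(x,y)\in \{-1,1\}^{n+n'}$ factorizes as
\[
\Pr_{T,\theta}[x,y] \;=\; \frac{1}{2}\prod_{e=(i,j)\in E} \frac{1 + \theta_{e}\, z_i z_j}{2},
\]
where $z_\ell$ denotes the spin assigned to vertex $\ell$ (so $z_\ell = x_\ell$ on leaves and $z_\ell = y_\ell$ on internal nodes). A short sanity check confirms this normalizes: summing over all $(x,y)$ must give $1$, hence $\sum_{x,y}\prod_{e\in E}(1+\theta_e z_i z_j)/2 = 2$.

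Now let $e' = (u,v)$ be the one edge whose weight differs between $\theta$ and $\theta'$. The two joint distributions share every factor except the one associated with $e'$, so
\[
\Pr_{T,\theta}[x,y] - \Pr_{T,\theta'}[x,y]
= \frac{1}{2}\left[\prod_{e\ne e'}\frac{1+\theta_e z_i z_j}{2}\right]\cdot\frac{(\theta_{e'} - \theta'_{e'})\,z_u z_v}{2}.
\]
Since each weight lies in $[-1,1]$, each factor $(1+\theta_e z_i z_j)/2$ is nonnegative, and of course $|z_u z_v|=1$. Taking absolute values and summing over all $(x,y)$,
\[
\sum_{x,y}\bigl|\Pr_{T,\theta}[x,y] - \Pr_{T,\theta'}[x,y]\bigr|
= \frac{|\theta_{e'}-\theta'_{e'}|}{4}\sum_{x,y}\prod_{e\ne e'}\frac{1+\theta_e z_i z_j}{2}.
\]

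To evaluate the remaining sum I apply the identity of the first paragraph to the auxiliary weight vector $\theta''$ obtained from $\theta$ by setting $\theta''_{e'}=0$. For that weight vector the corresponding joint distribution integrates to $1$, which unfolds to $\sum_{x,y}\prod_{e\ne e'}(1+\theta_e z_i z_j)/2 = 4$. Plugging this in yields
\[
\sum_{x,y}\bigl|\Pr_{T,\theta}[x,y] - \Pr_{T,\theta'}[x,y]\bigr| = |\theta_{e'}-\theta'_{e'}|,
\]
so the TV distance on the joint distributions equals $|\theta'_{e'}-\theta_{e'}|/2$. Finally, total variation cannot increase under marginalization (the map $(x,y)\mapsto x$ is deterministic), hence
\[
\TV\bigl(\Pr_{T,\theta}[x],\Pr_{T,\theta'}[x]\bigr)
\;\le\; \TV\bigl(\Pr_{T,\theta}[x,y],\Pr_{T,\theta'}[x,y]\bigr)
\;=\; \frac{|\theta'_{e'}-\theta_{e'}|}{2}.
\]
There is no real obstacle here; the only subtlety is keeping track of the $1/2$ prefactor coming from the uniform root in the sampling formulation, which is precisely what produces the factor $1/2$ on the right-hand side (rather than $|\theta'_{e'}-\theta_{e'}|$, which would have been the $\ell_1$-style bound analogous to Lemma~\ref{l:lipschitz}).
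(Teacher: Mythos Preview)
Your proof is correct. Both you and the paper ultimately bound the total variation on the \emph{joint} distribution over all vertices and then invoke data processing to pass to the leaf marginal, but the methods for the joint bound differ. The paper constructs an explicit coupling: set $y_u = y'_u$, couple $y_v$ with $y'_v$ so they disagree with probability exactly $|\theta_{e'}-\theta'_{e'}|/2$, and then sample the remainder of the tree identically whenever $(y_u,y_v)=(y'_u,y'_v)$. You instead exploit the product factorization directly, compute the $\ell_1$ difference in closed form, and recognize the remaining sum as the normalization of the model with $\theta''_{e'}=0$. Your approach is slightly more elementary (no coupling construction, just arithmetic on the density), while the coupling argument is more probabilistic in flavor and makes the source of the $1/2$ immediately transparent as $\TV(y_v\mid y_u,\ y'_v\mid y_u)$; both give the same bound with equality on the joint.
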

\begin{proof}
Due to the equivalent definition of total variation in terms of coupling, it is sufficient to produce a coupling between $x \sim \Pr_{T,\theta}$ and $x'\sim \Pr_{T',\theta'}$ such that $\Pr[x\ne x'] \le |\theta'_{e'}-\theta_{e'}|/2$.
While $x$ and $x'$ denote the values of the leaves, we will use $y$ and $y'$ to denote the values on the internal nodes, such that $(x,y)$ and $(x',y')$ are jointly sampled from $\Pr_{T,\theta}$ and $\Pr_{T',\theta'}$, respectively. We will produce a coupling between $(x,y)$ and $(x',y')$ such that $\Pr[(x,y)\ne (x',y')] \le |\theta'_{e'}-\theta_{e'}|/2$ and this suffices to conclude the proof.

Let $e' = (u,v)$ denote the \emph{single} edge where $\theta$ and $\theta'$ differ. We produce the coupling as follows:
\begin{itemize}
    \item We start by sampling $y_u$ uniformly from $\{-1,1\}$ and $y'_u = y_u$
    \item Then, we sample $y_v$ such that $\Pr[y_v=y_u] = (1+\theta_e)/2$. Similarly, we sample $y'_v$ such that $\Pr[y'_v = y'_u] = \Pr[y'_v = y_u] = (1+\theta'_e)/2$. Note that we can couple $y_v$ and $y'_v$ such that $\Pr[y_v\ne y'_v] =\TV(y_v\mid y_u,~ y'_v\mid y_u) = |\theta_e - \theta'_e|/2$.
    \item Next, we will sample the remaining values of $x,y$ conditioned on $y_u$ and $y_v$, and the remaining values of $x',y'$ conditioned on $y'_u$ and $y'_v$. If $y_u=y'_u$ and $y_v=y'_v$, then these two conditional distributions are the same, hence, we can sample such that $(x,y)=(x',y')$. Otherwise, we will sample $x,y,x'$ and $y'$ arbitrarily.
\end{itemize}
 Notice that with probability $1-|\theta_{e'}-\theta'_{e'}|/2$, $y_u=y'_u$ and $y_v=y'_v$. Hence, $\Pr[(x,y)=(x',y')] \ge 1- |\theta_{e'}-\theta'_{e'}|/2$, as required to complete the proof.
\end{proof}

\begin{lemma}\label{lem:contract-no-TV}
Let $T$ be a tree and $\theta$ a weight-function on its edges. Let $T'$ be obtained from $T$ by contracting an edge $e$ with $\theta_e=1$ and let $\theta'$ denote the restriction of $\theta$ to the edges of $T'$. Then, for any values $x=(x_1,\dots,x_n)$ on the leaves, $\Pr_{T,\theta}[x]=\Pr_{T',\theta'}[x]$.
\end{lemma}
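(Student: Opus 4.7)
The plan is to prove this via the explicit joint distribution formula in the preliminaries. Write $e = (u,v)$ with $\theta_e = 1$, and let $w$ be the vertex of $T'$ obtained by identifying $u$ and $v$. For any configuration, let $z = (x,y)$ denote the values on all nodes of $T$ (leaves and internal), and $z' = (x,y')$ the values on all nodes of $T'$. Note the two trees have the same leaf set, and the internal nodes of $T'$ are in bijection with the internal nodes of $T$ once $u$ and $v$ are identified.

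First, I would apply the defining formula
\[
\Pr_{T,\theta}[z] = \frac{1}{Z_T}\prod_{(i,j)\in E} \frac{1+\theta_{ij}z_i z_j}{2}.
\]
Since $\theta_e = 1$, the factor corresponding to $e$ equals $(1+y_uy_v)/2$, which is $1$ when $y_u=y_v$ and $0$ otherwise. Hence $\Pr_{T,\theta}[z] = 0$ whenever $y_u \ne y_v$, so the joint distribution is supported on configurations with $y_u = y_v$.

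Second, for any $z$ with $y_u = y_v$, define $z'$ by setting $y'_w := y_u = y_v$ and keeping all other coordinates unchanged. By construction of $T'$ and $\theta'$ (restriction of $\theta$ to the non-contracted edges), there is a natural bijection between the edges of $E \setminus \{e\}$ and the edges of $T'$ which preserves weights, and this bijection maps $z_iz_j$ to $z'_iz'_j$ (with $u$ or $v$ replaced by $w$). Consequently, for any such $z$,
\[
\prod_{(i,j)\in E\setminus\{e\}} \frac{1+\theta_{ij}z_iz_j}{2}
\;=\;
\prod_{(i,j)\in E'} \frac{1+\theta'_{ij}z'_iz'_j}{2}.
\]
Combining this with step one gives, up to normalization, $\Pr_{T,\theta}[z] \propto \Pr_{T',\theta'}[z']$ on the support.

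Third, the map $z \mapsto z'$ restricted to the support $\{y_u = y_v\}$ is a bijection onto the configurations of $T'$. Therefore summing over $z$ yields the same value as summing over $z'$, so the normalization constants satisfy $Z_T = Z_{T'}$, and the proportionality becomes equality. Marginalizing over the internal-node values on both sides yields
\[
\Pr_{T,\theta}[x] \;=\; \sum_{y:\ y_u=y_v}\Pr_{T,\theta}[x,y] \;=\; \sum_{y'}\Pr_{T',\theta'}[x,y'] \;=\; \Pr_{T',\theta'}[x],
\]
which is the claim. There is no substantive obstacle here; the only care needed is bookkeeping the bijection between $E \setminus \{e\}$ and $E'$ and checking that the weight-restriction $\theta'$ makes the two edge-products match term by term.
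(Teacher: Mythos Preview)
Your proof is correct, but it takes a genuinely different route from the paper's. The paper's argument is a one-liner: contracting an edge of weight $1$ leaves every pairwise leaf correlation $\alpha_{ij} = \prod_{(k,\ell)\in P_{ij}} \theta_{k\ell}$ unchanged (since at most a factor of $1$ is removed from each product), and by Lemma~\ref{l:basic_extend} the leaf distribution is determined entirely by these pairwise correlations. Your argument instead works directly at the level of the full joint distribution over all nodes, exploiting that the weight-$1$ edge factor $(1+y_uy_v)/2$ is the indicator $\mathbf{1}[y_u=y_v]$, and then setting up a bijection between supported configurations of $T$ and configurations of $T'$ that matches the remaining edge factors term by term.

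Your approach is more elementary and self-contained: it does not rely on the closed-form leaf formula of Lemma~\ref{l:basic_extend}, and in fact would work equally well for non-tree graphs or models where such a formula is unavailable. The paper's approach is shorter precisely because it leverages machinery already developed for other purposes in the paper. One small remark: you implicitly assume both endpoints $u,v$ are internal (writing $y_u,y_v$); this is consistent with the lemma's requirement that $T$ and $T'$ share the same leaf set $\{1,\dots,n\}$, but you might state it explicitly.
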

\begin{proof}
Notice that by contracting the edge, the pairwise correlations $\alpha_{ij}$ between any two leaves do not change, hence the leaf distributions are identical (this is a known fact and it also follows directly from Lemma~\ref{l:basic_extend}).
\end{proof}

\subsection{Proof body}

We are ready to present the results of \cite{daskalakis2009phylogenies}. They are written in a slightly different way than was originally present, but we translate their guarantees to our notation. (See Section~\ref{sec:translate-DMR} for translating their guarantees).

\begin{theorem}\label{thm:DMR}
There is a polynomial-time algorithm, for learning some unknown tree $T^* = (V^*,E^*)$, whose properties are presented below.
Its inputs are:
\begin{itemize}
    \item Approximate correlations, $\hat{\alpha}_{ij}$, for any two leaves $i,j \in [n]$. These satisfy the guarantee that there exists an Ising model $\Pr_{T^*,\theta^*}$, whose correlations $\alpha^*_{ij}$ satisfy: $|\alpha^*_{ij} - \hat{\alpha}_{ij}| \le \eta$, for any two leaves $i,j$ and for some $\eta\in (0,1/2]$.
    \item Parameters $\xi,\delta>0$ such that $\xi\delta \ge \eta$.
\end{itemize}
The algorithm outputs a forest, whose connected components are trees, $\tilde{T}_1=(\tilde{V}_1,\tilde{E}_1), \dots, \tilde{T}_R=(\tilde{V}_R,\tilde{E}_R)$ with the following guarantees: (below, $C>0$ is a universal constant)
\begin{itemize}
    \item Let $S_r$ denote the set of leaves of $\tilde T_r$ for any $r\in [R]$. Then, $\{S_1,\dots,S_R\}$ is a partition of the set of leaves of $T^*$.
    \item For all $r=1,\dots,R$, denote by $T_r=(V_r,E_r)$ the subtree of $T^*$ induced by $S_r$, as defined in Definition~\ref{def:induced-subtree}. Then, each tree $\tilde{T}_r$ is obtained from $T_r$ using the following operations:
    \begin{itemize}
        \item Contract a subset of the edges. Only edges $e$ of weight $\theta^*_e \ge 1-C\xi$ can be contracted.
        \item Contract all the nodes of degree-2 from the resulting tree.
    \end{itemize}
    \item Any edge $e$ that is common to more than one of the trees $\{T_1,\dots,T_R\}$, satisfies $\theta^*_e \ge 1-C\xi$.
    \item Any leaves $i,j$ that belongs to different sets from $\{S_1,\dots,S_R\}$, satisfy $|\alpha^*_{ij}| \le C\sqrt{\delta}$.
\end{itemize}
\end{theorem}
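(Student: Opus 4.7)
The plan is to invoke the algorithm of \cite{daskalakis2009phylogenies} essentially as a black box and translate its output guarantees to our notation. That algorithm works on the additive tree metric $d_{ij} = -\log|\alpha^*_{ij}|$ which, by \eqref{eq:path-correlation}, satisfies $d_{ij} = \sum_{e\in P_{ij}} w_e$ with edge weights $w_e = -\log|\theta^*_e| \ge 0$. Accordingly, I would feed it the estimated distances $\hat d_{ij} = -\log|\hat\alpha_{ij}|$, obtained from $\hat\alpha_{ij}$.

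First, I would set up the parameter correspondence. The algorithm of \cite{daskalakis2009phylogenies} takes two scale parameters: a ``short-edge'' threshold $f$ above which an edge is too close to $0$-weight to trust, and a ``deep-edge'' threshold $g$ below which an edge is too close to $1$-weight to distinguish. I would set $f = -\log(1-C_1\xi)$ and $g = -\tfrac{1}{2}\log(C_2\sqrt\delta)$ for suitable absolute constants $C_1,C_2$, and verify that the noise tolerance required by their analysis, which is a function of $f,g$, is implied by our assumption $\xi\delta \ge \eta$ on the correlation-scale error.

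Second, I would invoke the main forest-reconstruction guarantee of \cite{daskalakis2009phylogenies}, which yields the trees $\tilde T_1,\dots,\tilde T_R$ together with two structural properties: (i) any two leaves in different components $S_r,S_{r'}$ are separated in $T^*$ by at least one edge $e$ with $\theta^*_e \le e^{-g} \le C\sqrt\delta$, and hence (by multiplicativity of correlations along paths) $|\alpha^*_{ij}|\le C\sqrt\delta$, giving the fourth bullet; (ii) within each component, $\tilde T_r$ agrees with the induced subtree $T_r$ (in the sense of Definition~\ref{def:induced-subtree}) up to contraction of edges whose weight exceeds $1-C\xi$ together with the standard removal of resulting degree-$2$ nodes, giving the second bullet. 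The third bullet, that any edge shared among multiple $T_r$'s has $\theta^*_e\ge 1-C\xi$, then follows from a short combinatorial argument: such a shared $e$ lies on paths connecting leaves in $S_r$ and leaves in $S_{r'}$, but leaves across components satisfy $|\alpha^*_{ij}|\le C\sqrt\delta$; localizing this correlation bound to the portion of the path not consisting of $e$ itself, and using that the two endpoints of $e$ separate $S_r$ from $S_{r'}$, forces the product of the remaining edge weights to be bounded below, which in turn forces $\theta^*_e$ to be close to $1$.

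The main obstacle I anticipate is purely notational rather than conceptual: \cite{daskalakis2009phylogenies} state their guarantees in the distance regime with pointwise distance errors, whereas we are given pointwise correlation errors of size $\eta$. The map $\alpha \mapsto -\log|\alpha|$ is not Lipschitz near $\alpha=0$, so the translation requires being careful that the quartet tests used inside the algorithm, which only compare distances within each short quartet, only touch leaf-pairs whose estimated correlations are well above $\eta$; the condition $\xi\delta \ge \eta$ is precisely calibrated so that quartets involving a ``deep'' cut are thrown out rather than acted upon, so the distance-scale guarantees of \cite{daskalakis2009phylogenies} do apply on the quartets that the algorithm actually uses.
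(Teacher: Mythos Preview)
Your overall approach --- invoke \cite{daskalakis2009phylogenies} as a black box and translate between correlation-scale and log-distance parameters --- is exactly what the paper does; see Section~\ref{sec:translate-DMR}. The paper's translation is somewhat cleaner than yours: it identifies $\tau = 2\xi$ and $e^{-M} = \delta$ in the $(\tau,M)$-distorted-metric framework of \cite{daskalakis2009phylogenies}, and then checks directly that $|\hat\alpha_{ij} - \alpha^*_{ij}| \le \eta$ together with $\eta \le \xi\delta$ implies $|\hat\alpha_{ij}/\alpha^*_{ij} - 1| \le \xi$ whenever $\alpha^*_{ij} \ge \delta$, which is precisely the required input condition. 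All four output bullets are then read off verbatim from the guarantees of the cited algorithm; the paper does not re-derive any of them.

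Your attempted combinatorial derivation of the third bullet is therefore unnecessary, and more importantly it is incorrect. If an edge $e$ lies in both induced subtrees $T_r$ and $T_{r'}$, then by the definition of induced subtree, both $S_r$ and $S_{r'}$ contain leaves on \emph{each} side of $e$ --- the endpoints of $e$ do not ``separate $S_r$ from $S_{r'}$'' as you assert. The cross-component bound $|\alpha^*_{ij}| \le C\sqrt\delta$ then only tells you that certain products of edge weights along long paths are small, which places no lower bound on any individual factor and hence cannot force $\theta^*_e$ to be close to $1$. That bullet is a genuine structural output of the algorithm in \cite{daskalakis2009phylogenies} (it records which internal edges the reconstruction failed to resolve), not a corollary of the other bullets. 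Similarly, your justification of the fourth bullet via ``at least one edge with $\theta^*_e \le e^{-g}$ on the path'' is not in general how the partition arises --- Figures~\ref{fig:appendix2} and~\ref{fig:appendix3} in the paper illustrate that the components need not be separated by a single weak edge --- so this too should be cited rather than argued.
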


For example, in Figure~\ref{fig:appendix4}~(a) a tree is depicted, whereas the output of the algorithm is given in Figure~\ref{fig:appendix4}~(c). Since the dashed edge $e$ connecting nodes $a$ and $b$ is contracted in the output, its weight must satisfy $\theta^*_e \ge 1 - \Omega(\xi)$.  Further, since nodes $1$ and $7$ reside in different subtrees in the output of the algorithm, their correlation must satisfy $|\alpha^*_{17}| \le O(\sqrt\delta)$. For another example, see Figure~\ref{fig:appendix2}: in (a), the original tree is depicted, whereas, the output is the forest in (c). The induced trees $T_1$ and $T_2$ are shown in (b). Since the dashed edges in (a) are shared by both induced subtrees, their weight satisfies $\theta^*_e \ge 1-\Omega(\xi)$. 

Below, we will prove the following central Lemma:
\begin{lemma}\label{lem:main-alg}
Let $\eta > 0$ be a parameter and $T^*$ be an unknown tree with weight vector $\theta^*$ and pairwise correlations $\alpha^*$ between the leaves.
Suppose we execute Algorithm~\ref{alg:unkonwn-top} with the following inputs:
\begin{itemize}
    \item Pairwise correlations $\hat{\alpha}_{ij}$ that satisfy $|\hat{\alpha}_{ij}-\alpha^*_{ij}| \le \eta$.
    \item Parameters $\delta,\xi$ such that $\xi \delta \ge \eta$ and $\xi = C_1/n$, for some universal constant $C_1 > 0$.
    \item Parameter $\hat{\eta}$ that satisfies $\hat{\eta} = C_2 n \xi + \eta$, for some universal constant $C_2 > 0$.
\end{itemize}
Recall that the algorithm outputs a weighted forest, and denote the forest by $\tilde{F}$ and the weights by $\hat{\theta}$.
Then, $\TV(\Pr_{T^*,\theta^*}[x],\Pr_{\tilde{F},\tilde{\theta}}[x]) \in O(n^3\xi + n^2 \sqrt{\delta} + \eta)$. (We note that an Ising model over a forest is defined by taking the different tree components to be independent.)
\end{lemma}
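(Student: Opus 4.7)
The proof combines three ingredients: the structural guarantees of the \cite{daskalakis2009phylogenies} algorithm from Theorem~\ref{thm:DMR}, the known-topology analysis of Algorithm~\ref{alg:fixed} via Theorem~\ref{thm:probabilistic_fixed} applied subtree by subtree, and a decoupling bound that replaces the joint leaf distribution on $T^*$ with the product of its marginals over the leaf partition.

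First I apply Theorem~\ref{thm:DMR} to the estimates $\hat\alpha$ with parameters $\xi,\delta$: this returns the forest $\tilde F=\sqcup_r \tilde T_r$ with leaf partition $\{S_r\}$, in which each $\tilde T_r$ is obtained from the induced subtree $T_r$ of $T^*$ on $S_r$ by contracting edges whose $\theta^*$-weight exceeds $1-C\xi$ (together with degree-$2$ contractions), edges shared between two or more induced subtrees have weight $\ge 1-C\xi$, and any pair of leaves in distinct parts satisfies $|\alpha^*_{ij}|\le C\sqrt\delta$. For each $\tilde T_r$, Algorithm~\ref{alg:fixed} is run with parameter $\hat\eta$: the LP is feasible because assigning to each edge of $\tilde T_r$ the product of $\theta^*$-weights along the corresponding $T_r$-path induces pairwise correlations exactly equal to $\alpha^*|_{S_r\times S_r}$, hence within $\eta\le\hat\eta$ of $\hat\alpha$. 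Repeating the argument of the known-topology proof of Theorem~\ref{thm:alg} (which combines the LP guarantee with Theorem~\ref{thm:probabilistic_fixed}) yields
\[
\TV(\hat\mu_r,\mu^*_r)\le O(|S_r|^2\hat\eta),\qquad\mu^*_r:=\Pr_{T^*,\theta^*}|_{S_r},
\]
and summing over $r$ (using $\sum_r |S_r|^2\le n^2$) bounds $\sum_r\TV(\hat\mu_r,\mu^*_r)$ by $O(n^2\hat\eta)=O(n^3\xi+n^2\eta)$.

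The crux is the decoupling bound $\TV(\Pr_{T^*,\theta^*},\prod_r\mu^*_r)$. I would first pass to an intermediate pair $(T^{**},\theta^{**})$ by raising every shared-edge weight from $\ge 1-C\xi$ to $1$ and then contracting it: by Lemma~\ref{lem:change-one-edge} each weight bump costs $O(\xi)$ in TV, summing to $O(n\xi)$ across the at most $n-1$ shared edges, and the subsequent contractions are cost-free by Lemma~\ref{lem:contract-no-TV}. In $T^{**}$ each edge lies in at most one induced subtree, while the cross-part correlations $|\alpha^{**}_{ij}|\le O(\sqrt\delta)$ are preserved. Then, using the multilinear representation of Lemma~\ref{l:basic_extend_intro}, I would compare $\Pr_{T^{**},\theta^{**}}$ with $\prod_r\mu^*_r$ coefficient by coefficient: every non-cancelling monomial contains at least one cross-subtree correlation factor bounded by $O(\sqrt\delta)$, and a telescoping argument in the spirit of Lemma~\ref{l:fixed_induction}---iterating over the $\binom{n}{2}$ cross-subtree correlation coordinates one at a time and invoking the $f_x$-Lipschitzness underlying Theorem~\ref{thm:probabilistic_fixed}---bounds the total by $O(n^2\sqrt\delta)$.

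Combining via the triangle inequality,
\[
\TV(\Pr_{T^*,\theta^*},\Pr_{\tilde F,\tilde\theta})\le \TV\!\left(\Pr_{T^*,\theta^*},\textstyle\prod_r\mu^*_r\right)+\sum_r\TV(\mu^*_r,\hat\mu_r)\in O(n^3\xi+n^2\sqrt\delta+\eta).
\]
The main obstacle is the decoupling stage: since $\prod_r\mu^*_r$ is generally \emph{not} the leaf distribution of any single Ising model on $T^{**}$, a direct application of Theorem~\ref{thm:probabilistic_fixed} is unavailable, and one must handle even subsets $S$ whose closest-relative matching in $T^{**}$ pairs leaves across distinct parts even when $|S\cap S_r|$ is even for every $r$---precisely the phenomenon that distinguishes the joint distribution from the product of its marginals, and where the combinatorial bookkeeping driven by the cross-subtree correlation bound must be done carefully.
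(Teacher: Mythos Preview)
Your three-step decomposition---structural guarantees from \cite{daskalakis2009phylogenies}, per-subtree error via Algorithm~\ref{alg:fixed} and Theorem~\ref{thm:probabilistic_fixed}, then a decoupling bound---is the same skeleton the paper uses. One minor imprecision: the LP feasibility witness you describe does \emph{not} induce correlations ``exactly equal'' to $\alpha^*|_{S_r\times S_r}$. The tree $\tilde T_r$ is obtained from $T_r$ by contracting edges of weight $\ge 1-C\xi$, not merely degree-$2$ paths; any weight assignment on $\tilde T_r$ therefore drops those factors and only achieves correlations within $O(n\xi)$ of $\alpha^*$. This is precisely why $\hat\eta$ must be $\Theta(n\xi)+\eta$ and is the content of the paper's Lemma~\ref{lem:lin-prog-unknown}. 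Your per-subtree bound survives once this is corrected.

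The real gap is the decoupling step, which you flag yourself but do not resolve. Your proposed telescoping over cross-subtree $\alpha_{ij}$ coordinates does not land on $\prod_r\mu^*_r$: setting every cross-subtree $\alpha_{ij}$ to $0$ in $f_x^{T^{**}}$ kills the coefficients $\alpha_S^{T^{**}}$ for those even sets $S$ whose closest-relative matching in $T^{**}$ crosses parts, whereas the product distribution assigns those same $S$ the nonzero coefficient $\prod_r\alpha_{S\cap S_r}^{T_r}$. Moreover, after contracting shared edges $T^{**}$ has internal vertices of degree $>3$, so the closest-relative matching is no longer unique and $f_x^{T^{**}}$ is not even well-defined in the sense of Lemma~\ref{l:basic_extend}; the boundary between two subtrees now sits at a \emph{vertex}, not an edge, so there is no edge to cut.

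The paper's fix is a vertex-splitting construction: for each internal vertex $v$ of the contracted tree and each $r$ with $v$ touching $T_r$, introduce a fresh vertex $v^r$, connect $v^r$ to $v$ by a weight-$1$ edge, and reroute all $T_r$-edges at $v$ through $v^r$. Call the result $T^{(2)}$. This has the same leaf distribution (Lemma~\ref{lem:contract-no-TV}), and now the product distribution is obtained on the \emph{same topology} by setting to $0$ the edges incident to the original vertices $v$. Those deletions alter only cross-subtree correlations, each $O(\sqrt\delta)$, so Theorem~\ref{thm:probabilistic_fixed} applies directly and gives the $O(n^2\sqrt\delta)$ decoupling cost without any further combinatorics. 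This construction is the missing idea in your proposal.
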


The remainder of this section is dedicated to the proof of Lemma~\ref{lem:main-alg}. In Section~\ref{sec:alg-conclusion} we conclude the proof of Theorem~\ref{thm:alg} (unknown topology), by substituting the parameters $\xi,\delta$ and $\eta$ appropriately using the finite-sample estimates.

To analyze the algorithm, we will create auxiliary trees $T^{(i)} = (V^{(i)},E^{(i)})$ with weight function $\theta^{(i)}$ and pairwise correlations $\alpha^{(i)}_{ij}$ (for $i = \{1,2,3\}$) that interpolate between the true parameters $T^*, \theta^*$, and the Algorithm~\ref{alg:unkonwn-top}'s output $(\tilde F, \tilde\theta)$.
To bound the total variation distance between $(T^*, \theta^*)$ and $(\tilde F, \tilde\theta)$, we apply triangle inequality after individually bounding the total variation of the leaf distributions (i) between $(T^*, \theta^*)$ and $(T^{(1)}, \theta^{(1)})$, (ii) between $(T^{(1)}, \theta^{(1)})$ and $(T^{(2)}, \theta^{(2)})$, (iii) between $(T^{(2)}, \theta^{(2)})$ and $(T^{(3)}, \theta^{(3)})$, and (iv) between $(T^{(3)}, \theta^{(3)})$ and $(\tilde F, \tilde\theta)$.

Before defining the first intermediate distribution, $(T^{(1)},\theta^{(1)})$, we recall some definitions. First, recall that the algorithm of \cite{daskalakis2009phylogenies} returns a forest whose connected components are $(\tilde{T}_1,\dots,\tilde{T}_R)$. Each $\tilde{T}_r$ is a modification of $T_r$, which is defined as the subtree of $T^*$ that is induced by the set of leaves of $\tilde{T}_r$. As an intermediate step, we start by modifying $T^*$ according to the induced subtrees $T_1,\dots,T_R$. Note that initially, we consider $T_r$ instead of $\tilde{T}_r$, as $T_r$ is \emph{closer} to $T^*$ than $\tilde{T}_r$.

We start by defining $T^{(1)}$ as the tree that is obtained from $T^*$ by contracting all the edges that appear in more than one induced tree $T_r$ Further, the edge-weight for $\theta^{(1)}$ equals $\theta^*$ on all the remaining (non-contracted) edges. Note that $T^{(1)}$ still has a single connected component. (For example, in Figure~\ref{fig:appendix13}, we contract the purple-dashed lines in (a) because they appear both in the induces red and blue trees. This results in the tree depicted in (b).)

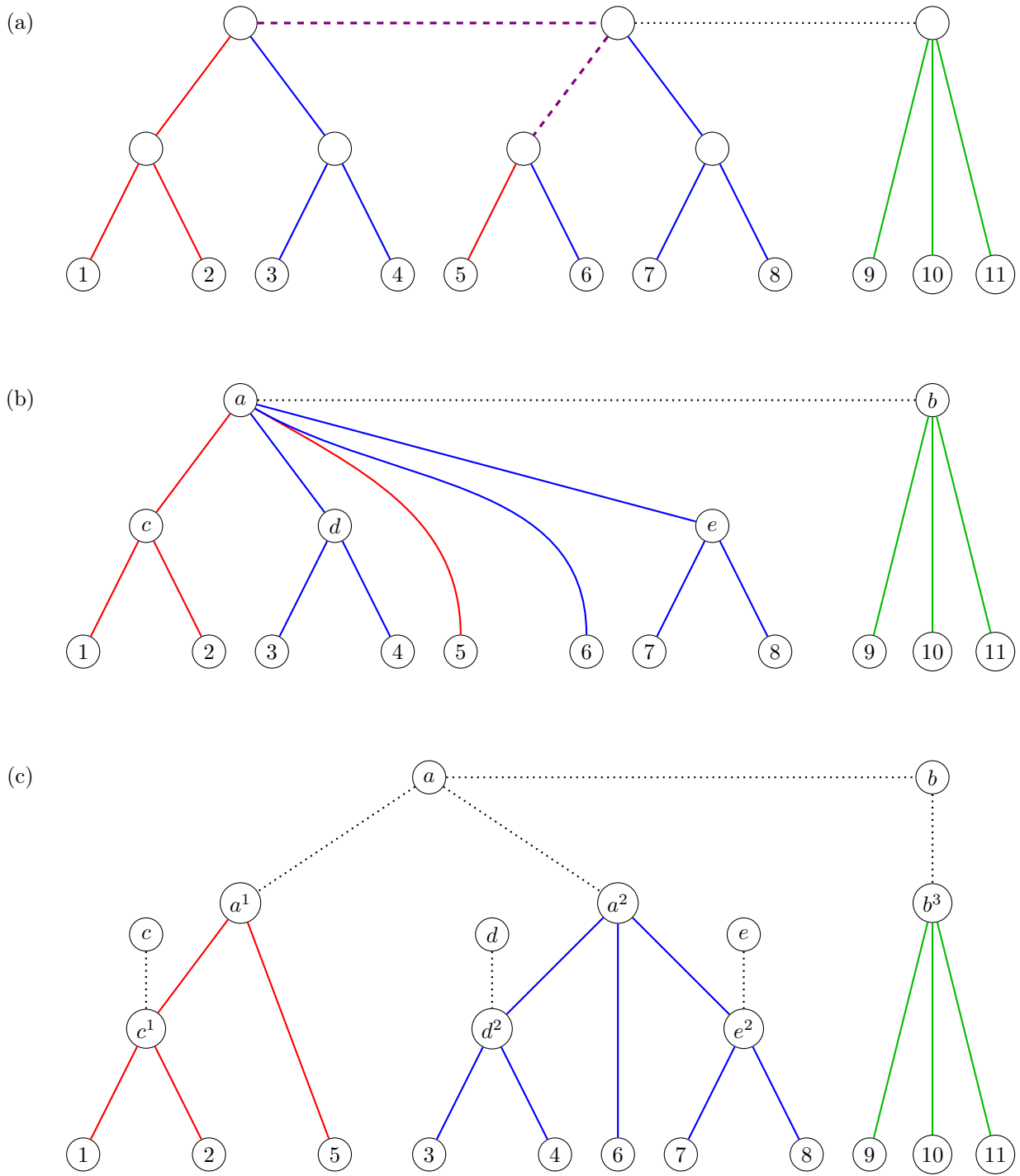
\begin{figure}[htbp]
\centering
\resizebox{\linewidth}{!}{%
\begin{tikzpicture}
%
%
\node[draw, circle, minimum size=15pt, inner sep=2pt] at (0,0) (v1-a) {};
\node[draw, circle, minimum size=15pt, inner sep=2pt] at ($(v1-a) + (-1.5,-2)$) (v2-a) {};
\node[draw, circle, minimum size=15pt, inner sep=2pt] at ($(v1-a) + (1.5,-2)$) (v3-a) {};
\node[draw, circle, minimum size=15pt, inner sep=2pt] at ($(v2-a) + (-1,-2)$) (v4-a) {\small $1$};
\node[draw, circle, minimum size=15pt, inner sep=2pt] at ($(v2-a) + (1,-2)$) (v5-a) {\small $2$};
\node[draw, circle, minimum size=15pt, inner sep=2pt] at ($(v3-a) + (-1,-2)$) (v6-a) {\small $3$};
\node[draw, circle, minimum size=15pt, inner sep=2pt] at ($(v3-a) + (1,-2)$) (v7-a) {\small $4$};

\node[draw, circle, minimum size=15pt, inner sep=2pt] at ($(v1-a) + (6,0)$) (v8-a) {};
\node[draw, circle, minimum size=15pt, inner sep=2pt] at ($(v8-a) + (-1.5,-2)$) (v9-a) {};
\node[draw, circle, minimum size=15pt, inner sep=2pt] at ($(v8-a) + (1.5,-2)$) (v10-a) {};
\node[draw, circle, minimum size=15pt, inner sep=2pt] at ($(v9-a) + (-1,-2)$) (v11-a) {\small $5$};
\node[draw, circle, minimum size=15pt, inner sep=2pt] at ($(v9-a) + (1,-2)$) (v12-a) {\small $6$};
\node[draw, circle, minimum size=15pt, inner sep=2pt] at ($(v10-a) + (-1,-2)$) (v13-a) {\small $7$};
\node[draw, circle, minimum size=15pt, inner sep=2pt] at ($(v10-a) + (1,-2)$) (v14-a) {\small $8$};

\node[draw, circle, minimum size=15pt, inner sep=2pt] at ($(v8-a) + (5,0)$) (v15-a) {};
\node[draw, circle, minimum size=15pt, inner sep=2pt] at ($(v15-a) + (-1,-4)$) (v16-a) {\small $9$};
\node[draw, circle, minimum size=15pt, inner sep=2pt] at ($(v15-a) +
(0,-4)$) (v17-a) {\small $10$};
\node[draw, circle, minimum size=15pt, inner sep=2pt] at ($(v15-a) +
(1,-4)$) (v18-a) {\small $11$};

\draw[thick, red] (v1-a) -- (v2-a);
\draw[thick, blue] (v1-a) -- (v3-a);
\draw[thick, red] (v2-a) -- (v4-a);
\draw[thick, red] (v2-a) -- (v5-a);
\draw[thick, blue] (v3-a) -- (v6-a);
\draw[thick, blue] (v3-a) -- (v7-a);
\draw[very thick, dashed, red!50!blue] (v1-a) -- (v8-a);
\draw[very thick, dashed, red!50!blue] (v8-a) -- (v9-a);
\draw[thick, blue] (v8-a) -- (v10-a);
\draw[thick, red] (v9-a) -- (v11-a);
\draw[thick, blue] (v9-a) -- (v12-a);
\draw[thick, blue] (v10-a) -- (v13-a);
\draw[thick, blue] (v10-a) -- (v14-a);
\draw[thick, dotted] (v8-a) -- (v15-a);
\draw[thick, green!75!black] (v15-a) -- (v16-a);
\draw[thick, green!75!black] (v15-a) -- (v17-a);
\draw[thick, green!75!black] (v15-a) -- (v18-a);

%
%
\node[draw, circle, minimum size=15pt, inner sep=2pt] at (0,-6) (v1-b) {\small $a$};
\node[draw, circle, minimum size=15pt, inner sep=2pt] at ($(v1-b) + (-1.5,-2)$) (v2-b) {\small $c$};
\node[draw, circle, minimum size=15pt, inner sep=2pt] at ($(v1-b) + (1.5,-2)$) (v3-b) {\small $d$};
\node[draw, circle, minimum size=15pt, inner sep=2pt] at ($(v2-b) + (-1,-2)$) (v4-b) {\small $1$};
\node[draw, circle, minimum size=15pt, inner sep=2pt] at ($(v2-b) + (1,-2)$) (v5-b) {\small $2$};
\node[draw, circle, minimum size=15pt, inner sep=2pt] at ($(v3-b) + (-1,-2)$) (v6-b) {\small $3$};
\node[draw, circle, minimum size=15pt, inner sep=2pt] at ($(v3-b) + (1,-2)$) (v7-b) {\small $4$};

\node[minimum size=15pt, inner sep=2pt] at ($(v1-b) + (6,0)$) (v8-b) {};
\node[minimum size=15pt, inner sep=2pt] at ($(v8-b) + (-1.5,-2)$) (v9-b) {};
\node[draw, circle, minimum size=15pt, inner sep=2pt] at ($(v8-b) + (1.5,-2)$) (v10-b) {\small $e$};
\node[draw, circle, minimum size=15pt, inner sep=2pt] at ($(v9-b) + (-1,-2)$) (v11-b) {\small $5$};
\node[draw, circle, minimum size=15pt, inner sep=2pt] at ($(v9-b) + (1,-2)$) (v12-b) {\small $6$};
\node[draw, circle, minimum size=15pt, inner sep=2pt] at ($(v10-b) + (-1,-2)$) (v13-b) {\small $7$};
\node[draw, circle, minimum size=15pt, inner sep=2pt] at ($(v10-b) + (1,-2)$) (v14-b) {\small $8$};

\node[draw, circle, minimum size=15pt, inner sep=2pt] at ($(v8-b) + (5,0)$) (v15-b) {\small $b$};
\node[draw, circle, minimum size=15pt, inner sep=2pt] at ($(v15-b) + (-1,-4)$) (v16-b) {\small $9$};
\node[draw, circle, minimum size=15pt, inner sep=2pt] at ($(v15-b) +
(0,-4)$) (v17-b) {\small $10$};
\node[draw, circle, minimum size=15pt, inner sep=2pt] at ($(v15-b) +
(1,-4)$) (v18-b) {\small $11$};

\draw[thick, red] (v1-b) -- (v2-b);
\draw[thick, blue] (v1-b) -- (v3-b);
\draw[thick, red] (v2-b) -- (v4-b);
\draw[thick, red] (v2-b) -- (v5-b);
\draw[thick, blue] (v3-b) -- (v6-b);
\draw[thick, blue] (v3-b) -- (v7-b);
\draw[thick, blue] (v1-b) -- (v10-b);
\draw[thick, red] (v1-b)  to[out=330,in=90] (v11-b);
\draw[thick, blue] (v1-b) to[out=330,in=90] (v12-b);
\draw[thick, blue] (v10-b) -- (v13-b);
\draw[thick, blue] (v10-b) -- (v14-b);
\draw[thick, dotted] (v1-b) -- (v15-b);
\draw[thick, green!75!black] (v15-b) -- (v16-b);
\draw[thick, green!75!black] (v15-b) -- (v17-b);
\draw[thick, green!75!black] (v15-b) -- (v18-b);

%
%
\node[draw, circle, minimum size=15pt, inner sep=2pt] at (0,-14) (v1-c) {\small $a^1$};
\node[draw, circle, minimum size=15pt, inner sep=2pt] at ($(v1-c) + (-1.5,-2)$) (c1-c) {\small $c^1$};
\node[draw, circle, minimum size=15pt, inner sep=2pt] at ($(c1-c) + (0,1.5)$) (c0-c) {\small $c$};
\node[draw, circle, minimum size=15pt, inner sep=2pt] at ($(v1-c) + (1.5,-4)$) (v3-c) {\small $5$};
\node[draw, circle, minimum size=15pt, inner sep=2pt] at ($(c1-c) + (-1,-2)$) (v4-c) {\small $1$};
\node[draw, circle, minimum size=15pt, inner sep=2pt] at ($(c1-c) + (1,-2)$) (v5-c) {\small $2$};

\node[draw, circle, minimum size=15pt, inner sep=2pt] at ($(v1-c) + (6,0)$) (v8-c) {\small $a^2$};
\node[draw, circle, minimum size=15pt, inner sep=2pt] at ($(v8-c) + (-2,-2)$) (d2-c) {\small $d^2$};
\node[draw, circle, minimum size=15pt, inner sep=2pt] at ($(d2-c) + (0,1.5)$) (d0-c) {\small $d$};
\node[draw, circle, minimum size=15pt, inner sep=2pt] at ($(v8-c) + (0,-4)$) (mid-c) {\small $6$};
\node[draw, circle, minimum size=15pt, inner sep=2pt] at ($(v8-c) + (2,-2)$) (e2-c) {\small $e^2$};
\node[draw, circle, minimum size=15pt, inner sep=2pt] at ($(e2-c) + (0,1.5)$) (e0-c) {\small $e$};
\node[draw, circle, minimum size=15pt, inner sep=2pt] at ($(d2-c) + (-1,-2)$) (v11-c) {\small $3$};
\node[draw, circle, minimum size=15pt, inner sep=2pt] at ($(d2-c) + (1,-2)$) (v12-c) {\small $4$};
\node[draw, circle, minimum size=15pt, inner sep=2pt] at ($(e2-c) + (-1,-2)$) (v13-c) {\small $7$};
\node[draw, circle, minimum size=15pt, inner sep=2pt] at ($(e2-c) + (1,-2)$) (v14-c) {\small $8$};

\node[draw, circle, minimum size=15pt, inner sep=2pt] at ($(v8-c) + (5,0)$) (v15-c) {\small $b^3$};
\node[draw, circle, minimum size=15pt, inner sep=2pt] at ($(v15-c) + (-1,-4)$) (v16-c) {\small $9$};
\node[draw, circle, minimum size=15pt, inner sep=2pt] at ($(v15-c) +
(0,-4)$) (v17-c) {\small $10$};
\node[draw, circle, minimum size=15pt, inner sep=2pt] at ($(v15-c) +
(1,-4)$) (v18-c) {\small $11$};

\node[draw, circle, minimum size=15pt, inner sep=2pt] at ($(v1-c)!0.5!(v8-c) + (0,2)$) (a-c) {\small $a$};
\node[draw, circle, minimum size=15pt, inner sep=2pt] at ($(v15-c) + (0,2)$) (b-c) {\small $b$};

\draw[thick, red] (v1-c) -- (c1-c);
\draw[thick, red] (v1-c) -- (v3-c);
\draw[thick, red] (c1-c) -- (v4-c);
\draw[thick, red] (c1-c) -- (v5-c);
\draw[thick, dotted] (c1-c) -- (c0-c);
\draw[thick, blue] (v8-c) -- (d2-c);
\draw[thick, blue] (v8-c) -- (mid-c);
\draw[thick, blue] (v8-c) -- (e2-c);
\draw[thick, dotted] (d0-c) -- (d2-c);
\draw[thick, blue] (d2-c) -- (v11-c);
\draw[thick, blue] (d2-c) -- (v12-c);
\draw[thick, dotted] (e0-c) -- (e2-c);
\draw[thick, blue] (e2-c) -- (v13-c);
\draw[thick, blue] (e2-c) -- (v14-c);
\draw[thick, green!75!black] (v15-c) -- (v16-c);
\draw[thick, green!75!black] (v15-c) -- (v17-c);
\draw[thick, green!75!black] (v15-c) -- (v18-c);
\draw[thick, dotted] (a-c) -- (b-c);
\draw[thick, dotted] (a-c) -- (v1-c);
\draw[thick, dotted] (a-c) -- (v8-c);
\draw[thick, dotted] (b-c) -- (v15-c);

%
%
\node[] at (-3.5,0) {\small (a)};
\node[] at (-3.5,-6) {\small (b)};
\node[] at (-3.5,-12) {\small (c)};
\end{tikzpicture}
}
\caption{A depiction of the intermediate trees, $T^{(1)},T^{(2)}$ and $T^{(3)}$ in the analysis of the algorithm. The original tree $T^*$ contains all the edges in (a). The output forest partitions the leaves into three connected components: $S_1 = \{1,2,5\}$, $S_2 = \{3,4,6,7,8\}$ and $S_3 = \{9,10,11\}$. The tree $T_1$, which is the subtree of $T^*$ induced by $S_1$, contains the red and purple-dashed edged. The tree $T_2$ contains the blue and purple-dashed edges. And $T_3$ contains the green edges. In (b), we depict $T^{(2)}$, which is obtained by contracting all the edges that are shared among multiple subtrees $T_r$, for $r=1,2,3$. In this example, those are the purple-dashed edges. In (c), we depict $T^{(2)}$, which is obtained from $T^{(2)}$ by adding auxiliary nodes: for each internal node $v$ and each subtree $T_r$ that it touches, we create a new node $v^r$, connect it with $v$ with an edge of weight $1$, and further, connect to $v^r$ all the edges in $T_r$ that were incident with $v$ in $T^{(1)}$. The forest $T^{(3)}$ is obtained from $T^{(2)}$ by removing all the edges that are incident with the nodes $a,b,c,d,e$. These are the dashed edges.}
\label{fig:appendix13}
\end{figure}

We would like to bound the total variation distance between $\Pr_{T^*,\theta^*}[x]$ and $\Pr_{T^{(1)},\theta^{(1)}}[x]$. Notice that for each contracted edge $e$, we have from Theorem~\ref{thm:DMR} that $\theta_e \ge 1-C\xi$ for some universal constant $C>0$. Contracting each such edge $e$ is equivalent to modifying $\theta_e$ to equal $1$, as argued in Lemma~\ref{lem:contract-no-TV}. Hence, the total variation distance for each contraction is bounded by $C\xi/2$ from Lemma~\ref{lem:change-one-edge}. By the triangle inequality, since we contract at most $O(n)$ edges, then $\TV(\Pr_{T^*,\theta^*}[x],\Pr_{T^{(1)},\theta^{(1)}}[x]) \le O(n\xi)$.

Next, we continue to present the second intermediate model, parameterized by $T^{(2)}=(V^{(2)},E^{(2)})$ and $\theta^{(2)}$. To obtain $T^{(2)}=(V^{(2)},E^{(2)})$ and $\theta^{(2)}$, recall that each vertex of $T^{(1)}=(V^{(1)},E^{(1)})$ may correspond to multiple vertices of $T^*$ since $T^{(1)}$ was obtained by contracting edges in $T^*$.
For any $v\in V^{(1)}$, we denote by $A_v$ the set of preimages of $v$ under the transformation from $T^*$ to $T^{(1)}$ (see Definition~\ref{def:preimages} and the paragraph above this definition). In other words, $A_v$ is the set of nodes of $T^*$ that were contracted into $v$. Denote by $\rho(v)$ the set of indices $r\in \{1,\dots,R\}$ of trees $T_r$ that intersect a node that was contracted into $v$. In other words, $\rho(v)$ is the set of indices $r$ such that $A_v \cap V_r \ne \emptyset$, where $V_r$ is the set of vertices of the subtree $T_r$.
Intuitively, for the construction of $T^{(2)}$, we would like to make $\rho(v)$ copies of node $v$, each for a subtree that contains $v$. To be precise, $T^{(2)}$ is constructed as follows:
\begin{itemize}
    \item The set of vertices $V^{(2)}$ of $T^{(2)}$ is obtained from $V^{(1)}$ by adding a new vertex $v^r$, for each $v$ and $r$ such that $r \in \rho(v)$.
    \item For any $v\in V^{(1)}$ and $r\in \rho(v)$, define an edge $(v,v^r)$ of weight $\theta^{(2)}(v,v^r)=1$.
    \item For any edge $(u,v)\in E^{(1)}$, define a new edge in $E^{(2)}$ according to the following considerations:
    \begin{itemize}
        \item If $|\rho(u)\cap\rho(v)| = 1$, denote $\{r\} = \rho(u)\cap\rho(v)$ and add an edge $(u^r,v^r)$ in $E^{(2)}$, with weight $\theta^{(2)}_{(u^r,v^r)} = \theta^{(1)}_{(u,v)}$. This replaces the edge $(u,v)$.
        \item If $|\rho(u)\cap\rho(v)| = \emptyset$, add an edge $(u,v)$ to $E^{(2)}$ with weight $\theta^{(2)}_{(u,v)} = \theta^{(1)}_{(u,v)}$
        \item It is impossible that $|\rho(u)\cap\rho(v)| > 1$, otherwise, $(u,v)$ would have been contracted.
    \end{itemize}
\end{itemize}
(See Figure~\ref{fig:appendix13} for an example of a tree $T^{(2)}$.)
We note that $T^{(1)}$ can be obtained from $T^{(2)}$ by contracting all the edges $(v,v^r)$ for $v\in V^{(1)}$ and $r\in \rho(v)$.
Hence, from Lemma~\ref{lem:contract-no-TV}, $\Pr_{T^{(2)},\theta^{(2)}}[x] = \Pr_{T^{(1)},\theta^{(1)}}[x]$ for all values $x$ on the leaves. In particular,
\[
\TV\lp(\Pr_{T^{(2)},\theta^{(2)}}[x], \Pr_{T^{(1)},\theta^{(1)}}[x]\rp) = 0.
\]

Next, we define $T^{(3)}$ and $\theta^{(3)}$. Recall that the vertices of $T^{(2)}$ are of two types: (1) vertices that are copies of those in $V^{(1)}$; and (2) vertices $v^r$ for $v \in V^{(1)}$ and $r \in \rho(v)$. Then, $T^{(3)}$ is obtained from $T^{(2)}$ by removing all the edges incident to the vertices of category (1), as depicted in Figure~\ref{fig:appendix13}~(c). This creates a \emph{forest}, and we remove from this forest each connected component that is disconnected from the leaves. We prove the following lemma:
\begin{lemma}
There are $R$ connected components in $T^{(3)}$, and the sets of leaves of the connected components are exactly the sets of leaves of $T_1,\dots,T_R$. Namely, for each $T_r$ there exists one connected component of $T^{(3)}$ that has the same leaf-set as $T_r$.
\end{lemma}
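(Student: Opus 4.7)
The plan is to analyze how the transition from $T^{(2)}$ to $T^{(3)}$ partitions vertices according to the superscripts $r$ inherited from the copies $v^r$. First I will classify the edges of $T^{(2)}$ into three kinds based on the construction: (i) the weight-$1$ ``lift'' edges $(v,v^r)$ for $v\in V^{(1)}$ and $r\in\rho(v)$; (ii) ``Case A'' edges $(u^r,v^r)$, arising from edges $(u,v)\in E^{(1)}$ with $\rho(u)\cap\rho(v)=\{r\}$; and (iii) ``Case B'' edges $(u,v)$, arising from edges $(u,v)\in E^{(1)}$ with $\rho(u)\cap\rho(v)=\emptyset$. Observe that deleting every edge incident to a category-$(1)$ vertex (i.e., a vertex of $V^{(1)}$) removes every edge of types (i) and (iii), leaving only edges of type (ii) in $T^{(3)}$. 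In particular, each surviving edge has both endpoints with the \emph{same} superscript $r$, so the superscript is an invariant on connected components, and distinct values of $r$ yield disjoint components.

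Next I want to argue that for each $r\in\{1,\dots,R\}$ the category-$(2)$ vertices with superscript $r$, together with the leaves of $S_r$, form a single connected component whose leaf set is exactly $S_r$. The crucial auxiliary fact is: for any $(u,v)\in E^{(1)}$, one has $\rho(u)\cap\rho(v)=\{r\}$ precisely when the corresponding edge $(u_0,v_0)\in E^*$ lies in the induced subtree $T_r$. For the forward direction, any $r\in\rho(u)\cap\rho(v)$ provides preimages of $u$ and $v$ in $V_r$; since $T_r$ is a subtree of $T^*$ containing both, the unique $T_r$-path between them must traverse the edge $(u_0,v_0)$ (otherwise that path would project to a path in $T^{(1)}$ between $u$ and $v$ avoiding the edge $(u,v)$, contradicting that $T^{(1)}$ is a tree), forcing $(u_0,v_0)\in T_r$; uniqueness of $r$ then follows from the contraction rule, which eliminated every edge appearing in more than one $T_r$. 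The reverse direction is immediate.

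With this in hand, the subgraph of $T^{(3)}$ at superscript $r$ is naturally isomorphic to $T_r$ \emph{modulo} the edges of $T_r$ whose corresponding edges in $T^*$ were contracted during the construction of $T^{(1)}$: those contracted edges have their endpoints merged in $T^{(1)}$ and hence share a single copy $v^r$ in $T^{(2)}$. Since edge contractions preserve connectedness and $T_r$ is a tree, this subgraph is connected. Finally, each leaf $\ell$ of $T^*$ belongs to exactly one $S_r$ because $\{S_1,\dots,S_R\}$ partitions the leaves, so $\rho(\ell)=\{r\}$ and $\ell$ has a unique copy which places it in the superscript-$r$ component (identifying $\ell$ with $\ell^r$, as in Figure~\ref{fig:appendix13}~(c)). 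After removing the singleton components consisting of isolated internal category-$(1)$ vertices, exactly $R$ components remain, with leaf sets $S_1,\dots,S_R$, as desired.

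The main obstacle in this plan is the auxiliary fact about $\rho(u)\cap\rho(v)$: proving it cleanly requires carefully tracking how contractions interact with induced subtrees and, in particular, exploiting the uniqueness of paths in the tree $T^{(1)}$ to force the specific edge $(u,v)$ to lie in $T_r$. The remaining bookkeeping---classifying edge types in $T^{(2)}$ and verifying leaf placement---is routine once this fact is established.
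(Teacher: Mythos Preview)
Your proof is correct and takes essentially the same approach as the paper's: both argue that only edges of the form $(u^r,v^r)$ survive in $T^{(3)}$, so leaves in the same $S_r$ stay connected (via the image of the $T_r$-path between them) while leaves in different $S_r$ are separated by superscript. Your treatment is slightly more careful in that you isolate and prove the auxiliary fact about $\rho(u)\cap\rho(v)$ explicitly---the paper relies on it implicitly both in the construction of $T^{(2)}$ (where it asserts $|\rho(u)\cap\rho(v)|>1$ is impossible) and in the lemma's proof---and you also flag the leaf-identification technicality that the paper's text elides but its figure reflects.
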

\begin{proof}
First, we argue that any two leaves that are in the same tree $T_r$, are also in the same connected component of $T^{(3)}$. Let $i,j$ be two leaves of $T_r$ and consider the edges on the path $P$ between them in $T_r$. For any such edge $(u,v)$, there are two possibilities: (1) This edge was contracted at some point, and there is some $w\in T^{(1)}$ such that $u,v\in A_w$. (2) This edge was not contracted, and there exist some $(w,z)$ in $E^{(1)}$ that is the analogue of $(u,v)$ in $T^{(1)}$ (see Definition~\ref{def:edge-analogue} and its preceding discussion). By definition of $T^{(2)}$, this edge is moved in $T^{(2)}$ to connect $(w^r,z^r)$. In particular, the path from $i$ to $j$ in $T^{(2)}$ contains only vertices of the form $q^r$. Hence, this path is not disconnected in $T^{(3)}$ from the graph.

Next, we argue why two leaves $i,j$ in two different trees $T_r$ and $T_s$, respectively, cannot be connected in $T^{(3)}$. Indeed, the parent of leaf $i$ in $T^{(2)}$ must be of the form $v^r$ while the parent of $j$ must be of the form $v^s$. Notice that in $T^{(3)}$ there is no edge connecting a vertex from $v^r$ with a vertex from $v^3$, hence, these two leaves are necessarily disconnected.

\end{proof}

We would like to use Theorem~\ref{thm:probabilistic} to bound the total variation distance between $\Pr_{T^{(2)},\theta^{(2)}}[x]$ and $\Pr_{T^{(3)},\theta^{(3)}}[x]$, by analyzing the change in the pairwise correlations. To do so, we use the fact that the pairwise correlations that have changed are only those between leaves $i,j$ of different trees $T_r$ and $T_s$, respectively, and by Theorem~\ref{thm:DMR} we have that in $T^*$ those have correlation $|\alpha^*_{ij}|\le C\sqrt{\delta}$ where $\delta$ is defined in Theorem~\ref{thm:DMR} and $C>0$ is a universal constant. Notice that, though, the correlation between two leaves in $T^{(2)}$ can be greater than their correlation in $T^*$. As argued above, the correlation in $T^{(2)}$ equals that in $T^{(1)}$. To compare between the correlations in $T^{(1)}$ and $T^*$, recall that $T^{(1)}$ is obtained from $T$ by contracting edges. Yet, any contracted edge has weight at least $1-C\xi$, hence, contracting the edge can increase the correlation by at most $1/(1-C\xi)$. Since any path between two leaves can contain at most $n$ edges, the contraction can increase the correlation by at most $1/(1-C\xi)^n$. By assumption, $\xi \le 1/(Cn)$, hence this factor is at most a constant. Under that assumption, the correlation in $T^{(2)}$ between any two leaves of different subtrees $T_r$ is bounded by $O(\sqrt{\delta})$. Since in $T^{(3)}$ their correlation becomes $0$, this implies that the pairwise correlations change by at most $O(\sqrt{\delta})$.

Hence, for any $i,j$, $|\alpha^{(2)}_{ij} - \alpha^{(3)}_{ij}| \le O(\sqrt{\delta})$. Since $T^{(2)}$ and $T^{(3)}$ both share the same underlying graph (as removing edges can be done by just replacing the weight with $0$), it follows from Theorem~\ref{thm:probabilistic} that the total variation distance between the leaf distributions of $T^{(2)}$ and $T^{(3)}$ is bounded by $O(n^2\sqrt{\delta})$.

Next, we would like to bound the total variation distance between the leaf distribution of $T^{(3)}$ and the output of the algorithm. Since $T^{(3)}$ and $\tilde{F}$ have the same connected components, the leaf distributions of both factorize the same. In particular, the leaf-sets $S_1,\dots,S_R$ of $T_1,\dots,T_r$ are independent in these product distributions. To bound the total variation distance between the leaf distributions of $T^{(3)}$ and $\tilde{F}$, it suffices to bound the total variation distance with respect to each connected component separately, and then sum the bounds for each component. 
Hence, we will fix some $r \in \{1,\dots,R\}$.
For this end, let us analyze the weights $\tilde{\theta}_e$ given by Algorithm~\ref{alg:unkonwn-top} on the tree $\tilde{T}_r$. Recall that the last step of this algorithm is to use Algorithm~\ref{alg:fixed} on the tree $\tilde{T}_r$ with some parameter $\eta'>0$ and correlations $\hat{\alpha}_{ij}$ that were estmated from samples of the original tree $T^*$. Further, recall that Algorithm~\ref{alg:fixed} is guaranteed to return some weights $\tilde{\theta}_e$ on the edges, such that the pairwise correlations between the leaves, which we denote by $\tilde{\alpha}_{ij}$, are $\eta'$-close to the correlations $\hat{\alpha}_{ij}$ that were given to it as input, namely, $|\tilde{\alpha}_{ij} - \hat{\alpha}_{ij}| \le \eta$ for any pair $i,j$ of leaves. Yet, Algorithm~\ref{alg:fixed} will succeed only if there exist such weights $\tilde{\theta}_e$ that satisfy the above constraint. To that end, we claim the following:
\begin{lemma}\label{lem:lin-prog-unknown}
Fix $r \in \{1,\dots,R\}$.
Then, there exist weights $\theta'_{ij}$ to the edges of $\tilde{T}_r$ such that the corresponding pairwise correlations, $\alpha'_{ij}$ satisfy $|\alpha'_{ij} - \hat{\alpha}_{ij}| \le \eta + O(n\xi)$ for any leaves $i,j\in S_r$.
\end{lemma}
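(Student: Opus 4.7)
The plan is to construct explicit weights $\theta'$ on the edges of $\tilde{T}_r$ by tracking what happens to the true weights $\theta^*$ through the two contraction steps described in Theorem~\ref{thm:DMR}, and then bound the change in pairwise correlations among leaves of $S_r$ by $O(n\xi)$. Combining this with the given bound $|\hat\alpha_{ij} - \alpha^*_{ij}| \le \eta$ via the triangle inequality will yield the claim.

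To construct $\theta'$, I would work with the induced subtree $T_r$ of $T^*$. For any two leaves $i,j \in S_r$, the path $P^{T_r}_{ij}$ coincides with the $T^*$-path between $i$ and $j$ by definition of the induced subtree, so $\alpha^*_{ij} = \prod_{e \in P^{T_r}_{ij}} \theta^*_e$. Let $C_r \subseteq E(T_r)$ be the subset of edges contracted in the first step of Theorem~\ref{thm:DMR}; by that theorem, each such edge satisfies $\theta^*_e \ge 1-C\xi$. Define an auxiliary weight function $\tilde\theta$ on $T_r$ by setting $\tilde\theta_e = 1$ for $e \in C_r$ and $\tilde\theta_e = \theta^*_e$ otherwise. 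The tree $\tilde T_r$ is then obtained from $(T_r,\tilde\theta)$ by (i) contracting the weight-$1$ edges in $C_r$, which preserves every pairwise correlation by Lemma~\ref{lem:contract-no-TV}, and (ii) contracting degree-$2$ nodes, which I would implement by merging each pair of adjacent edges into a single edge whose weight is the product of the two original weights; this also preserves pairwise correlations, since the correlation across any leaf--leaf path is just the product of edge-weights along it. Let $\theta'$ denote the resulting weights on $\tilde T_r$.

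For any $i,j \in S_r$, the pairwise correlation $\alpha'_{ij}$ in $(\tilde T_r,\theta')$ therefore equals $\prod_{e \in P^{T_r}_{ij} \setminus C_r} \theta^*_e$. Setting $W_{ij} := \prod_{e \in P^{T_r}_{ij} \cap C_r} \theta^*_e$, we have $\alpha'_{ij} = \alpha^*_{ij}/W_{ij}$. Since each factor in $W_{ij}$ lies in $[1-C\xi,1]$ and $|P^{T_r}_{ij} \cap C_r| \le n$, Bernoulli's inequality yields $W_{ij} \ge (1-C\xi)^n \ge 1 - Cn\xi$, whence
\[
|\alpha'_{ij} - \alpha^*_{ij}| \;=\; |\alpha^*_{ij}| \cdot \frac{1-W_{ij}}{W_{ij}} \;\le\; \frac{Cn\xi}{1-Cn\xi} \;=\; O(n\xi),
\]
where the final equality uses the hypothesis $\xi = C_1/n$ with $C_1$ small, which keeps $Cn\xi$ bounded away from $1$. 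Adding this to $|\alpha^*_{ij}-\hat\alpha_{ij}|\le \eta$ gives $|\alpha'_{ij} - \hat\alpha_{ij}|\le \eta + O(n\xi)$, as desired.

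The main obstacle is controlling the multiplicative distortion $1/W_{ij}$ that arises from replacing up to $n$ edges of weight close to $1$ by edges of weight exactly $1$: a naive multiplicative bound gives an exponential factor $(1/(1-C\xi))^n$, which is exactly why the hypothesis $\xi = C_1/n$ is essential---it keeps $Cn\xi$ bounded below $1$, so that Bernoulli's inequality converts the potentially exponential blow-up into the desired linear $O(n\xi)$ additive error on each pairwise correlation.
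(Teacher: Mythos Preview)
Your proposal is correct and follows essentially the same approach as the paper: construct $\theta'$ by carrying the true weights $\theta^*$ through the two contraction steps (setting contracted high-weight edges to $1$, and multiplying weights when collapsing degree-$2$ nodes), then bound the resulting additive change in each pairwise correlation by $O(n\xi)$ and finish with the triangle inequality. Your explicit use of the factor $W_{ij}$ and Bernoulli's inequality is a slightly more detailed write-up of the same estimate the paper states verbally, namely that contracting at most $n$ edges each of weight at least $1-C\xi$ perturbs any leaf--leaf correlation by at most $O(n\xi)$.
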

\begin{proof}
Notice that it is sufficient to find weights $\theta'_e$ such that $|\alpha'_{ij} - \alpha^*_{ij}| \le O(n\xi)$, since $|\hat{\alpha}_{ij} - \alpha^*_{ij}|\le \eta$, by the assumption in Lemma~\ref{lem:main-alg}. Since $T_r$ is the subtree induced by the set of leaves $S_r$, our goal is to show that $\alpha'_{ij}$ is $O(n\xi)$-close to the pairwise correlation of $i$ and $j$ accross $T_r$. Hence, this is what we will do.
As a first solution, we propose to set for each edge $e$ its weight $\theta'_e$ to equal its corresponding weight $\theta^*_e$ in $T_r$.
Recall, though, that in the process of transforming $T_i$ to $\tilde{T}_i$, there are two modifications, which implies that we cannot exactly match the edges of $\tilde{T}_i$ with those of $T_i$. We elaborate below on the transformations and how to set $\theta'_e$ given these transformations. 
\begin{itemize}
    \item The first transformation is obtained by contracting some edges of weight $\theta_e \ge 1-O(\xi)$. For these contracted edges, we will not define $\theta'_e$. Contracting these edges changes the pairwise correlations between the leaves by at most $O(n\xi)$, since there can be at most $O(n)$ edges along each path.
    \item The second transformation is a contraction of some nodes of degree $2$. Yet, for each such contraction, there is an easy way to modify the weights such that the pairwise correlations over the leaves does not change. In particular, if $u$ is a node and $v,w$ are its neighbors, then $u$ is being deleted from the graph and $v,w$ are being connected. If we set the weight of the new edge as a multiplication of the weights of the two old edges, then the pairwise correlations between the leaves do not change. In particular, we will define $\theta'$ under this logic: we will track the changes from $T_i$ to $\tilde{T}_i$, and whenever a degree-2 node is being contracted, we modify the weights accordingly: if edges $e$ and $e'$ were contracted to $e''$, we set the weight of $e''$ to equal the multiplication of weights of $e$ and $e'$.
\end{itemize}
Using the above definition of $\theta'_e$ and the above analysis, it follows that $|\alpha'_{ij} - \alpha^*_{ij}| \le O(n\xi)$. This suffices to complete the proof, as explained above. 
\end{proof}
It follows from Lemma~\ref{lem:lin-prog-unknown} that the execution of Algorithm~\ref{alg:fixed} succeeds, if it is run with $\eta' = Cn\xi+\eta$ and a sufficiently large $C>0$. This implies that for any $i,j\in S_r$, $|\tilde{\alpha}_{ij} - \hat\alpha_{ij}|\le \eta' \le \eta + O(n\xi)$. By the triangle inequality, $|\tilde{\alpha}_{ij} - \alpha^*_{ij}| \le 2\eta + O(n\xi)$. Lastly, following the analysis above, it is easy to show that also $|\alpha^{(3)}_{ij} - \alpha^*_{ij}| \le O(n\xi)$. Indeed, the only modification from $T$ to $T^{(3)}$ that affects the pairwise correlation between $i,j\in S_r$ is the contraction of edges of $\theta^*_e \ge 1-O(\xi)$. This affects the pairwise correlation by $O(n\xi)$. By the triangle inequality, we derive that $|\tilde{\alpha}_{ij} - \alpha^{(3)}_{ij}| \le O(n\xi + \eta)$. The last step would be to apply Theorem~\ref{thm:probabilistic} (same topology) to compare between the distribution over $\tilde{T_i}$ and its corresponding connected component in $T^{(3)}$. Yet, this theorem would apply only if the two components have the same topology. While they do not, we note that both trees are contractions of the same tree $T_r$. Hence, we can view both distributions as defined over the tree $T_r$, where the contracted edges have weight $1$. By Theorem~\ref{thm:probabilistic} (same topology) we derive that the total variation distance between the two distributions over $S_r$ is bounded by $O(n\xi |S_r|^2)$. By summing over all $r$, we derive that 
\[
\TV\lp(\Pr_{T^{(3)}, \theta^{(3)}}[x],\ \Pr_{\tilde{F},\tilde{\theta}}[x]\rp)
\le O\lp( n\xi \sum_{r=1}^r |S_r|^2\rp)
\le O(n^3\xi).
\]

By summing up the total variation distances between the auxiliary distributions parameterized by $T^{(i)}$, this concludes the proof of Lemma~\ref{lem:main-alg}.

\subsection{Concluding the proof of Theorem~\ref{thm:alg} (unknown topology)} \label{sec:alg-conclusion}

We use Lemma~\ref{lem:main-alg}. First of all, let us optimize $\xi$ and $\delta$ for a fixed value of $\eta$. This can be achieved by selecting $\delta = \eta^{2/3}n^{2/3}$ and $\xi = \eta^{1/3}n^{-2/3}$. We note that the requirement $\xi \le O(1/n)$ if $\eta \le O(1/n)$. The final bound is $O(n^{7/3}\eta^{1/3})$. To get this below $\epsilon$, we have to set $\eta \le O(\epsilon^3/n^7)$. This requires a sample of size $n \ge \Omega(\log(n/\delta)/\eta^2) = \Omega(n^{14}\log(n/\delta)/\epsilon^6)$.

\subsection{Translating the notation of \cite{daskalakis2009phylogenies}}\label{sec:translate-DMR}

We note that \cite{daskalakis2009phylogenies} uses a different notation. For convenience, we explain the translation in the Ferromagnetic setting where $\alpha_{ij}, \theta_{kl} \ge 0$, however, in order to transition to the non-Ferromagnetic setting one would simply have to replace these quantities with their absolute values, $|\alpha_{ij}|$ and $|\theta_{kl}|$, respectively.

Instead of edge weight $\theta_{(k,\ell)}$, \cite{daskalakis2009phylogenies} use the metric $d_{k,\ell} = -\log \theta_{(k,\ell)}$. Instead of $\alpha_{i,j}$ they use the metric $d_{i,j} = -\log \alpha_{i,j}$. They denote the true (underlying) metric by $d$, whereas they assume that the algorithm receives a \emph{$(\tau,M)$-distorted metric} on the leaves, denoted $\hat{d}$. This means that $|\hat{d}_{i,j}-d_{i,j}|\le \tau$ whenever $d_{i,j} \le M$. Using our notation, this means that $|\log \hat{\alpha}_{ij} - \log \alpha^*_{i,j}| \le \tau$ whenever $\alpha^*_{ij} \ge e^{-M}$. Equivalently, 
\[
-\tau \le \log \hat{\alpha}_{ij} - \log \alpha^*_{i,j} \le \tau \quad \text{whenever } \alpha^*_{ij} \ge e^{-M}
\]
which is equivalent to
\begin{equation}\label{eq:M-tau}
e^{-\tau} \le \hat{\alpha}_{ij}/\alpha^*_{i,j} \le e^{\tau} \quad \text{whenever } \alpha^*_{ij} \ge e^{-M}\enspace.
\end{equation}
We will show how their guarantees can be implied from our guarantees. First, notice that in order for \eqref{eq:M-tau} to hold, it is sufficient to assume that $\tau \le 1$ and
\[
(1-\tau/2) \le \hat{\alpha}_{ij}/\alpha^*_{i,j} \le (1+\tau/2) \quad \text{whenever } \alpha^*_{ij} \ge e^{-M}\enspace.
\]
If we substitute $\xi = \tau/2$ and $\delta = e^{-M}$, the last inequality substitutes to
\[
\alpha^*_{i,j} - \xi \alpha^*_{ij} \le \hat{\alpha}_{ij} \le \alpha^*_{i,j} + \xi \alpha^*_{ij} \quad \text{whenever } \alpha^*_{ij} \ge \delta \enspace,
\]
which is equivalent to
\begin{equation}\label{eq:xi-delta}
|\hat{\alpha}_{ij} - \alpha^*_{i,j}| \le \xi \alpha^*_{ij} \quad \text{whenever } \alpha^*_{ij} \ge \delta \enspace.
\end{equation}
Eq.~\ref{eq:xi-delta} is guaranteed to hold if $|\hat{\alpha}_{i,j}-\alpha^*_{i,j}| \le \xi\delta$. Since in Theorem~\ref{thm:DMR} we assume that $|\hat{\alpha}_{i,j}-\alpha^*_{i,j}| \le \eta$, it suffices to assume that $\eta \le \xi\delta$ in order to imply \eqref{eq:xi-delta}, which in turn implies the conditions in the paper of \cite{daskalakis2009phylogenies}.
\section{Information theoretic bound} \label{sec:info-theor}

\paragraph{Upper bound.}
While the result below was known, we prove it for completeness.
\begin{theorem} \label{thm:info-upper}
There is an algorithm that, given $m$ samples from the leaf-marginal of some tree structured Ising model with $n$ leaves, returns another tree structured Ising model whose total variation distance to the original model is bounded by $\epsilon$, with sample complexity $m = O(n\log(n/\epsilon)/\epsilon^2)$.
\end{theorem}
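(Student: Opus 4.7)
The plan is a standard cover-plus-hypothesis-selection argument, made quantitative using Theorem~\ref{thm:probabilistic} (same topology). Let $\mathcal{M}$ be the class of leaf-marginals of tree-structured Ising models on $n$ leaves. I will construct a finite subclass $\mathcal{C}\subseteq\mathcal{M}$ that is an $\epsilon$-cover in total variation distance and has $\log|\mathcal{C}|=O(n\log(n/\epsilon))$, then invoke a generic hypothesis-selection procedure (e.g.\ the Scheff\'e tournament / Yatracos minimum-distance estimator) which, given $m=O(\log|\mathcal{C}|/\epsilon^2)$ samples from any $\mu\in\mathcal{M}$, returns some $\hat{\mu}\in\mathcal{C}$ with $\TV(\hat\mu,\mu)\le O(\epsilon)$ with high probability. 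Plugging in the cover size yields $m=O(n\log(n/\epsilon)/\epsilon^2)$ as claimed.

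To build the cover, first enumerate topologies: the number of unrooted binary tree topologies on $n$ labeled leaves (with all internal degrees equal to $3$) is $(2n-5)!!=2^{O(n\log n)}$. For each such topology $T$, discretize every edge weight $\theta_e\in[-1,1]$ to a uniform grid of resolution $\eta=\Theta(\epsilon/n^3)$; since $T$ has at most $2n-3$ edges, this gives at most $(n/\epsilon)^{O(n)}=2^{O(n\log(n/\epsilon))}$ discretized weight vectors per topology. Letting $\mathcal{C}$ be the collection of leaf-marginals of all such discretized models gives $\log|\mathcal{C}|=O(n\log(n/\epsilon))$.

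To verify the cover property, fix a target model $(T^*,\theta^*)$ and let $(T^*,\hat\theta)$ be its nearest grid point under the same topology, so $\|\hat\theta-\theta^*\|_\infty\le\eta$. By \eqref{eq:path-correlation}, each pairwise leaf-correlation is a product of at most $n$ edge-weights in $[-1,1]$, so a standard telescoping bound gives $|\hat\alpha_{ij}-\alpha^*_{ij}|\le n\eta$ for every pair of leaves $i,j$. Theorem~\ref{thm:probabilistic} (same topology) then yields $\TV(\hat\mu,\mu^*)\le 2n^2\cdot n\eta=O(n^3\eta)\le\epsilon$ by our choice of $\eta$. Hence $\mathcal{C}$ is an $\epsilon$-cover of $\mathcal{M}$ in total variation distance.

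Finally I would apply a classical hypothesis-selection result: there exists an (not necessarily efficient) estimator that, from $O(\log|\mathcal{C}|/\epsilon^2)$ i.i.d.\ samples from the unknown distribution $\mu$, outputs some $\hat\mu\in\mathcal{C}$ with $\TV(\hat\mu,\mu)\le O(\epsilon)$ with probability at least $1-\delta$, provided $\mathcal{C}$ contains some distribution within $\epsilon$ of $\mu$ in $\TV$. Combined with the cover-size bound, this gives the desired sample complexity, after rescaling $\epsilon$ by a constant. The main technical ingredient is Theorem~\ref{thm:probabilistic} (same topology), which is what allows a coarse $\Theta(\epsilon/n^3)$-discretization of the edge weights to translate into an $\epsilon$-bound on the leaf distribution; without the polynomial-in-$n$ tensorization, the naive Lipschitz bound would blow up and destroy the linear-in-$n$ rate. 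I don't need Theorem~\ref{thm:probabilistic} (different topologies) here, because I can pay the $2^{O(n\log n)}$ cost of enumerating topologies in the cover rather than trying to interpolate across them.
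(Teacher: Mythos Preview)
Your proposal is correct and follows essentially the same approach as the paper: build an $\epsilon$-cover by enumerating topologies and discretizing edge weights at resolution $\Theta(\epsilon/n^3)$, use Theorem~\ref{thm:probabilistic} (same topology) together with a telescoping bound on the pairwise correlations to verify the cover property, and then apply Yatracos' hypothesis selection with $m=O(\log|\mathcal{C}|/\epsilon^2)$ samples. The paper's argument is identical in structure; your writeup is slightly more careful in discretizing over $[-1,1]$ rather than $[0,1]$ and in quoting the exact count $(2n-5)!!$ of leaf-labeled binary topologies.
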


While it is apparent that the family of tree-structured Ising models is infinite, we will select a finite set which is an $\epsilon$-cover in total variation, and then we will use the following result to learn in total variation distance over a finite set:
\begin{theorem}[\cite{yatracos1985rates}] \label{thm:yat}
Let $\epsilon,\delta>0$.
Given a finite family $\mathcal{C}$ of distributions and $m$ samples from some arbitrary distribution $\mu$, there exists an algorithm such that, with probability $1-\delta$, returns a distribution $\hat{\mu}\in \mathcal{C}$ that satisfies:
\[
\TV(\mu,\hat{\mu}) \le 
3 \inf_{\nu \in \mathcal{C}} \TV(\mu,\nu) + \epsilon, 
\]
with sample complexity $m = O(\log(|\mathcal{C}|/\delta)/\epsilon^2)$.
\end{theorem}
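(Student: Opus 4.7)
The plan is to prove Theorem~\ref{thm:yat} using the Scheffé/Yatracos minimum-distance estimator construction. The key idea is that, although two distributions in a candidate class $\mathcal{C}$ may differ in total variation, each pairwise TV distance is realized by a single event (a ``Scheffé set''), so rather than comparing distributions uniformly over all measurable sets, it suffices to estimate their masses on at most $|\mathcal{C}|^2$ many sets from empirical data, after which the best candidate can be selected by a $\min\!-\!\max$ criterion.

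Concretely, I would first define the \emph{Yatracos class}
\[
\mathcal{A} \;=\; \bigl\{\,A_{\nu_1,\nu_2} : \nu_1,\nu_2 \in \mathcal{C}\,\bigr\},\qquad A_{\nu_1,\nu_2} := \{x : \nu_1(x) > \nu_2(x)\},
\]
which has size at most $|\mathcal{C}|^2$. Using the standard identity $\TV(\nu_1,\nu_2) = \nu_1(A_{\nu_1,\nu_2}) - \nu_2(A_{\nu_1,\nu_2})$, I would then form the empirical measure $\hat{\mu}$ of the $m$ samples, and define the estimator
\[
\hat{\nu} \;\in\; \arg\min_{\nu \in \mathcal{C}} \; T(\nu), \qquad T(\nu) := \sup_{A \in \mathcal{A}} |\nu(A) - \hat{\mu}(A)|.
\]

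For the analysis, I would apply Hoeffding's inequality together with a union bound over $\mathcal{A}$ to show that, with probability $1-\delta$, the sampling error is small: $|\hat{\mu}(A) - \mu(A)| \le \epsilon/4$ for all $A \in \mathcal{A}$ simultaneously, provided $m = \Omega(\log(|\mathcal{C}|/\delta)/\epsilon^2)$. Writing $\mathrm{OPT} := \inf_{\nu \in \mathcal{C}} \TV(\mu,\nu)$ and letting $\nu^\star$ achieve this infimum, I would bound $T(\nu^\star) \le \mathrm{OPT} + \epsilon/4$ by triangle inequality on each $A \in \mathcal{A}$. Since $\hat\nu$ minimizes $T$, $T(\hat\nu) \le T(\nu^\star)$. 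Finally, using the Scheffé identity on the pair $(\hat\nu,\nu^\star)$ and the distinguished set $A_{\hat\nu,\nu^\star} \in \mathcal{A}$,
\[
\TV(\hat\nu,\nu^\star) = |\hat\nu(A_{\hat\nu,\nu^\star}) - \nu^\star(A_{\hat\nu,\nu^\star})| \le T(\hat\nu) + T(\nu^\star) \le 2\,\mathrm{OPT} + \epsilon/2,
\]
and a further triangle inequality $\TV(\hat\nu,\mu) \le \TV(\hat\nu,\nu^\star) + \TV(\nu^\star,\mu) \le 3\,\mathrm{OPT} + \epsilon/2$ yields the claimed bound.

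The main subtlety is not the concentration step but justifying the factor of $3$: one must use the \emph{specific} Scheffé set $A_{\hat\nu,\nu^\star}$ inside $\mathcal{A}$ to convert the sup over a countable class into a true TV distance, which is precisely the trick that distinguishes this estimator from naive plug-in approaches (where one would only get a factor depending on the complexity of the sample space). The only other care needed is to verify $\mathcal{A}$ is well-defined for arbitrary probability measures (use densities with respect to any dominating measure, e.g.\ $\sum_\nu \nu$), so that the sets $A_{\nu_1,\nu_2}$ are measurable and the identity for TV holds.
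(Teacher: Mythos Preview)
Your proposal is correct and is exactly the classical Yatracos minimum-distance-estimator argument. Note, however, that the paper does not supply its own proof of this theorem: it is quoted as a black-box result from \cite{yatracos1985rates} and used only to derive the information-theoretic upper bound, so there is no ``paper's proof'' to compare against. One cosmetic point: you overload the symbol $\hat{\mu}$ for the empirical measure, whereas in the theorem statement $\hat{\mu}$ denotes the algorithm's output; renaming the empirical measure (e.g.\ $\mu_m$) would avoid confusion.
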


In order to apply Theorem~\ref{thm:yat}, we will construct an $\epsilon$-cover to the set of tree structured Ising models.

\begin{lemma}\label{lem:cover-ising}
For any $\epsilon>0$, there exists a family $\mathcal{C}$ of tree-structured Ising models of log cardinality $\log |\mathcal{C}|\le O(n\log(n/\epsilon))$, such that for any tree structured Ising model, there exists some model from $\mathcal{C}$ such that the total variation distance between these the two leaf distributions of these models is bounded by $\epsilon$.
\end{lemma}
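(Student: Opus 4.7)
The plan is to build $\mathcal{C}$ as a Cartesian product: first enumerate all admissible tree topologies, and then, for each topology, place a finite grid on the edge weights. The key enabler is Theorem~\ref{thm:probabilistic} (same topology), which lets us translate an $L^\infty$-approximation of pairwise correlations (and hence of edge weights, via the multiplicative path formula \eqref{eq:path-correlation}) into a total variation bound on the leaf distribution.

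Concretely, I would first count topologies. Any tree on $n$ labeled leaves in which every internal node has degree $3$ has at most $n-2$ internal nodes and at most $2n-3$ edges; the number of such labeled topologies is the double factorial $(2n-5)!!$, which is bounded by $\exp(O(n\log n))$. Next, fix $\eta := \epsilon/(4n^3)$ and let $G_\eta := \{-1, -1+\eta, \dots, 1-\eta, 1\}$, so $|G_\eta|\le 4n^3/\epsilon$. For each topology $T$, form the family of weight assignments $\theta\in G_\eta^{E(T)}$. Taking $\mathcal{C}$ to be the union (over topologies) of the resulting Ising models gives
\[
\log|\mathcal{C}| \le O(n\log n) + (2n-3)\log(4n^3/\epsilon) = O(n\log(n/\epsilon)).
\]

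For the cover property, fix any tree-structured Ising model $(T^*,\theta^*)$. Its topology $T^*$ is one of the enumerated topologies, so we may round each edge weight $\theta^*_e$ to the nearest grid point $\hat\theta_e\in G_\eta$, obtaining $|\theta^*_e-\hat\theta_e|\le \eta$. A standard telescoping argument on the product in \eqref{eq:path-correlation} shows that for every pair of leaves $i,j$ the induced correlations satisfy
\[
|\alpha^*_{ij} - \hat\alpha_{ij}| \;\le\; |P_{ij}|\cdot \eta \;\le\; n\eta \;=\; \tfrac{\epsilon}{4n^2}.
\]
Since both models share topology $T^*$, Theorem~\ref{thm:probabilistic} (same topology) yields $\TV \le 2n^2 \cdot \epsilon/(4n^2) = \epsilon/2 \le \epsilon$, establishing that $\mathcal{C}$ is an $\epsilon$-cover.

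The main (and essentially only) obstacle is making sure the weight discretization is fine enough that after multiplying up to $n$ factors along a root-to-leaf path, the accumulated error in each $\alpha_{ij}$ still lies below the $\Theta(\epsilon/n^2)$ tolerance required by Theorem~\ref{thm:probabilistic}; this is what forces the grid spacing to be $\Theta(\epsilon/n^3)$ rather than $\Theta(\epsilon)$, and contributes the $\log(n/\epsilon)$ factor. Once Lemma~\ref{lem:cover-ising} is in hand, Theorem~\ref{thm:info-upper} follows immediately by running the Yatracos/Scheffé estimator of Theorem~\ref{thm:yat} on an $(\epsilon/4)$-cover: the returned $\hat\mu\in\mathcal{C}$ satisfies $\TV(\mu,\hat\mu)\le 3\cdot(\epsilon/4)+\epsilon/4=\epsilon$ with sample complexity $O(\log|\mathcal{C}|/\epsilon^2)=O(n\log(n/\epsilon)/\epsilon^2)$.
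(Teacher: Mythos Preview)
Your proposal is correct and follows essentially the same approach as the paper's own proof: enumerate topologies, discretize edge weights with spacing $\Theta(\epsilon/n^3)$, bound the resulting error in each $\alpha_{ij}$ by $O(\epsilon/n^2)$ via the path-product formula, and invoke Theorem~\ref{thm:probabilistic} (same topology). Your version is in fact slightly more careful than the paper's (you grid $[-1,1]$ rather than $[0,1]$ and give an explicit count of topologies), but there is no substantive difference in strategy.
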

\begin{proof}
For completeness, we prove this lemma using Theorem~\ref{thm:probabilistic}, yet, there are more direct ways to prove this lemma.

Each element of $\mathcal{C}$ will be parameterized by the following:
\begin{itemize}
    \item A tree topology, with $n$ leaves labeled $1,\dots,n$. There can be at most $n^{O(n)}$ distinct trees.
    \item For each edge of the tree, its weight $\theta_e$ is one of $\{0,1/M, 2/M,\dots,1\}$, where $M = \Theta(n^3/\epsilon)$. There can be at most $M^{O(n)}$ possibilities to select the weights.
\end{itemize}
We derive that $|\mathcal{C}|\le (n/\epsilon)^{O(n)}$.

Given some tree $T$ and weight $\theta$, we will find an element of $\mathcal{C}$ that approximates it. In particular, we will take the element from $\mathcal{C}$ that has the same structure and additionally, each of its weights are $1/M$ close to $\theta$. It is easy to see from \eqref{eq:path-correlation} that the pairwise correlations between the leaves $\alpha_{ij}$, are $O(n/M)$-close in absolute value between the two models. Hence, by Theorem~\ref{thm:probabilistic} (fixed topology), the two models are $O(n^3/M)\le \epsilon$ close in total variation between the leaf distributions, provided that $M\ge \Omega(n^3/\epsilon)$. 
\end{proof}

To conclude the proof, we use the algorithm of Theorem~\ref{thm:yat}, applying it on an $\epsilon/4$-cover using the construction in Lemma~\ref{lem:cover-ising}.

\paragraph{Lower bound}

In order to learn latent tree-structured Ising models, when the topology is unknown, the sample complexity is lower bounded by $\Omega(n\log(n)/\epsilon^2)$.
This follows from \cite{koehler2020note}: they prove that the number of samples that are required to learn a \emph{full tree} from samples is $\Omega(n\log(n)/\epsilon^2)$, yet, this proof extends directly to the setting of latent nodes.

\bibliographystyle{plainnat}
\bibliography{refs}

\end{document}